\numberwithin{equation}{section}
\newtheorem{assumption}{Assumption}
\definecolor{Blue}{rgb}{0,0,1}
\definecolor{Red}{rgb}{1,0,0}
\def\blue{\color{black}} 
\newcommand{\new}[1]{{\blue #1}}
\newcommand{\veps}{\varepsilon}
\newcommand{\K}{\mathcal{K}}
\newcommand{\X}{\mathcal{X}}
\newcommand{\vol}{\mathrm{vol}}
\newcommand{\divergence}{\mathrm{div}}
\newcommand{\dist}{\mathrm{dist}}
\newcommand{\gL}{\mathcal{L}^{\veps_+,\veps_-}}
\newcommand{\gbk}{{b}^{\veps_+,\veps_-}_k}
\newcommand{\T}{\mathcal{T}}
\newcommand{\tT}{\widetilde{T}}
\newcommand{\I}{\mathcal{I}}
\newcommand{\R}{\mathbb{R}}
\renewcommand{\L}{\Delta}
\newcommand{\M}{\mathcal{M}}
\definecolor{mygreen}{rgb}{0.1,0.75,0.2}
\newcommand{\nc}{\normalcolor}
\newcommand{\C}{\mathcal{C}}
\newcommand{\eps}{\varepsilon}
\newcommand{\N}{\mathbb{N}}
\newtheorem{theorem}{Theorem}[section]
\newtheorem{corollary}[theorem]{Corollary}
\newtheorem{definition}[theorem]{Definition}
\newtheorem{lemma}[theorem]{Lemma}
\newtheorem{proposition}[theorem]{Proposition}
\newtheorem{remark}[theorem]{Remark}
\renewenvironment{proof}{\par\noindent{\bf Proof\ }}{\hfill\BlackBox\\[2mm]}
\definecolor{darkred}{rgb}{0.6,0.1,0.1}
\definecolor{darkgreen}{rgb}{0.1,0.6,0.1}
\definecolor{darkblue}{rgb}{0.1,0.1,0.6}
\begin{document}
    \title{Large sample spectral analysis of graph-based multi-manifold clustering}

    \author{\name Nicol\'as Garc\'ia Trillos \email garciatrillo@wisc.edu \\
       \addr Department of Statistics\\
       University of Wisconsin\\
       Madison, Wisconsin, USA
       \AND
       \name Pengfei He \email hepengf1@msu.edu \\
       \addr Department of Statistics and Probability\\
       Michigan State University\\
       East Lansing, MI, USA
       \AND
       \name Chenghui Li \email cli539@wisc.edu \\
       \addr Department of Statistics\\
       University of Wisconsin\\
       Madison, Wisconsin, USA}
    
    \editor{}
  
	\maketitle
	\begin{abstract}
		In this work we study statistical properties of graph-based algorithms for multi-manifold clustering (MMC). In MMC the goal is to retrieve the multi-manifold structure underlying a given Euclidean data set when this one is assumed to be obtained by sampling a distribution on a union of manifolds $\M = \M_1 \cup\dots  \cup \M_N$ that may intersect with each other and that may have different dimensions. We investigate sufficient conditions that similarity graphs on data sets must satisfy in order for their corresponding graph Laplacians to capture the right geometric information to solve the MMC problem. Precisely, we provide high probability error bounds for the spectral approximation of a tensorized Laplacian on $\M$ with a suitable graph Laplacian built from the observations; the recovered tensorized Laplacian contains all geometric information of all the individual underlying manifolds. We provide an example of a family of similarity graphs, which we call annular proximity graphs with angle constraints, satisfying these sufficient conditions. We contrast our family of graphs with other constructions in the literature based on the alignment of tangent planes. Extensive numerical experiments expand the insights that our theory provides on the MMC problem.    
	\end{abstract}
	\begin{keywords}
  multi-manifold clustering, graph Laplacian, spectral convergence, manifold learning, discrete to continuum limit.
    \end{keywords}
	\section{Introduction}
	
	\footnotetext[1]{All authors contributed equally to this work. Their names are listed in alphabetical order by last name.}

	In this work we study the problem of \textit{multi-manifold clustering} (MMC) from the perspective of spectral geometry. Multi-manifold clustering is the task of identifying the structure of multiple manifolds that underlie an observed data set $X=\{ x_1, \dots, x_n\}$, its main challenge being that in general the underlying manifolds may be non-linear, may intersect with each other, and may have different dimensions (see Figures \ref{fig: two spheres}-\ref{fig: Path Algorithm with epsilon+,epsilon- graph} and Figures \ref{fig: 2 Clusters}-\ref{fig: 5 Clusters} for some illustrations). While spectral methods for learning have been analyzed by several authors throughout the past two decades in settings as varied as unsupervised, semi-supervised, and supervised learning, less is known about their theoretical guarantees for the specific multi-manifold clustering problem. We analyze MMC algorithms that are based on the construction of suitable similarity graph representations for the data and in turn on the spectra of their associated graph Laplacians. We provide statistical error guarantees  for the identification of the underlying manifolds as well as for the recovery of their individual geometry.


	\begin{figure}[htbp]
		\par\medskip
		\centering
		\begin{minipage}[t]{0.30\textwidth}
			\centering
			\includegraphics[width=5.5cm]{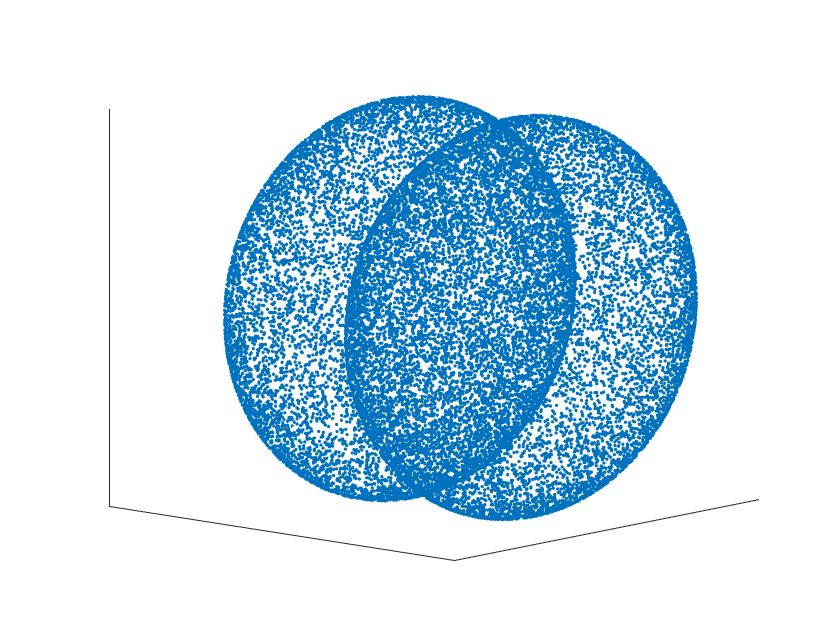}
			\caption{}
			\label{fig: two spheres}
		\end{minipage}
		\begin{minipage}[t]{0.30\textwidth}
			\centering
			\includegraphics[width=5.5cm]{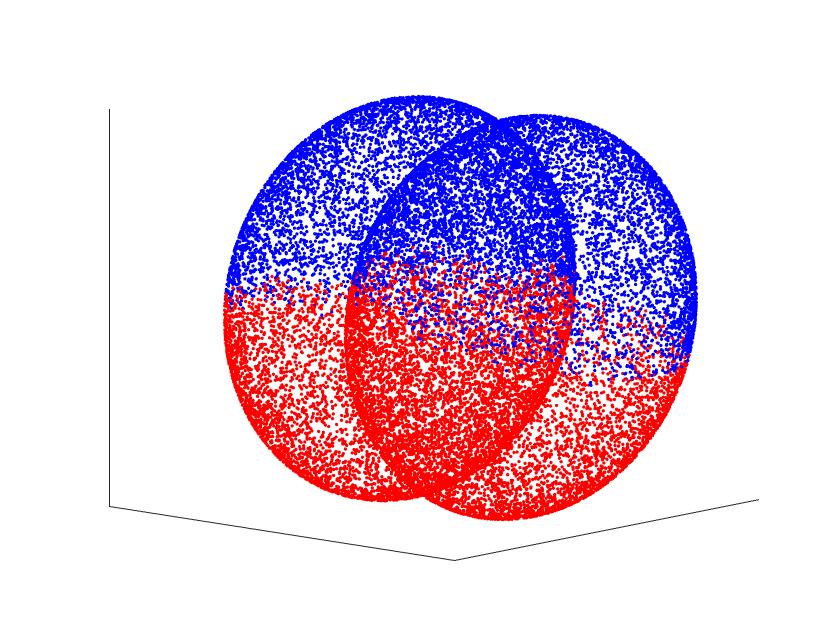}
			\caption{}
			\label{fig: Path Algorithm with epsilon graph}
		\end{minipage}
		\begin{minipage}[t]{0.30\textwidth}
			\centering
			\includegraphics[width=5.5cm]{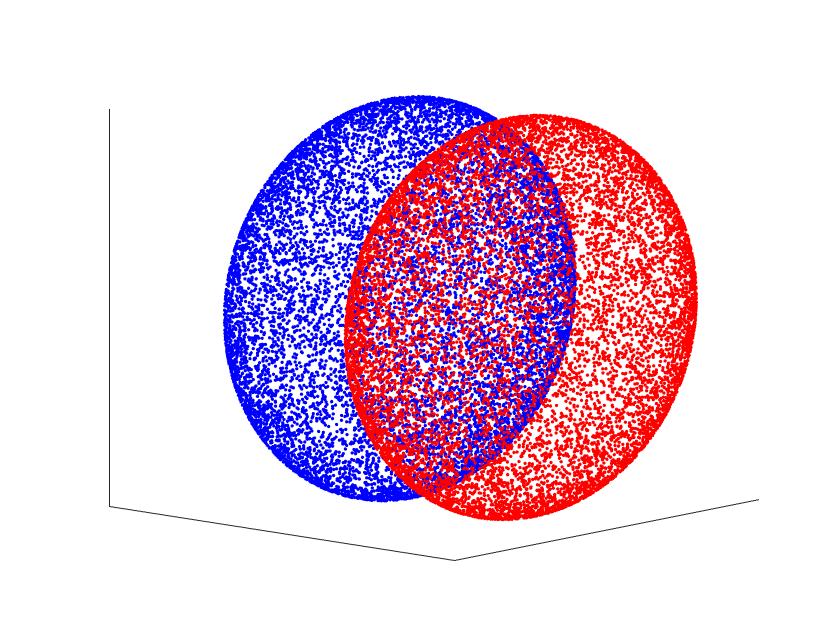}
			\caption{}
			\label{fig: Path Algorithm with epsilon+,epsilon- graph}
		\end{minipage}	
		\caption*{Figure \ref{fig: two spheres} illustrates two intersecting ellipsoids (two dimensional). A \textit{good} multi-manifold clustering algorithm must identify the two underlying ellipsoids. Figure \ref{fig: Path Algorithm with epsilon graph} and Figure \ref{fig: Path Algorithm with epsilon+,epsilon- graph} show the spectral clustering with $k$-NN graph and annular proximity graph with angle constraint, respectively; see section \ref{sec:GraphConstruct}.}
		\label{fig:illustration}
	\end{figure}

	
	As for most spectral approaches to clustering, we are interested in studying spectral properties of graph Laplacian operators of the form
	\begin{equation}
	\Delta_n u(x_i) := \sum \omega_{ij} ( u(x_i) - u(x_j)), \quad x_i \in X.
	\label{eqn:GraphLaplacian1}
	\end{equation}
	Here, the $\omega_{ij}$ are appropriately defined symmetric weights that in general depend on the proximity of points $x_i, x_j$, and, importantly,  on a mechanism that detects when points belong to different manifolds even if lying close to each other. Once the graph Laplacian is constructed, we follow the spectral clustering algorithm: the first $N$ eigenvectors  of $\Delta_n$ (denoted $\psi_1, \dots, \psi_N$) are used to build an embedding of the data set $X$ into $\R^N$:
	\[ x_i \in X \longmapsto  \left( \begin{matrix} \psi_1(x_i) \\ \vdots \\ \psi_N(x_i) \end{matrix}  \right) \in \R^N.\]
	In turn, with the aid of a simple clustering algorithm such as $k$-means the embedded data set is clustered. A successful algorithm will produce clusters that are in agreement with the different manifolds underlying the data set.
	
	As can be imagined, the success of spectral clustering when applied to MMC problems relies strongly on the specific similarity weights $\omega_{ij}$ that determine the graph Laplacian, its eigenvectors, and ultimately the partitioning of the data. In the literature, authors have considered different types of mechanisms to discriminate points that lie on different manifolds. Some strategies include the use of local tangent planes from data 
	\citet{arias2017spectral,goh2007segmenting,elhamifar2011sparse,wang2011spectral} (whose angles are compared), and the construction of paths between different points (e.g. geodesics) that are  considered admissible if they do not exhibit sudden turns  (effectively imposing a curvature constraint) \citet{BABAEIAN2015118}. All these methods are inspired by heuristics that are meaningful at the continuum level (i.e. the infinite data setting) and use second order geometric information to detect the different intersecting manifolds. While these heuristics provide practical insights, in general they do not guarantee the success of the employed methodologies for MMC at the finite sample level. Part of the motivation for this work is precisely to establish a more concrete and mathematically precise link between the heuristic motivation at the continuum level and the actual methodologies that are used in practice. It is worth highlighting that widely known graph constructions such as $\veps$-proximity graphs or $k$-NN graphs  used for standard data clustering tasks (typically aimed at detecting bottle-necks in data sets) are in general not suitable for MMC. To illustrate this, take for example Figure \ref{fig: Path Algorithm with epsilon graph}. There, we have used a $k$-NN graph to build a graph Laplacian whose first non-trivial eigenvector has been used to obtain the partition illustrated in the figure; as can be observed, from the geometry induced by the $k$-NN graph we are unable to distinguish the two underlying ellipsoids.  \nc

	\nc
	
	\medskip
	
	To start making the results presented in this paper more precise, let us suppose that the data set $X$ is obtained by sampling a distribution $\mu$ supported on a set $\M$ of the form
	\begin{equation}
	\M = \M_1 \cup \dots \cup \M_N,
	\label{eqn:M}
	\end{equation}
	where the $\M_l$ are smooth compact connected manifolds with no boundary that for the moment are assumed to have the same dimension $m$; the manifolds $\M_l$ may have nonempty pairwise intersections, but these  are assumed to have measure zero relative to the volume forms of each of the manifolds involved. The distribution $\mu$ is assumed to be a mixture model taking the form
	\[d \mu= w_1\rho_1 d\vol_{\M_1}   + \dots +w_N \rho_N d \vol_{\M_N},\]
	for smooth density functions $\rho_l: \M_l \rightarrow \R$ and positive weights $w_i$ that add to one; henceforth we use $d\vol_{\M_l}$ to denote integration with respect to the Riemannian volume form associated to $\M_l$. A \textit{tensorized Laplacian} $\Delta_\M$ acting on functions $f$ on $\M$ (which will be written as $f= (f_1, \dots, f_N)$, where $f_l: \M_l \rightarrow \R$) can be defined according to
	\begin{equation}
	\Delta_\M f := ( w_1\Delta_{\M_1} f_1, \dots, w_N \Delta_{\M_N} f_{N}  ), 
	\label{eqn:LaplacianContinuum}
	\end{equation}  
	where $\Delta_{\M_l}$ is a Laplacian operator mapping regular enough functions $f_l : \M_l \rightarrow \R$ into functions  $\Delta_\M f_l : \M_l \rightarrow \R$ according to
	\[\Delta_{\M_l} f_l = - \frac{1}{\rho_l} \divergence_{\M_l}\left(   \rho_l^2 \nabla_{\M_l} f_l  \right).\]
	In other words, the operator $\Delta_\M$ acts in a coordinatewise fashion, effectively treating each manifold $\M_i$ independently.  It is then straightforward to show that eigenfunctions of $\Delta_\M$ are spanned by functions of the form
	\[(0, \dots, f_l, \dots, 0) \]
	for some $l$, where $f_l$ is an eigenfunction of $\Delta_{\M_l}$. This means that the spectrum of $\Delta_\M$ splits the geometries of the $\M_l$. In particular, the different $\M_l$ can be detected by retrieving the eigenfunctions with zero eigenvalue.

	Our first main results (\textbf{Theorem \ref{Rate of convergence for eigenvalues} and Theorem \ref{convergence rate for eigenvectors}}) state that, provided that the weights $\omega_{ij}$ defining the graph Laplacian operator $\Delta_n$ in \eqref{eqn:GraphLaplacian1} satisfy two conditions referred to as \textit{full inner connectivity} and \textit{sparse outer connectivity}, the eigenvalues (appropriately scaled) and eigenvectors of $\Delta_n$ approximate the eigenvalues and eigenfunctions of the tensorized Laplacian $\Delta_\M$; we obtain high probability quantitative bounds for the error of this approximation. The bottom line is that our results imply that the spectral methods studied here are guaranteed, at least for large enough $n$, to recover the underlying multi-manifold structure of the data; see Figure \ref{fig: Path Algorithm with epsilon+,epsilon- graph} for an illustration. Our work extends the growing literature of works that study the connection between graph Laplacians on data sets and their continuum analogues. This literature, which we review in section \ref{sec:specclust}, has mostly focused on the smooth setting where multiple intersecting manifolds are not allowed.

	In our second main result (\textbf{Theorem \ref{thm:MixedDimensions}}), we present some results for the case when the dimensions of the manifolds $\M_i$ do not agree. In this more general setting, the spectrum of the graph Laplacian $\Delta_n$ does not recover the tensorized geometry captured by $\Delta_\M$ as introduced earlier, but rather, only the tensorized geometry of the manifolds with the \textit{largest} dimension, effectively quotienting out the geometric information of manifolds with dimension strictly smaller than the maximum dimension. 
	

	\blue After presenting our general results, we move on to discussing specific examples of graph constructions that satisfy the full inner connectivity and sparse outer connectivity conditions. In particular, we discuss a family of annular proximity graphs with angle constraints (see section \ref{sec:GraphConstruct}) that we show satisfies the desired connectivity conditions. In the final section of the paper, we present some insights into the behavior of this graph construction and its ability to tackle the MMC problem in concrete numerical examples, as well as present a performance comparison with other existing spectral-based MMC approaches.

	\nc

	%
	%
	%
	%
	%
	%

	\subsection{ Related work}
	\label{sec:OtherWorks}

	In this section we provide an overview of some related works that study spectral clustering and its connection with manifold learning, as well as other works that study the specific multi-manifold clustering problem.

	\subsubsection{Spectral clustering and manifold learning}
	\label{sec:specclust}

	In the past two decades, several authors have attempted to establish precise connections between operators such as graph Laplacians built from random data and analogous differential operators defined at the continuum level. To make this connection mathematically precise, one can assume that the data are sampled from some distribution supported on a certain geometric object $\M$. In the setting where $\M$ is a smooth compact manifold embedded in $\R^d$ that has no boundary, several authors have studied the connection between $\veps$-graph-based Laplacians and weighted versions of Laplace Beltrami operators on $\M$. For pointwise consistency results we refer the reader to \citet{singer2006graph,hein2005graphs,hein2007graph,belkin2005towards,ting2010analysis,GK}). 
	Regarding \textit{spectral convergence} of graph Laplacians, a notion of convergence that is relevant for spectral clustering, the regime $n \rightarrow \infty$ and $\veps$ constant is studied in \citet{vLBeBo08} and also 
	in \citet{SinWu13}. The latter analyzes connection Laplacians, which are operators acting on vector fields as opposed to functions. Works that have studied regimes where $\veps$ is allowed to decay to zero include \citet{Shi2015,BIK,trillos2019error,Lu2019GraphAT,calder2019improved,DunsonWuWu,WormellReich}. The mathematical theory around graph Laplacians in the smooth manifold setting has developed considerably and even regularity estimates of graph Laplacian eigenvectors are now available  (see \citet{CalderGTLewicka}). 
	
	In the setting of a smooth compact manifold $\M$ \textit{with} boundary, graph Laplacians are seen to behave differently around the manifold's boundary than in their interior. This has been observed in works like \citet{BerryVaughn,BoundaryWuWu}, which study this setting and obtain expansions for graph Laplacians that hold all the way up to the boundary. Earlier works such as \citet{trillos2018variational} use variational methods to provide spectral asymptotic consistency results in this setting but don't obtain convergence rates nor describe the behavior of graph Laplacians close to the boundary. The work \citet{Lu2019GraphAT} provides rates for spectral convergence in the setting of manifolds with boundary and also considers the case where $\M$ is of the form \eqref{eqn:M}. However, in contrast to what we do here, the aim in \citet{Lu2019GraphAT} is not to analyze graph constructions that guarantee the recovery of the multi-manifold structure of the data, focusing instead on analyzing intrinsic proximity graphs to the union of the intersecting manifolds. Our analysis shares aspects and ideas with this and some of the other works previously mentioned, but to fulfill our goals we must introduce new constructions and estimates not currently available. In addition, to the best of our knowledge, we are the first to present an analysis of the full spectrum of graph Laplacians when data points are supported on a union of intersecting manifolds that have \textit{different} dimensions. \blue Previous work \citet{arias2011clustering} had analyzed the null space of a graph Laplacian when the generators (manifolds), although potentially of different dimensions, were assumed to be separated from each other.  \nc
	
	From a methodological perspective, it is also worth highlighting several other works that have studied the use of metrics different from the Euclidean one to build proximity graphs for clustering and other unsupervised learning tasks. The idea in those papers is to use the modified metrics to improve the performance of spectral clustering when applied to data sets with some special geometric structure. Examples include:  \citet{10.1145/2729977,10.1145/321356.321357,10.1007/978-3-642-19867-0_17,10.5555/646596.698112,10.1016/j.patcog.2007.04.010,little2017path,McKenzie2022}. In a sense, our approach in this paper is in line with the general perspective taken in the previously mentioned works, only that in our case we have a different geometric structure in mind, i.e., we consider multiple intersecting manifolds.

	%
	%

	\subsubsection{Multi-manifold clustering}
	In contrast to the graph constructions which are analyzed in most of the works mentioned in section \ref{sec:specclust} (i.e. standard $\veps$-graphs and $k$-NN graphs), graph constructions for multi-manifold clustering must incorporate a mechanism to discriminate between points that lie on different manifolds.  One such mechanism relies on the approximation of approximate tangent planes around every point. Pairs of nearby points are then endowed high weights whenever their corresponding tangent planes are aligned, as proposed in \citet{arias2017spectral}. The recovery of tangent planes from data is a problem that has been studied theoretically in papers such as \citet{aamari2018stability} (see also references within).
	The  methodology proposed in \citet{SinWu13}, which uses a connection Laplacian, can be considered as a MMC algorithm since it also uses tangent plane information to inform the affinity between points.  The LLMC algorithm from \citet{goh2007segmenting} is also based on locally fitting planes to points and their nearest neighbors. Sparse Manifold Clustering and Embedding(SMCE) in \citet{elhamifar2011sparse} implicitly attempts to recover tangent planes too; a sparse representation of points in a neighborhood is sought via a local $l^1$ optimization problem.  Another work that considers affinities based on local tangent planes is \citet{wang2011spectral}. In section \ref{Section: Plane Algorithm with epsilon_+,epsilon_- graph setting} we will discuss some properties of the tangent plane based graphs and their effect on spectral clustering for MMC.
	\blue 
\citet{wang2014riemannian} also consider estimating tangent planes to solve the MMC problem, but now in a generalized setting where the ambient space is a curved manifold and not $\R^d$.
	\nc
	
	At a high level, all multi-manifold clustering algorithms use curvature information to detect pairs of points that, while close to each other, lie on different manifolds. Measuring the difference of tangent planes is one way to capture curvature, but there are alternative ways. For example, works like \citet{chen2009spectral,chen2009foundations} use the notion of polar curvature between collections of points to define an algorithm known as spectral curvature clustering (SCC). In \citet{chen2009foundations} the authors present some theoretical analysis of SCC in the setting where the data are  sampled from multiple flats with the same dimension. \blue In \citet{arias2011spectral}, a localized spectral curvature clustering algorithm is proposed to find local curvature information by constructing similarity graphs that are obtained by aggregating certain alignment score for a collection of data tuples of high enough order. This method is computationally too intensive given that it requires to consider tuples of order larger than the dimension of the manifolds. Besides, from a theoretical perspective, the method scales very poorly with the dimensionality of the underlying manifolds, and the authors indicate that it can only solve the MMC problem in the setting of intersecting curves, i.e. 1d manifolds.
	\nc
	
	%
	Curvature can also be captured by measuring how quickly paths turn as proposed in \citet{BABAEIAN2015118}. Our graph construction from section \ref{sec:GraphConstruct} is inspired by the one proposed in \citet{BABAEIAN2015118}, but with some important differences that we will motivate and explain throughout the paper. These differences, in particular, allow us to provide a comprehensive theoretical analysis and provide theoretical guarantees for the success of our algorithms.
	
	%
	%
	%
	%
	%
	%

	%
	
	\blue 
	To wrap up this brief literature review, it is worth mentioning a special setting where the manifolds $\M_l$ are linear subspaces of the ambient space. In that case, the multi-manifold clustering problem reduces to \textit{subspace clustering} (SubC), a problem that has received considerable attention in the past decades due to its multiple applications in tasks such as image segmentation, motion segmentation, and image representation (see \citet{Vidal2010ATO}). Many algorithms in SubC rely strongly on the assumed global flat structure of the data and on the fact that the origin is known to lie on the intersection of the spaces. Unfortunately, these approaches can not be used directly for a general multi-manifold clustering task, so we will not discuss them in more detail. Conversely, while it is possible to use general MMC approaches to solve SubC problems, it is clear that the performance of general MMC methods will in general be far from satisfactory when compared to the performance of SubC approaches, which actively target the subspace structure of the manifolds. We refer the reader interested in the SubC problem to the following list of papers and their references: \citet{boult1991factorization,10.1007/11744085_8,DBLP:journals/corr/abs-1010-3460,park2014greedy,DBLP:journals/corr/abs-1202-4002,990968,DBLP:journals/corr/abs-1202-4002,990968,elhamifar2009sparse,elhamifar2010clustering,oswal2018scalable,10.5555/3104322.3104407}.
	\nc

	\subsection{Contributions and outline}

	
	
	
	
	

	We summarize our contributions as follows:

	\begin{itemize}
		\item We analyze graph Laplacians on families of proximity graphs when the nodes of the graphs are random data points that are supported on a union of unknown \textit{intersecting} manifolds. The manifolds may all have \textit{different} dimensions.  
		\item We introduce two sufficient conditions that similarity graphs must satisfy in order to recover, from a graph Laplacian operator, the geometric information (as contained in the spectrum of weighted Laplace-Beltrami operators) of the individual smooth manifolds underlying the data set. These conditions are referred to as \textit{full inner connectivity} and \textit{sparse outer connectivity}.

		\item We introduce and analyze \textit{annular} proximity graphs and their effect on multi-manifold clustering. These are simple extensions of $\veps$-proximity graphs that nonetheless can be shown to be, theoretically and numerically, better than the vanilla $\veps$-graphs for multi-manifold clustering.
		
		
		\item We analyze a family of \textit{annular proximity graphs with angle constraints}. This family is shown to satisfy the full inner connectivity and sparse outer connectivity conditions when their parameters are tuned appropriately. We contrast this construction with other constructions such as those based on local PCA, which in general do not satisfy the full inner connectivity condition.
		
		\item Through numerical examples and some heuristic computations, we provide further insights into the use of spectral methods for multi-manifold clustering.
	\end{itemize}

	The rest of the paper is organized as follows. Our theoretical framework is presented in section \ref{section: setup}, where we formalize the setting for the multi-manifold clustering problem, introduce the definitions of sparsely outer connected and fully inner connected similarity graphs, and state our main theoretical results. In our first results, the ones in section \ref{sec:SameDim}, we assume that all underlying manifolds have the same dimension, and in section \ref{subsection: Different dimensions} we extend to settings where the dimensions of the underlying manifolds can be different. In section \ref{sec:GraphConstruct} we discuss an example of a graph construction that satisfies the full inner connectivity and sparse outer connectivity conditions. In section \ref{section: discussion} we present a series of numerical experiments whose goal is to illustrate the theory developed throughout the paper and highlight some drawbacks of the MMC methods discussed in the paper. In Appendix \ref{Section: main results} we present the proofs of all the results from sections \ref{sec:SameDim} and \ref{subsection: Different dimensions}. 
	
	\section{Set up and main results}\label{section: setup}
	
	Let $ \{ \M_l \}_{l=1}^N$ be a collection of $N$ smooth, compact manifolds without boundary embedded in $\R^d$. We denote by $m_l$ the dimension of manifold $\M_l$ and $m= \max_{l=1, \dots, N} \{  m_l \}$. Let $\M$ be the union:
	\[ \M := \M_1 \cup \dots \cup \M_N. \]
	Let $X= \{x_1, \dots, x_n\}$ be i.i.d. samples from a distribution $\mu$ on $\M$ of the form:
	\begin{equation}
	d\mu = \sum_{l=1}^N w_l \rho_l(x)d \vol_{\M_l}(x),\ \  \text{where}\   w_l >0 \nc , \quad \sum_{l=1}^N w_l =1. 
	\label{eqn:DefMu}
	\end{equation} 
	In the above, for each $l$, $d\vol_{\M_l}$ is used to denote integration with respect to the Riemannian volume form associated to the manifold $\M_l$, and the probability density $\rho_l: \M_l \rightarrow \R$ is assumed to be $C^2(\M_l)$ and satisfy 
	\[    \frac{1}{c_\rho} \leq \rho_l(x) \leq c_\rho, \quad \forall l=1, \dots, N\]
	for some positive constant $c_\rho>1$. We use $\mu_l$ to denote the probability measure $\rho_l d\vol_{\M_l}$. Notice that from \eqref{eqn:DefMu} it follows that the number of data points $n_l$ in manifold $\M_l$ is with very high probability within \new{the interval $[w_l n- t , w_l n + t]$} for some tolerance level $t$ at least in the order of $\sqrt{n}$.

	While we will not require the manifolds $\M_l$ to be separated from each other in a distance sense (i.e., we allow manifolds to intersect with each other), we will assume that they are sufficiently ``well separated" in an angular sense that we specify below and \blue that we illustrate in Figure \ref{fig:assumption1}.\nc

	\begin{assumption}
		\label{assump:WellSeparated}
		For every $l,k$ we assume:
		
		\begin{enumerate}
			\item The intersection $\M_{lk}:=\mathcal{M}_l \cap \mathcal{M}_k$ is either the empty set or a smooth manifold of dimension $m_{kl}$ satisfying $0 \leq m_{kl } <\min \{m_l, m_k\}$. In particular, $\M_{lk}$ is of measure zero according to $\vol_{\M_l}$ and $\vol_{\M_k}$. 
			
			\item For every point $x$ in $\mathcal{M}_l\cap \mathcal{M}_k$ we have:
			\begin{equation}
			\label{eqn:AngleConstraint}
			\sup_{v\in \mathcal{T}_x\mathcal{M}_{lk}^{\perp_l}, \widetilde{v}\in \mathcal{T}_x\mathcal{M}_{lk}^{\perp_k}} |\angle(v,\widetilde{v})-\frac{\pi}{2}|\leq\beta,
			\end{equation}
			for some fixed $\beta$ strictly smaller than $\frac{\pi}{2}$. In the above, $\angle(v,\widetilde{v})$ denotes the angle between vectors $v, \tilde v$ (recall that all manifolds are embedded in the ambient space $\R^d$), and $\T_{x}\M_{lk}^{\perp_{l}}$ denotes the orthogonal complement of $\T_{x}\M_{lk}$ in $\T_x \M_l$, and $\T_{x}\M_{lk}^{\perp_{k}}$ is defined analogously.
		\end{enumerate}
	\end{assumption}
	
		\begin{figure}
		\centering
		\includegraphics[scale=0.4]{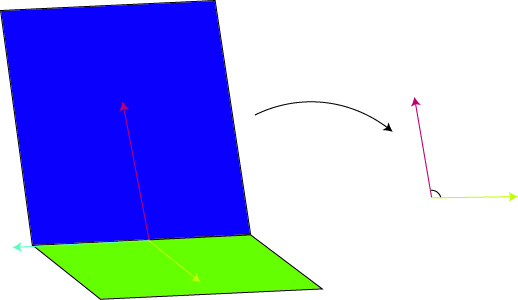}
		\put(-80,10){$\T\M_{l}$}
		\put(-235,20){$\T\M_{kl}$}
		\put(-240,90){$\T\M_{k}$}
		\put(10,40){$\T\M_{kl}^{\perp l}$}
		\put(-65,102){$\T\M_{kl}^{\perp k}$}
		\caption{\blue $\T\M_{kl}^{\perp k}$ is the orthogonal complement of $\T\M_{kl}$ in $\T\M_k$. $\T\M_{kl}^{\perp l}$ is defined analogously. The second part in Assumption \ref{assump:WellSeparated} is better satisfied when the angle between these spaces is close to ninety degrees.\nc  }		\label{fig:assumption1}
	    \end{figure}

	In the above, and in the remainder, we use $\mathcal{T}_x{\mathcal{M}_l}$ to denote the tangent plane to $\M_l$ at the point $x\in \M_l$; also, we use $\T\M_l$ to denote the tangent bundle associated to $\M_l$. Notice that the second condition in Assumption \ref{assump:WellSeparated} states that if two manifolds $\M_l$ and $\M_k$ do intersect, they do so in a non-tangential way; see Figure \ref{fig:assumption1} below.

	%

	\medskip

	\nc

	\subsection{Fully inner connected, and sparsely outer connected graphs}\label{Section:Conditions on algorithms for the recovery of tensorized spectrum from data clouds}


	We endow the data set $X$ with a weighted graph structure $(X, \omega)$, where the weights $\omega$ are specified by the data. In this section we present the definitions of fully inner connected and sparsely outer connected graphs. The notion of full inner connectivity depends on a prespecified family of base proximity graphs that we introduce next. 
	\begin{definition}
		Given $0\leq \veps_- < \veps_+$ and data points $x_i,x_j$, we define their $\veps_+, \veps_-$-weight as:
		\[  \omega_{ij}^{\veps_+, \veps_-}:= \begin{cases}  1 \quad \text{ if } \veps_- \leq |x_i-x_j| \leq \veps_+ \\ 0 \quad  \text{otherwise}.   \end{cases}  \]	
		We use $\omega_{ij}^{\veps}$ as shorthand notation for $\omega_{ij}^{\veps,0}$. Notice that with this definition we have the identity:
		\begin{equation*}
		\omega^{\veps_+,\veps_-}_{yx}=\omega^{\veps_+}_{yx}-\omega^{\veps_-}_{yx}.
		\end{equation*}
	\end{definition}
	\begin{remark}
	\label{rem:RemarkAnnular}
	The above definition extends the notion of $\veps$-proximity graph and in principle allows pairs of points that are too close to each other to have zero weight. While in the literature this annular proximity graphs have not been given any attention, we will see later on that this extended notion is convenient from  qualitative and quantitative points of view for the MMC problem; \blue see the discussion surrounding Lemmas \ref{lemma: path algorithm, different manifolds |x-y|<epsilon_+} and \ref{Lemma: path algorithm, different manifolds |x-y|>epsilon_-}. \nc
	\end{remark}
	
	\begin{definition}[Fully inner Connected graphs]\label{Inner Fully Connected}
		Let $X={x_1, \dots, x_n}$ be samples from $\mu$ as defined in \eqref{eqn:DefMu}. A weighted graph $(X,\omega)$ is said to be \textit{fully inner connected} relative to the $\veps_+,\veps_-$ weights as $n \rightarrow \infty$, if with probability $1-C_1(n)$, where $C_1(n) \rightarrow 0$ as $n \rightarrow \infty$, for any pair of points $x_i, x_j$ belonging to the same manifold $\M_k$ we have $\omega_{x_i,x_j}=\omega^{\veps_+,\veps_-}_{x_i,x_j}$.

		%
	\end{definition}

	\begin{remark}
	\blue  It is possible to generalize the definition of full inner connectivity considered here and adapt it to other base proximity graph constructions like weighted proximity graphs. Indeed, the essential requirement that the full inner connectivity condition imposes on $(X, \omega)$ is that it should behave like a graph that can capture the underlying geometry of each individual manifold. We have chosen annular proximity graphs here because 1) they can be used to capture the geometry of the underlying manifolds (as shown by our analysis: just take Theorems \ref{Rate of convergence for eigenvalues} and \ref{convergence rate for eigenvectors} and consider the case where the number of manifolds is equal to one), and 2) because they introduce an extra degree of flexibility that, as we show later on, allows us to prove better recovery guarantees for MMC than what we can show for standard $\veps$-graphs; see Remark \ref{rem:RemarkAnnular}. 
	
	\end{remark}

	Next, we introduce the notion of spare outer connectivity.
	
	\begin{definition}[Sparsely Outer Connected graphs]\label{Outer Partially Connected} Let $X={x_1, \dots, x_n}$ be samples from $\mu$ as defined in \eqref{eqn:DefMu}, and let $(X,\omega)$ be a weighted graph. Let $N_{sl}$ be the number of connections between $x_i\in \M_s$ and $x_j \in \M_l$ such that $\omega_{ij}>0$, and let  \[N_0:=\max_{l\not = s} \{ N_{ls} \}.\]
		The graph is said to be \textit{sparsely outer connected} relative to $\veps_+$ and $\veps_-$ converging to zero as $n \rightarrow \infty$ if with probability one $\frac{N_0}{n^2(\veps_+^{m+2}-\veps_-^{m+2})}\rightarrow 0$ as $n \rightarrow \infty$. We recall that $m= \max_{l=1, \dots, N} m_l$.
	\end{definition}

	The above notions will capture the intuitive desire of giving high weights to pairs of points that are close to each other when they belong to the same manifold (full inner connectivity condition) and to give low weights to pairs of points when they lie on different manifolds (sparse outer connectivity condition). \blue These notions are geometric adaptations to the setting of interest of general notions explored in the literature to describe the feasibility of a clustering problem. We elaborate on this next.
 
 In the setting considered in \citet{ng2001spectral}, for example, given a network $(X, W)$, four conditions on the network (that depend on the Laplacian and degree function of the network) are proposed to ensure that a certain spectral embedding constructed from $(X, W)$ maps the original set of nodes $X$ to points that are close to a set of orthogonal vectors in Euclidean space; notice that the conditions in \citet{ng2001spectral} do not rely on any specific modelling assumption on the generative process that produces the data set $X$. Works like \citet{schiebinger2015geometry} and \citet{trillos2021geometric} have taken a different perspective and introduced sufficient geometric conditions on certain families of \textit{generative models} that, at the ``ground-truth" level, guarantee the feasibility of a certain ground-truth level clustering problem. These works then show that, when their proposed modelling assumptions hold, the conditions in \citet{ng2001spectral} are satisfied with high probability by certain network constructions $(X, W)$, where $X$ is a set of samples from the generative model, and $W$ is a suitable weight matrix over $X$. The notions of inner connectivity and outer connectivity considered here can be interpreted as conditions that a given weight matrix $W$ built over a data set $X$ sampled from a model like \eqref{eqn:DefMu} must satisfy in order for the network $(X, W)$ to: 1) satisfy the conditions in \citet{ng2001spectral} and 2) have clusters that are consistent with the underlying manifolds in the generative model \eqref{eqn:DefMu}. In this sense, the results that we present in this paper are analogous to those in \citet{schiebinger2015geometry,trillos2021geometric}, except that the geometric structure of the generative models in our paper is substantially different from the ones in those works. Other works in the literature such as \citet{vu_2018} have considered other types of ``clusterability" conditions, requiring data points within each cluster to be close to each other and points from different clusters to be far away from each other. These conditions are certainly not satisfied in this paper, mainly because the separation between manifolds can in fact be equal to zero.

	\blue 
	
	\medskip
	
Before we finish this section, we remark that the discussion in the upcoming sections \ref{sec:TensorizedLap}-\ref{subsection: Different dimensions} will not be restricted to any particular graph construction. In the results presented there, we quantify the error of approximation of the spectra of tensorized Laplacians at the continuum level from the graph Laplacian associated to $(X, \omega)$. This approximation error will naturally depend on the quantities $C_1(n)$ and $N_0$ appearing in the definition of the inner and outer connectivity conditions. In section \ref{sec:GraphConstruct}, we provide one example of a family of graphs that satisfies the inner and outer connectivity conditions. The graphs in that family are obtained by pruning an $\veps_+, \veps_-$ graph, removing edges between points for which there is no almost straight path connecting them. In section \ref{Section: Plane Algorithm with epsilon_+,epsilon_- graph setting}, we discuss other popular choices of weights $\omega$ that are based on the comparison of local tangent planes, but that, as we will discuss, do not, in general, satisfy the full inner connectivity condition. 
	
	\nc 

	\subsection{Basic properties of the spectrum of the operator $\Delta_\M$}
	\label{sec:TensorizedLap}
	
	In order to state our main theoretical results in sections \ref{sec:SameDim} and \ref{subsection: Different dimensions} we first discuss some basic properties of the spectrum of the operator $\Delta_\M$ in \eqref{eqn:LaplacianContinuum} and its relation to the MMC problem.

	Let $L^2(\mu)$ be the space of $N$ tuples $(f_1, \dots, f_N)$ where each $f_l \in L^2(\mu_l)$. We endow the space $L^2(\mu)$ with the tensorized inner product:
	\[ \langle  f , g \rangle_{L^2(\mu)}:= \sum_{l=1}^N w_l \langle f_l, g_l \rangle_{L^2(\mu_l)}    = \sum_{l=1}^N w_l \int_{\M_l} f_l(x) g_l(x) d\mu_l(x),    \]
	where $f=(f_1, \dots, f_N) \in L^2(\mu)$ and $g=(g_1, \dots, g_N) \in L^2(\mu)$. A tensorized Sobolev space $H^1(\mu)$ is defined as the space of $f=(f_1, \dots, f_N) \in L^2(\mu)$ for which $f_l \in H^1(\M_l)$ for each $l=1, \dots, N$. In particular, for elements $f \in H^1(\mu)$ the quantity
	\[ \sum_{l=1}^N w_l^2\int_{\M_l}  |\nabla f_l(x)|^2 \rho^2_l(x) dx  \]
	is finite. We then define the weighted Dirichlet energy:
	\begin{equation}\label{Equ: dirichlet energy}
	D(f):= \begin{cases} \sum_{l=1}^N w_l^2\int_{\M_l} |\nabla f_l(x)|^2 \rho^2_l(x) d \vol_{\M_l}(x), \quad \text{ if } f \in H^1(\mu), \\  +\infty, \quad \text{ if } f \in L^2(\mu) \setminus H^1(\mu) . \end{cases}
	\end{equation}
	
	Now, notice that the operator $\Delta_\M$ is self-adjoint with respect to the inner product $\langle  \cdot , \cdot \rangle_{L^2(\mu)}$ simply because each of the operators $\Delta_{\M_l}$ is self-adjoint w.r.t. $\langle \cdot ,\cdot  \rangle_{L^2(\mu_l)}$ (e.g. see \citet{trillos2018variational}). Given that each $L^2(\M_l, \rho_l)$ admits an orthonormal basis $\{ f_l^k \}_{k \in \N}$ of eigenvectors of $\Delta_{\M_l}$, we can see that the set of $f^k \in L^2(\mu)$ of the form
	\begin{equation}
	\label{eqn:FormEigen}
	f^k=(0, \dots, \frac{1}{\sqrt{w_l} } f_l^k, \dots, 0)
	\end{equation}
	for $k \in \N$ and $l=1, \dots, N$ is an orthonormal basis for $L^2(\mu)$. In addition, for such $f^k$ we have 
	\[\Delta_{\M} f^k = (0 , \dots,   \frac{w_l}{\sqrt{w_l}} \Delta_{\M_l} f_l^k ,\dots, 0) = w_l\lambda (0 , \dots,   \frac{1}{\sqrt{w_l}} f_l^k ,\dots, 0) =w_l\lambda f^k  \]
	for some eigenvalue $\lambda$ of $\Delta_{\M_l}$. In conclusion, we can build an orthonormal basis for $L^2(\mu)$ consisting of eigenfunctions of $\Delta_\M$ of the form \eqref{eqn:FormEigen}. From the above we can also conclude that the set of eigenvalues of $\Delta_\M$ is the set of numbers of the form $w_l\lambda$ for some $l$, where $\lambda$ is an eigenvalue of $\Delta_{\M_l}$. In terms of the Dirichlet energy defined in \eqref{Equ: dirichlet energy}, the eigenvalues of $\Delta_\M$, arranged in increasing order according to multiplicity, can be written as 
	\begin{equation}
	\label{minmax principle for laplacian}
	\lambda_{l}=\min _{S \in \mathfrak{S}_{l}} \max _{f \in S \backslash\{0\}} \frac{D(f)}{\|f\|_{L^{2}(\mu)}^{2}}.
	\end{equation}
	where $\mathfrak{S}_{l}$ denotes the set of all linear subspaces of $L^2(\mu)$ of dimension $l$.
	
	Regarding the zero eigenvalue of $\Delta_\M$, notice that since the manifolds $\M_l$ were assumed to be connected, the multiplicity of the zero eigenvalue for the operator $\Delta_\M$ is equal to $N$. Moreover, an orthonormal basis for this eigenspace is the set of functions of the form $(0, \dots, c_l\mathds{1}_{\M_l}, \dots, 0)$ where $c_l$ is a normalization constant. This observation is the key property that allows us to think of the multi-manifold clustering problem in terms of the spectrum of the operator $\Delta_\M$. However, it should be clear that the tensorized Laplacian has much more information than that needed to solve the MMC problem.

	For convenience, we also introduce Dirichlet energies associated to each manifold $\M_l$:
	\begin{equation}
	\label{Equ: dirichlet energyLocal}
	D_l(f_l):= \begin{cases} \int_{\M_l} |\nabla f_l(x)|^2 \rho^2_l(x) d \vol_{\M_l}(x), \quad \text{ if } f_l \in H^1(\mu_l), \\ \\ +\infty \quad \text{if} f \in L^2(\mu_l) \setminus H^1(\mu_l).\end{cases}
	\end{equation}

	\subsection{Convergence results in the $m_1=\dots= m_N$ case }
	\label{sec:SameDim}
	
	In this section we establish high probability error bounds between the spectrum of a rescaled version of the graph Laplacian $\Delta_n$ defined in \eqref{eqn:GraphLaplacian1} and the spectrum of $\Delta_\M$ under the additional assumption that all manifolds $\M_k$ have the same dimension. The results presented in this section apply to generic weighted graphs $(X,\omega)$, but the error estimates are only meaningful when the quantities $N_0$, $C_1(n)$ and $\veps_+$ from section \ref{Section:Conditions on algorithms for the recovery of tensorized spectrum from data clouds} scale appropriately with the number of data points. In section \ref{sec:GraphConstruct} we present a specific construction for $(X, \omega)$ where we can make our error estimates concrete.

	In what follows we make the following assumptions on the parameters $\varepsilon_+,\varepsilon_-,\widetilde{\delta},\theta$. Here $\tilde{\delta}$ and $\theta$ are small parameters that we use to tune the probabilities of some random events defined in corollary \ref{cor:Densities}. 
	\begin{assumption}\label{assumption: main}
		We assume that the quantities $\veps_+, \veps_-, \tilde{\delta}, \theta $ satisfy:
		\begin{enumerate}[(1)]
			\item $\eps_+\leq \min\{1,\frac{R}{2},CK^{-1/2},i_0\}$, where $R,K$ are uniform upper bounds on the reach and on the absolute values of the sectional curvatures for all the manifolds, $i_0$ is a lower bound on the injectivity radius of all manifolds, and $C$ is a constant no larger than 1.\label{Assump2:1}
			\item $\varepsilon_-\le \frac{1}{4}\varepsilon_+$; the $\frac{1}{4}$ here is an arbitrary number smaller than $1$.\label{Assump2:2}
			\item $\frac{c}{n^{1/m}}<\widetilde{\delta}$\label{Assump2:3}
			\item $C(\widetilde{\delta}+\theta)\leq \frac{1}{2 c_\rho}$, where $\frac{1}{c_\rho}$ is the lower bound of $\rho$ .\label{Assump2:4}
		\end{enumerate}
	\end{assumption}
	With the above assumptions we can make sure that with the underlying $\veps_+, \veps_-$-weighted graph we can approximate the operators $\Delta_{\M_k}$ in each of the $\M_k$; this part does not rely on the assumption that all manifolds have the same dimension, and only depends on the full inner connectivity property. For the spectrum of the graph Laplacian associated to $(X,\omega)$ to successfully recover the spectrum of $\Delta_\M$ we need $(X,\omega)$ to be fully inner connected and sparsely outer connected as we will make explicit in our first theorem.

	\begin{theorem}[Convergence rate for eigenvalues]\label{Rate of convergence for eigenvalues}

		Let $\mu$ be a probability measure on $\M $ as in \eqref{eqn:DefMu}. Suppose that the $\M_k$ forming the $\M$ satisfy Assumptions \ref{assump:WellSeparated}, and assume also that $ m_1=\dots=m_N$. Let $X=\{ x_1, \dots, x_n \}$ be i.i.d. samples from $\mu$. Let $(X,\omega)$ be a symmetric weighted graph and let $\mathcal{L}$ be the rescaled graph Laplacian:
		\begin{equation}
		\mathcal{L}u(x):=\frac{1}{n^2(\veps_+^{m+2}-\veps_-^{m+2})}\sum_{y\in  X }\omega_{xy}(u(x)-u(y)), \quad x \in X , u : X \rightarrow \R. 
		\label{eqn:GraphLaplacian}
		\end{equation}
		Suppose that the quantities $\widetilde{\delta}, \theta, \varepsilon_+,\varepsilon_-$ satisfy Assumptions \ref{assumption: main}. Let $\lambda_k^{\veps_+, \veps_-}$ be the $k$-th eigenvalue of $\mathcal{L}$ and let $\lambda_k$ be the $k$-th eigenvalue of $\Delta_\M$, where $\Delta_\M$ is the tensorized Laplacian from \eqref{eqn:LaplacianContinuum}. \blue Finally, let $t:= \frac{n \omega_{\min}}{2}$. \nc  Then there exists a constant $C$ (independent of $k$) such that, with probability at least $1-\sum_{l=1}^N (n w_l+t) \exp \left(-{C}(n w_l-t) \theta^{2} \widetilde{\delta}^{m}\right)-2N\exp \left(\frac{-2 t^{2}}{n}\right)-C_1(n)$, for every $k \in \N$ for which
		\[
		C\widetilde{\delta}\sqrt{\lambda_k}+C(\theta+\widetilde{\delta})  <   \frac{1}{k} \nc,
		\]
		we have:
		\[
		\left|\lambda_{k}^{\varepsilon_+,\varepsilon_-}-\sigma_\eta \lambda_{k}\right| \leq e_k+C\left(\veps_+(\sqrt{\lambda_k}+1)+\theta+\frac{\widetilde{\delta}}{\veps_+}\right)\lambda_k.
		\]
		In the above, $e_k=\frac{CN_0}{n^2(\veps_+^{m+2}-\veps_-^{m+2})}\left( 1+C'(\lambda_k^{m/2+1}+\widetilde{\delta}\sqrt{\lambda_k}+\theta+\widetilde{\delta}) \right)$, and $C_1(n)$ and $N_0$ are introduced in Definition \ref{Inner Fully Connected} and Definition \ref{Outer Partially Connected}, respectively. The constant $\sigma_\eta$ is given by:
		\begin{equation}
		\sigma_{\eta}:=\int_{\R^m}|y_1|^2\eta(|y|)dy,
		\label{eqn:sigmaeta}
		\end{equation}
		where $\eta=\mathds{1}_{r\le 1}$ \new{and $y_1$ is the first coordinate of $y$}. 
	\end{theorem}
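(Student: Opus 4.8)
The plan is to compare the Rayleigh quotients of $\mathcal{L}$ and $\Delta_\M$ via the min--max characterization \eqref{minmax principle for laplacian}, following the by-now-standard two-sided strategy for spectral convergence of graph Laplacians (as in \cite{trillos2019error,Lu2019GraphAT,BIK}), but carefully tracking the two sources of error that are specific to the multi-manifold setting: the failure of full inner connectivity (controlled by $C_1(n)$) and the spurious cross-manifold edges (controlled by $N_0$). The first reduction is to condition on the ``good event'': by Definition \ref{Inner Fully Connected}, with probability $1-C_1(n)$ the weights $\omega_{x_ix_j}$ restricted to each manifold $\M_k$ coincide with the annular weights $\omega^{\veps_+,\veps_-}_{x_ix_j}$, so on this event the quadratic form of $\mathcal{L}$ splits as
\[
\langle \mathcal{L} u, u\rangle = \sum_{k=1}^N \frac{1}{n^2(\veps_+^{m+2}-\veps_-^{m+2})}\sum_{x_i,x_j\in \M_k}\omega^{\veps_+,\veps_-}_{x_ix_j}(u(x_i)-u(x_j))^2 + R(u),
\]
where $R(u)$ is the contribution of cross-manifold edges, which involves at most $N_0$ nonzero weights per pair $(l,s)$ and is therefore bounded by $CN_0 \|u\|^2_\infty / (n^2(\veps_+^{m+2}-\veps_-^{m+2}))$; interpolating $\|u\|_\infty$ against the discrete $L^2$ norm and the Dirichlet energy using a discrete Sobolev/Gagliardo--Nirenberg-type inequality on each manifold gives the $e_k$ term, with the $\lambda_k^{m/2+1}$ factor coming precisely from the scaling in such an inequality applied to an eigenfunction at frequency $\lambda_k$.

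The second and main step is the single-manifold comparison: for each $k$, the annular graph Dirichlet energy on $\M_k$ with i.i.d.\ samples from $\mu_k$ approximates $\sigma_\eta w_k D_k(\cdot)$ up to a multiplicative factor $1 + C(\veps_+(\sqrt{\lambda_k}+1)+\theta+\widetilde\delta/\veps_+)$, on an event of probability at least $1-(nw_k+t)\exp(-C(nw_k-t)\theta^2\widetilde\delta^m) - 2\exp(-2t^2/n)$. This is exactly the kind of estimate available in the single-manifold literature; I would cite or adapt the argument from \cite{Lu2019GraphAT,trillos2019error}: first a ``non-local'' continuum approximation step where the Riemann sums $\frac1n\sum$ are compared to integrals against $\mu_k$ (this is where the Bernstein-type concentration and the $\theta,\widetilde\delta$ discretization scales enter, and where one needs $n_k\approx w_k n$ data points per manifold), and then a ``localization'' step where the non-local Dirichlet form $\iint \omega^{\veps_+,\veps_-}(x,y)|u(x)-u(y)|^2$ is shown to converge to the local one with constant $\sigma_\eta$ as $\veps_+\to 0$, the $\veps_+$-error carrying a factor $(\sqrt{\lambda_k}+1)$ when tested against functions of frequency $\lambda_k$. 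Because the test functions for the min--max principle of $\Delta_\M$ are, by \eqref{eqn:FormEigen}, supported on a single manifold, these per-manifold estimates assemble directly: taking the $l$-th eigenspace of $\Delta_\M$ as a trial subspace for $\mathcal{L}$ gives the upper bound $\lambda_k^{\veps_+,\veps_-}\le \sigma_\eta\lambda_k + (\text{errors})$, and conversely, using that the bottom eigenvectors of $\mathcal{L}$ are (on the good event) approximately constant on each $\M_k$ up to the $e_k$ cross-edge correction, one transfers a trial subspace back to $L^2(\mu)$ for the reverse bound.

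The condition $C\widetilde\delta\sqrt{\lambda_k}+C(\theta+\widetilde\delta)<1/k$ enters to guarantee that the first $k$ eigenvalues of the non-local operator remain simple-or-at-least-separated enough from the rest that the trial-subspace transfer does not lose dimension: concretely, one needs the perturbation of each of the first $k$ Rayleigh quotients to be smaller than the relevant spectral gap, and with Weyl's law $\lambda_k\sim k^{2/m}$ the worst gap near index $k$ scales like $\lambda_k/k$, producing the $1/k$ on the right-hand side. The hard part, and the place where genuinely new work beyond the smooth single-manifold case is needed, is the control of $R(u)$ and more precisely the transfer of eigenfunctions of $\mathcal{L}$ back to $L^2(\mu)$: one must show that low-lying graph eigenfunctions cannot ``hide'' energy in the cross-manifold edges or oscillate across the intersection $\M_{lk}$, and the measure-zero, non-tangential intersection hypothesis (Assumption \ref{assump:WellSeparated}) together with the sparse outer connectivity bound $N_0 = o(n^2(\veps_+^{m+2}-\veps_-^{m+2}))$ is exactly what makes $e_k\to 0$; quantifying this cleanly — in particular, getting the polynomial-in-$\lambda_k$ dependence in $e_k$ rather than something worse — is the technical crux, and I expect it to occupy the bulk of the proof in the appendix.
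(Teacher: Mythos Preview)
Your high-level architecture is right: the paper also runs the two-sided min--max comparison using a discretization map $\widetilde P:L^2(\mu)\to L^2(\mu^n)$ and an interpolation map $\mathcal I:L^2(\mu^n)\to L^2(\mu)$, and the split $b^{\veps_+,\veps_-}=b_I^{\veps_+,\veps_-}+b_O^{\veps_+,\veps_-}$ into inner (per-manifold) and outer (cross-manifold) parts is exactly the one used. However, two of your mechanisms are off, and one piece you flag as ``the hard part'' is in fact not needed.

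First, the role of the condition $C\widetilde\delta\sqrt{\lambda_k}+C(\theta+\widetilde\delta)<1/k$ is not a spectral-gap/Weyl-law argument. In the paper it is purely a linear-independence check: Proposition \ref{Proposition: Discretization and interpolation maps are almost isometries} gives $|\langle f_i,f_j\rangle_{L^2(\mu)}-\langle \widetilde Pf_i,\widetilde Pf_j\rangle_{L^2(\mu^n)}|\le C\widetilde\delta\sqrt{\lambda_k}+C(\theta+\widetilde\delta)$, so if this is $<1/k$ the Gram matrix of $(\widetilde Pf_1,\dots,\widetilde Pf_k)$ is a small perturbation of the identity and the trial subspace has full dimension. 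No gap estimate is invoked.

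Second, the $e_k$ term does not come from a discrete Sobolev/Gagliardo--Nirenberg inequality on the graph. It arises only in the \emph{upper} bound, where the test function is $u=\widetilde Pf$ for $f$ in the span of the first $k$ continuum eigenfunctions; then $b_O^{\veps_+,\veps_-}(\widetilde Pf)$ is bounded by $\tfrac{CN_0}{n^2(\veps_+^{m+2}-\veps_-^{m+2})}\sum_l\|f^l\|_{L^\infty(\M_l)}^2$, and $\|f^l\|_{L^\infty}$ is controlled by continuum elliptic regularity plus Sobolev embedding on each $\M_l$ (this is where the exponent $m/2+1$ comes from). Discrete Sobolev inequalities on random annular graphs are not established in the paper and would be a substantial detour.

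Third, and most importantly, the step you call ``the technical crux'' --- showing that low graph eigenvectors cannot hide energy in cross-manifold edges when transferring back to $L^2(\mu)$ --- is not needed for the eigenvalue lower bound at all. The interpolation map $\mathcal I=\Lambda\widetilde P^*$ acts coordinatewise, so $D(\mathcal I u)$ is controlled by $b_I^{\veps_+,\veps_-}(u)$ alone (Proposition \ref{Proposition: Inequality for Dirichlet energies}(1)); since $b_O^{\veps_+,\veps_-}\ge 0$, one simply drops it and gets $D(\mathcal I u)\le(1+\text{err})\,b^{\veps_+,\veps_-}(u)$ for every $u$, with no $e_k$ contribution on that side. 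The genuinely new technical work is elsewhere: building the smoothening operator $\Lambda_{\veps_+,\veps_-}$ with the \emph{annular} kernel and proving Lemmas \ref{lemma: restrict for tau}--\ref{lemma: D^NL and b}, which cannot be quoted from \cite{trillos2019error,Lu2019GraphAT} because those papers treat only the ball kernel $\eta(|\cdot|/\veps)$.
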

	
	
	\begin{remark} 

		\begin{enumerate}

			\item In general, we should expect a trade-off between the quantities $C_1(n)$ and $N_0$. That is, in general, an attempt at making $N_0$ smaller (i.e., erase connections between different manifolds) will typically result in a smaller probability of having a graph that is well connected within each manifold $\M_l$.
			
			\item The benefits that come from taking $\veps_- >0$ for the MMC problem are not explicit in the error bounds from Theorem \ref{Rate of convergence for eigenvalues}. However, as we will see later on, by tuning  $\veps_-$ appropriately, one can substantially eliminate connections between data points in different manifolds when one considers $\veps_- \sim \veps_+$. This means a substantial decrease in $N_0$.  We explain this in Remark \ref{rem:Eps-equalzero} for the specific annular graph construction with angle constraints. The fact that we can improve the performance of MMC algorithms by introducing $\veps_+,\veps_-$-graphs motivates the theoretical analysis that we present in the Appendix.
			
			%
			
			
			\item The proof of the estimates in Theorem \ref{Rate of convergence for eigenvalues} relies on a variational approach that compares Dirichlet energies at discrete and continuum levels. This approach has been used before in \citet{BIK,trillos2019error,Lu2019GraphAT}. However, the structure of the $\veps_+, \veps_-$-graph that we consider here forces us to modify the analysis and present new proofs. Even for a single manifold $\M=\M_1$, the analysis of graph Laplacians on $\veps_+, \veps_-$-graphs is a technical contribution of this work. The actual proof of Theorem \ref{Rate of convergence for eigenvalues} appears in section \ref{Proof:ThmEigen} in the Appendix. Several technical preliminary results are established in the preceding sections. 
			
			\item The scaling factor relating $\mathcal{L}$ and $\Delta_n$ in \eqref{eqn:GraphLaplacian} is irrelevant in practice because the eigenvectors of $\Delta_n$ are the same as those for $\mathcal{L}$, and the ratio between eigenvalues of $\Delta_n$ coincides with the ratio of eigenvalues of $\mathcal{L}$. In other words, in practice we can work directly with $\Delta_n$ without having to compute the rescaling factor.\nc
			
			\item If we choose $ 1\gg \veps_+ \gg \left(\frac{\log(n)}{n} \right)^{1/m}$, then, with high probability, the error of approximation of eigenvalues scales like:
			\[  \frac{N_0}{n^2\veps_+^{m+2}} + \frac{\left(\frac{\log(n)}{n} \right)^{1/m}}{\veps_+} + \veps_+. \]
			This result is analogous to results in \citet{BIK} and \citet{trillos2019error}, except that now we have the extra $\frac{N_0}{n^2\veps_+^{m+2}}$ term. In order for this error estimate to converge to zero in the large data limit we thus need to require the graph to satisfy the sparse outer connectivity condition.
			
		\end{enumerate}
		
		\label{rem:MainTheorem}	
	\end{remark}
	
	\begin{theorem}[Convergence rate for eigenvectors]\label{convergence rate for eigenvectors}
		Under the same setting and assumptions as in Theorem \ref{Rate of convergence for eigenvalues},
		for every $k \in \mathbb{N}$ there is a constant $c_{k}=c_k(\M)$ such that if
		\begin{equation*}
		e_k + C\left(\veps_+\sqrt{\lambda_k}+\veps_+^2+\theta+\frac{\widetilde{\delta}}{\veps_+}\right)\leq c_k,
		\end{equation*}
		then, with probability at least $1-\sum_{l=1}^N (n w_l+t) \exp \left(-{C}(n w_l-t) \theta^{2} \widetilde{\delta}^{m}\right)-2N\exp \left(\frac{-2 t^{2}}{n}\right)-C_1(n)$, for every $v_k$ normalized eigenvector of $\mathcal{L}$ with eigenvalue $\lambda_{k}$, there is a normalized eigenfunction $f_{k}$ of $\Delta_{\M}$ with eigenvalue $\lambda_{k}$ such that
		\begin{equation*}
		\lVert f_k-v_k\lVert_{L^2(\mu^n)}\leq \left[Ce_k+C\left(\veps_+\sqrt{\lambda_k}+\veps_+^2+\theta+\frac{\widetilde{\delta}}{\veps_+}\right)\right]^{1/2}+C_{\mathcal{M},\lambda}\widetilde{\delta},
		\end{equation*}
		where $e_k$ is the same as in Theorem \ref{Rate of convergence for eigenvalues}.
	\end{theorem}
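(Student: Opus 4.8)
The plan is to deduce the eigenvector convergence from the eigenvalue convergence (Theorem \ref{Rate of convergence for eigenvalues}) together with a closeness-of-Dirichlet-energies estimate, following the standard variational scheme used in \cite{BIK,trillos2019error,Lu2019GraphAT} but adapted to the $\veps_+,\veps_-$-graph and the tensorized operator $\Delta_\M$. The key ingredients are: (i) a transportation map $T_n$ between $\mu$ and the empirical measure $\mu^n$ with $\|\mathrm{Id}-T_n\|_{L^\infty}\lesssim \widetilde\delta$ on the good event, which lets us move functions between $L^2(\mu)$ and $L^2(\mu^n)$; (ii) interpolation/restriction operators that, on the good event where $(X,\omega)$ is fully inner connected, compare the rescaled graph Dirichlet form $\langle \mathcal{L}u,u\rangle$ with the continuum tensorized Dirichlet energy $\sigma_\eta D(f)$ up to the multiplicative errors $\veps_+(\sqrt{\lambda_k}+1)+\theta+\widetilde\delta/\veps_+$ and an additive $e_k$ term coming from the $N_0$ cross-manifold edges (exactly the quantities controlled in the proof of Theorem \ref{Rate of convergence for eigenvalues}); (iii) the spectral gap of $\Delta_\M$ at level $\lambda_k$, which is encoded in the manifold-dependent constant $c_k=c_k(\M)$.

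First I would fix the good event of probability at least $1-\sum_{l=1}^N (n w_l+t)\exp(-C(nw_l-t)\theta^2\widetilde\delta^m)-2N\exp(-2t^2/n)-C_1(n)$ on which all the estimates from Theorem \ref{Rate of convergence for eigenvalues} hold simultaneously; in particular all eigenvalues $\lambda_1^{\veps_+,\veps_-},\dots,\lambda_k^{\veps_+,\veps_-}$ of $\mathcal{L}$ are within the stated error of $\sigma_\eta\lambda_1,\dots,\sigma_\eta\lambda_k$. Working on this event, let $u_k$ be a normalized eigenvector of $\mathcal{L}$ with eigenvalue $\lambda_k^{\veps_+,\veps_-}$. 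The core step is a standard almost-orthogonality argument: one builds from $u_k$ a trial function at the continuum level (via the interpolation map applied to $u_k\circ T_n$), uses the min-max characterization \eqref{minmax principle for laplacian} and the closeness of the Dirichlet energies to show that this trial function has Rayleigh quotient close to $\lambda_k$, and then — because $\Delta_\M$ has a spectral gap separating $\lambda_k$ from the neighboring distinct eigenvalues — concludes that the trial function is close in $L^2(\mu)$ to the eigenspace of $\Delta_\M$ at eigenvalue $\lambda_k$. Pulling back through $T_n$ and controlling the transport error by $C_{\M,\lambda}\widetilde\delta$ yields a normalized eigenfunction $f_k$ of $\Delta_\M$ with $\|f_k-v_k\|_{L^2(\mu^n)}$ bounded as claimed; the square-root appears because the $L^2$ distance between a near-eigenvector and an eigenspace scales like the square root of the Rayleigh-quotient discrepancy (a Davis–Kahan / Weyl-type estimate for the quadratic form).

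The smallness hypothesis $e_k+C(\veps_+\sqrt{\lambda_k}+\veps_+^2+\theta+\widetilde\delta/\veps_+)\le c_k$ is exactly what guarantees that the total Dirichlet-energy discrepancy is smaller than the spectral gap of $\Delta_\M$ at level $\lambda_k$, so that the projection onto the correct eigenspace is well-defined and the near-eigenvector cannot leak into other eigenspaces. The main obstacle, and the place where genuinely new work beyond \cite{trillos2019error,Lu2019GraphAT} is needed, is establishing the two-sided comparison of Dirichlet energies for the annular $\veps_+,\veps_-$-graph on a \emph{union} of manifolds: one must show that the $\veps_-$-subtraction does not destroy the nondegeneracy of the limiting quadratic form (handled by Assumption \ref{assumption: main}(2), which keeps $\sigma_\eta$ bounded below) and, more delicately, that the contribution of the $N_0$ edges joining distinct $\M_s,\M_l$ to the graph Dirichlet form is controlled by $e_k$ uniformly over trial functions with bounded Rayleigh quotient — this is where the sparse outer connectivity condition enters. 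Since all of these estimates were already assembled in the proof of Theorem \ref{Rate of convergence for eigenvalues}, here I would simply invoke them and carry out the now-routine gap argument; the only additional care is tracking the transport error $C_{\M,\lambda}\widetilde\delta$ separately, as it is not absorbed into the square-root term.
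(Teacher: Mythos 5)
Your overall strategy—comparing Dirichlet energies via discretization/interpolation maps and then invoking a spectral-gap argument—is the same as the paper's, and the assertion that the hypothesis $e_k + C(\cdots)\leq c_k$ encodes the requirement that the total energy discrepancy be smaller than the spectral gap is also correct. However, there are two substantive issues.

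First, a difference of route worth flagging: you start from a normalized discrete eigenvector $u_k$, push it forward to the continuum via interpolation, and then use the spectral gap of $\Delta_\M$. The paper works in the opposite direction: it takes a normalized continuum eigenfunction $f$ of $\Delta_\M$, discretizes it to $u=\widetilde{P}f$, tests the discrete quadratic form $b^{\veps_+,\veps_-}$ on $u$, and compares against the projections $P_S, P_{S_-}, P_{S_+}$ onto the spectral subspaces of $\mathcal{L}^{\veps_+,\veps_-}$ whose eigenvalues lie in, below, or above a window around $\sigma_\eta\lambda$. The paper's direction is more convenient because the object being discretized is a fixed, smooth eigenfunction whose pointwise regularity (via the $L^\infty$ and Sobolev-embedding estimate \eqref{eqn:Linfty}) makes the transport error $|u(x_i)-f(x_i)|\lesssim (\lambda^{m/4+1}+1)\widetilde\delta$ explicit; your direction would require controlling the interpolation operator on a random discrete eigenvector, for which no such regularity is available a priori.

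Second, and this is the genuine gap: your argument controls only the leakage into the \emph{upper} part of the spectrum. The variational/quadratic-form estimate gives
\[
e+\left[1+C\left(\veps_++\theta+\widetilde{\delta}\right)\right]\lambda
\;\geq\;
\lambda_k^{\veps_+,\veps_-}\bigl(\lVert u\rVert^2-\lVert u-P_Su\rVert^2\bigr)
+\lambda_{k+l}^{\veps_+,\veps_-}\bigl(\lVert u-P_Su\rVert^2 - \lVert P_{S_-} u\rVert^2\bigr),
\]
so the gap only forces $\lVert u-P_S u\rVert$ to be small \emph{up to} the term $\lVert P_{S_-}u\rVert$. The Rayleigh quotient alone cannot bound the component of the trial function lying in lower eigenspaces: those modes contribute too little energy to the quadratic form to be detected (in the extreme, the zero-eigenvalue modes contribute nothing at all). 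The paper closes this hole with an inductive argument over the spectrum: the base case $\lambda=0$ is trivial ($P_{S_-}=0$), and the inductive step bounds $\lVert P_{S_-}\widetilde{P}f\rVert$ using the already-proved closeness of the lower eigenvectors and the almost-isometry property of $\widetilde{P}$. Your proposal asserts that "the near-eigenvector cannot leak into other eigenspaces" but provides no mechanism for the lower ones; without the induction, the argument does not close.
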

	
	The proof of this theorem is presented in section \ref{Proof:Eignvector} in the Appendix.
	
	\begin{remark}\label{rem:relaxation of outer condition}
	
	    \new{The sparse outer connectivity condition $\frac{N_0}{n^2(\eps_+^{m+2}-\eps_-^{m+2})}\to 0$ is imposed to guarantee the recovery of the full spectrum of the tensorized Laplacian in the large data limit. However, we highlight that our error estimates continue to be meaningful even if we only impose $\frac{N_0}{n^2(\eps_+^{m+2}-\eps_-^{m+2})}$ to be asymptotically smaller than some small tolerance level $c$.    
}
	\end{remark}
	
	%
	%

	\subsection{Mixed dimensions.}\label{subsection: Different dimensions}
	We generalize our results from section \ref{sec:SameDim} to a setting where the manifolds $\M_k$ may have different dimensions. For convenience, we introduce some notation first.
	
	Without the loss of generality we can assume that the manifolds $\M_k$ are indexed in decreasing order of dimension, i.e. $m=m_1 \geq m_2 \geq \dots \geq m_N$. We let $N_{\max}$ be the number of manifolds with the maximum dimension $m$, i.e. $m_1 = \dots= m_{N_{\max}}> m_{N_{\max} +1}$. We set $\M_{\max}:= \M_1\cup \dots \cup \M_{N_{\max}}$ and write $\langle  f , g \rangle_{L^2(\M_{\max})}$ to represent:
	\[ \langle  f , g \rangle_{L^2(\M_{\max})} =   \sum_{i=1}^{N_{\max}} w_i \langle f_i, g_i \rangle_{L^2(\mu_i)}    = \sum_{i=1}^{N_{\max}} w_i \int_{\M_i} f_i(x) g_i(x) d\mu_i(x).   \]
	We also use $\lVert f \rVert_{L^2(\M_{\max})}^2 =  \langle  f , f \rangle_{L^2(\M_{\max})}$. 
	
	Notice that with the above inner product we can identify (isometrically) elements in $ L^2(\M_{\max})$ with elements in $L^2(\mu)$ that are zero outside of $\M_{\max}$; throughout section \ref{sec:ProofSiffDim} in the Appendix we may use this identification without any further explanation. Finally, we use $\Delta_{\M_{\max}}$ to denote the tensorized Laplacian \eqref{eqn:LaplacianContinuum} for $\M_{\max}$ (i.e. just as in \eqref{eqn:LaplacianContinuum} but with only the first $N_{\max}$ coordinates); we use $D_{\max}$ to denote the corresponding Dirichlet energy defined for $L^2(\M_{\max})$ functions.

	\begin{theorem}
		\label{thm:MixedDimensions}
		Let $\mu$ be a probability measure on $\M $ as in \eqref{eqn:DefMu}. Suppose that the $\M_k$ forming $\M$ satisfy 
		Assumptions \ref{assump:WellSeparated}, and let $N_{\max},\M_{\max}, \Delta_{\M_{\max}}$ be defined as before. Set $\lambda_1, \dots, \lambda_{N}=0$ and let $\lambda_{N+1} \leq \lambda_{N+2} \leq \dots  $ be the list of non-zero eigenvalues of $\Delta_{\M_{\max}}$ repeated according to multiplicity.
		
		Let $X=\{ x_1, \dots, x_n \}$ be i.i.d. samples from $\mu$, let $(X,\omega)$ be a symmetric weighted graph, and let $\mathcal{L}$ be the rescaled graph Laplacian from \eqref{eqn:GraphLaplacian}. Finally, suppose that the quantities $\widetilde{\delta}, \theta, \varepsilon_+,\varepsilon_-$ satisfy Assumptions \ref{assumption: main}.
		
		Then, for some constant $C=C(\M,\mu)$, with probability at least 
		\[1-\sum_{l=1}^N (n w_l+t) \exp \left(-\mathrm{C}(n w_l-t) \theta^{2} \widetilde{\delta}^{m}\right)-2N\exp \left(\frac{-2 t^{2}}{n}\right)-C_1(n),\] for every $k \in \N$ for which
		\[
		C\widetilde{\delta}\sqrt{\lambda_k}+C(\theta+\widetilde{\delta})  <   \frac{1}{k} ,
		\]
		we have:
		\[
		\left|\lambda_{k}^{\varepsilon_+,\varepsilon_-}-\sigma_\eta \lambda_{k}\right| \leq e_k+C\left(\veps_+(\sqrt{\lambda_k}+1)+\theta+\frac{\widetilde{\delta}}{\veps_+}\right)\lambda_k.
		\]
		In the above, $e_k=\frac{CN_0}{n^2(\veps_+^{m+2}-\veps_-^{m+2})}\left( 1+C'(\lambda_k^{m/2+1}+\widetilde{\delta}\sqrt{\lambda_k}+\theta+\widetilde{\delta}) \right)$, and $C_1(n)$ and $N_0$ are introduced in Definition \ref{Inner Fully Connected} and Definition \ref{Outer Partially Connected}, respectively. 
		
		In addition, there is a constant $c_{k}=c_k(\M)$ such that if
		\begin{equation*}
		e_k + C\left(\veps_+\sqrt{\lambda_k}+\veps_+^2+\theta+\frac{\widetilde{\delta}}{\veps_+}\right)\leq c_k,
		\end{equation*}
		then, with probability at least $1-\sum_{l=1}^N (n w_l+t) \exp \left(-{C}(n w_l-t) \theta^{2} \widetilde{\delta}^{m}\right)-2N\exp \left(\frac{-2 t^{2}}{n}\right)-C_1(n)$, for every $u_k$ normalized eigenvector of $\mathcal{L}$ with eigenvalue $\lambda_{k}$, there is a normalized eigenfunction $f_{k}$ of $\Delta_{\M_{\max}}$ with eigenvalue $\lambda_{k}$ such that
		\begin{equation*}
		\lVert f_k-u_k\lVert_{L^2(\mu^n)}\leq \left[Ce_k+C\left(\veps_+\sqrt{\lambda_k}+\veps_+^2+\theta+\frac{\widetilde{\delta}}{\veps_+}\right)\right]^{1/2}+C_{\mathcal{M},\lambda}\widetilde{\delta}.
		\end{equation*}
		In the above, we interpret the functions $f_1, \dots, f_N$ as an orthonormal basis for $Span \{ \mathds{1}_{\M_1}, \dots, \mathds{1}_{\M_N} \}$.
		\nc
	
	\end{theorem}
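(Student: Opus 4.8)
The plan is to reduce the mixed-dimension case to the equal-dimension analysis of Theorem \ref{Rate of convergence for eigenvalues} and Theorem \ref{convergence rate for eigenvectors} by showing that, at the scaling $n^{-2}(\veps_+^{m+2}-\veps_-^{m+2})^{-1}$ chosen in \eqref{eqn:GraphLaplacian}, the lower-dimensional pieces $\M_{N_{\max}+1},\dots,\M_N$ contribute asymptotically nothing to the rescaled graph Dirichlet energy. The intuition is a volume/counting argument: the number of pairs $(x_i,x_j)$ with both points in a fixed manifold $\M_l$ of dimension $m_l$ and within distance $\veps_+$ is of order $n_l^2 \veps_+^{m_l}$ by standard concentration for $U$-statistics (exactly the kind of estimate already invoked implicitly for the full-dimension manifolds, with the probability budget $1-\sum_{l}(nw_l+t)\exp(-C(nw_l-t)\theta^2\widetilde\delta^m) - 2N\exp(-2t^2/n)$ carried over). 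Since $m_l < m$, this is $o(n^2(\veps_+^{m+2}-\veps_-^{m+2})/\veps_+^2) = o(n^2\veps_+^m)$ after dividing by the scaling factor, so both the graph energy and the graph $L^2$-mass restricted to $\M_l$ vanish in the limit. Consequently the rescaled graph Laplacian $\mathcal L$ is spectrally equivalent, up to errors of the advertised order, to the graph Laplacian one would build using only the points in $\M_{\max}$, and the latter is exactly covered by Theorems \ref{Rate of convergence for eigenvalues}--\ref{convergence rate for eigenvectors} applied to the equal-dimension family $\M_1,\dots,\M_{N_{\max}}$.

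Concretely I would proceed as follows. \emph{Step 1:} Split $X = X_{\max} \cup X_{\mathrm{low}}$ where $X_{\max}$ collects the points landing on manifolds of dimension $m$ and $X_{\mathrm{low}}$ the rest, and correspondingly decompose, for $u\colon X\to\R$, the bilinear form $\sum_{x,y}\omega_{xy}(u(x)-u(y))^2$ into the within-$X_{\max}$ part, the within-$X_{\mathrm{low}}$ part, and the cross part. \emph{Step 2:} Bound the within-$X_{\mathrm{low}}$ part: using the full inner connectivity property and a concentration bound on $\sum_{x_i,x_j\in\M_l}\omega^{\veps_+,\veps_-}_{x_ix_j}$, show its rescaled version is $O(\veps_+^{m_l - m}\cdot(\text{oscillation of }u))\cdot(\veps_+^2 + \text{l.o.t.})$, hence negligible; the cross part is controlled by the sparse outer connectivity constant $N_0$ and is already folded into the term $e_k$. \emph{Step 3:} Apply the min-max principle \eqref{minmax principle for laplacian} together with the Dirichlet-energy comparison machinery from the Appendix to transfer the eigenvalue bound of Theorem \ref{Rate of convergence for eigenvalues} to $\mathcal L$ versus $\Delta_{\M_{\max}}$, noting that $\Delta_{\M_{\max}}$ has exactly $N_{\max}$ zero eigenvalues while the $N - N_{\max}$ extra zeros we have listed (coming from $\mathds 1_{\M_l}$, $l>N_{\max}$) are recovered through the rank stability of the small-eigenvalue eigenspace — this is why the conclusion records $f_1,\dots,f_N$ as \emph{an} orthonormal basis of $\mathrm{Span}\{\mathds 1_{\M_1},\dots,\mathds 1_{\M_N}\}$ rather than identifying each $f_i$ with a single indicator. \emph{Step 4:} Repeat the same splitting at the level of eigenvectors, invoking the quantitative version of the Davis--Kahan/$\sin\Theta$ argument already used to prove Theorem \ref{convergence rate for eigenvectors}, to obtain the stated $L^2(\mu^n)$ bound with eigenfunctions of $\Delta_{\M_{\max}}$.

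The main obstacle I anticipate is \emph{Step 3}, specifically the bookkeeping around the $N$-fold versus $N_{\max}$-fold degeneracy at eigenvalue zero. The eigenvalues $\lambda_1=\dots=\lambda_N = 0$ are declared by fiat, but $\Delta_{\M_{\max}}$ only has an $N_{\max}$-dimensional kernel; the remaining "missing" eigenvalues of $\mathcal L$ that the theorem pairs with these zeros are the genuinely small but nonzero graph eigenvalues arising from the weakly-connected low-dimensional components, and one must argue they are $o(1)$ at the chosen scaling — this again reduces to the same $\veps_+^{m_l-m}\to 0$ estimate, but care is needed to ensure the constant $c_k$ controlling the spectral gap does not degenerate. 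A secondary technical point is that the cross terms between $\M_{\max}$ and the low-dimensional manifolds must be absorbed cleanly into $e_k$ exactly as in the equal-dimension proof; since $N_0$ is defined as a max over \emph{all} pairs $l\neq s$ this is automatic, but it requires checking that the sparse outer connectivity hypothesis, whose normalization already uses $m = \max_l m_l$, is the correct one in the mixed setting — which it is, by the definition given in Definition \ref{Outer Partially Connected}.
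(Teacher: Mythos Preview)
Your high-level decomposition matches the paper's, and you correctly flag the zero-eigenvalue bookkeeping as the delicate point. But there is a genuine gap: the claim that ``the graph $L^2$-mass restricted to $\M_l$ vanishes in the limit'' is false. The empirical $L^2$ mass $\frac{1}{n}\sum_{x_i\in\M_l} u(x_i)^2$ is of order $w_l\cdot \lVert u_l\rVert^2_{L^2(\mu_l^n)}$, a constant, because the \emph{number of points} on $\M_l$ is $\sim w_l n$ regardless of $m_l$. What scales like $\veps_+^{m_l-m}$ is only the rescaled Dirichlet energy $b_l$, not the norm. So the restriction map $\I':u\mapsto u|_{X_{\max}}$ is \emph{not} an almost-isometry in general, and the min--max transfer for the lower bound on $\lambda_k^{\veps_+,\veps_-}$ breaks: a graph eigenvector could have order-one mass on some low-dimensional $\M_l$, and then $\I'u$ would see a genuine drop in norm.

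The paper fixes this with an argument you are missing. First, it uses a Poincar\'e inequality on each low-dimensional piece: since $b_l(u_l)\le C\veps_+^{m-m_l}b^{\veps_+,\veps_-}(u)$ and $\lambda_{2,l}^{\veps_+,\veps_-}$ is bounded below (by Theorem \ref{Rate of convergence for eigenvalues} applied to the single manifold $\M_l$), one gets $\lVert u_l-\overline{u}_l\rVert^2\le C\veps_+^{m-m_l}b^{\veps_+,\veps_-}(u)$. This leaves the constant parts $\overline{u}_l$. Second, it handles those by establishing \emph{first} (via an upper bound on $\lambda_1^{\veps_+,\veps_-},\dots,\lambda_N^{\veps_+,\veps_-}$ using indicators, and a lower bound on $\lambda_{N+1}^{\veps_+,\veps_-}$ via max--min with $S=\mathrm{Span}\{\mathds 1_{\M_k\cap X_n}\}$) that the first $N$ graph eigenvectors are close to the indicators $\mathds 1_{\M_k\cap X_n}$. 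Then for any $u$ in the orthogonal complement of those first $N$ eigenvectors, the averages $\overline{u}_l$ are automatically small, and the almost-isometry of $\I'$ is recovered \emph{on that subspace}. Only then does the min--max comparison with $D_{\max}$ go through for $k>N$. Your proposal collapses these two mechanisms into a single ``$\veps_+^{m_l-m}\to 0$'' step, which is not enough.
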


	\begin{remark}
		
		\begin{enumerate}
			\item The proof of this theorem appears in section \ref{sec:ProofSiffDim} in the Appendix. We remark that the proof of Theorem \ref{thm:MixedDimensions} is only based on Theorems \ref{Rate of convergence for eigenvalues} and \ref{convergence rate for eigenvectors} and on a few associated preliminary results. 
			
			\item \new{When manifolds have different dimensions, making sure that the sparse outer connectivity condition is satisfied is more difficult because, in general, $N_{kl}$ is much larger when the dimensions of the manifolds $\M_k$ and $\M_l$ are small than when they are large. Indeed, since the number of points in each manifold is in the order of $n$, the number of points in a neighborhood of size $\veps$ around a point on a manifold with small dimension will be larger than when the manifold has larger dimension.}

			\new{\item Notice that when manifolds do not intersect the outer sparse connectivity condition is trivially satisfied. If in addition we assume the full inner connectivity condition, then we can conclude that the eigenvectors of the graph Laplacian corresponding to non-zero eigenvalues will only recover the spectra of the manifolds with dimension $m$.}
		\end{enumerate}

	\end{remark}

	\nc

	\section{Annular proximity graphs with angle constraints}\label{sec:GraphConstruct}

	In this section we introduce a graph construction that is both fully inner connected and sparsely outer connected. We start with a definition. 
	
	\begin{definition}\label{def:angle constraint}
		Let $\alpha \in (0,\pi/2)$ and $r>0$. We call an ordered sequence of data points \[({x}_{i_1},{x}_{i_2},...,{x}_{i_m})\] an $(\alpha,r)$-constrained path between $x_{i_1}$ and $x_{i_m}$ if the following two conditions hold:
		\begin{enumerate}
			\item $\angle( {x}_{i_{m}} -{x}_{i_{1}},{x}_{i_{j+1}}-{x}_{i_{j}})<\alpha, \forall j=1,2,...,m-1.$
			\item $|x_{i_j}-x_{i_{j+1}}|<r, \forall j=1,2,\dots,m-1.$
		\end{enumerate}
	\end{definition}
	\blue
		The first condition in the definition of an $(\alpha,r)$-constrained path requires the path to be almost straight, while the second condition requires consecutive points in the path to be close enough. The example in Figure \ref{fig:DEF3.1} shows that it is possible to have two points $x$ and $y$ on different manifolds for which there is an $(\alpha,r)$-constrained path between them. The intuition motivating this definition, however, is that the number of such pairs is small under Assumption \ref{assump:WellSeparated}.
	\nc
	\begin{figure}
		\centering
		\includegraphics[scale=0.5]{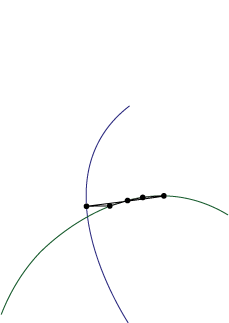}
		\put(0,50){$\M_l$}
		\put(-55,120){$\M_k$}
		\put(-35,70){$x_{i_1}$}
		\put(-46,50){$x_{i_2}$}
		\put(-57,68){$x_{i_3}$}
		\put(-65,48){$x_{i_4}$}
		\put(-88,58){$x_{i_5}$}
		\caption{\blue An example of an $(\alpha,r)$-constrained path where $x_{i_1}$ and $x_{i_5}$ are on different manifolds. To satisfy the constraints, the segments $x_{i_j}x_{i_{j+1}}$ and $x_{i_1}x_{i_m}$ must almost align. \nc }
		\label{fig:DEF3.1}
	   \end{figure}

	We now introduce the family of \textit{annular proximity graphs with angle constraints} $(X, \omega)$ that we study throughout the rest of this section.

	\begin{definition}[Annular proximity graphs with angle constraints]\label{def:Annular proximity graphs with angle constraints}
	Suppose that  $x_i, x_j$ are two data points such that $|x_i-x_j|\leq  \veps_-$ or $|x_i-x_j|\geq \veps_+$, then we set $\omega_{ij}=0$.  If $ \veps_-<|x_i-x_j|< \veps_+$ and there is an $(\alpha, r)$-constrained path between $x_i$ and $x_j$, then we set $\omega_{ij}=1$, otherwise we set $\omega_{ij}=0$. We refer to this type of graph as an \textit{annular proximity graph with angle constraints}.    
	\end{definition}

	\blue
	
	\begin{algorithm}[H]
	\begin{algorithmic}
	\State \textbf{Input: } source nodes $y_1,y_2$; data points $\{y_i\}_{i=3}^{\widetilde{n}}$ such that $|y_i-y_1|\le \epsilon_+$; parameters $\veps_+,\veps_-,r,\alpha$, where $\veps_+>\veps_->r>0$.
	\State \textbf{Output:} the shortest angle constrained path between $y_1$ and $y_2$.
	\\
	\hspace{1cm}
	\If{$|y_1-y_2|>\eps_+$ or $|y_1-y_2|<\eps_-$}
	\State{}Output $w=0$.
	\EndIf{}
	\State{} Construct $r$-graph $E$ on $\{y_i\}_{i=1}^{\widetilde{n}}$, that is $e_{ij}=\mathds{1}_{|y_i-y_j|\le r}$.
	\For{$e_{ij}=1$ and $\frac{\langle y_j-y_i,y_2-y_1 \rangle}{|y_j-y_i| \cdot |y_2-y_1|}<\cos\alpha$}
	\State{} Denote $e_{ij}=0$.
	\EndFor{}
	\State{}Apply Dijkstra algorithm to find the shortest path between $y_1$ and $y_2$ using $E$.
	\end{algorithmic}
	\caption{Annular proximity graph with angle constraints}
	\label{algorithm: Path Algorithm}
	\end{algorithm}
	
At the beginning of section \ref{section: discussion} we discuss the computational complexity of building annular graphs with angle constraints. As discussed there, $\omega$ can be constructed following a simple modification of the constrained Dijkstra's algorithm from \citet{BABAEIAN2015118}; see  Algorithm \ref{algorithm: Path Algorithm}. \nc On the other hand, from a theoretical perspective, we show that with the right choice of parameters ($\veps_+, \veps_-, \alpha, r$), these graphs satisfy the full inner connectivity and sparse outer connectivity conditions; the precise statements are contained in Theorem \ref{Proposition: path algorithm, inner fully connected} and Theorem \ref{theorem: path algorithm} below. In particular, in Theorem \ref{theorem: path algorithm} (\blue see also Remark \ref{rem:Eps-equalzero} and an illustration in Figure \ref{fig:epsminusisimportant}\nc) we quantify the benefits of considering annular graphs with $\veps_- \sim \veps_+$. In section \ref{sec:veps-}, we revisit the benefits of considering annular graphs for MMC, this time from a numerical perspective.

	\begin{theorem}
		\label{Proposition: path algorithm, inner fully connected}
		Let $n_k$ be the number of data points in $X \cap \M_k$, let $\alpha \in [0,\pi/4)$ and let $r\leq C \veps_+$. Then, with probability at least 
		\[ 1-\frac{C_k n^2\veps_+}{r} \exp\left(-C_kn_kr^{m_k}(\tan\alpha)^{m_k-1}\right), \] 
		for any two points $x_i,x_j\in\mathcal{M}_l$ such that $|x_i-x_j|<\veps_+$, there exists an $(\alpha,r)$-constrained path between $x_i$ and $x_j$. In the above, $C_k$ is a constant that depends on $\M_k$ and $\rho_k$.
	\end{theorem}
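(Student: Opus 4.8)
The plan is to establish the existence of an $(\alpha,r)$-constrained path between two fixed nearby points $x_i, x_j \in \M_l$ by a ``tube covering'' argument, and then upgrade to a uniform-over-all-pairs statement by a union bound. Fix $x_i, x_j \in \M_l$ with $|x_i - x_j| < \veps_+$, and let $v = x_j - x_i$. Since $\veps_+$ is small compared to the reach and injectivity radius of $\M_l$ (Assumption \ref{assumption: main}(1)), the manifold $\M_l$ inside $B(x_i, \veps_+)$ is a nearly-flat graph over the tangent plane $\mathcal{T}_{x_i}\M_l$, so the straight segment from $x_i$ to $x_j$ stays within a distance $O(\veps_+^2/R)$ of $\M_l$. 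First I would partition this segment into roughly $|v|/(\kappa r)$ sub-segments of length $\sim \kappa r$ (for a small dimensional constant $\kappa$ to be fixed) with endpoints $p_0 = x_i, p_1, \dots, p_M = x_j$ along the chord, and around each interior $p_s$ place a ``slab'' $S_s$: the set of points of $\M_l$ lying within distance $c\, r \tan\alpha$ of $p_s$ in the directions transverse to $v$, and within distance $c\,\kappa r$ of $p_s$ along $v$. The geometric content of the argument is that \emph{if} we can pick a data point $y_s \in X \cap S_s$ for each $s$, then the sequence $(x_i, y_1, \dots, y_{M-1}, x_j)$ is an $(\alpha, r)$-constrained path: consecutive points are within $r$ because both the longitudinal spacing ($\sim \kappa r$) and the transverse wiggle ($\lesssim r\tan\alpha$) are controlled; and the angle of each step $y_{s+1} - y_s$ to the chord direction $v$ is bounded by $\arctan$ of (transverse displacement)/(longitudinal displacement) $\lesssim \arctan(2 c\tan\alpha / \kappa) < \alpha$ once the constants are chosen correctly, using here that $\alpha < \pi/4$ so $\tan\alpha$ is bounded. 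One must also check that the first/last steps involving $x_i, x_j$ themselves obey the angle constraint, which is immediate since those steps are essentially along $v$.

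Second I would estimate the probability that some slab $S_s$ contains no data point. Each slab is (approximately) a product of an interval of length $\sim \kappa r$ in one direction with an $(m_l-1)$-dimensional ball of radius $\sim r\tan\alpha$ in the transverse directions, intersected with $\M_l$; since $\rho_l$ is bounded below by $1/c_\rho$ and $\M_l$ is nearly flat at this scale, the $\mu_l$-measure of $S_s$ is at least $c'\, r \cdot (r\tan\alpha)^{m_l - 1} = c' r^{m_l}(\tan\alpha)^{m_l-1}$. The number of i.i.d. sample points in $\M_l$ is $n_l$ (up to the event that $n_l$ deviates from $w_l n$, which is already absorbed in the surrounding theorems; here I'd just condition on $n_l$). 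Hence $\P(X \cap S_s = \emptyset \mid n_l) \leq (1 - c' r^{m_l}(\tan\alpha)^{m_l-1})^{n_l} \leq \exp(-c' n_l r^{m_l}(\tan\alpha)^{m_l-1})$. There are at most $M \lesssim \veps_+/(\kappa r)$ slabs for a single pair $(x_i,x_j)$, and a union bound over the slabs gives failure probability $\lesssim (\veps_+/r)\exp(-c' n_l r^{m_l}(\tan\alpha)^{m_l-1})$ for that fixed pair.

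Third, to make the statement hold simultaneously for \emph{all} pairs $x_i, x_j \in \M_l$ with $|x_i - x_j| < \veps_+$, I would not union-bound over the (random) $O(n^2)$ pairs directly, but instead over a deterministic $\rho$-net of $\M_l$ at scale comparable to $r$: the number of net points is $\lesssim \vol(\M_l)/r^{m_l} \lesssim 1/r^{m_l}$, and the number of net-to-net pairs within distance $\veps_+$ is $\lesssim n^2 \veps_+^{m_l}/r^{m_l}$ or, more crudely, $\lesssim C_k n^2 \veps_+ / r$ after absorbing powers into the constant as in the statement. For an arbitrary pair $(x_i, x_j)$ one snaps to the nearest net points and uses the slab construction with slightly enlarged slabs; the extra slack is again controlled by $\veps_+ \ll R$. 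Summing the per-pair bound over this deterministic family yields the claimed probability $1 - \tfrac{C_k n^2 \veps_+}{r}\exp(-C_k n_k r^{m_k}(\tan\alpha)^{m_k-1})$.

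The main obstacle I anticipate is the \emph{angle bookkeeping}: one must choose the longitudinal step $\kappa r$ and the transverse tolerance $c\, r\tan\alpha$ so that (a) every realized step $y_{s+1}-y_s$ has length strictly below $r$ — this needs $\kappa^2 + (2c\tan\alpha)^2 < 1$, harmless — and, more delicately, (b) the \emph{cumulative} direction of every step stays within the cone of half-angle $\alpha$ about the \emph{global} chord $v = x_{i_m} - x_{i_1}$, not merely about the local chord direction. Because the path's net displacement equals $v$ exactly (it starts at $x_i$ and ends at $x_j$) and each step's longitudinal component along $v$ is positive and of order $\kappa r$ while its transverse component is at most $c\, r\tan\alpha$, the worst-case step angle to $v$ is $\arctan\!\big((2c\tan\alpha)/\kappa\big)$, which can be made $<\alpha$ by taking $c$ small relative to $\kappa$; the restriction $\alpha < \pi/4$ is exactly what lets $\tan\alpha \le 1$ so these constants are absolute. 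One also has to confirm that the manifold's curvature does not spoil this: the transverse ``slab'' directions are defined in the ambient $\R^d$ relative to $v$, and the deviation between $\M_l$ and the flat chord over a scale-$\veps_+$ ball is $O(\veps_+^2/R)$, which is lower-order compared to $r\tan\alpha$ provided $r$ is not too small relative to $\veps_+$ — consistent with the hypothesis $r \le C\veps_+$ and with how $r$ will be chosen when this theorem is applied. Once these constants are pinned down, the probability estimate is a routine union bound as above.
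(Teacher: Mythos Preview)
Your core argument --- partition the chord $x_ix_j$ into $O(\veps_+/r)$ pieces, surround each piece by a slab of longitudinal extent $\sim r$ and transverse extent $\sim r\tan\alpha$, pick one data point per slab, check the angle and length constraints --- is exactly the paper's proof. The paper phrases it as a cylinder of radius $h_1=\tfrac{c}{2}r\tan\alpha$ sliced into sub-cylinders of height $h_2=cr$, with the explicit conditions $4h_1^2+9h_2^2\le r^2$ and $2h_1/h_2\le\tan\alpha$ playing the role of your constant-tuning; your ``main obstacle'' paragraph is precisely this bookkeeping and your resolution is correct.

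The one place you diverge is the uniformity step. You propose a deterministic net argument, but this is both unnecessary and, as written, garbled: a net at scale $r$ has $O(r^{-m_l})$ points, so the count of net pairs within $\veps_+$ is $O(\veps_+^{m_l} r^{-2m_l})$, which has nothing to do with $n$, and ``absorbing powers into the constant'' does not produce the factor $n^2\veps_+/r$. The paper simply union-bounds over the at most $n^2$ ordered pairs of data points: for each fixed pair the failure probability is $\lesssim(\veps_+/r)\exp(-C_k n_k r^{m_k}(\tan\alpha)^{m_k-1})$, and multiplying by $n^2$ gives the stated bound directly. (The minor dependence issue --- that $x_i,x_j$ are themselves samples --- is handled by using the remaining $n_k-2$ points to populate the slabs, which only changes constants.) Replace your third step with this one-line union bound and the proof is complete.
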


	\begin{theorem}\label{theorem: path algorithm} 
		
		Suppose that Assumption \ref{assump:WellSeparated} and Assumption \ref{assumption: main}.\ref{Assump2:1}, \ref{assumption: main}.\ref{Assump2:2} hold, and suppose that $\M_{kl} =\M_k \cap \M_l  \not = \emptyset$. 
		
		\begin{enumerate}[(1)]
			\item
			If $\alpha < \arcsin(\frac{C_{k,l} r}{\veps_+})$ , and $r \leq c \veps_+$, then, with probability no less than 
			\[ 1- C_{k,l} n\exp(-C_{k,l} n   \min\{ r^{m_k - m_{kl}}, r^{m_l - m_{kl}}, \veps_+^{m_k}, \veps_+^{m_l}   \}), \]
			$N_{kl}$, the number of connections between points in $X \cap \M_k$ and $X \cap \M_l$, satisfies
			\[ N_{kl} \leq C_{k,l} n^2\max \{ r^{m_k- m_{kl}} \veps_+^{m_l},   r^{m_l- m_{kl}} \veps_+^{m_k}   \}.   \]
		\end{enumerate}

		\begin{enumerate}[(2)]
			\item If in (1) we further assume the lower bound $\veps_- \geq c \veps_+$, then, with probability no less than 
			\[ 1- C_{k,l} n\exp(-C_{k,l} n   \min\{ (\frac{r^2}{\veps_+} \new{+r\eps_+ } )^{m_k - m_{kl}}, (\frac{r^2}{\veps_+} \new{+r\eps_+ } )^{m_k - m_{kl}} , \veps_+^{m_k}, \veps_+^{m_l}   \}), \]
			we have
			\[ N_{kl} \leq C_{k,l} n^2\max \{ (\frac{r^2}{\veps_+} \new{+r\eps_+ } )^{m_k - m_{kl}} \veps_+^{m_l},   (\frac{r^2}{\veps_+} \new{+r\eps_+ } )^{m_k - m_{kl}} \veps_+^{m_k}   \}.   \]
		\end{enumerate}
	\end{theorem}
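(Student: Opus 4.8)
\emph{Strategy and Step 1 (confining a constrained path to a thin tube).} The plan is to reduce the bound on $N_{kl}$ to a deterministic geometric statement — any cross-manifold pair $(x_i\in\M_k,\,x_j\in\M_l)$ that gets connected must have one of its endpoints close to $\M_{kl}$ — and then to a volume count, from which the stated probabilities arise. So fix $x_i\in\M_k$, $x_j\in\M_l$ with $\veps_-<|x_i-x_j|<\veps_+$, suppose an $(\alpha,r)$-constrained path $(x_{i_1},\dots,x_{i_M})$ joins them, and set $w=x_j-x_i$, $\hat w=w/|w|$, $\gamma(t)=x_i+t\,w$. Condition~(1) in the definition of a constrained path gives $\langle x_{i_{p+1}}-x_{i_p},\hat w\rangle\ge|x_{i_{p+1}}-x_{i_p}|\cos\alpha>0$, so the projections of the vertices onto $\hat w$ increase from $0$ to $|w|$ with consecutive gaps $<r$, while the deviation of a vertex from the chord is at most $\tan\alpha$ times its projection, hence at most $h:=|w|\tan\alpha$. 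Since $\alpha<\arcsin(Cr/\veps_+)$ one has $h\le C'r$; combining the two facts, for \emph{every} $t\in[0,1]$ some vertex lies within $h+r\le Cr$ of $\gamma(t)$, and as the vertices are data points on $\M$ this forces $\dist(\gamma(t),\M)\le Cr$ for all $t$. Choosing $\veps_+,r$ below a fixed separation constant of the configuration $\M$ and taking a union bound over which manifolds the intermediate vertices occupy, I may reduce to the case in which only $\M_k$ and $\M_l$ meet the $Cr$-tube of the chord, at the price of constants $C_{k,l}$.

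\emph{Step 2 (transversality pins an endpoint to $\M_{kl}$).} By Assumption~\ref{assump:WellSeparated}(2), near $\M_{kl}$ the normal-to-$\M_{kl}$ parts of $\T\M_k$ and $\T\M_l$ make angle $\tfrac{\pi}{2}\pm\beta$, and since $\beta<\tfrac{\pi}{2}$ and all manifolds have reach $\ge R$, there is $\delta_1=\delta_1(\M)>0$ such that $p\in\M_k$ with $\dist(p,\M_l)\le\delta_1$ implies $\dist(p,\M_{kl})\le C\dist(p,\M_l)$, and symmetrically: away from $\M_{kl}$ the function $\dist(\cdot,\M_l)|_{\M_k}$ is bounded below by compactness, and near $\M_{kl}$ by a multiple of $\dist(\cdot,\M_{kl})$ by transversality. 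Now two facts combine. First, a constrained path cannot ``turn the corner'' by following $\M_k$ and then $\M_l$: near $\M_{kl}$ their tangent directions differ by $\approx\tfrac{\pi}{2}$, so no vector is within $\alpha$ of $\hat w$ and nearly tangent to both, forcing the path to stay essentially straight and hence to shadow $\M_k\cup\M_l$ only where these manifolds remain $Cr$-close to the chord. Second, with $\psi_i=\angle(w,\T_{x_i}\M_k)$ one has $\dist(\gamma(t),\M_k)\ge t|w|\sin\psi_i-C(t|w|)^2/R$ and $\sin\psi_i\asymp\dist(x_j,\M_k)/|w|$; together with Step~1 this forces $\psi_i$ or $\psi_j$ to be small, i.e.\ $\dist(x_j,\M_k)\lesssim r$ or $\dist(x_i,\M_k)\lesssim r$, equivalently (by the displayed implication) $\dist(x_j,\M_{kl})\le Cr$ or $\dist(x_i,\M_{kl})\le Cr$. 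This is the geometric claim behind part~(1). For part~(2) the hypothesis $\veps_-\ge c\veps_+$ forces $|w|\ge c\veps_+$, so at the endpoint that is $O(r)$-close to $\M_{kl}$ the direction $\hat w$ is nearly normal to that manifold; the single path step descending from that endpoint onto the other manifold then has normal extent $\asymp\dist(\cdot,\M_{kl})$ but also $\le r\sin\alpha\le Cr^2/\veps_+$, improving the claim to $\min\{\dist(x_i,\M_{kl}),\dist(x_j,\M_{kl})\}\le Cr^2/\veps_+$.

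\emph{Step 3 (counting and concentration).} Set $\delta=Cr$ for part~(1) and $\delta=Cr^2/\veps_+$ for part~(2). By Step~2 every pair counted by $N_{kl}$ has its $\M_k$-endpoint in the $\delta$-tube of $\M_{kl}$ inside $\M_k$, or its $\M_l$-endpoint in the $\delta$-tube of $\M_{kl}$ inside $\M_l$, and in either case $|x_i-x_j|<\veps_+$. The $\mu_k$-mass of the first tube is $\Theta(\delta^{m_k-m_{kl}})$ and the $\mu_l$-mass of a ball of radius $\veps_+$ is $\Theta(\veps_+^{m_l})$; writing $A$ for the number of points of $X\cap\M_k$ in that tube and $B$ for the largest number of points of $X\cap\M_l$ inside a ball of radius $\veps_+$, and $A',B'$ for the $k\!\leftrightarrow\!l$ analogues, one gets $N_{kl}\le AB+A'B'$. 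Here $A,A'$ are Binomials of means $\Theta(n\delta^{m_k-m_{kl}}),\Theta(n\delta^{m_l-m_{kl}})$ and $B,B'$ are maxima over $\le n$ Binomials of means $\Theta(n\veps_+^{m_l}),\Theta(n\veps_+^{m_k})$; Bernstein's inequality, with a union bound over the $\le n$ data points for $B,B'$, shows that with probability at least $1-C_{k,l}n\exp\!\big(-C_{k,l}n\min\{\delta^{m_k-m_{kl}},\delta^{m_l-m_{kl}},\veps_+^{m_k},\veps_+^{m_l}\}\big)$ all four are within a constant factor of their means, whence $N_{kl}\le C_{k,l}n^2(\delta^{m_k-m_{kl}}\veps_+^{m_l}+\delta^{m_l-m_{kl}}\veps_+^{m_k})\le 2C_{k,l}n^2\max\{\delta^{m_k-m_{kl}}\veps_+^{m_l},\delta^{m_l-m_{kl}}\veps_+^{m_k}\}$. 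Substituting $\delta$ and absorbing powers of $C$ into $C_{k,l}$ (using $\delta^{m_k-m_{kl}}=C^{m_k-m_{kl}}r^{m_k-m_{kl}}$, resp.\ $C^{m_k-m_{kl}}(r^2/\veps_+)^{m_k-m_{kl}}$) yields both displayed estimates together with their probabilities.

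\emph{Main obstacle.} The delicate point is Step~2: upgrading the qualitative observation ``the path must squeeze through $\M_{kl}$'' into the sharp scales $r$ and, in part~(2), $r^2/\veps_+$ — the latter being exactly what makes the regime $\veps_-\sim\veps_+$ worthwhile. This requires juggling the transversality angle $\beta$, the reach/curvature bounds that control the second-order term $C(t|w|)^2/R$, and the thin-tube estimate of Step~1 simultaneously. A secondary nuisance is the bookkeeping of intermediate vertices that may lie on a third manifold; this is handled by the smallness of $\veps_+,r$ relative to a fixed separation constant of $\M$ together with a union bound, and affects only the constants $C_{k,l}$.
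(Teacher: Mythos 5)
Your high-level architecture — a deterministic geometric lemma that says any cross-manifold connection must have an endpoint close to $\M_{kl}$, followed by a volume count and concentration — matches the paper's exactly (the paper's geometric lemma is split into Lemmas \ref{lemma: path algorithm, different manifolds |x-y|<epsilon_+} and \ref{Lemma: path algorithm, different manifolds |x-y|>epsilon_-}, and the counting is dismissed as ``a standard concentration bound''). Within that architecture, however, your proof of the geometric core is genuinely different from the paper's. The paper fixes a single path segment $x'$ lying entirely in $\M_k$, decomposes the endpoint $y$ as $y=y_l v_l+y_{kl}v_{kl}$ with $v_l\perp\M_k$, $v_{kl}\in\M_{kl}^{\perp_k}$, and applies the angle constraint, Cauchy--Schwarz, and Pythagoras to that single segment to derive $\sin^2\alpha\ge\cos^2\beta\,\dist(y,\M_{kl})^2/|y-x|^2$. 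Your Step~1 instead confines the \emph{entire chord} $\gamma(t)$ to a $Cr$-tube of $\M$ (using the monotone projections and the cumulative lateral drift bound $|w|\tan\alpha$), and your Step~2 deduces $\min\{\psi_i,\psi_j\}$ small by evaluating the quadratic growth of $\dist(\gamma(t),\M_k)$, $\dist(\gamma(t),\M_l)$ at an interior $t$. For part~(1) this is a clean alternative — the tube argument is arguably more transparent, at the cost of carrying the curvature term $C(t|w|)^2/R$ through the estimate (which implicitly asks for $r\gtrsim\veps_+^2/R$, a technicality the paper sidesteps by working with a single short segment). Your Step~3 coincides with the paper's (implicit) counting argument; the product $A\cdot B$ decomposition and Bernstein plus union bound are exactly what's needed.

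Where I see a genuine gap is in part~(2). Your argument bounds the normal extent of \emph{the single crossing step} by $r\sin\alpha\lesssim r^2/\veps_+$, and identifies that normal extent with $\dist(\cdot,\M_{kl})$ of the $\M_l$-endpoint of \emph{that step}. But what you need to bound is $\dist(x_j,\M_{kl})$, and $x_j$ is the terminal vertex of the whole path, not of the crossing step. If the path, after crossing from $\M_k$ to $\M_l$ at some interior vertex $b$, continues with several more steps inside $\M_l$ before reaching $x_j$, those steps can carry $x_j$ away from $\M_{kl}$: each such step has $\M_{kl}^{\perp_l}$-component of order $r(\sin\alpha+\sin\psi_i)$ but there can be up to $\sim\veps_+/r$ of them, and the accumulation then gives only $O(r)$ rather than $O(r^2/\veps_+)$. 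To rule this out one must additionally show (as the paper does, via Lemma~\ref{lemma: path algorithm, different manifolds |x-y|<epsilon_+}(2)) that the step adjacent to the close endpoint cannot cross manifolds, and then run an angle-chasing estimate on \emph{that} specific step — this is what the function $f(d,t)=\arccos(t/d)-\arccos(t/r)$ in the paper's Lemma~\ref{Lemma: path algorithm, different manifolds |x-y|>epsilon_-} is doing, crucially using the lower bound $d=|y-x|>\veps_-\gtrsim\veps_+$. Note also that the paper's Lemma~\ref{Lemma: path algorithm, different manifolds |x-y|>epsilon_-} yields the weaker dichotomy ``$\dist(x,\M_{kl})\le r$ or $\dist(y,\M_{kl})\le Cr^2/\veps_+$'' (together with its symmetric version), and the displayed bound on $N_{kl}$ then follows from a short case analysis, not from the stronger claim $\min\{\dist(x_i,\M_{kl}),\dist(x_j,\M_{kl})\}\le Cr^2/\veps_+$ that you assert. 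Your stronger claim would indeed give the theorem, but as written the argument for it does not close. Finally, a small slip: in Step~2 you write ``$\dist(x_j,\M_k)\lesssim r$ or $\dist(x_i,\M_k)\lesssim r$'' — the second should be $\dist(x_i,\M_l)\lesssim r$, since $x_i\in\M_k$ trivially has $\dist(x_i,\M_k)=0$.
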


	


	\begin{remark}\label{rem:rate}
		To illustrate our results and obtain concrete error estimates in Theorem \ref{Rate of convergence for eigenvalues}, for example, consider the case where all manifolds have the same dimension $m$. Going back to the last point in Remark \ref{rem:MainTheorem}, we need to tune the parameters $r$ and $\alpha$ so that $N_0 \ll n^2 \veps_+^{m+2}$ and also, following Theorem \ref{Proposition: path algorithm, inner fully connected}, we should have $n r^m \tan(\alpha)^{m-1} \gg 1$. Now, from Theorem \ref{theorem: path algorithm} it follows that we need $ r \ll \veps_+^{3/2}$ (assuming the worst case scenario where the dimension $m_{kl}= m-1$, and assuming we impose the lower bound on $\veps_-$, i.e. we treat $\veps_- \sim \veps_+$). On the other hand, if we set $\sin(\alpha)=C\frac{r}{\veps_+}$ we see from Theorem \ref{Proposition: path algorithm, inner fully connected} that we need $r^{2m-1}/\veps_+^{m-1}\gg \frac{1}{n} $ (omitting logarithmic terms). Thus, if we take $r \sim  \veps_+^{2}$ and we set $\veps_+= C\left( \frac{1}{n}\right)^{\frac{1}{3m-1}}$ for large enough constants $C$ we can satisfy all constraints \blue and get from Remark \ref{rem:MainTheorem} v) \nc the rate (omitting logarithmic terms):
		\[ O \left( \frac{1}{n^{1/(3m -1)}}  \right) \]
		for the convergence of the eigenvalues of the graph Laplacian on an annular path with angle constraints towards the eigenvalues of the tensorized Laplacian on $\M$. For comparison, recall that the convergence rate obtained in \citet{trillos2019error} for the regular convergence of graph Laplacians was $O \left( \frac{1}{n^{1/(2m)}}  \right)$ and the convergence rate in \citet{calder2019improved} is $O \left( \frac{1}{n^{1/(m+4)}}  \right)$. The extra sample complexity in our setting is induced by the additional mechanism that is needed to collect second order geometric information around the data in order to separate the underlying manifolds.

	\end{remark}
	
	\nc
	
	\begin{remark}
		\label{rem:Eps-equalzero}
		
		From Theorem \ref{theorem: path algorithm} we can see the quantitative effect of considering a non-zero $\veps_-$. Indeed, when $r$ is taken to be considerably smaller than $\veps_+$, the number of faulty connections in the $\veps_-\sim \veps_+$ setting is much smaller than when $\veps_-=0$ because \new{$\max\{\frac{r}{\veps_+},\eps_+\}$} is a small quantity. Intuitively, removing points from a base $\veps$-proximity graph should always reduce the number of faulty connections across different manifolds. However, what Theorem \ref{theorem: path algorithm} states is that, \blue when combined with the angle constraints \nc, the ratio of faulty connections erased by removing an inner ball with volume comparable to that of the outer ball is actually quite significant. This result motivates the theoretical analysis that we present in the Appendix. In our numerical experiments we illustrate further the superior performance of our MMC algorithm when we set $\veps_-\sim \veps_+$. \blue
	See an intuitive explanation in Figure \ref{fig:epsminusisimportant}.\nc
	\end{remark}

	\begin{remark}
		\label{rem:alphaSmall}
		It is not difficult to show that in general one can not relax the requirement that $\sin(\alpha) \ll 1$ in order to get sparsely outer connected graphs. Indeed, take for example two flats $\M_k$ and $\M_l$  with dimension $2$ that meet perpendicularly at a straight line $\ell$. If $\alpha \geq c >0$ for constant $c$, it is straightforward to see that there is a small enough constant $c_1$ (depending on the lower bound for $\alpha$) such that for all pairs of points $x\in \M_k$ and $y \in \M_l$ for which  
		$$dist(x, \M_{kl}) \leq c_1 \veps_+,\quad dist(y, \M_{kl}) \leq c_1 \veps_+,\quad |x-y| < \veps_+,$$
		\blue and for which the angle between $y-x$ and $\ell$ is smaller than $c_1$, there is an $(\alpha,r)$ constrained path between $x$ and $y$. This situation is illustrated on the left panel of \new{Figure \ref{fig:Rem3.6_1}}. \nc In turn, from this one can see that the number of connections that a point $x\in \M_{k}$ with $dist(x, \M_{kl}) \leq c_1\veps_+$ has with points in $\M_l$ is $O(n\veps_+^2)$ (i.e. the same order as with points in $\M_k$). In that case, $N_{0} \sim  n^2 \veps_+^{2+1} $ and thus $N_0 /(n^2 \veps_+^{2+2})  \rightarrow \infty$.

	    \begin{figure}[htbp]
		\par\medskip
		\centering
		\begin{minipage}[t]{0.45\textwidth}
			\centering
			\includegraphics[scale=0.3]{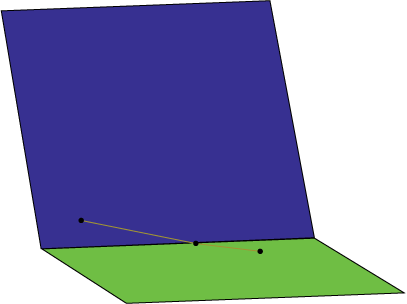}
		    \put(-45,5){$x$}
		    \put(-100,33){$y$}
		\end{minipage}
		\begin{minipage}[t]{0.45\textwidth}
			\centering
			\includegraphics[scale=0.3]{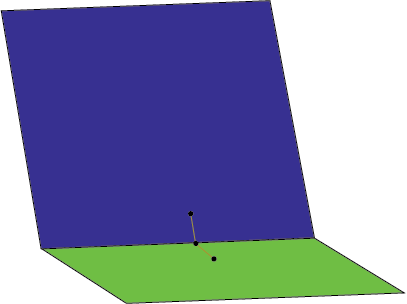}
		    \put(-54,5){$x$}
		    \put(-70,35){$y$}
		\end{minipage}
		\caption{\blue If $\alpha\ge c>0$, the number of pairs of points in different manifolds that can be connected by paths that are almost tangential to the intersection of the manifolds (as on the left panel) is of the same order as the number of connections between pairs of points on the same manifold that are within distance $\veps_+$ from the intersection of the manifolds. Notice that this situation may arise as soon as $m\ge 2$. For completeness, on the right panel we illustrate the benign situation of a pair of points on different manifolds that are close enough to each other and have no constrained path connecting them. \nc}


		
		 \label{fig:Rem3.6_1}
	\end{figure}
	    
		%
		\nc 
	\end{remark}

	\nc
	
	\subsection{Proofs of Theorems \ref{Proposition: path algorithm, inner fully connected} and \ref{theorem: path algorithm}}

	\begin{proof}[Proof of Theorem \ref{Proposition: path algorithm, inner fully connected}] Given that the manifold $\M_k$ is smooth and compact, and given that we only consider connecting two points $x,y \in X \cap \M_k$ when they are within a small distance $\veps_+$ from each other, we can (and will) assume for simplicity that $\M_k$ is a flat of dimension $m_k$. As will become clear from our argument, the reduction to the flat case is sufficient as all curvature effects only introduce lower order corrections to our estimates.

		Consider then the line segment connecting the points $x$ and $y$ and consider also the cylinder in $\M_k$ with axis given by the segment $y-x$ and circular base of radius $h_1$ centered at $x$ and orthogonal to the segment $xy$; see Figure \ref{fig:seriesPart}. We split this bigger cylinder into $l$ parallel smaller cylinders with height $h_2$. The smaller cylinders are labeled as $\mathcal{C}_1, \dots, \C_l$. By taking $h_2=cr$ for small enough constant $(1/4)> c>0$, $h_1=\frac{c}{2}\tan(\alpha)r $, and assuming that $|x-y| > 4cr$, we can guarantee that
		\begin{enumerate}
			\item $l$ is an odd number.
			\item $4h_1^2 +9h_2^2 \leq r^2$. 
			\item $\frac{2h_1}{h_2} \leq \tan(\alpha)$.
		\end{enumerate}
		With this construction it is clear that if $X \cap \C_s \not= \emptyset$ for every even $s$, then we can construct an $(\alpha, r)$-constrained path between $x$ and $y$; see Figure \ref{fig:seriesPart}. Notice that if on the other hand $|x-y|\leq 4cr \leq r$, then $x$ and $y$ can be connected directly. 
		
		In the more interesting case $\veps_+ \geq |x-y|>4cr$, the probability that there is at least one sample from the $n_k$ samples in $\M_k$ in all the cylinders $\C_s$ is no smaller than
		\[1- \frac{C_k \veps_+}{r}\left(1-C_kr^{m_k}(\tan\alpha)^{m_k-1}\right)^{n_k}= 1-\frac{C_k \veps_+}{r} \exp\left(-C_kn_kr^{m_k}(\tan\alpha)^{m_k-1}\right).\]
		By the discussion above, we conclude that the probability that there exists an $(\alpha,r)$-constrained path between $x$ and $y$ is no smaller than the above quantity.

		To bound from below the probability that \textit{all} pairs of points $x,y \in X \cap \M_k$ that are within distance $\veps_+$ are connected by $\left(\alpha,r\right)$-constrained paths it is sufficient to take a union bound using the above estimates.
	\end{proof}

	\nc

	Next we study the outer connectivity condition for annular proximity graphs with angle constraints. We start with a result that applies for all choices of $\veps_- \in [0, \frac{1}{4}\veps_+]$ and then refine the estimates for the case $\veps_- = c \veps_+$.

	
	
	
	\begin{figure}
		\centering
		\includegraphics[scale=1]{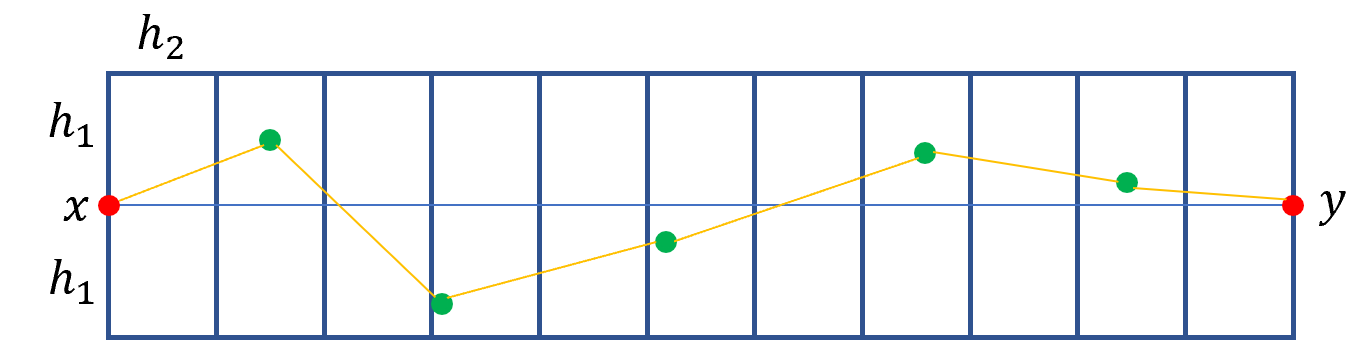}
		\caption{This is a valid path between $ x $ and $ y $ on the flat.}
		\label{fig:seriesPart}
	\end{figure}

	\new{
	
	\begin{lemma}\label{lemma: path algorithm, different manifolds |x-y|<epsilon_+}
		Suppose that $\mathcal{M}_l \cap \M_k\not = \emptyset$, and let $x\in \M_l$ and $y \in \M_k$ be such that $|x-y|<\veps_+$. Assume that $\dist(y, \M_{l})\ge\dist(x, \M_{k})> r$. If $y$ is such that
		\[\frac{C_{k,l}\dist(y, \M_{l})}{|x-y|} >  \sin(\alpha), \]
		then there is no $(\alpha,r)$-constrained path between $x$ and $y$; in the above, the constant $C_{k,l}$ depends on the manifolds $\M_k$ and $\M_l$ through their curvature,  the quantity $\beta$ from Assumption \ref{assump:WellSeparated}, and the curvature \nc of $\M_{kl}$. In particular, if we set $\alpha$ to be such that $\sin(\alpha)\leq \frac{C_{k,l} r}{\veps_+}$, then there is no $(\alpha,r)$-constrained path between points $x $ and $y$.
		
	In addition, suppose that $x,y$ are such that $\dist(y,\M_{l}) \ge r>\dist(x,\M_{k}) $, $\sin(\alpha) \leq \frac{C_{k,l}r }{\veps_+ }$ and $|x-y|\leq \veps_+$. Then the first point of any $(\alpha, r)$-constrained path connecting $x$ and $y$ starting from $x$ must belong to $\M_k$. 
	\end{lemma}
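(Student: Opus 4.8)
The plan is to reduce everything to a local estimate near the intersection manifold $\M_{kl}$ and then exploit the non-tangency angle bound $\beta$ from Assumption \ref{assump:WellSeparated}. As in the proof of Theorem \ref{Proposition: path algorithm, inner fully connected}, since all manifolds are smooth and compact and we only ever connect points at distance at most $\veps_+$, up to lower-order corrections in the curvature we may replace $\M_k$, $\M_l$, and $\M_{kl}$ by their tangent flats at a nearby point of $\M_{kl}$; the curvature corrections will be absorbed into the constant $C_{k,l}$. In this linearized picture, write $\R^d$ locally as $T \oplus T^{\perp_k} \oplus T^{\perp_l} \oplus R$, where $T$ is the tangent space to $\M_{kl}$, and $T^{\perp_k}, T^{\perp_l}$ are the orthogonal complements of $T$ inside $T_x\M_k$ and $T_x\M_l$ respectively. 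The non-tangency hypothesis \eqref{eqn:AngleConstraint} says precisely that any unit vector in $T^{\perp_l}$ makes an angle at least $\tfrac{\pi}{2}-\beta$ with $T_x\M_k = T\oplus T^{\perp_k}$, i.e. its component orthogonal to $T_x\M_k$ has length at least $\cos(\beta)$ (after a possible reindexing of which complement is which).

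First I would set up the geometry of an $(\alpha,r)$-constrained path $(x=x_{i_1},\dots,x_{i_m}=y)$ with $x\in\M_l$, $y\in\M_k$, $\dist(x,\M_{kl})>r$. The key observation is that $x_{i_m}-x_{i_1}=y-x$ has a component of size $\dist(y,\M_{kl})\cos(\beta)$ (up to lower order) in the direction orthogonal to $T_x\M_l$ — because $y\in\M_k$ and the non-tangency bound forces the "height" of $y$ above $\M_l$ relative to its height $\dist(y,\M_{kl})$ above $\M_{kl}$ to be at least $\cos(\beta)$ minus curvature errors. On the other hand, every step $x_{i_{j+1}}-x_{i_j}$ of a genuine $(\alpha,r)$-path makes an angle less than $\alpha$ with $y-x$, so each step's component in the direction $\frac{y-x}{|y-x|}$ is at least $\cos(\alpha)|x_{i_{j+1}}-x_{i_j}|$, while its component orthogonal to that direction is at most $\sin(\alpha)|x_{i_{j+1}}-x_{i_j}|$. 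Summing the steps that occur while the path is still within distance $r$ of $\M_l$ (which includes at least the first step, since $\dist(x,\M_{kl})>r$ means $x$ itself starts far from the intersection but on $\M_l$), I would bound the total displacement of the path in the direction orthogonal to $T_x\M_l$ by something like $C_{k,l}(\sin(\alpha)+\text{curvature})\cdot|y-x|$. Comparing this with the required displacement $\cos(\beta)\dist(y,\M_{kl})$ (minus curvature corrections) gives a contradiction precisely when $\cos(\beta)\dist(y,\M_{kl})/|y-x| > C_{k,l}\sin(\alpha)$, which is the claimed criterion; specializing to $\dist(y,\M_{kl})\ge r$ and $\sin(\alpha)\le r\cos(\beta)/(\veps_+ C_{kl})$ uses $|y-x|<\veps_+$.

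For the last assertion — that if $x\in\M_k$ with $\dist(x,\M_{lk})<r$ and $y\in\M_k$ with $|x-y|\le\veps_+$, then the first vertex of any $(\alpha,r)$-constrained path from $x$ to $y$ lies on $\M_k$ — I would argue by contradiction: suppose the first step goes to a point $z\in\M_l\setminus\M_k$ (the only other manifold that can be near $x$, since the manifolds are well separated away from their mutual intersections and $\veps_+$ is small). Then $|x-z|<r$, and since $\dist(x,\M_{lk})<r$ we get $\dist(z,\M_{lk})<2r$, so $z$ is within $O(r)$ of $\M_{lk}$; but then applying the already-proven first half of the lemma with the roles of the endpoints reversed — now viewing the reversed sub-path from $z\in\M_l$ out to $x\in\M_k$ — together with the fact that $z$ must then be genuinely off $\M_k$ (distance of order its height in $T^{\perp_l}$, bounded below by $\cos(\beta)$ times its distance to the intersection) contradicts the step being short and nearly aligned, once $\sin(\alpha)\le r\cos(\beta)/(\veps_+C_{kl})$. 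The cleanest route is to observe that any single step $x_{i_{j+1}}-x_{i_j}$ of length $<r$ that is nearly parallel to $y-x$ cannot carry the path from $\M_k$ to a point of $\M_l$ that is more than $C_{k,l}\sin(\alpha)\veps_+/\cos(\beta)$ away from $\M_{lk}$; choosing $\alpha$ as prescribed makes this threshold $<r$, and one checks the geometry forces any off-$\M_k$ point reachable in one short step to be at least that far from the intersection, a contradiction.

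I expect the main obstacle to be making the curvature corrections rigorous: the clean linear-algebra picture above is exact only for flats, and to justify replacing the manifolds by tangent planes one needs quantitative estimates (of the type: a point at distance $\le\veps_+$ on a manifold of reach $\ge R$ lies within $C\veps_+^2/R$ of the tangent plane) uniformly over all the $\M_k$, $\M_l$, and $\M_{kl}$, and one must check these second-order errors are genuinely dominated by the first-order terms $\sin(\alpha)|y-x|$ and $\cos(\beta)\dist(y,\M_{kl})$ under the standing assumption $\veps_+\le CK^{-1/2}$ and the choice of $\alpha$. Bookkeeping which orthogonal complement plays which role (the asymmetry between $T^{\perp_k}$ and $T^{\perp_l}$ in \eqref{eqn:AngleConstraint}) and ensuring the constant $C_{k,l}$ can be taken uniform are the remaining fiddly points, but they are routine once the flat case is in hand.
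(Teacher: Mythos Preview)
Your reduction to flats and your identification of the key quantity $|P_{T_x\M_l^\perp}(y-x)|\ge\cos(\beta)\,\dist(y,\M_{kl})$ are correct and match the paper. The gap is in how you obtain the matching upper bound $|P_{T_x\M_l^\perp}(y-x)|\lesssim\sin(\alpha)|y-x|$. You invoke that each step has component \emph{orthogonal to $y-x$} at most $\sin(\alpha)|s_j|$, and then claim that summing such steps bounds the displacement \emph{orthogonal to $T_x\M_l$}. These are two different orthogonal directions; the bound on one does not give a bound on the other. In particular, steps with an endpoint in $\M_k$ can carry a large $T_x\M_l^\perp$-component even while making a small angle with $y-x$, so no termwise summation produces the claim.

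The paper's mechanism is shorter and sidesteps this entirely. It uses a \emph{single} step $x'$ with both endpoints in the manifold containing $x$ (so $x'\in T_x\M_l$ in the flat model; such a step exists because $\dist(x,\M_{kl})>r$, up to the usual constant absorbed in $C_{k,l}$). Since $x'\perp P_{\M_l^\perp}(y-x)$, the angle constraint $\langle x',y-x\rangle>\cos(\alpha)|x'|\,|y-x|$ together with Cauchy--Schwarz forces $|P_{\M_l}(y-x)|>\cos(\alpha)|y-x|$, hence $|P_{\M_l^\perp}(y-x)|<\sin(\alpha)|y-x|$ in one line. For the second assertion the same trick is applied with the roles reversed: now $y-x$ itself lies in $T_x\M_k$ (both endpoints in $\M_k$), and one tests it against the hypothetical first step $x_1-x$ with $x_1\in\M_l$; Cauchy--Schwarz then bounds $|P_{\M_k^\perp}x_1|$ by $\sin(\alpha)|x_1-x|$. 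Your proposed contradiction for the second part (``any off-$\M_k$ point reachable in one short step must be at least that far from the intersection'') does not hold as stated: nothing prevents a data point of $\M_l$ from lying arbitrarily close to $\M_{kl}$.
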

	
	\begin{proof}
		\begin{figure}
		\centering
		\includegraphics[scale=0.5]{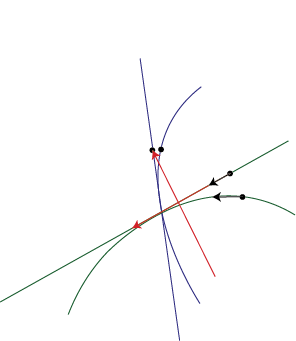}
		\put(-25,65){$x$}
		\put(-45,65){$x_1$}
		\put(-82,93){$y^*$}
		\put(-65,95){$y$}
		\put(-30,80){$x^*$}
		\put(-52,83){$x_1^*$}
		\put(0,50){$\M_l$}
		\put(0,100){$\T\M_l$}
		\put(-55,130){$\M_k$}
		\put(-90,150){$\T\M_k$}
		\put(-82,65){$v_l$}
		\put(-65,80){$v_l^\perp$}
		\caption{Illustration of proof for Lemma \ref{lemma: path algorithm, different manifolds |x-y|<epsilon_+}}
		\label{fig:proof3.7}
	    \end{figure}
		We denote the closest point in $\M_{kl}$ to $y$ as $O_y$ and the closest point in $\M_{kl}$ to $x$ as $O_x$, respectively; uniqueness of these closest points, provided that $\veps_+$ is small enough, is guaranteed by the discussion in Chapter 6 in \citet{lee2003introduction}. Let $\T\M_l$ and $\T\M_k$ be the tangent planes of $\M_l$ and $\M_k$ at $O_x$ and $O_y$, respectively. Let $x^*,y^*$ be the closest points from $x\in \M_l$ and $y\in \M_k$ to $\T\M_l$ and $\T\M_k$, respectively. See an illustration in Figure \ref{fig:proof3.7}. In the remainder of this proof, we assume for the sake of contradiction that there is an $(\alpha, r)$-constrained path between $x$ and $y$.
		
		We start by noticing that $y^*-x^*$ can be decomposed as
		\begin{align}\label{eq-3.7:decomposition:x*-y*}
		   y^*-x^*= y_l^\perp v_l^\perp + y_{l}v_{l}, 
		\end{align}
		for unit vectors $v_l^\perp$ and $v_l$, where $v_l^\perp$ is vertical to $\T\M_l$ and $v_l$ is tangent to $\T\M_{l}$; $y_{l}^\perp$ and $y_{l}$ are non-negative scalars. On the other hand, given the angle constraint between manifolds in the second part of Assumption \ref{assump:WellSeparated}, it is straightforward to show that
		\begin{align}
		    |x-O_x|= \dist(x, \M_{kl})\le C\dist(x,\M_k),
		\end{align}
		for a constant $C$ that depends on $\beta$ (this constant degenerates as $\beta$ approaches $\pi/2$). A similar relation holds for $\dist(y,O_y)$.


		If there exists an $(\alpha,r)$-constrained path connecting $x \in \M_l$ and $y\in \M_k$, then the first segment $\bar{x}=x_1-x$ in this constrained path is such that $x_1\in\M_l$, because $\dist(x,\M_{k})>r$. Denote the closest point to $x_1$ on $\T\M_l$ as $x_1^*$ and in turn define $\bar{x}^*:=x_1^*-x^*$. Notice that $\bar{x}$ satisfies
		\begin{align}\label{eq:xbar and xbarstar}
        \bar{x}=\bar{x}^*+s_x v_x,
		\end{align}
		where $v_x$ is a unit norm vector vertical to $\T \M_l$ and the scalar $s_x$ satisfies
		\begin{align}\label{eq:3.7-s_x}
		    s_x\le C|\bar{x}| \cdot|x-O_x| + C |\bar{x}|^2\le  C|\bar{x}|\cdot\dist(x,\M_k) + C |\bar{x}|^2
		\end{align}
		because locally around the point $O_x$ the manifold $\M_l$ can be represented by a quadratic function as illustrated in Figure \ref{fig:proof_quadratic}.\nc
		\begin{figure}
		\centering
		\includegraphics[scale=0.5]{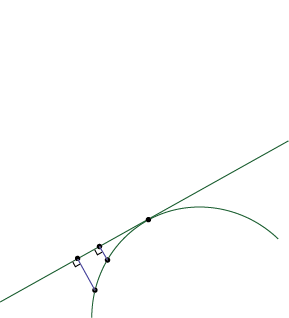}
		\put(-93,49){$z$}
		\put(-80,55){$O_x$}
		\put(-116,43){$\Delta z$}
		\caption{A manifold is locally the graph of a function $f$ which can be approximated by a quadratic function around $O_x$, thus we have $|f(z)-f(z+\Delta z)|\approx |(z-O_x)^2-(z+\Delta z-O_x)^2|\leq C |z- O_x|\cdot |\Delta z| + C |\Delta z |^2$ . Here we have illustrated the case where $f$ is a scalar function, and the general case follows by using the scalar case componentwise.}
		\label{fig:proof_quadratic}
	    \end{figure}
	    \blue
		On the other hand, we have
		\begin{align}\label{eq:y-x and ystar-xstar}
		 y-x=(y-y^*)+(y^*-x^*)+(x^*-x),
		\end{align}
		where 
		\begin{align}\label{eq:|y-y^*|}
		    |y-y^*|\le C|y-O_y|^2\le C \dist(y,\M_l)^2.
		\end{align}
		Indeed, the first inequality follows from the quadratic approximation of the manifold around $O_y$, the fact that $O_y$ is the closest point to $y$ in $\M_{kl}$, and the angle constraint between manifolds in Assumption \ref{assump:WellSeparated}, which allows us to bound $\dist(y, \M_{kl})$ by a constant times $\dist(y, \M_l)$. Notice that $x^*-x\perp \bar{x}^*$ and that $|x-x^*|$ satisfies a similar relationship to $|y-y^*|$, namely 
		\begin{align}\label{eq:|x-x^*|}
		    |x-x^*|\le C|x-O_x|^2\le C \dist(x,\M_k)^2.
		\end{align}
		Now, since $v_x$ is vertical to $\T\M_l$, we have
    	\begin{equation}\label{eq:3.7-y-x^*, v_x}
		\begin{split}
		   \langle y-x^*, v_x\rangle =\langle y-y^*, v_x\rangle+ \langle y^*-x^*, v_x\rangle\le |y-y^*|+y_l^\perp\le C \dist(y,\M_l)^2+y_l^\perp,
		\end{split}
		\end{equation}
		after using \eqref{eq-3.7:decomposition:x*-y*}, $\langle v_l ,v_x \rangle = 0 $, and \eqref{eq:|y-y^*|}.
		
		Since $\bar{x}$ is a segment in the $(\alpha,r)$-constrained path, we must have
		\begin{align}\label{eq:3.7-cosalpha}
		    \cos(\alpha) \le   \frac{\langle y-x , \bar{x} \rangle}{|y-x|\cdot |\bar{x}| }&= \frac{\langle y-x,\bar{x}^*\rangle}{|y-x|\cdot |\bar{x}| }+\frac{\langle y-x,s_x v_x\rangle}{|y-x|\cdot |\bar{x}| }.
		\end{align}
		Using the Cauchy-Schwartz inequality and combining \eqref{eq:xbar and xbarstar}, \eqref{eq:y-x and ystar-xstar}, and the fact that $\bar{x}$ is a segment in the $(\alpha,r)$-constrained path, we deduce
		
		\begin{align}\label{eq:3.7-first term}
		\begin{split}
		     \frac{\langle y-x,\bar{x}^*\rangle}{|y-x|\cdot |\bar{x}| }&=\frac{\langle y-y^*+(y^*-x^*)+(x^*-x) , \bar{x}^* \rangle}{|y-x|\cdot |\bar{x}| }\\
		    &=\frac{\langle y-y^* , \bar{x}^* \rangle }{|y-x|\cdot |\bar{x}|} + \frac{\langle y^* -x^* , \bar{x}^* \rangle }{|y-x|\cdot |\bar{x}|} \\
		    &=  \frac{\langle y-y^* , \bar{x}^* \rangle }{|y-x|\cdot |\bar{x}|} + \frac{\langle y_{l}v_{l} , \bar{x}^* \rangle}{|y-x|\cdot |\bar{x}| }\\
		    &\le  \frac{C\dist(y,\M_l)^2}{|y-x|} + \frac{\langle y_{l}v_{l} , \bar{x}^* \rangle}{|y-x|\cdot |\bar{x}| } \\
		    &\le \frac{C\dist(y,\M_l)^2}{|y-x|} + \frac{|y_{l} |}{|y-x|}
		\end{split}
		\end{align}
			where the second equality is from $\langle x^*-x,\bar{x}^*\rangle=0$, the third equality is from $\langle v_l^\perp, \bar{x}^*\rangle=0$, and the inequalities are from $|\bar{x}^*|\le |\bar{x}|$, Cauchy-Schwartz inequality, and \eqref{eq:|y-y^*|}.
		On the other hand, from Cauchy-Schwartz inequality,
		\begin{equation}\label{eq:3.7-second term}
		\begin{split}
		    \frac{\langle y-x,s_x v_x\rangle}{|y-x|\cdot |\bar{x}| }&=\frac{\langle y-x^*,s_x v_x\rangle}{|y-x|\cdot |\bar{x}| }+\frac{\langle x^*-x,s_x v_x\rangle}{|y-x|\cdot |\bar{x}| }\\
		   &\le\frac{C\left(\dist(y,\M_l)^2+y_l^\perp\right)\cdot (\dist(x,\M_k)+ |\overline {x}|)}{|y-x|}+\frac{C\dist(y,\M_l)^3}{|x-y|}\\
		    &\le \frac{C\dist(y,\M_l)^3}{|y-x|}+\frac{C y_l^\perp\cdot \dist(y,\M_l)}{|y-x|}		\end{split}
		\end{equation}
		where the first inequality is from \eqref{eq:3.7-y-x^*, v_x}, \eqref{eq:|x-x^*|}, the assumption $\dist(x,\M_k)\le \dist(y,\M_l)$, the fact that $|\overline x| \leq r \leq \dist(y, \M_l)$ , and \eqref{eq:3.7-s_x}; the second inequality is from $\dist(x,\M_k)\le \dist(y,\M_l)$ and the fact that $|\overline x| \leq r \leq \dist(y, \M_l)$.
		
		Therefore, combining \eqref{eq:3.7-cosalpha}, \eqref{eq:3.7-first term} and \eqref{eq:3.7-second term}, we obtain
		\begin{align}\label{eq-cosalpha}
		    \cos(\alpha)\le \frac{|y_{l} |}{|y-x|}+\frac{Cy_{l}^\perp }{|y-x|}\dist(y,\M_l)+\frac{C\dist(y,\M_l)^2}{|y-x|}.
		\end{align}
		Notice that we have used the fact that $\dist(y,\M_l)^2 $ dominates $\dist(y,\M_l)^3$. This is the case because $\dist(y,\M_l)\le\eps_+\ll 1$.
		
		From the Pythagorean theorem we have $y_{l}^2 +  (y^\perp_{l}) ^2 = |x^*-y^*|^2$. Therefore, we have
		\begin{equation}\label{eq:y_l}
		   \begin{split}
		    y_l^2&=|x^*-y^*|^2-(y_l^\perp)^2\\
		    &\le (|x-y|+ C\dist(y,\M_l)^2 )^2-(y_l^\perp)^2
		   \end{split}
		\end{equation}
		where the inequality is from \eqref{eq:y-x and ystar-xstar}, \eqref{eq:|y-y^*|}, \eqref{eq:|x-x^*|}, and the assumption that $\dist(y,\M_l)\ge \dist(x,\M_k)$. Also, 
		\begin{equation}\label{eq-3.7:ylperp}
		    \begin{split}
		        y_l^\perp&=\dist(y^*,\T\M_l)\ge \dist(y,\T\M_l)-|y-y^*|\\
		 &\ge\dist(y,\M_l)-|y-y^*|-C|x-y|^2\\
		 &\ge\dist(y,\M_l)-C|x-y|^2
		    \end{split}
		\end{equation}
		where the first inequality is from the triangle inequality; the third inequality is from \eqref{eq:|y-y^*|}; the second inequality is obtained as follows: denote the closest point to $y$ in $\T\M_l$ as $\mathcal{P}_{\T\M_l}(y)$, and let  $\mathcal{P}^{-1}_{\T\M_l}(\mathcal{P}_{\T\M_l}(y))$ be the point in $\M_l$ such that the closest point to $\mathcal{P}^{-1}_{\T\M_l}(\mathcal{P}_{\T\M_l}(y))$ in  $\T\M_l$ is  $\mathcal{P}_{\T\M_l}(y)$; the existence of the point $\mathcal{P}^{-1}_{\T\M_l}(\mathcal{P}_{\T\M_l}(y))$ follows from the local representation of $\M_l$ as a function of points in a neighborhood in $\T\M_l$ around $O_x$. Then,
		\begin{align}
		\begin{split}
		\dist(y,\M_l)-\dist(y,\T\M_l)&\le |y-\mathcal{P}^{-1}_{\T\M_l}(\mathcal{P}_{\T\M_l}(y))|- |y-\mathcal{P}_{\T\M_l}(y)| \\
		&\le|\mathcal{P}^{-1}_{\T\M_l}(\mathcal{P}_{\T\M_l}(y))-\mathcal{P}_{\T\M_l}(y)|\\
		&\le C| \mathcal{P}_{\T\M_l}(y) - O_x|^2
		\\&\le C| \mathcal{P}_{\T\M_l}(y) - \mathcal{P}_{\T\M_l}(x)|^2 + C| \mathcal{P}_{\T\M_l}(x) - O_x|^2
		\\& \leq C| y - x|^2 + C| x  - O_x|^2
		\\&
		\le C|x-y|^2.
		\end{split}
		\label{eq:PTML}
		\end{align}
		Similarly, one can derive an upper bound for $y_l^\perp$ of the form:
		\begin{equation}\label{eq:upper bound y_l^perp}
		    \begin{split}
		        y_l^\perp\le\dist(y,\M_l)+C|x-y|^2.
		    \end{split}
		\end{equation}
		
		Using the condition $1\gg \eps_+\ge \dist(y,\M_l)\ge \dist(x, \M_{k})$, we infer
		\begin{align*}
		    \sin^2(\alpha) &\geq 1-\left(\frac{|y_{l} |}{|y-x|}+\frac{Cy_{l}^\perp }{|y-x|}\dist(y,\M_l)+\frac{C\dist(y,\M_l)^2}{|y-x|}\right)^2 \\
		    &\ge 1- \frac{y_{l}^2 }{|y-x|^2}-\frac{Cy_l y_l^\perp}{|y-x|^2}\dist(y,\M_l) - \frac{C\dist(y,\M_l)^2}{|y-x|} \\
		    &\ge \frac{(y_l^\perp)^2}{|x-y|^2}	-\frac{Cy_l y_l^\perp}{|y-x|^2}\dist(y,\M_l) - \frac{C\dist(y,\M_l)^2}{|y-x|}    \\
		    &\ge \frac{C\dist(y,\M_{l})^2}{|y-x|^2}-C\frac{\dist(y,\M_{l})^2}{|y-x|},
		\end{align*}
		where the first inequality is from \eqref{eq-cosalpha}; we only keep the leading term in the second inequality; in the third inequality we use \eqref{eq:y_l} and $\dist(x,\M_k)\le \dist(y,\M_l)$, and in the last inequality we use \eqref{eq-3.7:ylperp}, \eqref{eq:upper bound y_l^perp}, and $\dist(y,\M_l)\le |x-y|\ll 1$. Noticing that $|y-x|\le \eps_+\ll 1$, we can further simplify the above inequality as
		\begin{align*}
		    \sin(\alpha) \geq \frac{C\dist(y,\M_{l})}{|y-x|}.
		\end{align*}
		As a consequence, if the above relationship is not satisfied, there can not exist an $(\alpha, r)$-constrained path between $x$ and $y$. This completes the proof of the first part.
		
		For the second part, we assume for the sake of contradiction that there is an $(\alpha, r)$-constrained path between $x$ and $y$ such that the first step (starting from $x$) in the path belongs to $\M_l$; we call this first step $x_1$. Since the only condition used for $\dist(x,\M_k)$ in the first part is that $\dist(x,\M_k)\le \dist(y,\M_l)$, and this also holds for $\dist(x,\M_k)<r\le \dist(y,\M_l) $, we can then repeat the same argument as above with $x'=x_1-x$ to conclude that if $\sin(\alpha) < \frac{C\dist(y,\M_{l})}{|y-x|}$, then we would reach a contradiction. 
		
	\end{proof}
	\nc








}

	\blue 
	The next lemma helps us justify why, for the multi-manifold clustering problem, choosing $\veps_-$ of the same order as $\veps_+$ is better than choosing $\veps_- =0$ (or in general $\veps_-$ much smaller than $\veps_+$). Intuitively, as illustrated in Figure \ref{fig:epsminusisimportant}, by directly omitting connections between points that are too close to each other we can remove edges between points on different manifolds that the angle constraint condition may not be able to remove. We quantify the gain of considering this step in the next lemma.

	

	\begin{figure}
		\centering
		\includegraphics[scale=0.5]{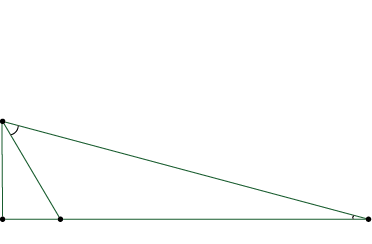}
		\put(-157,0){$x_1$}
		\put(-175,46){$\vartheta$}
		\put(10,13){$y$}
		\put(-200,70){$x$}
		\caption{\blue When $\dist(x,\M_k)\le\dist(y,\M_l)$ and the first step $x_1$ of a constrained path is on $\M_k$, the angle  $\angle x_1xy$ is larger than the angle $\angle x_1yx$. This means that the value of $\angle x_1xy$ dictates whether the path satisfies the angle constraints or not. In turn, we see that for points $x, y$ with a larger value of $|x-y|$ this angle will be larger than when $|x-y|$ is smaller, making it easier for the angle condition to detect that $x,y$ are in different manifolds when their distance is larger.\nc} 
			\label{fig:epsminusisimportant}
	    \end{figure}

	\new{
	\begin{lemma}\label{Lemma: path algorithm, different manifolds |x-y|>epsilon_-}
		
		Suppose that $\mathcal{M}_l \cap \M_k\not = \emptyset$ and let $x\in \M_l$ and $y \in \M_k$ be such that $\veps_- < |x-y|<\veps_+$, where $\veps_-= c \veps_+$. Let $\alpha>0$ be such that
		\[ \sin(\alpha) <  \frac{Cr}{\veps_+}.   \]
		If $\dist(x, \M_{k})>C(\frac{r^2}{\eps_+}+r\eps_+ )$ and $\dist(y, \M_{l})> r$, then there can not exist an $(\alpha,r)$-constrained path between $x$ and $y$.
	\end{lemma}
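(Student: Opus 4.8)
The plan is to argue by contradiction and push the mechanism of Lemma~\ref{lemma: path algorithm, different manifolds |x-y|<epsilon_+} one notch further, extracting a bound of order $r^2/\veps_+$ rather than order $r$ by exploiting that the \emph{last} edge of a putative path has length $<r$, not merely that the total displacement $y-x$ has length $<\veps_+$. As in the earlier proofs, since we only join points within distance $\veps_+$, all curvature effects are lower order and I treat $\M_k,\M_l,\M_{kl}$ as flats, writing $\M_l=\M_{kl}\oplus W_l$, $\M_k=\M_{kl}\oplus W_k$ with $W_l,W_k\perp\M_{kl}$ and the angle between unit vectors of $W_l$ and $W_k$ in $[\tfrac{\pi}{2}-\beta,\tfrac{\pi}{2}+\beta]$; I also work in the local two-manifold picture (a vertex of the path lies on $\M_k$ or on $\M_l$). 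Write $\pi_S$ for orthogonal projection onto a subspace $S$, and $\M_l^{\perp}$ for the orthogonal complement of the direction space of $\M_l$. Suppose $(x=z_0,z_1,\dots,z_M=y)$ were an $(\alpha,r)$-constrained path; set $u:=(y-x)/|y-x|$ and $s_i:=z_{i+1}-z_i$, so $|s_i|<r$ and $\angle(s_i,u)<\alpha$.

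First I would show no edge $s_i$ has both endpoints on $\M_k$. Such an edge, being nearly parallel to $u$ and lying in $\M_k$, forces --- exactly as in the proof of Lemma~\ref{lemma: path algorithm, different manifolds |x-y|<epsilon_+} (recenter at the point $q'\in\M_{kl}$ nearest $x$, split $x-q'\in W_l$ into its $\M_k^{\perp}$ and $W_k$ parts, and combine Cauchy--Schwarz with the Pythagorean theorem) --- the inequality $\cos\beta\,\dist(x,\M_{kl})\le|\pi_{\M_k^{\perp}}(x-q')|<\sin\alpha\,|y-x|<\sin\alpha\,\veps_+$, which contradicts $\dist(x,\M_{kl})>r$ once $\sin\alpha<C_{k,l}\cos\beta\,r/\veps_+$ (up to adjusting $C_{k,l}$). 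Consequently $z_{M-1}\notin\M_k$, hence $z_{M-1}\in\M_l$. Moreover, since $\dist(x,\M_{kl})>r$ and consecutive vertices are within $r$, $x=z_0$ is not within $O(r)$ of $\M_{kl}$, so not every edge can be an $\M_l\!\leftrightarrow\!\M_k$ transition; pick an edge $s_{i_0}$ with both endpoints on $\M_l$. From $s_{i_0}\in\M_l$ and $\angle(s_{i_0},u)<\alpha$ we get $|\pi_{\M_l}(u)|\ge\langle s_{i_0}/|s_{i_0}|,u\rangle>\cos\alpha$, i.e. $|\pi_{\M_l^{\perp}}(u)|<\sin\alpha$: the reference direction $u$ is almost tangent to $\M_l$.

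The decisive step is to analyze the last edge $s_{M-1}=y-z_{M-1}$. Recenter the flats at the point $q\in\M_{kl}$ nearest to $y$, so $\pi_{\M_{kl}}(y)=0$, $|y|=\dist(y,\M_{kl})$, and $y\in W_k$. Since $z_{M-1}\in\M_l$ and $q\in\M_{kl}\subseteq\M_l$, we have $z_{M-1}-q\in\M_l$, hence $\pi_{\M_l^{\perp}}(y)=\pi_{\M_l^{\perp}}(s_{M-1})$. Writing $s_{M-1}=\langle s_{M-1},u\rangle u+w$ with $w\perp u$ and $|w|\le|s_{M-1}|\sin\alpha$, and using $|\pi_{\M_l^{\perp}}(u)|<\sin\alpha$ together with $|s_{M-1}|<r$, we get $|\pi_{\M_l^{\perp}}(s_{M-1})|\le\langle s_{M-1},u\rangle\,|\pi_{\M_l^{\perp}}(u)|+|w|<2r\sin\alpha$. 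On the other hand $y\in W_k$, so the angle constraint of Assumption~\ref{assump:WellSeparated} gives $|\pi_{\M_l^{\perp}}(y)|\ge\cos\beta\,|y|=\cos\beta\,\dist(y,\M_{kl})$. Combining, $\cos\beta\,\dist(y,\M_{kl})<2r\sin\alpha<2C_{k,l}\cos\beta\,r^2/\veps_+$, so $\dist(y,\M_{kl})<2C_{k,l}\,r^2/\veps_+$, contradicting $\dist(y,\M_{kl})>C\,r^2/\veps_+$ once $C\ge 2C_{k,l}$. Hence no $(\alpha,r)$-constrained path between $x$ and $y$ can exist.

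I expect the main difficulty to be the bookkeeping of which manifold each vertex $z_i$ lies on: making rigorous that, once edges contained entirely in $\M_k$ are excluded, indeed $z_{M-1}\in\M_l$ and the recentering/last-edge argument applies (in particular handling a genuine union of more than two manifolds, where an interior vertex could a priori lie on a third $\M_j$), as well as tracking the curvature corrections absorbed by the flat reduction. The hypothesis $\veps_-<|x-y|$, i.e. $|x-y|$ of order $\veps_+$, enters precisely here: it guarantees $|x-y|\gg r$, so a putative path has many interior vertices and in particular an edge $s_{i_0}$ lying entirely in $\M_l$ --- without which one cannot conclude that $u$ is almost tangent to $\M_l$, and the bound degrades from $r^2/\veps_+$ back to $r$.
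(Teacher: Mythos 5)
Your proof is correct, reaches the same conclusion as the paper's, and uses genuinely different machinery. Both arguments reduce to flats, isolate the last edge $s_{M-1}=y-z_{M-1}$, deduce $z_{M-1}\in\M_l$, and extract a bound $\dist(y,\M_{kl})\lesssim r\sin\alpha\lesssim r^2/\veps_+$. The paper gets there through explicit triangle geometry: it sets $z=$ the orthogonal projection of $y$ onto $\M_l$, writes the angle at $y$ as $\vartheta\geq\arccos(t/d)-\arccos(t/|y-y_1|)\geq f(d,t):=\arccos(t/d)-\arccos(t/r)$, establishes monotonicity of $f$ by differentiation, and solves $f(\veps_-,t_0)=\alpha$ to exhibit the threshold $t_0<Cr^2/\veps_+$. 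You instead work with subspace projections: a Cauchy--Schwarz argument shows no edge lies entirely in $\M_k$ (so $z_{M-1}\in\M_l$), the first edge $s_0\in\M_l$ shows the reference direction $u$ satisfies $|\pi_{\M_l^\perp}(u)|<\sin\alpha$, the decomposition of $s_{M-1}$ along $u$ bounds $|\pi_{\M_l^\perp}(s_{M-1})|<2r\sin\alpha$, and the transversality in Assumption \ref{assump:WellSeparated} gives the competing lower bound $\cos\beta\,\dist(y,\M_{kl})\leq|\pi_{\M_l^\perp}(y-q)|$. Your route avoids the explicit trigonometric/monotonicity computations and is arguably more transparent about where each hypothesis enters (in particular $\veps_-\geq c\veps_+$, which guarantees $|x-y|>r$ and hence an interior vertex and a first edge inside $\M_l$). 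It also fills in a step the paper leaves loose: the paper cites ``the second part of Lemma \ref{lemma: path algorithm, different manifolds |x-y|<epsilon_+}'' to conclude $z_{M-1}\in\M_l$, but that part's hypotheses ($x,y$ on the same manifold) do not literally match here; your direct argument that no edge lies wholly in $\M_k$ supplies the needed justification. The concern you raise about interior vertices on a third manifold $\M_j$ is equally present, tacitly, in the paper's two-flat reduction, so it is not a gap specific to your approach.
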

	
	    \begin{proof}
	    \begin{figure}
		\centering
		\includegraphics[scale=0.5]{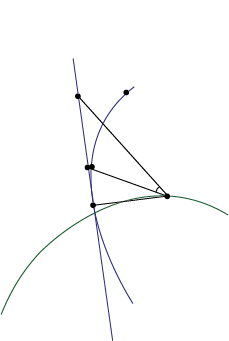}
		\put(-25,65){$x$}
		\put(-45,124){$y$}
		\put(-88,122){$y^*$}
		\put(-83,85){$x_1^*$}
		\put(-80,65){$z$}
		\put(-48,78){$\vartheta$}
		\put(5,55){$\M_l$}
		\put(-65,-15){$\T\M_k$}
		\put(-45,5){$\M_k$}
		\caption{Illustration of proof for Lemma \ref{Lemma: path algorithm, different manifolds |x-y|>epsilon_-}}
		\label{fig:proof3.8}
	    \end{figure}
	If $\dist(x, \M_{k}) >r$, then by Lemma \ref{lemma: path algorithm, different manifolds |x-y|<epsilon_+} there can not exist an $(\alpha, r)$-constrained path between $x,y$. Thus, without the loss of generality we can assume that $\dist(x, \M_{k}) \leq r $. 
	
	    Assume for the sake of contradiction that there is an $(\alpha,r)$-constrained path between $x$ and $y$. Denote the closest point to $x$ in $\M_{kl}$ as $O_x$, and let $\T\M_k$ be the tangent plane to $\M_k$ at $O_x$ (notice that this definition is different from the one in Lemma \ref{lemma: path algorithm, different manifolds |x-y|<epsilon_+}). Let $y^*$ be the closest point to $y$ in $\T\M_k$, and let $z$ be the closest point to $x$ in $\T\M_k$. Let $t:= |x-z|$ and $d:=|y^*-x|$; see an illustration in Figure \ref{fig:proof3.8}. Notice that we have
	    \begin{align}\label{eq-3.8:lower bound for d}
	        |y^*-x|\ge |y-x|-|y^*-y|>\veps_- - C\eps_+^2\ge C\eps_+,
	    \end{align}
	    due to the fact that
	    \begin{align}
	    \begin{split}
	   |y - y^*|&\leq C |y^* - O_x|^2\leq C |y^* - z|^2 + C | z- O_x|^2
	   \\& \leq C |x-y|^2  + C|x - O_x|^2 
	   \\&= C |x-y|^2  + C\dist(x, \M_{kl})^2
	   \\ & \leq C |x-y|^2  + C\dist(x, \M_k)^2
	   \\ & \leq C|x-y|^2 \leq C \veps_+^2.
	   \end{split}
	   \label{eq:yy*}
	    \end{align}
	     By the second part of Lemma \ref{lemma: path algorithm, different manifolds |x-y|<epsilon_+} we know that the first step in the constrained path (starting from $x$) must lie in $\M_k$; we denote by $x_1$ this first step and let  $x_1^*$ be the closest point to $x_1$ in $\T\M_k$. We have
	    \begin{align}\label{eq-3.8:t}
	    t = |x-z| =\dist(x, \T\M_{k}) \leq |x-x_1^*| \leq |x-x_1|+|x_1-x_1^*| \leq r+Cr^2\le C r.
	    \end{align}
	    In fact, $|x_1- x_1^*|$ can be bounded by $C|x_1- x|^2$, as it follows from the next computation:
	    \begin{equation}\label{eq-3.8:|x_1-x_1^*|<=Cr|x-x_1|+somthing}
	        \begin{split}
	            |x_1-x_1^*|\le C|x_1-O_x|^2&\le C|x-x_1|^2+C|x-O_x|^2\\
	            &= C |x-x_1|^2+C\dist(x, \M_{kl})^2\\
	            &\le C |x-x_1|^2+C \dist(x, \M_{k})^2\\
	            &\le C|x-x_1|^2.
	        \end{split}
	    \end{equation}
	  In particular, we also have
	    \begin{align}\label{eq-3.8:|x_1-x_1^*|<=Cr|x-x_1|}
	        |x_1-x_1^*|\le Cr|x-x_1|.
	    \end{align}
	    \blue 
	    We will also use the following inequality:
	    \begin{align}
	    \label{eq:320}
	       \begin{split}
	       |x-x_1^*|^2&=|x_1^* - z|^2 + |x-z|^2 \leq |x_1 -z|^2  + 2 |x-x_1|^2 + 2|x_1 - z|^2
	       \\ &\leq  2|x-x_1|^2 + 3|x_1 - x_1^* |^2 + 3|x_1^* -z|^2 \leq C |x-x_1|^2 +3|x_1-x_1^*|^2
	       \\&\leq C |x-x_1|^2 + C|x-x_1|^4 \leq C |x-x_1|^2,
 	       \end{split} 
	    \end{align}
	    where the second to last inequality follows from \eqref{eq-3.8:t} and \eqref{eq-3.8:|x_1-x_1^*|<=Cr|x-x_1|+somthing}. 
	  
	    The angle condition for the constrained path then gives
	    \begin{equation}\label{eq-cosalpha<cosvartheta}
	        \begin{split}
	            \cos\alpha&\le \frac{\langle y-x,x_1 - x \rangle}{|y-x|\cdot |x_1 - x|}\\
		        &=\frac{\langle y^*-x,x^*_1 - x \rangle}{|y-x|\cdot |x_1 - x|}+\frac{\langle y-y^*,x^*_1 - x \rangle}{|y-x|\cdot |x_1 - x|}+\frac{\langle y^*-x,x_1-x_1^* \rangle}{|y-x|\cdot |x_1 - x|}+\frac{\langle y-y^*,x_1-x_1^* \rangle}{|y-x|\cdot |x_1 - x|}\\
		        &\le \frac{|y^*-x|\cdot|x_1^*-x|}{|y-x|\cdot|x_1-x|}\cos\vartheta +\frac{\langle y-y^*,z - x \rangle}{|y-x|\cdot |x_1 - x|} +Cr\\
		        &\le \frac{|y^*-x|\cdot|x_1^*-x|}{|y-x|\cdot|x_1-x|}\cos\vartheta + C\frac{|y-x|  t}{ |x-x_1^*|}+Cr\\
		        &\le \frac{d}{|y-x|}\cos\vartheta  + C\frac{ |y-x| t}{ |x-x_1^*|}+Cr,
	        \end{split}
	    \end{equation}
	    where $\vartheta$ is the angle between the vectors $y^*-x$ and $x_1^*-x$ as illustrated in Figure \ref{fig:proof3.8}. The second inequality is from the fact that $\langle z-x_1^*,y-y^* \rangle=0$, \eqref{eq:yy*}, and \eqref{eq-3.8:|x_1-x_1^*|<=Cr|x-x_1|}. The third inequality is from \eqref{eq:yy*} and \eqref{eq-3.8:|x_1-x_1^*|<=Cr|x-x_1|}; the last inequality is from the following:
	    \begin{align*}
	        \left|\frac{|y^*-x|\cdot|x_1^*-x|}{|y-x|\cdot|x_1-x|}-\frac{d}{|x-y|}\right|&=\left|\frac{|y^*-x|(|x_1^*-x|-|x_1-x|)}{|x-y|\cdot|x_1-x|} \right|\\
	        &\le \frac{|y^*-x|\cdot|x_1-x_1^*|}{|x-y|\cdot|x_1-x|}\\
	        &\le \frac{\left(|y-x|+|y-y^*|\right)|x_1-x_1^*|}{|x-y|\cdot|x_1-x|}\\
	        &\le \frac{C\left(|y-x|+|y-y^*|\right)r}{|x-y|}\le Cr.
	    \end{align*}
	    
	    In turn, we have
	    \begin{align}\label{eq:d<t+}
	        d-|x-y|\le |z-y^*|+|z-x|-|x-y|\le t.
	    \end{align}
	    Combining \eqref{eq-cosalpha<cosvartheta} and \eqref{eq:d<t+}, we conclude that
	    \begin{align*}
	        \cos\alpha\le \cos\vartheta + \frac{Ct}{|y-x|}  + C\frac{ |y-x| \nc t}{|x-x_1^*|}+Cr.
	    \end{align*}
	    This implies
	    \begin{align}\label{eq:sin2alpha}
	        \sin^2\alpha \ge \sin^2\vartheta - \frac{Ct}{|y-x|}  - C\frac{ |y -x| \nc t}{|x-x_1^*|} - Cr,
	    \end{align}
	    where we drop some lower order terms.
	    
	    Since  $ s - Cs^3\leq \sin(s)\leq s $ when $s\geq 0$ is small enough, we can write \eqref{eq:sin2alpha} as,
	    \begin{align*}
	        \vartheta -  C \vartheta^3 \le \sqrt{\alpha^2 + \frac{Ct}{|y-x|}  + C\frac{|y-x|  t}{ |x-x_1^*|}+Cr} 
	    \end{align*}
	    Let us denote $\sqrt{\alpha^2 + \frac{Ct}{|y-x|}  + C\frac{ |y-x| t}{|x-x_1^*|}+Cr}  $ by $\alpha_0$.
	    
	    From a simple geometric observation (just consider the triangles $\triangle xy^* z $ and $\triangle x_1^* xz$), we have
		\[ \alpha_0 +  C \vartheta^3 > \vartheta  \geq  \arccos\frac{t}{d}-\arccos\frac{t}{|x-x_1^*|} =: f(d,t).\]
		
		When fixing $|x-x_1^*|$, the function $f(d,t)$ is strictly increasing in both coordinates because $\partial_d f(d,t)=\frac{t}{d^2}\cdot\frac{1}{\sqrt{1-\frac{t^2}{d^2}}}>0$ and $\partial_t f(d,t)=\frac{1}{\sqrt{|x-x_1^*|^2-t^2}}-\frac{1}{\sqrt{d^2-t^2}}>0$. In particular, $f(d, t )\geq f(C\veps_+ , t)$ because of \eqref{eq-3.8:lower bound for d}. We see that if $t > t_0$, where 
		\[ t_0:= \frac{C|x-x_1^*|\eps_+ \sin(\alpha_0 + C\vartheta ^3 )}{\sqrt{(\veps_- - C\eps_+^2)^2 - 2C |x-x_1^*|\eps_+\cos(\alpha_0+  C\vartheta ^3 \nc) + |x-x_1^*|^2}}, \]
		then
		\[ \alpha_0 +  C \vartheta^3  \geq  f(d,t)\geq f(C\eps_+, t)> f(C\eps_+ ,t_0) = \alpha_0 +  C\vartheta ^3,  \]
		and thus we would reach a contradiction. A simpler upper bound for $t_0$ is the following:
		\begin{equation*}
		t_0 <\frac{C|x-x_1^*|\veps_+\sin(\alpha_0)}{\veps_+\cos(\alpha_0)-|x-x_1^*|}\le C|x-x_1^*|\sin(\alpha_0).
		\end{equation*}
		In particular, if $t > C|x-x_1^*|\sin(\alpha_0) $, then there can not be an $(\alpha, r)$-constrained path between $y$ and $x$. We can rewrite this condition as
		\begin{align}\label{condition-t}
		    t> C|x-x_1^*|(\frac{r}{\eps_+} +\frac{\sqrt{t}}{\sqrt{|y-x|}} + \frac{ \sqrt{|y-x|}  \sqrt{t}}{\sqrt{|x-x_1^*|}} + \sqrt{r})
		\end{align}
		where we have used the assumption $\sin\alpha\le C\frac{r}{\eps_+}$. Notice that the right hand side of the inequality is an increasing function with respect to $|x-x_1^*|$, so we can replace $|x-x_1^*|$ with the upper bound $Cr$ as in \eqref{eq:320}. Also, using $\eps_-\le |y-x|\le \eps_+$ and $\eps_-=c\eps_+$, we can change \eqref{condition-t} to
		\begin{align}
		    t> Cr(\frac{r}{\eps_+} +\frac{\sqrt{t}}{\sqrt{\eps_+}} + \frac{ \sqrt{t\eps_+}}{\sqrt{r}} + \sqrt{r}).
		\end{align}
		This in turn can be changed to
		\begin{align}
		    t>C(\frac{r^2}{\eps_+}+r\eps_+ ),
		    \label{eq:Condt}
		\end{align}
		after using the fact that $\frac{r^2}{\eps_+}+r\eps_+\ge 2 r\sqrt{r}$. In summary, what we have shown is that if $t= \dist(x, \T\M_k) > C (\frac{r^2}{\veps_+} + r \veps_+) $, then there can not be a constrained path between $x$ and $y$.

		To finalize the proof, we must now find a condition on $\dist(x, \M_k)$ that implies \eqref{eq:Condt}. For this purpose, we use notation analogous to the one in \eqref{eq:PTML} and compute
		\begin{align*}
		\dist(x,\M_k)-\dist(x,\T\M_k)&\le |x-\mathcal{P}^{-1}_{\T\M_k}(z)|- |x-z| \\
		&\le|\mathcal{P}^{-1}_{\T\M_k}(z)-z|\le C|z-O_x|^2\le Cr^2,
		\end{align*}
		where the last inequality is because $|z-O_x|\le |x-O_x|\le  C\dist(x,\M_k)\le Cr$. Given that $r^2 \ll r\eps_+$, we conclude that
		\begin{align*}
		    \dist(x,\M_k) > C(\frac{r^2}{\eps_+}+r\eps_+ )
		\end{align*}
		implies \eqref{eq:Condt}, completing in this way the proof.

	\end{proof}
}

	\blue

	\nc
	
	With Theorem \ref{Proposition: path algorithm, inner fully connected} and Lemmas \ref{lemma: path algorithm, different manifolds |x-y|<epsilon_+} and \ref{Lemma: path algorithm, different manifolds |x-y|>epsilon_-} in hand we can now prove the main result in this section.

	\begin{proof}[Proof Theorem \ref{theorem: path algorithm}]

		\begin{enumerate}
			\item For arbitrary $\veps_- \leq (1/4)\veps_+$ we can use Lemma \ref{lemma: path algorithm, different manifolds |x-y|<epsilon_+} and a standard concentration bound to see that with probability no less than  
			\[ 1- C_{k,l} n\exp(-C_{k,l} n   \min\{ r^{m_k - m_{kl}}, r^{m_l - m_{kl}}, \veps_+^{m_k}, \veps_+^{m_l}   \}  , \]
			$N_{kl}$, the number of connections between points in $X \cap \M_k$ and $X \cap \M_l$, satisfies
			\[ N_{kl} \leq C_{k,l} n^2\max \{ r^{m_k- m_{kl}} \veps_+^{m_l},   r^{m_l- m_{kl}} \veps_+^{m_k}   \}.   \]
			
			\item When we have the lower bound $\veps_-\geq c \veps_+$ we can proceed as above but now using Lemma \ref{Lemma: path algorithm, different manifolds |x-y|>epsilon_-}.
		\end{enumerate}

	\end{proof}

	\begin{remark}
	\blue 

	   	According to Theorem \ref{theorem: path algorithm}, to satisfy the sparse outer connectivity when $\veps_-= c \veps_+$ one needs 
			\begin{align}\label{eq:mixed dimension outer sparse connectivity condition}
			    \frac{n^2\max \{ (\frac{r^2}{\veps_+} +r\eps_+  )^{m_k - m_{kl}} \veps_+^{m_l},   (\frac{r^2}{\veps_+} +r\eps_+  )^{m_k - m_{kl}} \veps_+^{m_k}   \} }{n^2\eps_+^{m+2}}\to 0.
			\end{align}
			In the worst case for \eqref{eq:mixed dimension outer sparse connectivity condition}, one requires 
			\begin{align}
			\label{eq:Condrveps}
			    r\ll \min\{\eps_+^{m+1-\min_l m_l},\eps_+^{\frac{m+3-\min_l m_l}{2}}\},
			\end{align}
			which is a quite restrictive condition when there is a large discrepancy between the dimensions of the manifolds, as one would require a very small value of $r$ and in turn a very large number of data points for condition \eqref{eq:Condrveps} to be satisfied while simultaneously satisfying the inner connectivity condition. The above estimate, however, is quite pessimistic, and in particular assumes that all manifolds intersect with each other. In general, one can replace $\min_{l} m_l$ with the minimum dimension of manifolds that actually intersect the manifolds with larger dimension. If the gap between $m$ and this restricted minimum is not too large, from moderate number of samples we would expect the spectral clustering algorithm with path constraints to be able to separate the data coming from manifolds with larger dimension from the rest of the data set. At that stage, one can consider a new iteration of the algorithm, this time with a data set with fewer points and with a smaller largest dimension. The exploration of this iterative pruning strategy is beyond the scope of this work. 
			

	    \nc 
	\end{remark}

	\subsection{A local PCA approach to MMC}\label{Section: Plane Algorithm with epsilon_+,epsilon_- graph setting}

	An alternative spectral approach to the multimanifold clustering problem that is popular in the literature (e.g. see \citet{arias2017spectral}) is based on building weights $\omega_{ij}$ that depend on the level of alignment of local tangent planes around nearby data points. To be precise, as in the path-construction from section \ref{sec:GraphConstruct} we only consider giving an edge to a pair of data points $x_i, x_j$ if $\veps_- < |x_i- x_j|< \veps_+$. If this condition is satisfied, we then set $\omega_{x_ix_j}= 1$ provided that the angle between $\hat{T}_{x_i}$ (a local tangent plane around $x_i$) and $\hat{T}_{x_j}$ is smaller than a certain threshold, and otherwise we set $\omega_{x_ix_j}=0$. These local ``tangent" planes can be constructed from the observed data using local PCA. Namely, the idea is to run PCA with the data set $X \cap B(x_i, r)$ for some small enough $r$ in order to obtain a collection of principal directions which are then used as generators for the plane $\hat{T}(x_i)$; see \citet{arias2017spectral} for more details.

	Using the estimates from \citet{arias2017spectral} (and some additional computations) it is possible to show that the local PCA graph construction satisfies the sparse outer connectivity condition (with very high probability), provided that the parameter $r$ is tuned appropriately. However, from a theoretical perspective, one should not expect that the full inner connectivity holds with high probability. This is an observation already made in \citet{arias2017spectral} (although not with the exact same words). Indeed, let $x_i$ be a point in the manifold $\M_l$ that is close to the intersection of $\M_l $ and $\M_k$ (closer than $r$). For points $x_j\in \M_l$ within distance $\veps_+$ from $x_i$ and away enough from the intersection $\M_k\cap \M_l$, we expect their PCA-based tangent planes to resemble those of the actual manifold $\M_l$ at those same points (if $r$ has been chosen so that there is consistency in the approximation of tangent planes). However, $x_i$'s empirical tangent plane will be influenced by the presence of the points in $\M_k$ that belong to the ball $B(x_i, r)$ and thus one should not expect this plane to be aligned with the planes of all the other points $x_j\in \M_l$ lying nearby. In contrast, the full inner connectivity for the path-based graph construction from section \ref{sec:GraphConstruct} just depends on the points on each single manifold: having additional points can only help with the full inner connectivity (more points means more possible paths) but never tamper with it. 
	
	One of the implications of the above discussion is that the local PCA approach to MMC may in principle produce more clusters than desirable, and for example groups of points that lie close to the intersection of two manifolds may form their own clusters; see the discussion in section 3 in \citet{arias2017spectral}. It is thus possible that some of the manifolds get split into different components and in particular one may not be able to recover the multi-manifold structure underlying the data without information on the actual location of the manifolds' intersections.

	\section{Numerical experiments}\label{section: discussion}

	The purpose of this section is twofold. On the one hand, we want to explore the limitations and difficulties that may arise when using the MMC methodologies based on spectral clustering \blue with path-based graphs that we have introduced in section \ref{sec:GraphConstruct}. \nc  On the other hand, we want to provide further insights into the theoretical results that we have presented throughout the paper. We present a series of numerical experiments aimed at achieving these two goals. \new{In addition, at the end of this section we compare the performance of spectral clustering using path-based graphs with other spectral-based algorithms by testing them on synthetic and real data sets.}

	\blue 

In our experiments, we consider our graph construction directly as presented in section \ref{sec:GraphConstruct}, or in its \textit{nearest neighbor} version, where we change all parameters that have a lengthscale interpretation with parameters that specify the number of neighbors to a point. Precisely, instead of fixing the two length scales $\veps_+, \veps_-$, we can alternatively fix two natural numbers $k_+ >k_-$ and substitute the conditions $|y_1 - y_2| > \veps_+$ and $|y_1 - y_2| < \veps_-$ in Algorithm \ref{algorithm: Path Algorithm} with the conditions ``neither $y_1$ is one of the $k_+$-nearest neighbors of $y_2$, nor $y_2$ is one of the $k_+$-nearest neighbors of $y_1$" and ``$y_1$ is one of the $k_-$ nearest neighbors of $y_2$ or viceversa", respectively. Likewise, the lengthscale $r$ is substituted with a parameter $\kappa\in \N$, and the second condition in \eqref{def:angle constraint} is changed to ``$x_{i_j}$ is one of the $\kappa$ nearest neighbors of $x_{i_{j+1}}$ or viceversa". Unless otherwise noted, whenever we use the nearest neighbor version of our algorithm we will select $k_-=2/3 k_+$ and tune $k_+$ in order to minimize the misclustering rate of the output clusters. In the toy examples where we use the $(\veps_+, \veps_-)$ version of our algorithm, we tune $\veps_+$ and $\veps_-$ so that $v_m \veps_+^m  =: k_+ \in \N$ and $v_m \veps_-^m  =: k_- \in \N$, where $v_m$ is the volume of the unit ball in $\R^m$. The other parameters in the algorithm, $\alpha$ and $\kappa$ (or $r$), are tuned to minimize the misclustering rate. \footnote{The implementation of our algorithm can be found in \href{https://github.com/chl781/manifold-clustering}{github.com/chl781/manifold-clustering}}

Before we proceed with our experiments, we discuss the theoretical computational complexity of building angle-constrained path proximity graphs in their nearest neighbor version, where we can more directly quantify the contributions of the different steps in the construction.  
Let $|V|$ be the number of neighbors around a point in the base proximity graph, and let $|E|$ be the number of edges among these neighbors. The computational complexity of Algorithm \ref{algorithm: Path Algorithm} is ${\mathcal{O} ((|V|+|E|)\log |V|)}$, which is essentially the same as the complexity of Dijkstra's algorithm using the Fibonacci heap \citet{fredman1987fibonacci}. Therefore, the total computational complexity of constructing $k_+,k_-$-graph with angle constraints is $\mathcal{O}\left(n (k_+ - k_-)\kappa k_+\log(\kappa k_+) +n^2\log k_+\right)$, where $\mathcal{O}(n^2\log k_+)$ is the computational cost of constructing the $k_+,k_-$-nearest neighbor base graph. By using the adapted graph construction in section \ref{sec:other path algorithms}, it is possible to speed up the construction to $\widetilde{\mathcal{O}}\left( n\kappa k_+ + n^2\log k_+  \right)$. If the parameters are chosen as suggested in Remark \ref{rem:rate} for the setting of manifolds with the same dimension $m$, i.e. we use $v_m \veps_+^m  = k_+ $, $v_m \veps_-^m  = k_-$, and $v_m r^m = \kappa$, then the computational complexity of the adapted method and Algorithm \ref{algorithm: Path Algorithm} are, in terms of the total number of data points, $\widetilde{\mathcal{O}}\left( n^{\frac{6m-3}{3m-1}}+n^2\right)$ and $\widetilde{\mathcal{O}} \left(n^{\frac{8m-4}{3m-1}}\right)$, respectively. In contrast, the computational complexity of building a vanilla $k$-nearest neighbor graph is $\mathcal{O}(n^2 \log k)$ by using a priority queue structure. Therefore, by using the adapted structure for the angle-constrained path construction, we can build graphs at the same computational complexity as the one for vanilla $k$-nearest neighbor graphs. 
It is worth highlighting that once the similarity matrix has been constructed, the computational complexity for the eigendecomposition needed to run spectral clustering will typically depend on the level of sparsity of the input weight matrix $\omega$. In this regard, it is important to notice that the angle-constrained graph will always be sparser than its base proximity graph.  Finally, we remark that our algorithm may not be as efficient in practice as the previous theoretical analysis would suggest because, in general, we need to use denser graphs than if no curvature constraints were imposed. In this regard, the use of landmark points in the algorithm by \citet{BABAEIAN2015118} is an alternative to speed up the computation, although at the expense of weaker theoretical consistency guarantees.


	\nc


	
	
	\nc
	
	\subsection{Bottlenecks and multiple manifolds}\label{sec:bottlenecks and multiple manifolds}

	Our theoretical results imply that the spectra of suitable graphs resemble the spectrum of a tensorized Laplacian on the union of smooth manifolds underlying the data set. In particular, when using spectral clustering on finite data sets with a multi-manifold structure, it is possible to obtain a partition of the data into multiple smooth manifolds and/or into regions that are separated by thin bottlenecks. In this section we explore numerically the ``confounding" role that bottlenecks may play in MMC.  
	
	First, let us consider the bottle and plane example illustrated in Figures \ref{fig: bottleplane middle 2}, \ref{fig: bottleplane middle 3}, and \ref{fig: bottleplane middle rotated}. There, data set $X$ is sampled uniformly from the set $\M=\M_1 \cup \M_2$, where $\M_1$ is a plane and $\M_2$ is a 2-dimensional dumbbell with a bottleneck at its center. A graph $(X, \omega)$ has been constructed as in section \ref{sec:GraphConstruct} for appropriate values of $\veps_-, \veps_+, r, \alpha$. Intuitively, this graph should help identify the two manifolds given that they meet perpendicularly (i.e., $\beta$ is zero in \eqref{eqn:AngleConstraint}). On the other hand, the same graph captures the internal geometry of $\M_2$ and thus should also detect the bottleneck in $\M_2$. Figure \ref{fig: bottleplane middle 2} shows the sign of the first non-trivial eigenvector of the graph Laplacian, which, as we can observe from the picture, is able to detect the bottleneck. Figure  \ref{fig: bottleplane middle 3} shows the sign of the second non-trivial eigenvector (orthogonal to the first non-trivial eigenvector). In our experiments, our graph Laplacian's first two non-zero eigenvalues are close to zero, and their relative difference is quite small compared to the relative difference between the second and third non-zero eigenvalues. The partition illustrated in Figure \ref{fig: bottleplane middle 3} is not directly interpretable. However, when considering a suitable linear combination of the first and second non-trivial eigenvectors, we recover the partition illustrated in Figure \ref{fig: bottleplane middle rotated} which correctly separates the two manifolds. This linear combination is obtained by minimizing the \textit{Ratio cut} functional (see \citet{DBLP:journals/corr/abs-0711-0189} for a definition) among all the partitions induced by norm one linear combination of the first two non-trivial eigenvectors. In this case, it is a simple one-dimensional search. 
	
	This example illustrates that bottlenecks are indeed confounders for MMC when using spectral methods. Still, even in the presence of competitor bottlenecks, we see that the graph Laplacian's spectrum possesses the information needed to recover the desired partition of the data, and the combination of spectral clustering with Ratio cut minimization is shown to help in the detection of the desired partition. Warm start initialization for balanced cut minimization using spectral clustering has been considered in the literature before (e.g. \citet{bresson2012,bluv13,bresson2013adaptive,Thomas1,bresson2012multi}).

	\begin{figure}[htbp]
		\par\medskip
		\centering
		\begin{minipage}[t]{0.3\textwidth}
			\centering
			\includegraphics[width=3.5cm]{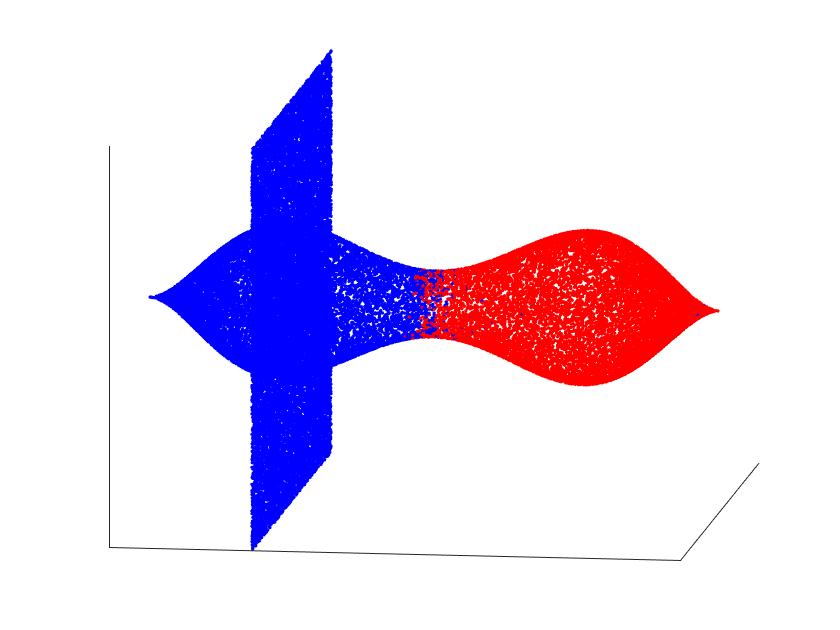}
			\caption{Ratio cut: 0.129}
			\label{fig: bottleplane middle 2}
		\end{minipage}
		\begin{minipage}[t]{0.3\textwidth}
			\centering
			\includegraphics[width=3.5cm]{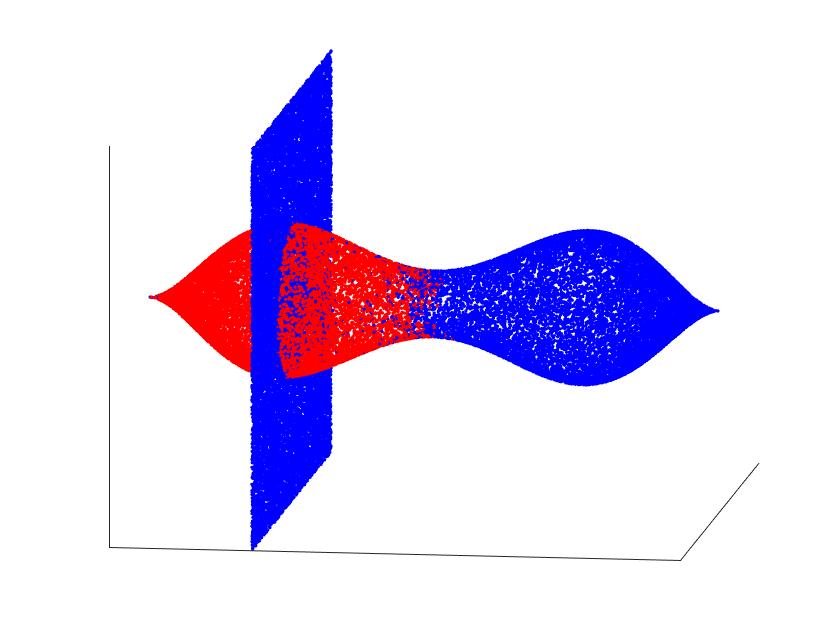}
			\caption{Ratio cut: 0.240}
			\label{fig: bottleplane middle 3}
		\end{minipage}
		\begin{minipage}[t]{0.3\textwidth}
			\centering
			\includegraphics[width=3.5cm]{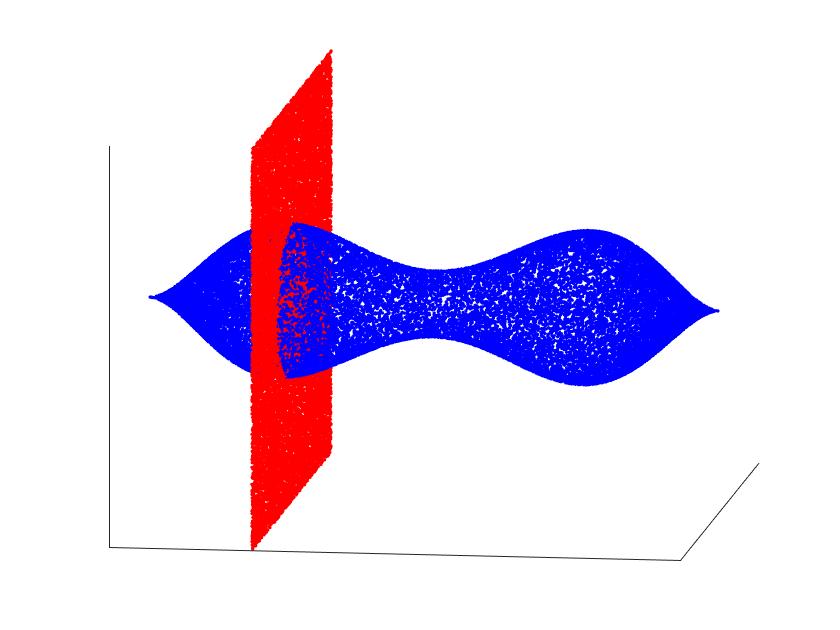}
			\caption{Ratio cut: 0.083}
			\label{fig: bottleplane middle rotated}
		\end{minipage}
	\end{figure}


	Another example where multiple manifolds and bottlenecks are present is the one illustrated in Figures \ref{fig: dollar sign fail} and \ref{fig: dollar sign success} which we will refer to as the \textit{dollar sign} example. We again build the graph Laplacian as in section \ref{sec:GraphConstruct}. Figure \ref{fig: dollar sign fail} shows the sign of the first non-trivial eigenvector, which, as we can observe, can detect the ``bottleneck" at the center of the dollar sign shape. Figure \ref{fig: dollar sign success} shows the sign of the second non-trivial eigenvector. Notice that the multi-manifold structure is identified correctly using this eigenvector. In this example, the partition induced by the second non-trivial eigenvector is a minimizer of the Ratio cut functional among partitions induced by linear combinations of the first two non-trivial eigenvectors. For comparison, in Figures \ref{fig: epsilon 2} and \ref{fig: epsilon 3} we illustrate the partitions induced by the first and second non-trivial eigenvectors of a graph Laplacian on a standard $\veps$-graph with no path constraints. We can see that with that graph construction we can not retrieve the desired multimanifold structure. 
	
	\begin{figure}[htbp]
		\centering
		\begin{minipage}[t]{0.2\textwidth}
			\centering
			\includegraphics[width=4cm]{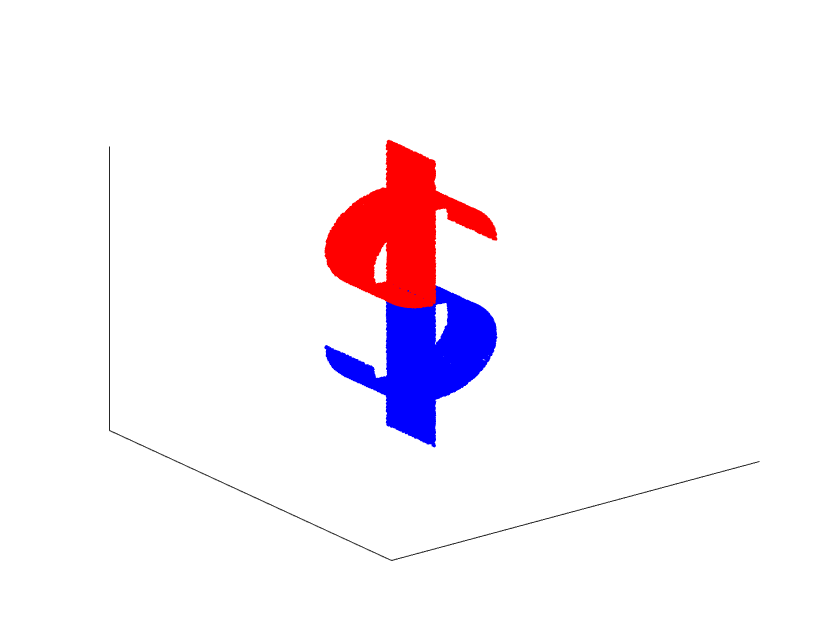}
			\caption{}
			\label{fig: dollar sign fail}
		\end{minipage}
		\begin{minipage}[t]{0.2\textwidth}
			\centering
			\includegraphics[width=4cm]{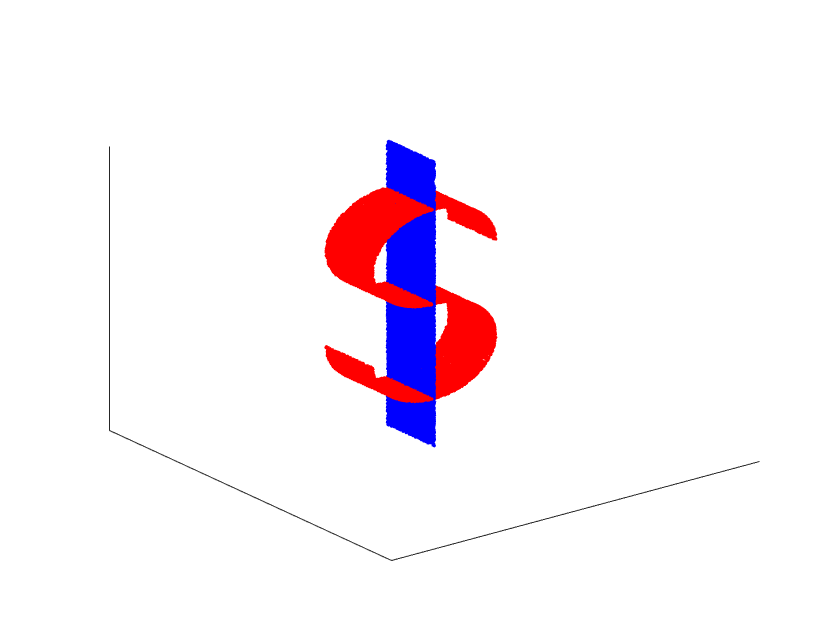}
			\caption{}
			\label{fig: dollar sign success}
		\end{minipage}
		\begin{minipage}[t]{0.2\textwidth}
			\centering
			\includegraphics[width=4cm]{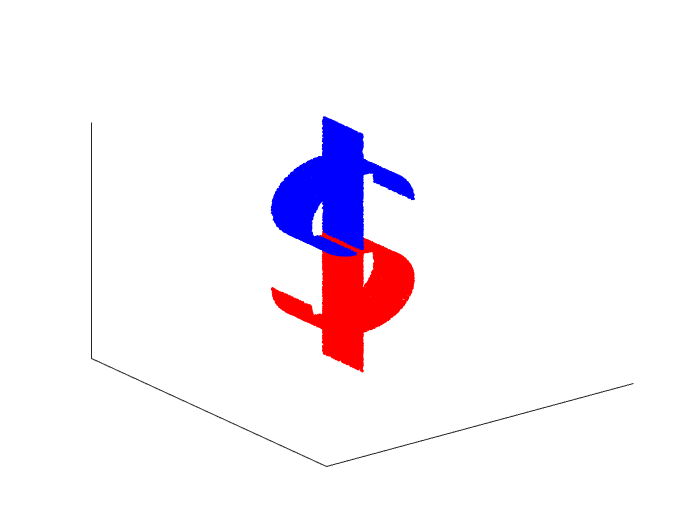}
			\caption{}
			\label{fig: epsilon 2}
		\end{minipage}
		\begin{minipage}[t]{0.2\textwidth}
			\centering
			\includegraphics[width=4cm]{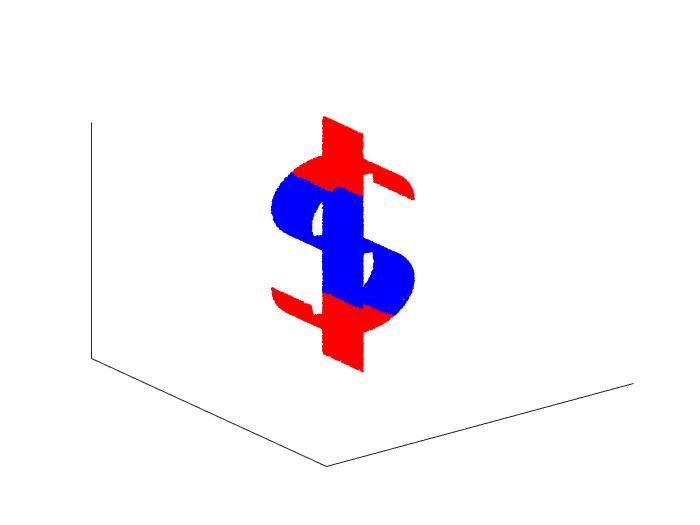}
			\caption{}
			\label{fig: epsilon 3}
		\end{minipage}
		\caption*{In Figures \ref{fig: dollar sign fail} and \ref{fig: dollar sign success} 2nd and 3th eigenvectors from annular graph with angle constraints. In Figures \ref{fig: epsilon 2} and \ref{fig: epsilon 3} 2nd and 3th eigenvectors from standard $\veps$-proximity graph.}
	\end{figure}

	\begin{remark}
		When using ratio cut for MMC the important energy to consider is the cut functional/total variation functional:
		\[ TV_n(u) \sim \frac{1}{n^2\veps^{m+1}_+} \sum_{i,j} \omega_{ij}|u(x_i) - u(x_j)|. \]
		Notice that the correct scaling factor $\frac{1}{n^2\veps_+^{m+1}}$ for $TV_n$ is different from the one for the graph Dirichlet energy which scales like $\frac{1}{n^2\veps_+^{m+2}}$ (see \eqref{eqn:GraphDirichlet} in the Appendix). In general, this discrepancy in scaling factors explains the superior performance of Ratio cut over spectral clustering on MMC problems. To see this, we return to the discussion in Remark \ref{rem:alphaSmall}, and notice that in order to create a very cheap balanced cut that captures the multi-manifold structure underlying the data it is sufficient to choose the angle $\alpha$ to be small enough without making it arbitrarily small.

		While in general ratio cut minimization is expected to be superior to spectral clustering from a theoretical perspective, it is at the algorithmic level that ratio cut is less appealing. In general, using spectral clustering as a warm start for ratio cut minimization is a reasonable strategy to consider as we have illustrated in the dumbbell-and-plane and dollar sign examples.

	\end{remark}

	\subsection{Comparison of different proximity graphs}

	
	We consider the setting illustrated in Figures \ref{fig: uneven knn} and \ref{fig: uneven epsilon} where we have generated 6500 points on the horizontal line and 2000 points on the vertical line, i.e., an uneven setting. We can see that when we add angle constraints to a $k_+, k_-$-NN base graph, we do not recover the two lines as we do when we use the $\veps_+, \veps_-$-graph setting. The outcomes illustrated here are markedly different from the ones in the even case where the NN approach provides more stable results, \blue as is the case with vanilla spectral clustering using a standard $k$-NN graph\nc. The reason is that around the intersection of the two lines, a $k$-NN neighborhood of a point in the vertical line mostly picks points in the horizontal line (since there is a higher density of points there), which results in very few connections with other points on the vertical line.

	In general, we expect the NN setting to struggle in settings where the densities of points are different around the intersections of the manifolds, as illustrated by the ``inclusion problem" in the setting from Figures \ref{fig: cone and plane knn} and \ref{fig: cone and plane epsilon}. In Figure \ref{fig: cone and plane knn} we illustrate the clusters output by spectral clustering in the path constrained NN setting. We see that the part of the plane contained inside the cone has been merged with the cone despite the fact that a strong angle constraint was used. The points in the inner part of the plane around the boundary have many more nearest neighbors in the cone than in the external portion of the plane, thus effectively discarding connections that would otherwise keep the plane better connected (as it is the case with the $\veps_+, \veps_-$ construction as shown in Figure \ref{fig: cone and plane epsilon}).

	We remark that the effect of density in clustering can be reduced by considering suitable normalized versions of proximity graphs as in \citet{coifman2006diffusion}, where in particular one can take the random walk Laplacian associated to a new set of weights $\omega^\alpha_{ij}$ of the form:
	\[ {\omega}^\alpha_{ij} = \frac{\omega_{ij}}{d_i^\alpha d_j^\alpha}, \]
	for some $\alpha\in (0,1]$, where $d_i$ and $d_j$ are the degrees of $x_i$ and $x_j$ relative to the original weight matrix $\omega$. It is possible to show that with the choice $\alpha =1$ one can effectively remove the effect of density in clustering. We notice that in the multi-manifold clustering setting, when manifolds have different dimensions, the role of density is more severe than when manifolds have the same dimension. This is because we are assuming that the number of points in each manifold is roughly the same, and so, densities on smaller dimensional objects tend to be considerably larger than densities on larger dimensional objects. The use of appropriate normalized Laplacians may thus help considerably with multi-manifold clustering problems.

	\begin{figure}[htbp]
		\centering
		\begin{minipage}[t]{0.4\textwidth}
			\centering
			\includegraphics[width=4cm]{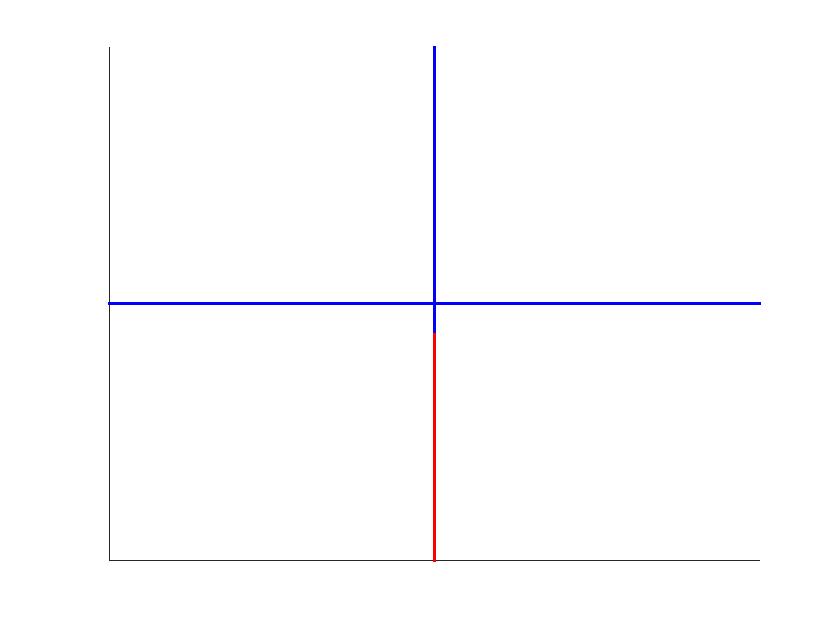}
			\caption{$k_+,k_-$ NN with angle constraint}
			\label{fig: uneven knn}
		\end{minipage}
		\begin{minipage}[t]{0.4\textwidth}
			\centering
			\includegraphics[width=4cm]{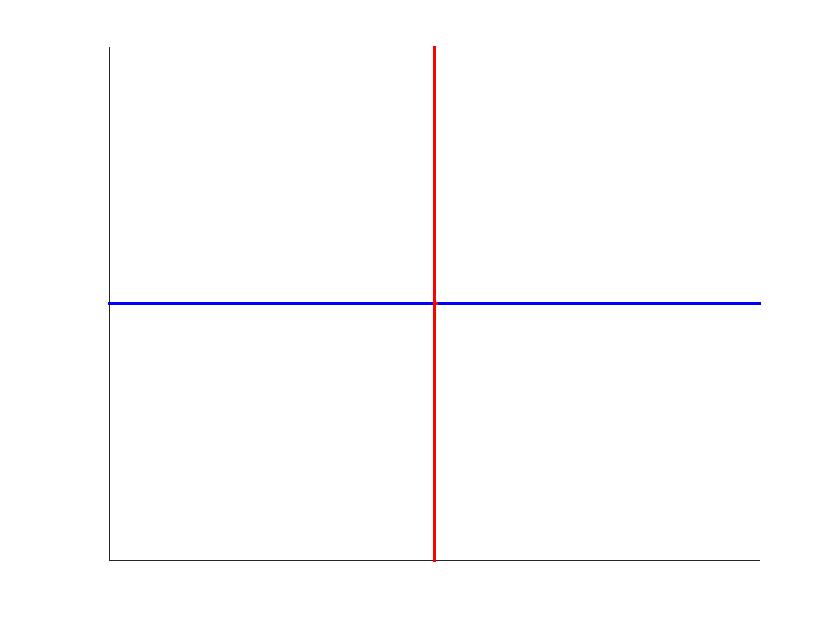}
			\caption{$\veps_+,\veps_-$ graph with angle constraint}
			\label{fig: uneven epsilon}
		\end{minipage}
	\end{figure}
	
	%
	%
	
	\begin{figure}[htbp]
		\centering
		\begin{minipage}[t]{0.4\textwidth}
			\centering
			\includegraphics[width=4cm]{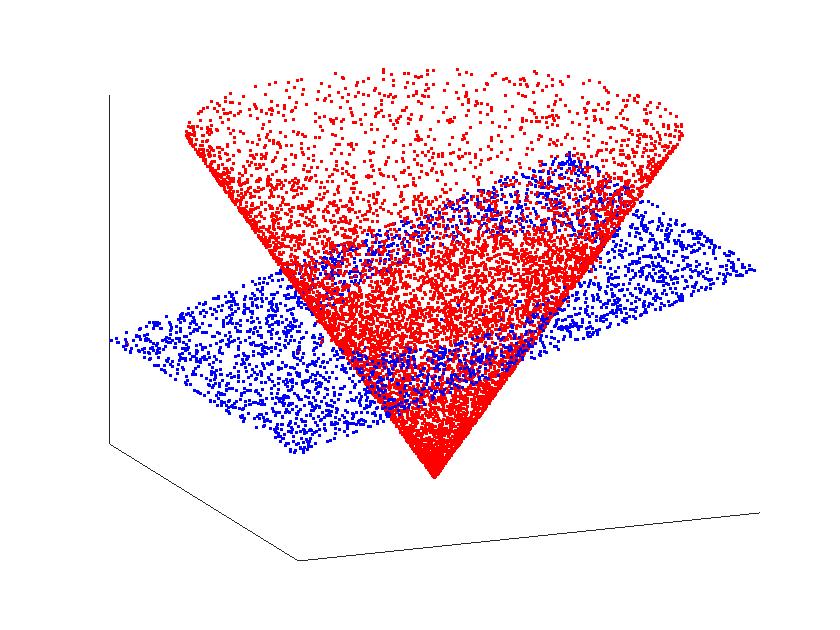}
			\caption{$k_+,k_-$ NN setting with angle constraint}
			\label{fig: cone and plane knn}
		\end{minipage}
		\begin{minipage}[t]{0.4\textwidth}
			\centering
			\includegraphics[width=4cm]{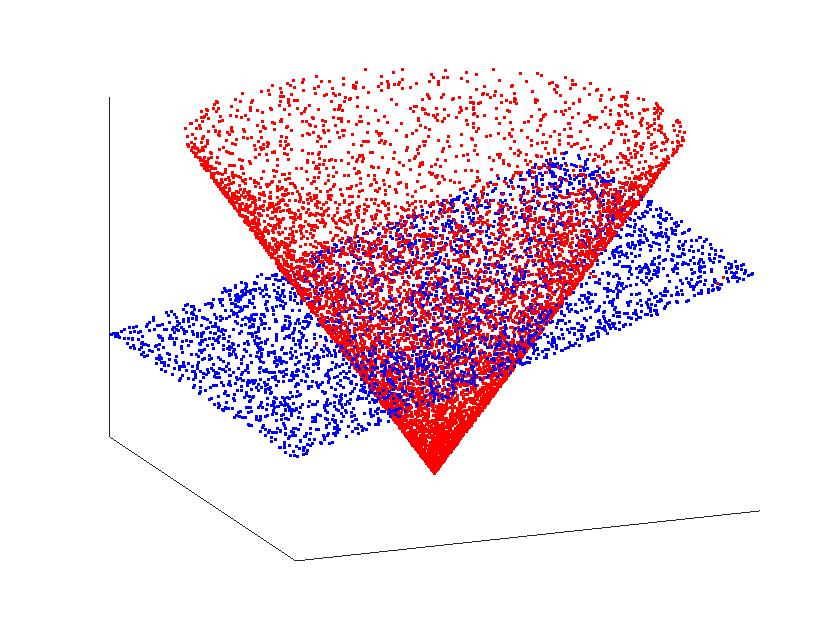}
			\caption{$\veps_+,\veps_-$ graph with angle constraint}
			\label{fig: cone and plane epsilon}
		\end{minipage}
	\end{figure}

	%
	%

	\subsubsection{Role of $\veps_-$ in annular graphs}
	\label{sec:veps-}
	
	Here we illustrate the effect of $\veps_-$ in the performance of spectral clustering. We focus on two possible choices: $\veps_-=0$ Vs $\veps_- \sim \veps_+$.  We consider points uniformly sampled from two intersecting 2-dimensional spheres with radius $1$ and distance between their centers equal to $0.6$ as illustrated in Figures \ref{fig: Path Algorithm with epsilon_+,epsilon_- -graph} and \ref{fig: Path Algorithm with epsilon-graph}. We can observe the discrepancy between the clusters obtained in both settings and how when we set $\veps_- \sim \veps_+$, the two manifolds are correctly identified. $\veps_+$ is the same in both cases.

	Notice that our theory shows that, in principle, any choice of $\veps_-$ (not too close to $\veps_+$) can provide correct identification of the manifolds as long as the number of samples is large enough. On the other hand, our theory also suggests that a non-zero $\veps_-$ can reduce the error of approximation (see Remark \ref{rem:Eps-equalzero}) as more faulty connections can be removed between points that are too close to the intersection. Our numerical experiments complement our theoretical findings.
	
	
	
	\begin{figure}[htbp]
		\centering
		\begin{minipage}[t]{0.3\textwidth}
			\centering
			\includegraphics[width=5cm]{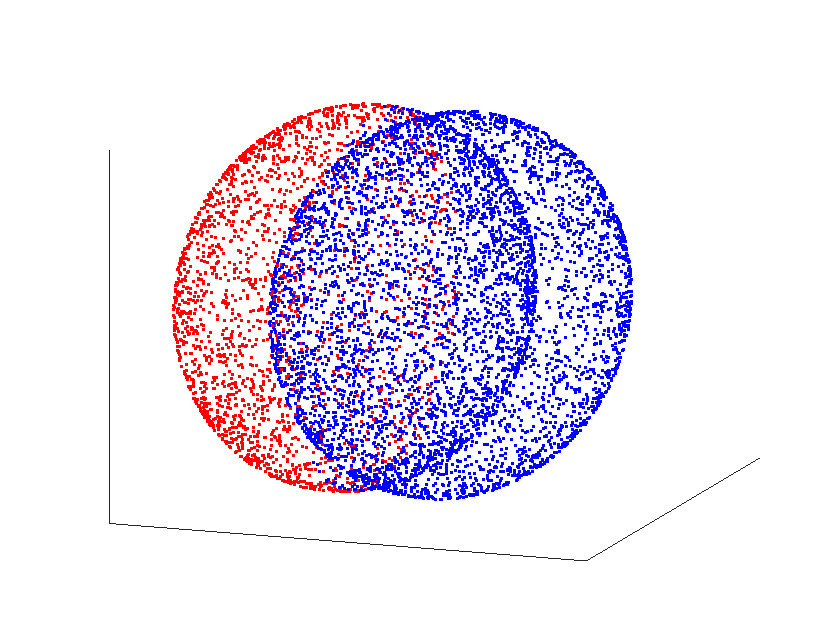}
			\caption{$(k_+,0)$-graph with angle constraint}
			\label{fig: Path Algorithm with epsilon-graph}
		\end{minipage}
		\begin{minipage}[t]{0.3\textwidth}
			\centering
			\includegraphics[width=5cm]{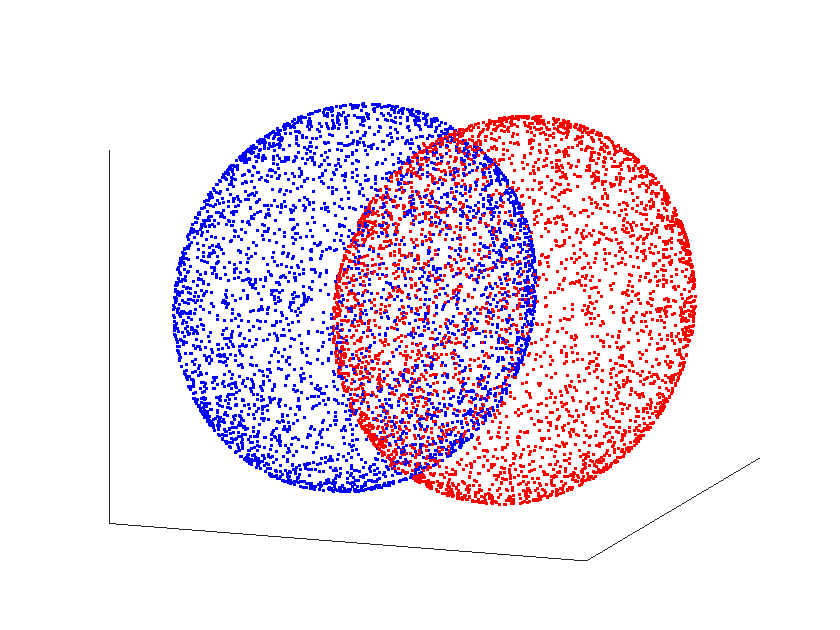}
			\caption{$(k_+,\frac{2k_+}{3})$-graph with angle constraint}
			\label{fig: Path Algorithm with epsilon_+,epsilon_- -graph}
		\end{minipage}
	\end{figure}

	\subsubsection{Path constraint graphs Vs other proximity graphs: self-intersections}

	In general, the use of fully inner connected and sparsely outer connected graphs on data sets imposes a specific geometric structure on the set $\M$ that is not necessarily inherited from the ambient space $\R^d$. This is true in the multi-manifold setting or even in the case of a single self-intersecting manifold (a setting not considered in our theoretical results). Take for example the self-intersecting manifold illustrated in Figures \ref{fig: selfintersect epsilon+,epsilon_-}-\ref{fig: selfintersect simple epsilon-neighbor}. When running spectral clustering with the annular graph with angle constraints, we get a partition of the data corresponding to the one we would have obtained when clustering a one-dimensional curve with no self-intersections. This is illustrated in Figure \ref{fig: selfintersect epsilon+,epsilon_-}. Figures \ref{fig: selfintersect simple KNN} and \ref{fig: selfintersect simple epsilon-neighbor}, on the other hand, show the clusters obtained when running spectral clustering based on a standard $k$-NN graph and a standard $\veps$-proximity graph respectively. As can be observed, these partitions are markedly different from the one in Figure \ref{fig: selfintersect epsilon+,epsilon_-}. Notice that the MMC method can detect the self-intersection point in the manifold from this example.

	Figures \ref{fig: selfintersect epsilon+,epsilon_-}-\ref{fig: selfintersect simple epsilon-neighbor} illustrate the effect of different graphs on the output clusters. Likewise, different graphs capture the underlying manifold differently when using higher eigenmodes to summarize additional geometric content ({a} specific geometric content) of the self-intersecting manifold.

	%
	%
	
	\begin{figure}[htbp]
		\par\medskip
		\centering
		\begin{minipage}[t]{0.3\textwidth}
			\centering
			\includegraphics[width=3.5cm]{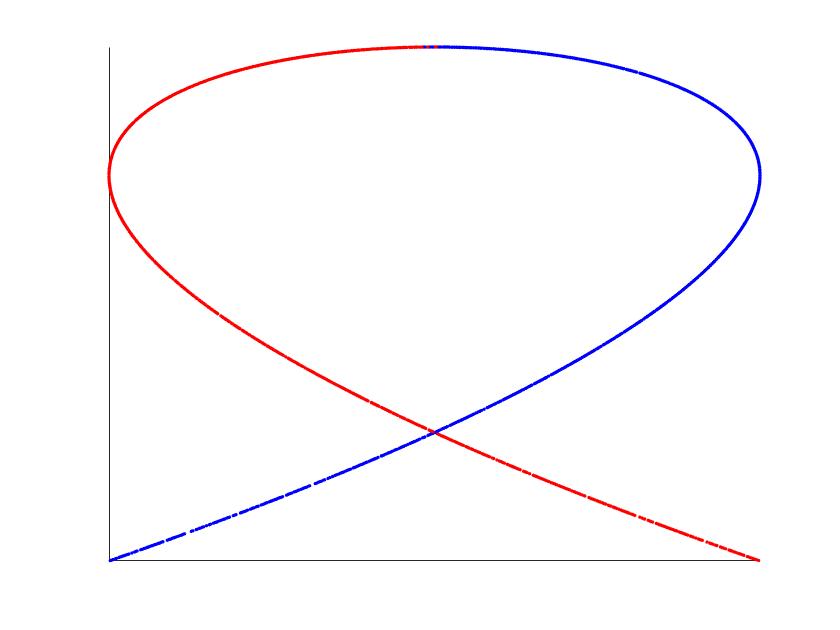}
			\caption{$\eps_+,\eps_-$-graph with angle constraint}
			\label{fig: selfintersect epsilon+,epsilon_-}
		\end{minipage}
		\begin{minipage}[t]{0.3\textwidth}
			\centering
			\includegraphics[width=3.5cm]{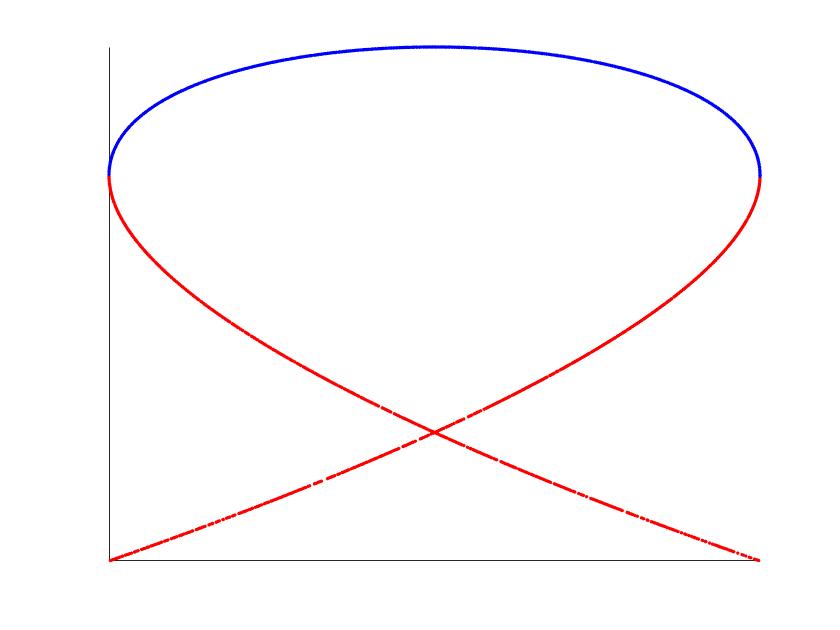}
			\caption{standard $k$-NN}
			\label{fig: selfintersect simple KNN}
		\end{minipage}
		\begin{minipage}[t]{0.3\textwidth}
			\centering
			\includegraphics[width=3.5cm]{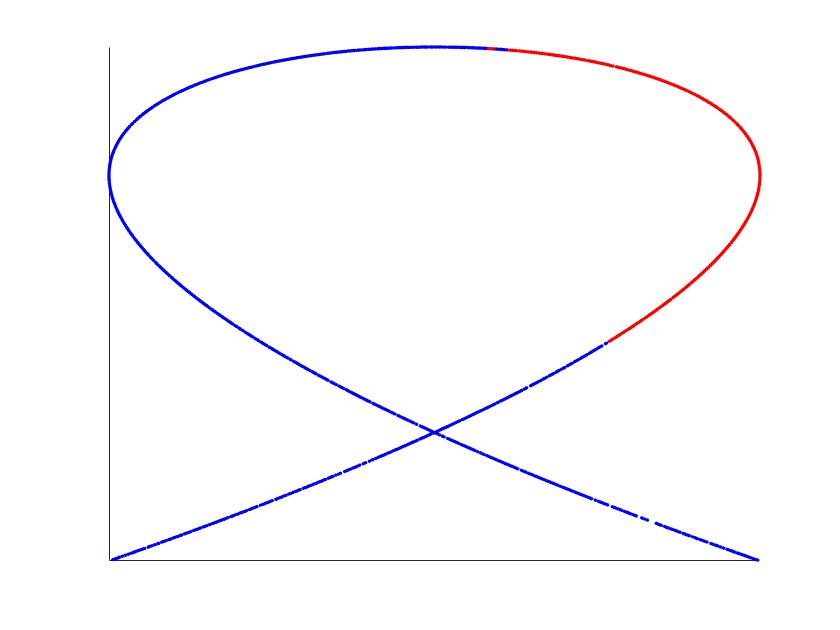}
			\caption{standard $\veps$-neighbor}
			\label{fig: selfintersect simple epsilon-neighbor}
		\end{minipage}
	\end{figure}

	\subsubsection{Other path algorithms}\label{sec:other path algorithms}

    The path-based similarity weights we study in this paper are inspired by an algorithm proposed in \citet{BABAEIAN2015118}. There, a less stringent notion of a ``smooth" discrete path is used to construct a proximity graph on the data set. In our construction, we force discrete paths to satisfy that every line segment in the path must be aligned with the segment connecting the first and last point in the path (i.e., essentially requiring a straight path). In contrast, in \citet{BABAEIAN2015118} the constraint is that any two consecutive segments in the path must be aligned (i.e., a path that does not turn too quickly). It is straightforward to see that when two manifolds with a dimension larger than two intersect, it is possible to construct paths connecting points in the two manifolds that meet the criterion in \citet{BABAEIAN2015118} but not our criterion. In summary, the more stringent constraint we impose helps remove more connections (faulty and correct). The removal of faulty connections seems more significant, and overall, our path algorithm outperforms the one in \citet{BABAEIAN2015118}.

	Another sensible path algorithm to build graphs for MMC is to directly find geodesic paths along the graph using Dijkstra's algorithm and then check whether they satisfy the angle constraints or not. The theoretical analysis for this approach is more involved since one needs to check that geodesics do satisfy the angle constraint in the cases where one expects them to (i.e., when connecting two points on the same manifold). Still, in practice this graph construction behaves comparably to the path algorithm we analyze mathematically. Our path construction and the geodesic based one are two examples of a more general procedure where one seeks a path that connects a pair of points satisfying the angle constraints \textit{and} whose length is no larger than a constant parameter times the geodesic distance along the path between the two points. This construction can be analyzed by combining ideas similar to the ones we have presented in section \ref{sec:GraphConstruct} with some analysis of the geodesic distance in a proximity graph.

	
	\nc

	\subsection{Sensitivity of theoretical assumptions}
	\subsubsection{Angles}
	We test the performance of spectral clustering on an annular graph with angle constraints when trying to separate manifolds as their angles of intersection decrease (i.e. $\beta$ in \eqref{eqn:AngleConstraint} grows). In our experiment, we consider the simple setting of two intersecting planes. 
	\begin{figure}[htbp]
		\centering
		\begin{minipage}[t]{0.22\textwidth}
			\centering
			\includegraphics[width=4cm]{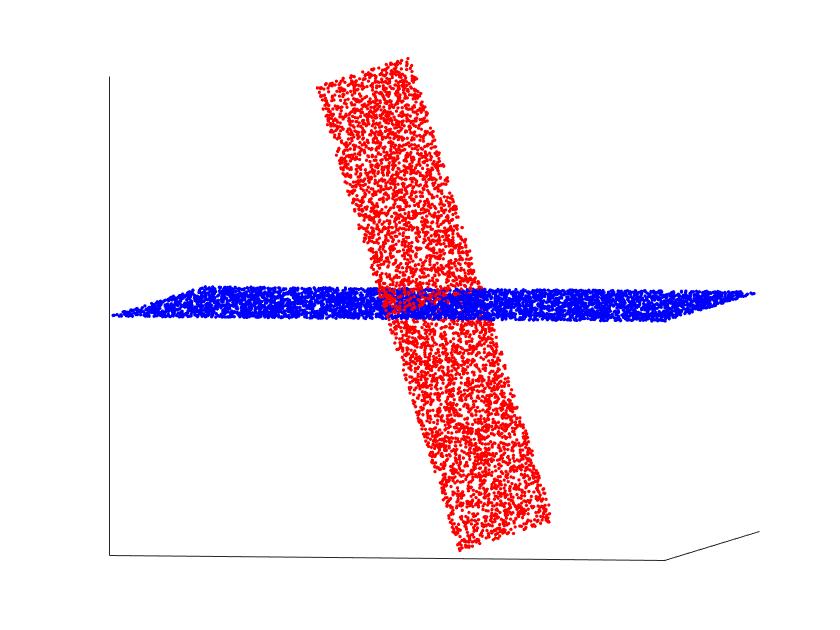}
			\caption{$75\degree$}
			\label{fig: 75 degree}
		\end{minipage}
		\begin{minipage}[t]{0.22\textwidth}
			\centering
			\includegraphics[width=4cm]{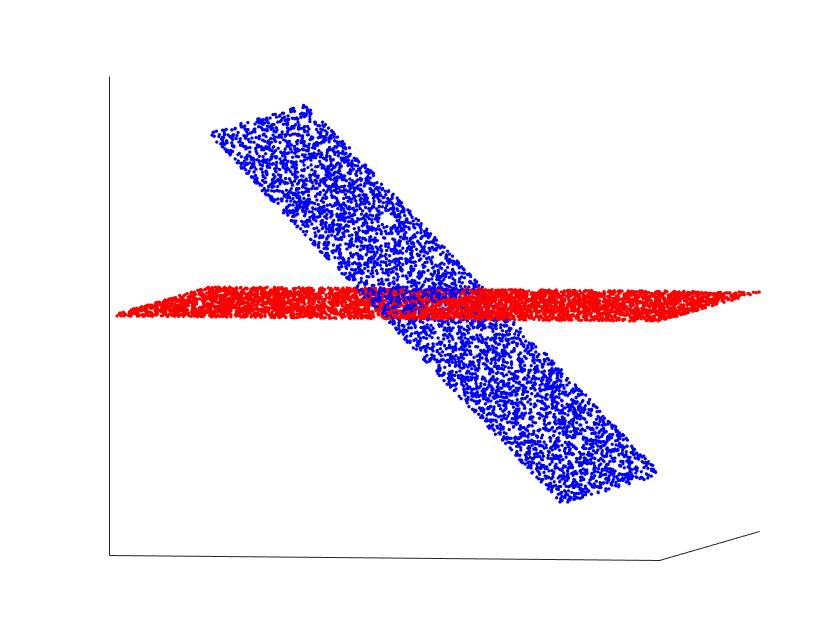}
			\caption{$50\degree$}
			\label{fig: 50 degree}
		\end{minipage}
		\centering
		\begin{minipage}[t]{0.22\textwidth}
			\centering
			\includegraphics[width=4cm]{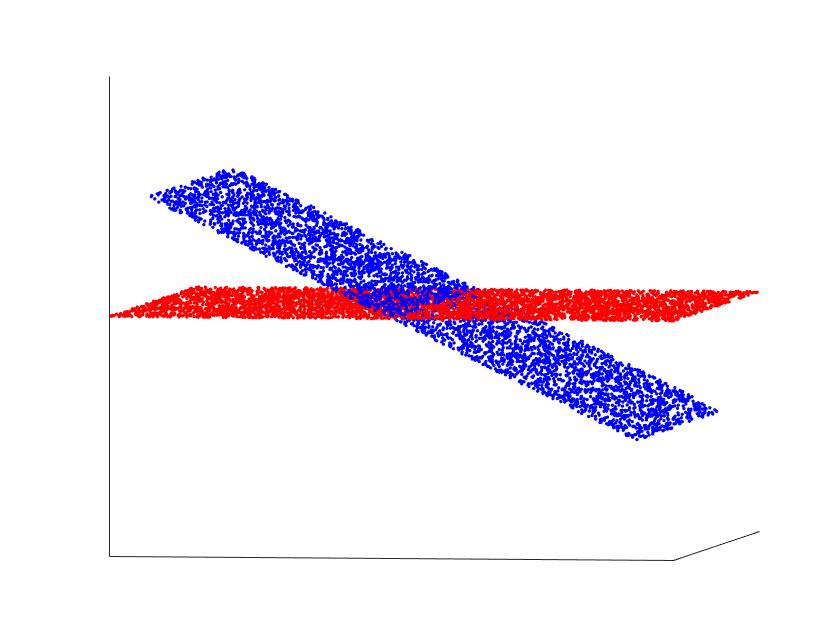}
			\caption{$30\degree$}
			\label{fig: 30 degree}
		\end{minipage}
		\begin{minipage}[t]{0.22\textwidth}
			\centering
			\includegraphics[width=4cm]{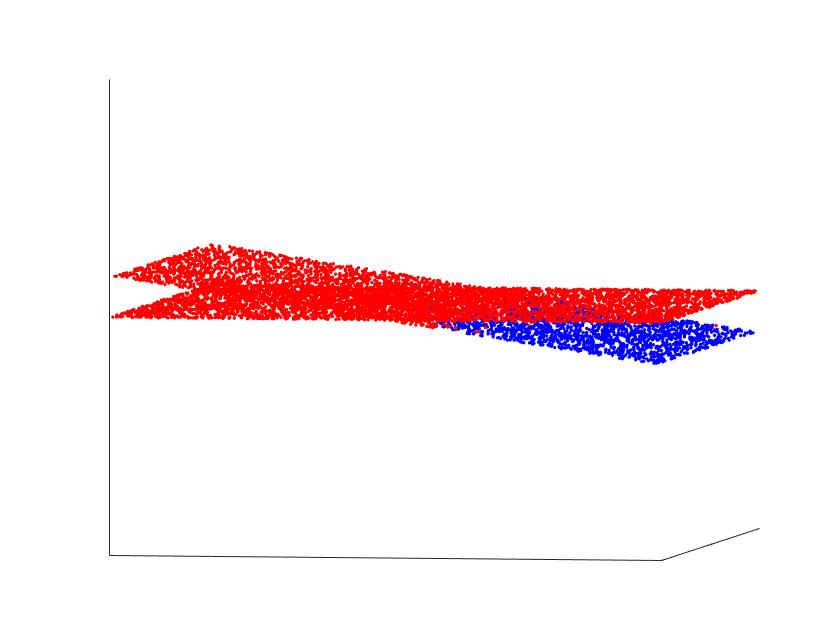}
			\caption{$10\degree$}
			\label{fig: 10 degree}
		\end{minipage}
	\end{figure}
	We see that in Figures \ref{fig: 75 degree}-\ref{fig: 30 degree} we recover the two planes, while in Figure \ref{fig: 10 degree} we do not. The results here are reasonable because when the angle of intersection is too small, a much smaller threshold value for the angle constraint is needed to discriminate different manifolds at the expense of removing connections between points that should have been connected otherwise. \blue For these experiments we have used the NN version of our algorithm.\nc 
	

	\subsubsection{Orthogonal noise}
	
	In our theoretical results, we assumed data points to lie exactly on top of a set of the form $\M= \M_1 \cup \dots \cup \M_K$. However, a natural question is whether spectral clustering with the similarity graph constructed with the path algorithm continues to perform well when orthogonal noise is added to the data. Figures \ref{fig: 3 lines with perturbation} and \ref{fig: 3 lines with large perturbation} show two examples of data sets contaminated by orthogonal noise. In both cases, the multi-manifold structure is readily apparent: three intersecting lines at a single point. However, in the setting depicted in Figure \ref{fig: 3 lines with large perturbation}, where the noise level is large, we see that the path algorithm does not recover the multi-manifold structure correctly. This suggests that the path algorithm is quite sensitive to noise. \blue For these experiments we have used the NN version of our algorithm.\nc
	
	We can use the number of connections to see how much the noise affects the algorithm. For example, in Figure \ref{fig: 3 lines with large perturbation}, where we exhibit the clean data, the total number of connections between data points is 579208, while the number of faulty connections is 1126. When noise is added, the number of total connections is 193426, while the number of faulty connections is 5414. That is, in general, we expect noise to worsen both inner and outer connectivities. 
	
	A potential remedy is to pre-process the data set by running a denoiser. However, some naive denoising methods, including the centering method or projected PCA, do not improve the performance. In Figure \ref{fig: 3 lines after naive denoising} we illustrate the outcome of spectral clustering on an annular proximity graph with angle constraints on the denoised data set. Specifically, using the centering method, the total number of connections was 215260, and the number of faulty connections was 5418. For the projected PCA method, the total number of connections is 271498, and the number of faulty connections is 6956. Roughly speaking, these methods can improve the inner connectivity while worsening the outer connectivity. How to implement a good denoising strategy in the MMC setting is an interesting direction to explore.
	
	

	

	\begin{figure}[htbp]
		\centering
		\begin{minipage}[t]{0.3\textwidth}
			\centering
			\includegraphics[width=4cm]{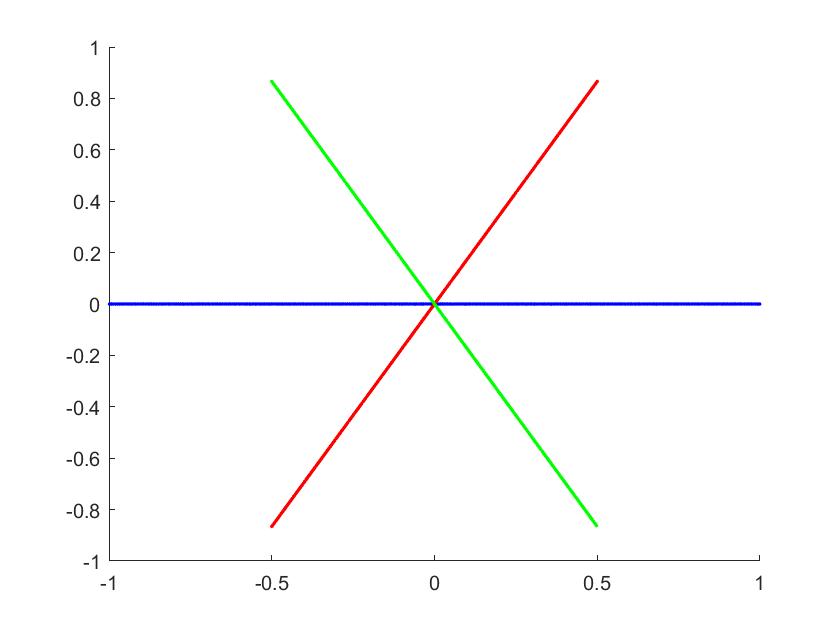}
			\caption{Small Perturbation}
			\label{fig: 3 lines with perturbation}
		\end{minipage}
		\begin{minipage}[t]{0.3\textwidth}
			\centering
			\includegraphics[width=4cm]{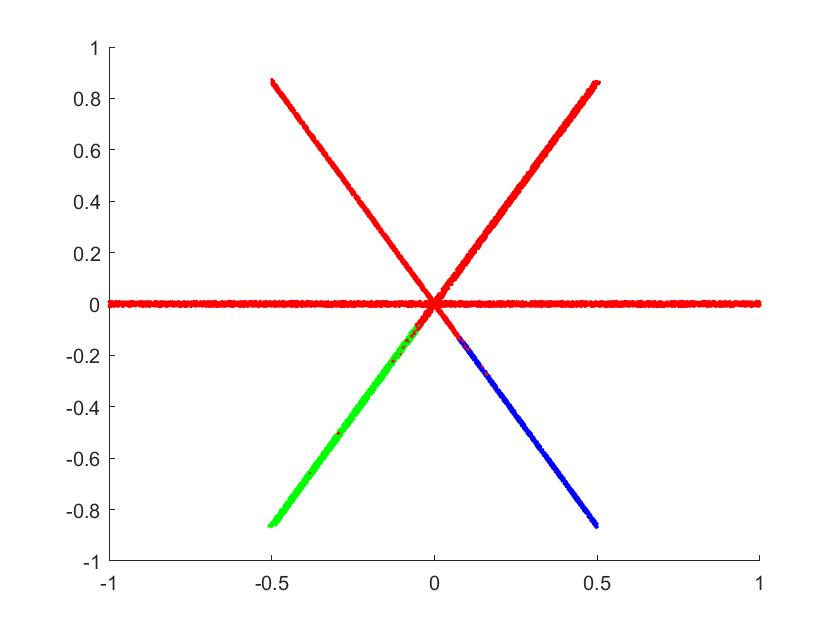}
			\caption{Large Perturbation}
			\label{fig: 3 lines with large perturbation}
		\end{minipage}
		\begin{minipage}[t]{0.3\textwidth}
			\centering
			\includegraphics[width=4cm]{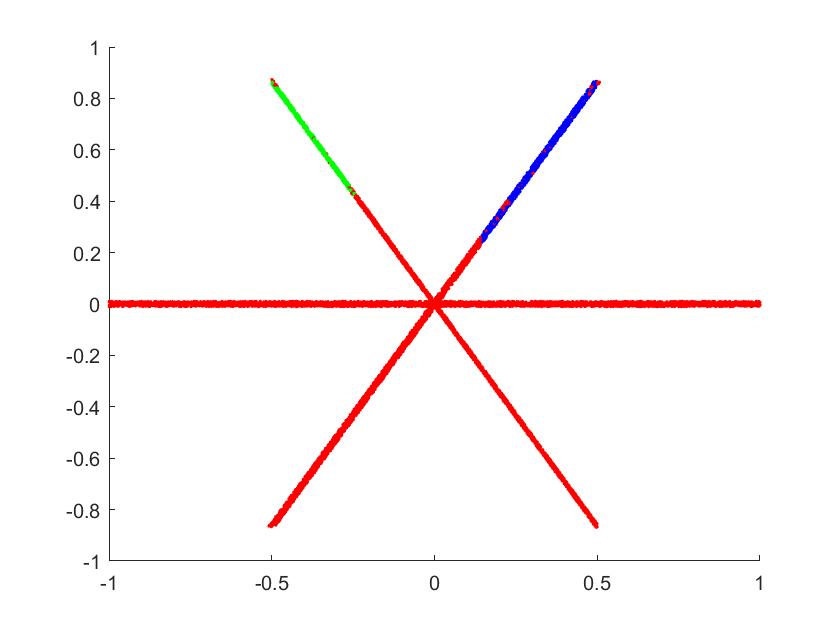}
			\caption{After denoising}
			\label{fig: 3 lines after naive denoising}
		\end{minipage}
	\end{figure}


	
	

	\subsubsection{Small Vs large number of data points}
 \label{sec:3planes}
	In this section, we consider data sets supported on the union of three intersecting planes as illustrated in Figures \ref{fig: Smaller sample size} and \ref{fig: Larger sample size}. In both figures, the underlying planes are the same, and the only thing that changes from one figure to the other is the sample size. 
	
	\begin{figure}[htbp]
		\centering
		\begin{minipage}[t]{0.3\textwidth}
			\centering
			\includegraphics[width=4cm]{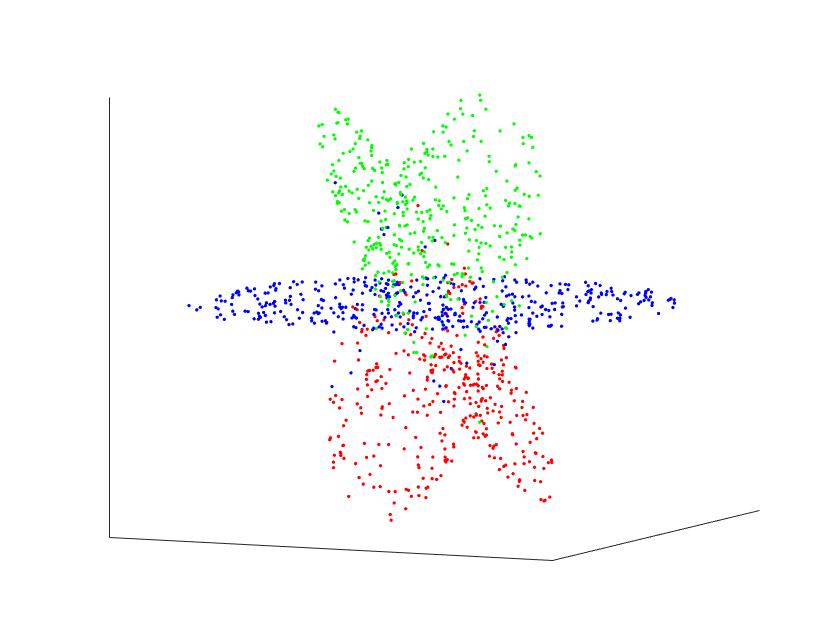}
			\caption{$n$ data points}
			\label{fig: Smaller sample size}
		\end{minipage}
		\qquad \qquad
		\begin{minipage}[t]{0.3\textwidth}
			\centering
			\includegraphics[width=4cm]{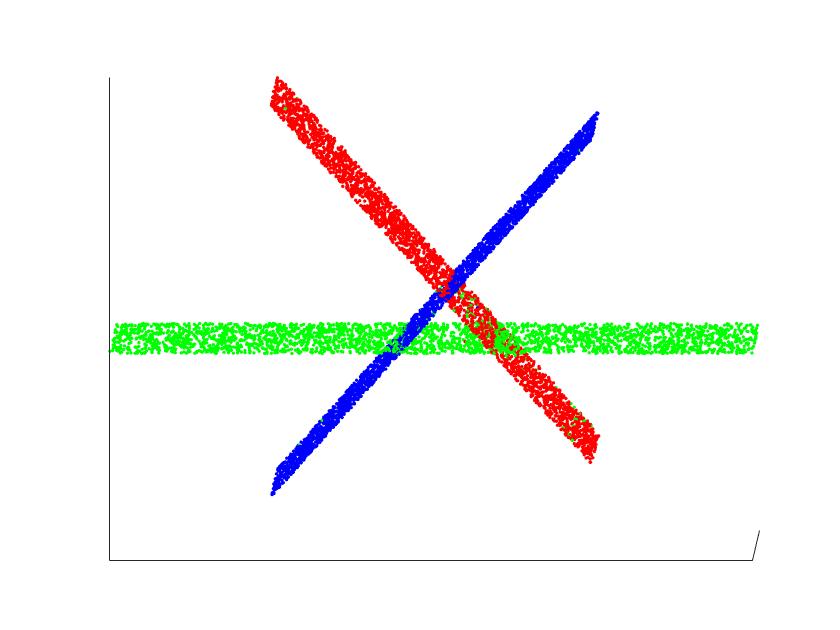}
			\caption{$2n$ data points}
			\label{fig: Larger sample size}
		\end{minipage}
	\end{figure}
	As can be observed, the three planes are not appropriately identified in the small sample size regime from Figure \ref{fig: Smaller sample size}. In contrast, when we duplicate the amount of data as in Figure \ref{fig: Larger sample size} the three planes are identified correctly.

	This simple example illustrates some crucial drawbacks of the MMC methods based on spectral clustering discussed throughout the paper. In order to correctly construct local paths (or local tangent planes) to, in turn, detect the underlying manifolds, one needs to consider a large enough neighborhood around every point containing enough samples for the variance of the estimation to be small at the expense of increasing the bias considerably. Building MMC methods that can operate at smaller sample sizes is an interesting direction to explore in future research. \blue For example, one could attempt to design a hybrid method that uses both path-based and local tangent plane information to make the method more robust to lower sample size; this is motivated by the fact that local PCA approaches can more accurately operate at smaller sample sizes when considering points that are far away from the intersection of manifolds. 
	
	\nc
	
	
	%
	\subsection{Different dimensions}
	
    In section \ref{subsection: Different dimensions} we presented a series of theoretical results for multi-manifold clustering when $\M$ is the union of smooth manifolds with different dimensions.  We now illustrate these results with a few simple numerical examples.
	
	\subsubsection{Planes and lines}
	We consider a data set uniformly sampled from the union of two planes and two lines that meet orthogonally, as illustrated in Figures \ref{fig: 2 Clusters}-\ref{fig: 5 Clusters}. 
	We run spectral clustering with $K=2,3,4,5$ to understand how the geometries of the manifolds get captured.

	\begin{figure}[htbp]
		\centering
		\begin{minipage}[t]{0.24\textwidth}
			\centering
			\includegraphics[width=4cm]{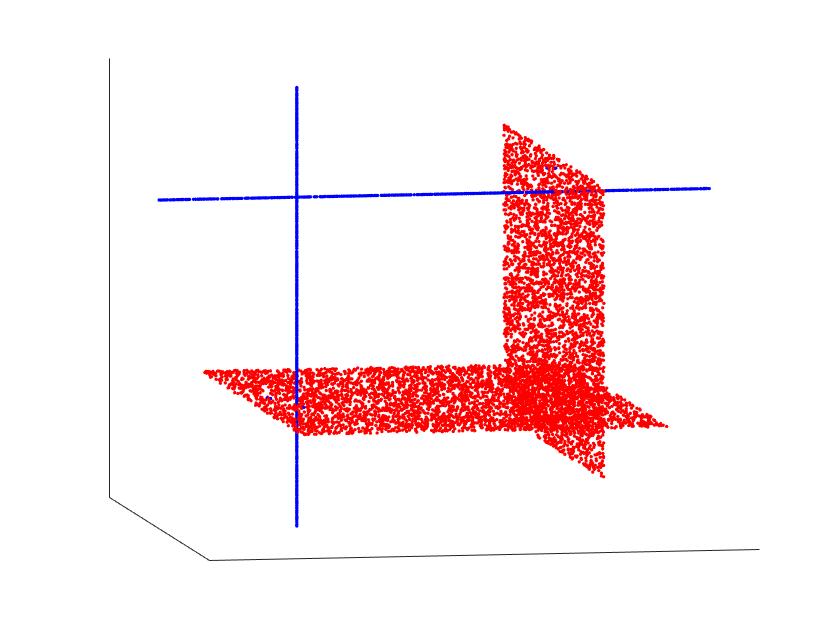}
			\caption{2 Clusters}
			\label{fig: 2 Clusters}
		\end{minipage}
		\begin{minipage}[t]{0.24\textwidth}
			\centering
			\includegraphics[width=4cm]{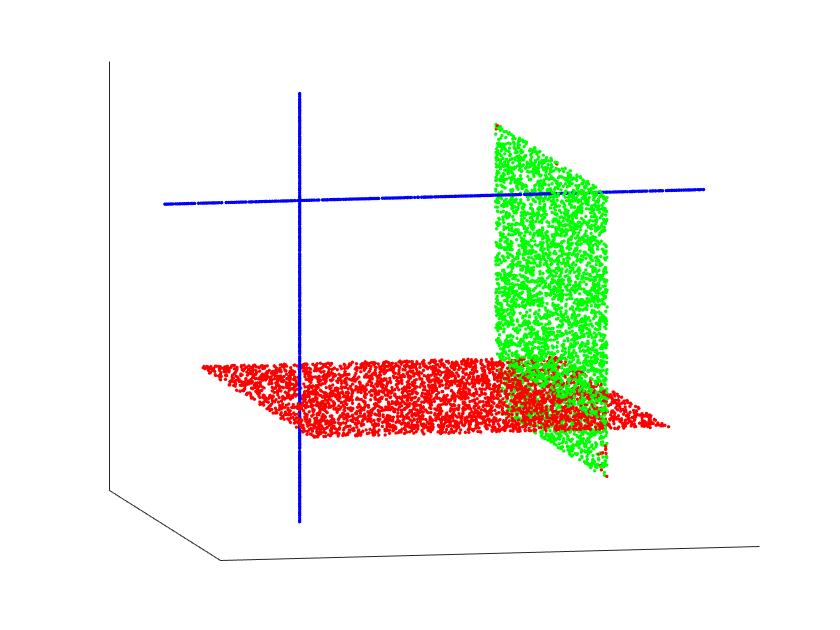}
			\caption{3 Clusters}
			\label{fig: 3 Clusters}
		\end{minipage}
		\begin{minipage}[t]{0.24\textwidth}
			\centering
			\includegraphics[width=4cm]{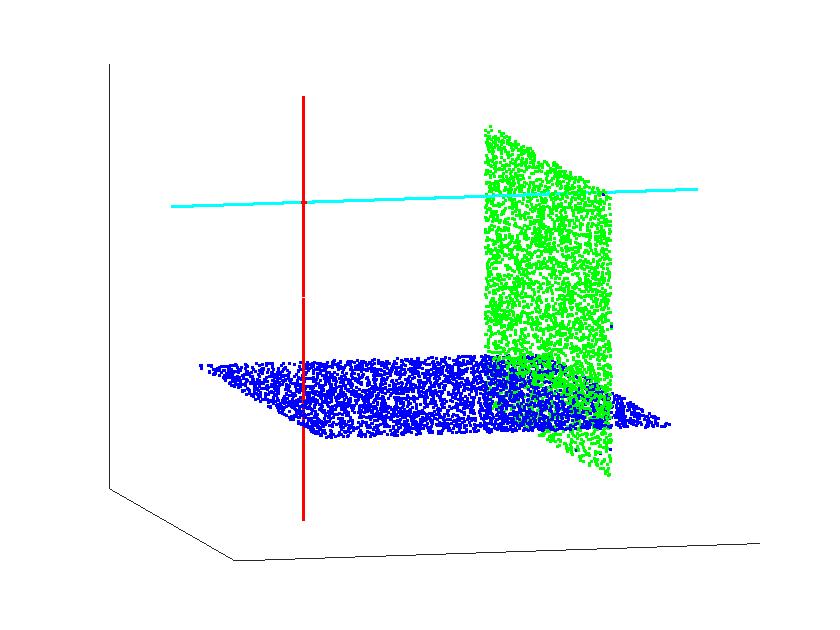}
			\caption{4 Clusters}
			\label{fig: 4 Clusters}
		\end{minipage}
		\begin{minipage}[t]{0.24\textwidth}
			\centering
			\includegraphics[width=4cm]{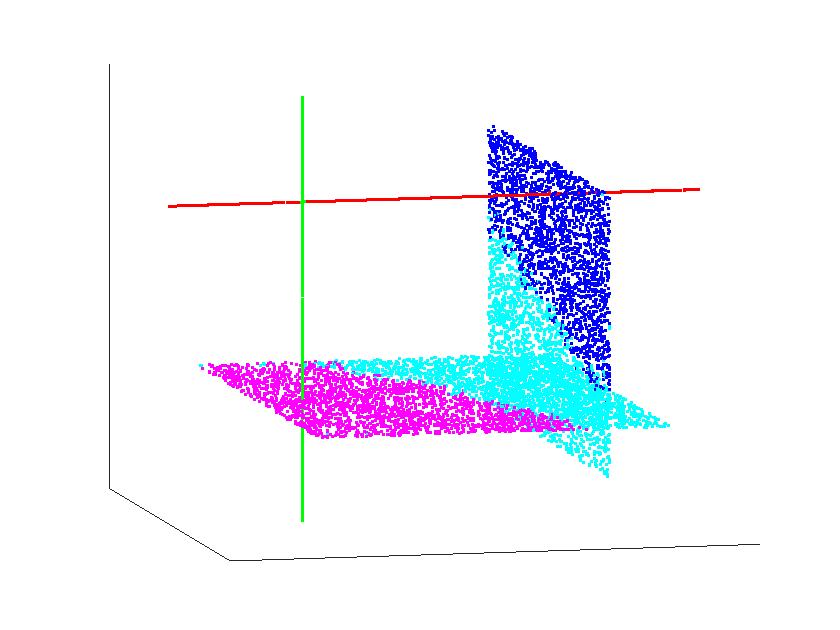}
			\caption{5 Clusters}
			\label{fig: 5 Clusters}
		\end{minipage}
	\end{figure}
	
	In Figure \ref{fig: 2 Clusters}, when we consider $K=2$, the whole data set splits into two parts: lines and planes, indicating that manifolds with different dimensions are separated, and manifolds with the same dimension are put into the same cluster. In Figure \ref{fig: 3 Clusters}, when we try $K=3$ clusters, the two planes get separated perfectly while the lines are clustered as one; this is supported by our theory which indeed suggests that the geometry of the higher dimensional objects is detected first. When $K=4$, lines get separated as shown in Figure \ref{fig: 4 Clusters}. The case $K=5$ illustrates the theory developed in this paper quite well. It shows how the internal geometry of the higher dimensional manifolds (in this case, the planes) is detected because the internal geometry of lines is more expensive than planes. 
	
	
	%

	Another illustration of the behavior of spectral clustering with constrained annular proximity graphs is presented in Figures \ref{fig: Ball And 3 Lines clusters 2}- \ref{fig: Ball And 3 Lines clusters 4}. Here the data set is supported in the union of a $2$-dimensional sphere and three lines that connect at one point. The same observations we made in the planes and lines example also apply to this setting.
	
	\begin{figure}[htbp]
		\centering
		\begin{minipage}[t]{0.3\textwidth}
			\centering
			\includegraphics[width=4cm]{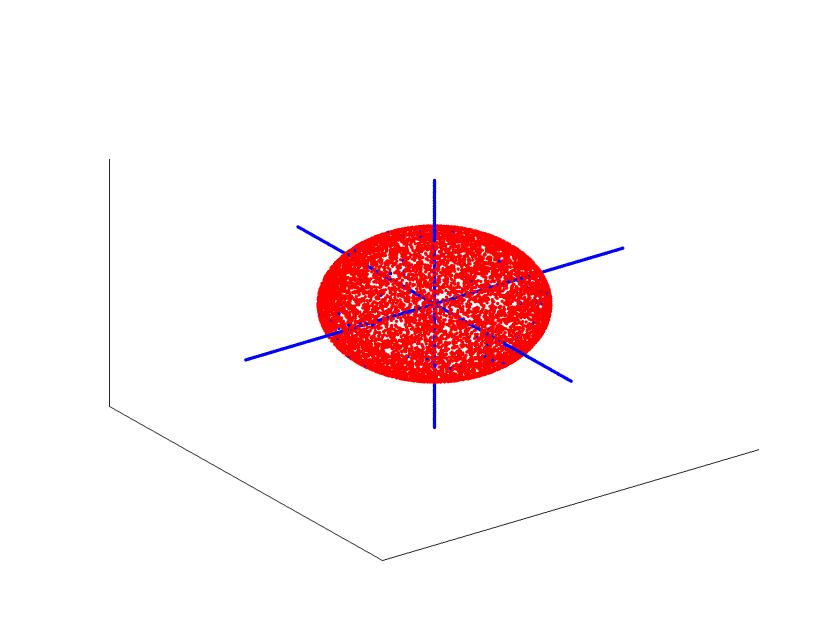}
			\caption{2 clusters}
			\label{fig: Ball And 3 Lines clusters 2}
		\end{minipage}
		\begin{minipage}[t]{0.3\textwidth}
			\centering
			\includegraphics[width=4cm]{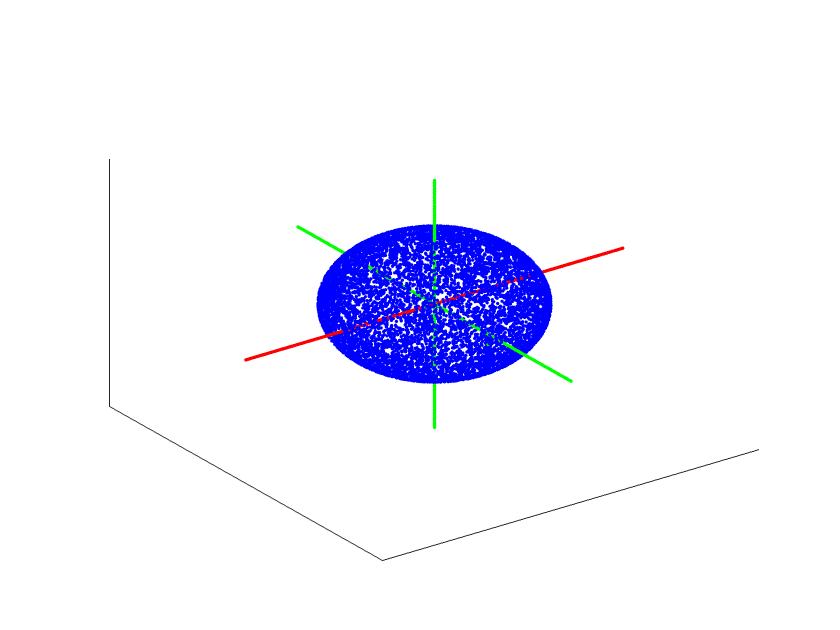}
			\caption{3 clusters}
			\label{fig: Ball And 3 Lines clusters 3}
		\end{minipage}
		\begin{minipage}[t]{0.3\textwidth}
			\centering
			\includegraphics[width=4cm]{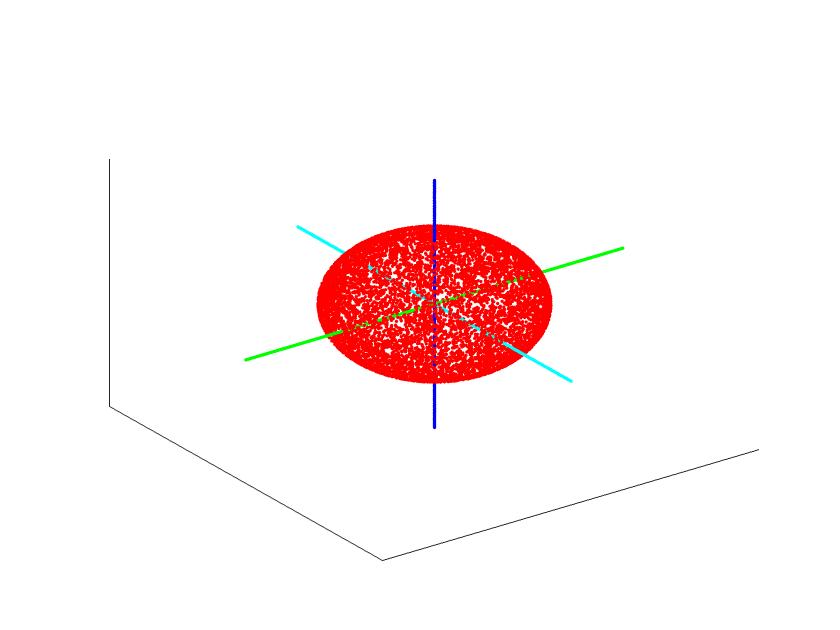}
			\caption{4 clusters}
			\label{fig: Ball And 3 Lines clusters 4}
		\end{minipage}
	\end{figure}
	
	\new{
	\subsection{Comparison with other MMC approaches}
	
	In this section, we compare the performances of spectral clustering using annular proximity graphs with angle constraints, SMCE (Sparse Manifold Clustering and Embedding) \citet{elhamifar2011sparse}, and SC with local PCA \citet{arias2017spectral}; both SMCE and SC with local PCA have been designed for MMC tasks. For SMCE, we follow the parameter choices in \citet{elhamifar2011sparse} and grid search for optimal parameters. For local PCA, we follow the parameter setting as in \citet{arias2017spectral} and tune the radius parameter and dimension by running a grid search to get the lowest misclustering rate. Since the algorithm in \citet{arias2017spectral} is a randomized algorithm, we run the algorithm 100 times and report the average misclustering rate. We use both $k$-NN and $\eps$ graphs in the setting of \citet{arias2017spectral} and report the results of the best performing settings. For SC with the constrained path algorithm, we use its nearest neighbor version as discussed at the beginning of section \ref{section: discussion}. We first compare all algorithms when we run them on synthetic data sets and then conduct a comparison when running them on the MNIST data set.
	
	\subsubsection{Synthetic data sets}
 \label{sec:Synthetic}
	We generate data points from five different settings of intersecting manifolds; see Table \ref{table:2}.  We see that SC with the angle-constrained path algorithm achieves, overall, the lowest misclustering rate, outperforming the competing algorithms. We can see that SC with local PCA can work well for the settings of 2 spheres and 1 sphere with 1 plane; in those settings, most misclustered points are points close to the intersections of manifolds. For the 3 planes example from section \ref{sec:3planes}, i.e. three planes intersecting at the same line, we sample 3000 points. Local PCA has particular difficulty distinguishing points close to the intersection, and only until the sample size has been increased considerably we recover the correct clustering with that algorithm. The 2 planes with 1 line and 1 sphere with 1 line examples are used to evaluate the performance of the algorithms when manifolds have different dimensions, a setting that is not the original target of \citet{arias2017spectral}, where a dimension parameter must be chosen. On the other hand, SMCE is not particularly designed to handle intersecting manifolds, and we see the overall low performance in most of the experiments run. 
	
	\begin{table}[h!]
	\begin{center}
    \begin{tabular}{| c| c| c| c| c| c|}
    \hline
     Algorithm & 3planes
 & 2spheres
 & 2planes 1line & 1sphere 1line & 1sphere 1plane \\ \hline
     path &$14.72\%$ & $0.49\%$  & $0.38\%$ & $0.22\%$ & $0.78\%$ \\ \hline
    SMCE  & $48.9\%$ & $16.2\%$ &$31.1\%$ & $26.8\%$ & $42.7\%$\\ 
    \hline
    local PCA & $ 43.66\%$ & $3.1\%$ & $32.17\%$ & $16.43\%$ & $1.19\%$   \\
    \hline
    \end{tabular}
    \caption{Misclustering rate}
    \label{table:2}
    \end{center}
\end{table}

	\subsubsection{MNIST}

	In this section we compare misclustering rates when we test algorithms on subsets of the MNIST data set consisting of different pairs of digits. Following the same preprocessing step as in \citet{babaeian2018multiple}, we first utilize the SURF feature of the Bag of words model to represent the features of each image. The original feature vectors have a size of 500. Then, for some pairs of digits, we use PCA to reduce the dimension of the image vector to $10$. In the final step we apply the unsupervised algorithms to the data sets. We only present the results for some examples of pairs of digits for brevity (see Table \ref{table:3}), but similar observations to the ones that we discuss below can be drawn from other choices of digits. Like in the synthetic data experiments, we grid search the optimal parameters for every algorithm and for every task. In all the tasks considered in this section we also run vanilla SC algorithm with a standard $k$NN graph, tuning $k$ to achieve the best misclustering rate. In contrast to the experiments in section \ref{sec:Synthetic}, here it is not clear that the considered data sets possess an underlying multi-manifold geometric structure.
	
 SC with the angle-constrained path algorithm can be seen as a generalization to vanilla SC, and we can see that it improves SC significantly in some tasks, such as clustering between digits [0,2], and at least behaves comparably to vanilla SC in other tasks; presumably, the improved performance over vanilla SC is manifested when the data manifolds corresponding to different digits do intersect. Notice that for the [0,7] digits SC with path algorithm and vanilla SC fail, while SMCE does perform very well. 
 Overall, local PCA performs poorly for the tasks discussed here. 

 We want to highlight that the performance of the MMC algorithms that we have compared in these experiments may strongly depend on the manifold assumption (which may not hold on first place), and thus, if one was to stick to the theoretical assumptions discussed in this paper, one would need to guarantee, for example, that the data embedding methods in the preprocessing steps preserve or enhance these assumptions. The experiments that we have considered here are thus not meant to suggest that one algorithm is always better than the others. Instead, we wanted to evaluate the performance of algorithms with theoretical guarantees such as SC using path-based graphs on real data sets to test their capabilities and highlight that other methods used in the literature may underperform in some standard real data tasks.

    \begin{table}[h!]
	\begin{center}
    \begin{tabular}{ c c c c c c c c c c}
    \hline
    Algorithm & [0,1] &[0,2] & [0,3] & [0,4] & [0,5]& [0,6]& [0,7] &[0,8] & [0,9] \\
    \hline 
    path & $14.0\%$ & $5.6\%$  & $1.9\%$ & $1.8\%$ & $2.6\%$ & $7.7\%$ & $46.4\%$ &  $9.7\%$ & $1.9\%$ \\ \hline
    local PCA &  $6.4\%$ & $25.9\%$ & $30.0\%$ & $45.5\%$ & $34.8\%$  & $34.5\%$  & $34.1\%$ &  $26.6\%$ & $25.1\%$ \\ \hline
    SMCE &$20.0\%$&  $25.5\%$ & $6.9\%$ & $9.2\%$ &$24.1\%$ & $12.1\%$ & $2.9\%$& $17.8\%$& $3.8\%$  \\ 
    \hline
    SC &$18.8\%$&  $12.8\%$ & $1.8\%$ & $2.2\%$ &$2.6\%$ & $10.0\%$ & $46.4\%$& $11.8\%$& $2.3\%$ \\
    
    \hline
    \end{tabular}
    \caption{Misclustering rates for some subsets of MNIST}
    \label{table:3}
    \end{center}
    \end{table}

}


	%
	
	%

	\acks{ NGT was supported by NSF-DMS grant 2005797. 
	Support for this research was provided by the Office of the Vice Chancellor for Research and Graduate Education at the University of Wisconsin-Madison with funding from the Wisconsin Alumni Research Foundation. The authors would like to thank the IFDS at UW-Madison and NSF through TRIPODS grant 2023239 for their support.}
	
	\bibliography{references.bib}

	\appendix

	\section{Proofs of main results}\label{Section: main results}

	\subsection{Discrete Dirichlet energies}
	\label{app:1}

	It is well known that an operator like $\gL$ (defined in \eqref{eqn:GraphLaplacian}) is positive semi-definite with respect to $\langle \cdot,\cdot \rangle_{L^2(\mu^n)}$ (e.g. \citet{DBLP:journals/corr/abs-0711-0189}); here and in the remainder we use $\mu^n$ to denote the empirical measure of $X$. Notice that $\gL$'s eigenvalues, labeled in ascending order as
	\[
	0= \lambda_1^{\veps_+,\veps_-} \leq \lambda_2^{\veps_+,\veps_-} \leq \lambda_3^{\veps_+,\veps_-} \leq \dots \leq \lambda_n^{\veps_+,\veps_-},
	\]
	can be characterized variationally according to the Courant-Fisher minmax principle: 
	\begin{equation}\label{minmax principle for graph laplacian}
	\lambda_l^{\veps_+,\veps_-}=\min_{S\in \mathcal{G}_l}\max_{u\in S\backslash \{0\}}\frac{b^{\veps_+,\veps_-}(u)}{\lVert u\lVert^2_{L^2(\mu^n)}},
	\end{equation}
	where $\mathcal{G}_l$ denotes the set of all linear subspaces of $L^2(\mu^n)$ of dimension $l$. Here, $b^{\veps_+,\veps_-}$ is the Dirichlet energy:
	\begin{equation}
	\begin{split}
	b^{\veps_+,\veps_-}(u):&=\frac{1}{n^2(\veps_+^{m+2}-\veps_-^{m+2})}\sum_{x_i,x_j\in\X_n }\omega
	_{x_i x_j}(u(x_i)-u(x_j))^2
	=\frac{1}{2}\langle\mathcal{L}^{\veps_+,\veps_-}u,u\rangle_{L^2(\mu^n)}
	\end{split}
	\label{eqn:GraphDirichlet}
	\end{equation}
	where $u\in L^2(\mu^n)$.
	
	We introduce  \textit{inner} and \textit{outer} weights associated to the $\omega$ defined as  $\omega^I_{x_i x_j}=\omega_{x_i x_j}$ and $\omega^O_{x_i x_j}=0$ when $x_i, x_j$ belong to the same manifold, and $\omega^I_{x_i x_j}=0$ and $\omega^O_{x_i x_j}=\omega_{x_i x_j}$ otherwise. With this notation in place, we can introduce outer and inner Dirichlet energies associated to $\mathcal{L}^{\veps_+,\veps_-}$ according to:
	\begin{equation}
	\begin{split}
	b_O^{\veps_+,\veps_-}(u):&=\frac{1}{n^2(\veps_+^{m+2}-\veps_-^{m+2})}\sum_{x_i,x_j\in\X_n }\omega^O
	_{x_i x_j}(u(x_i)-u(x_j))^2,\\
	b_I^{\veps_+,\veps_-}(u):&=\frac{1}{n^2(\veps_+^{m+2}-\veps_-^{m+2})}\sum_{x_i,x_j\in\X_n }\omega^I
	_{x_i x_j}(u(x_i)-u(x_j))^2.
	\end{split}
	\label{eqn:InnerOuterDirichlet}
	\end{equation}
	Clearly $b^{\veps_+,\veps_-} = b_O^{\veps_+,\veps_-}+ b_I^{\veps_+,\veps_-}$. 
	
	It will be convenient for our analysis to decompose $b_I^{\veps_+,\veps_-}$ further and write it as the sum of Dirichlet energies associated to each of the manifolds $\M_k$. For that purpose we split the data set $X$ into disjoint sets $X= \bigcup_{k=1}^N X_k$, where each of the $X_k$ can be taken to be, without the loss of generality, equal to $X_k = X \cap \M_k$ (this is due to the first condition in Assumption \ref{assump:WellSeparated} which implies that with probability one no $x_i$ belongs to two or more of the $\M_k$). It is worth highlighting that the previous partitioning of the data makes sense mathematically even if it is not meaningful in practice (because we do not know the manifolds $\M_k$). In what follows and whenever needed we list the points in $X_k$ as $ \{x_{1k},x_{2k},\cdots,x_{n_kk}\} $ and use $\mu_{k}^n$ to denote their associated empirical probability measure. The number of data points in  $\M_k$, i.e. $n_k$, is easily seen to satisfy $\mathbb{E}n_l=n w_l$. Moreover, the following concentration estimate holds.
	\begin{proposition}\label{Proposition: Tail Inequality}
With probability no less than $1-2\exp \left(\frac{-2 t^{2}}{n}\right)$, we have 
		\begin{equation}
		nw_i-t<n_i<nw_i+t.
		\end{equation}
	\end{proposition}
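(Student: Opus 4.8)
The plan is to recognize $n_i$ as a sum of i.i.d.\ Bernoulli random variables and then apply Hoeffding's inequality. First I would observe that, because each pairwise intersection $\M_l \cap \M_k$ has measure zero relative to $\vol_{\M_l}$, the mass that $\mu$ assigns to $\M_i$ is exactly $w_i$: indeed
\[
\mu(\M_i) = \sum_{l=1}^N w_l\, \mu_l(\M_i) = w_i + \sum_{l \neq i} w_l\, \mu_l(\M_i \cap \M_l) = w_i,
\]
since $\mu_l(\M_i \cap \M_l)=0$ for $l\neq i$. Consequently, for each sample $x_j$ the indicator $Z_j := \mathds{1}_{\{x_j \in \M_i\}}$ is a Bernoulli random variable with parameter $w_i$, and the $Z_j$ are mutually independent because the $x_j$ are i.i.d. As already noted in the text, with probability one no $x_j$ lies in more than one $\M_k$, so $n_i = \sum_{j=1}^n Z_j$ is unambiguously defined and $\mathbb{E}\, n_i = n w_i$.

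Next I would invoke Hoeffding's inequality in its standard form: if $Z_1, \dots, Z_n$ are independent and take values in $[0,1]$, then with $S_n := \sum_{j=1}^n Z_j$ one has $\mathbb{P}\big(|S_n - \mathbb{E} S_n| \geq t\big) \leq 2\exp(-2 t^2 / n)$. Applying this with $S_n = n_i$ and $\mathbb{E} S_n = n w_i$ gives $\mathbb{P}\big(|n_i - n w_i| \geq t\big) \leq 2\exp(-2 t^2 / n)$. Passing to the complementary event yields that $n w_i - t < n_i < n w_i + t$ holds with probability at least $1 - 2\exp(-2 t^2/n)$, which is exactly the claimed bound.

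The argument is entirely routine and I do not anticipate any real obstacle; the only point needing a moment's care is the measure-zero-intersection observation that makes $Z_j$ a genuine $\mathrm{Bernoulli}(w_i)$ variable and $n_i$ well defined, and this has already been recorded immediately before the statement.
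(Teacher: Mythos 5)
Your proof is correct and is the natural argument: $n_i$ is a Binomial$(n,w_i)$ count (the measure-zero intersection observation is exactly what makes this well defined), and Hoeffding's inequality gives the stated two-sided bound $1-2\exp(-2t^2/n)$. The paper states this proposition without proof, but this is the intended and essentially unique elementary argument.
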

	
	The graph Dirichlet energy associated to an individual manifold is defined by
	\begin{equation}
	b^{\veps_+,\veps_-}_l(u_l):=\frac{1}{n_l^2(\veps_+^{m+2}-\veps_-^{m+2})}\sum_{x_i,x_j\in\chi_n^l }\omega_{x_i x_j}(u_l(x_i)-u_l(x_j))^2, \quad u_l \in L^2(\mu_l^n).
	\label{eqn:DirichletIndivManifold}
	\end{equation}
	It follows that
	\[  b^{\veps_+,\veps_-}_I(u) = \sum_{l=1}^N\left(\frac{n_l^2}{n^2}\right) b_{l}^{\veps_+, \veps_-} (u_l), \quad u \in L^2(\mu^n),    \]
	where in the above and in the remainder we identify a function $u: X \rightarrow \R$ with a tuple $(u_1, \dots, u_N)$ where each of the $u_k$ is a function from $X_k$ into $\R$.
	
	\begin{remark}
		The local discrete Dirichlet energies $\gbk$ are similar to discrete Dirichlet energies that have been studied in the literature under the smooth manifold assumption. There is however an important difference. Indeed, although the weight matrix $\omega $ is  assumed to satisfy the full inner connectivity condition, i.e. with high probability the weights $\omega_{x_i x_j}$ can be thought of as those coming from a proximity graph, the type of proximity graph that we consider here is not standard since it is built with a kernel that has annular geometric structure. This type of kernel has not been considered nor analyzed before. As observed intuitively, as well as in our experiments, the idea of removing connections between points that are too close to each other significantly helps in reducing the number of connections between points in different manifolds, a feature that is useful for the multi-manifold clustering problem.

	\end{remark}

	\subsection{Discretization and interpolation maps}\label{set up}
	Our first goal is to find a quantitative relationship between the Dirichlet energies $D$ and $b^{\veps_+,\veps_-}$ via two conveniently chosen maps $P: L^2(\mu) \rightarrow L^2(\mu^n) $ and $\mathcal{I} : L^2(\mu^n) \rightarrow L^2(\mu) $. We look forward to obtaining inequalities of the form:
	\begin{equation}
	\label{eqn:InformalDirichletIneq}
	\sigma_\eta D(Iu)\leq(1+e_1)b^{\veps_+,\veps_-}(u); \quad b^{\veps_+,\veps_-}(Pf)\leq (1+e_2)\sigma_\eta D(f)+e_3
	\end{equation}
	where $e_1, e_2, e_3$ are small error terms depending on the problem's parameters, and $\sigma_\eta$ is the constant in \eqref{eqn:sigmaeta}. 
	
	\medskip

	%

	We start by combining Proposition 2.11  in \citet{calder2019improved} with Proposition \eqref{Proposition: Tail Inequality} to obtain the probabilistic estimates that we use in the remainder to connect graph-based energies with their continuum counterparts.

	\begin{corollary}
		\label{cor:Densities}
		With probability at least  $1-\sum_{l=1}^N (nw_l+t) \exp \left(-\mathrm{C}(nw_l-t) \theta^{2} \widetilde{\delta}^{m_l}\right)-2N\exp \left(\frac{-2 t^{2}}{n}\right)$, there exist:
		\begin{enumerate}
			\item probability density functions $\widetilde{\rho}_{l}^n: \M_l\rightarrow \R$ satisfying: 
			\[ \left\|\rho_l-\widetilde{\rho}_{l}^n\right\|_{L^{\infty}(\M_l)} \leq C(\theta+\widetilde{\delta})\]
			for each $l=1, \dots, N$, and also
			\item maps $\widetilde{T}_1, \dots, \widetilde{T}_N$ such that for each $l$, $\widetilde{T}_l: \M_l \rightarrow X_l $ is the $\infty$-OT map between $\widetilde \rho_{l}^n d \vol_{\M_l}$ and $\mu_l^n$, and
			\[   \sup _{x \in \mathcal{M}_l} d_{\mathcal{M}_l}(x, \tT_l(x)) \leq \widetilde{\delta} . \]
		\end{enumerate}
	\end{corollary}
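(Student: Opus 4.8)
The plan is to deduce the statement from its single-manifold analogue, Proposition~2.11 in \cite{calder2019improved}, applied separately to each $\M_l$, and then to glue the resulting $N$ events together with the concentration estimate of Proposition~\ref{Proposition: Tail Inequality}. The only genuinely probabilistic input is the cited proposition; everything else is a conditioning argument for the mixture model together with a union bound.

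First I would fix the randomness that assigns each sample $x_i$ to a manifold, i.e.\ condition on the allocation vector $(n_1,\dots,n_N)$ together with the labels. Conditionally on this allocation, the points of $X_l=X\cap\M_l$ are i.i.d.\ draws from $\mu_l=\rho_l\,d\vol_{\M_l}$, and the clouds $X_1,\dots,X_N$ are mutually independent. For each $l$ I would then apply Proposition~2.11 of \cite{calder2019improved} on $\M_l$ with its $n_l$ samples: this yields a probability density $\widetilde\rho_l^n$ on $\M_l$ with $\lVert\rho_l-\widetilde\rho_l^n\rVert_{L^\infty(\M_l)}\le C(\theta+\widetilde\delta)$ together with an $\infty$-OT map $\widetilde T_l$ between $\widetilde\rho_l^n\,d\vol_{\M_l}$ and $\mu_l^n$ satisfying $\sup_{x\in\M_l}d_{\M_l}(x,\widetilde T_l(x))\le\widetilde\delta$, on an event of conditional probability at least $1-C n_l\exp(-Cn_l\theta^2\widetilde\delta^{m_l})$. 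Here Assumption~\ref{assumption: main}(3), $c/n^{1/m}<\widetilde\delta$, guarantees (using $m\ge m_l$) that $\widetilde\delta$ exceeds the typical spacing of $n_l$ points on $\M_l$, so the hypotheses of the cited proposition are met; and Assumption~\ref{assumption: main}(4), $C(\theta+\widetilde\delta)\le\tfrac{1}{2c_\rho}$, forces $\widetilde\rho_l^n\ge\tfrac{1}{2c_\rho}>0$, so $\widetilde\rho_l^n$ is a genuine probability density bounded below, which is what makes the $\infty$-OT map between the equal-mass measures $\widetilde\rho_l^n\,d\vol_{\M_l}$ and $\mu_l^n$ well defined.

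Next I would assemble the pieces. Conditionally on the allocation the $N$ clouds are independent, so the intersection of the $N$ good events has conditional probability at least $1-\sum_{l=1}^N Cn_l\exp(-Cn_l\theta^2\widetilde\delta^{m_l})$. Let $\mathcal E:=\{\,nw_l-t<n_l<nw_l+t\ \text{for all }l\,\}$; by Proposition~\ref{Proposition: Tail Inequality} and a union bound over $l$, $\P(\mathcal E^c)\le 2N\exp(-2t^2/n)$. On $\mathcal E$ I would bound, factor by factor, $Cn_l\exp(-Cn_l\theta^2\widetilde\delta^{m_l})\le C(nw_l+t)\exp(-C(nw_l-t)\theta^2\widetilde\delta^{m_l})$; taking expectations over the allocation and splitting on $\mathcal E$ versus $\mathcal E^c$ then gives total failure probability at most $\sum_{l=1}^N(nw_l+t)\exp(-C(nw_l-t)\theta^2\widetilde\delta^{m_l})+2N\exp(-2t^2/n)$, which is the complement of the claimed probability.

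I expect the step requiring the most care to be the conditioning: the failure probability supplied by \cite{calder2019improved} is \emph{conditional} on $n_l$, so one must check that, on the high-probability window $\mathcal E$, it is dominated by the unconditional quantity in the statement (this is exactly the factor-by-factor bound above), and only then average out the allocation. A secondary point is verifying that the per-manifold hypotheses of Proposition~2.11 — a density bounded below and a sample size compatible with $\widetilde\delta$ — are indeed implied by Assumption~\ref{assumption: main}; this is precisely where items (3) and (4) of that assumption are used. Everything else is a routine union bound.
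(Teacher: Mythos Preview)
Your proposal is correct and follows exactly the approach the paper indicates: the paper's entire ``proof'' is the sentence immediately preceding the corollary, which says to combine Proposition~2.11 in \cite{calder2019improved} with Proposition~\ref{Proposition: Tail Inequality}. Your write-up is in fact considerably more detailed than what the paper provides, spelling out the conditioning on the allocation and the union bound that the paper leaves implicit.
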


	Each of the maps $\widetilde{T}_l$ in the above corollary induces a partition $\widetilde{U}_{1l}, \ldots, \widetilde{U}_{n_ll }$ of $\M_l$, where:
	\[
	\widetilde{U}_{il}:=\widetilde{T}_{l}^{-1}\left(\left\{x_{il}\right\}\right).
	\]
	For each $l=1, \dots, N,$ a (local) \textit{discretization} map $\widetilde{P}_l: L^{2}(\mu_l) \rightarrow L^{2}\left(\mu_{l}^n\right)$ is defined as
	\begin{equation}
	(\widetilde{P}_l f_l)\left(x_{il}\right):=n_l \cdot \int_{\widetilde{U}_{il}} f(x) \widetilde{\rho}_{l}^n(x) d\vol_{\M_l} (x), \quad f_l \in L^{2}(\mu_l),
	\end{equation}
	and an associated (local) \textit{extension} map $\widetilde{P}_l^{*}: L^{2}\left(\mu_{l}^n\right) \rightarrow L^{2}\left(\widetilde{\mu}_{l}^n\right)$ defined as
	\[\widetilde{P}_l^{*} u=u \circ \widetilde{T}_l .\]
	The (global) discretization map $P: L^2(\mu) \rightarrow L^2(\mu^n)$ can now be defined according to
	\[  P f : = (  P_1f_1, \dots, P_N f_N      ) \]
	where $f=(f_1, \dots, f_N) \in L^2(\mu)$. In other words, $P$ acts on $f$ according to the coordinatewise action of the $P_l$ on the $f_l$. Likewise, we may define $\widetilde{P}^*: L^2(\mu^n) \rightarrow L^2(\mu) $ according to:
	\[ P^* u = ( P^*_1 u_1 , \dots, P^*_N u_N).  \]
	
	\medskip
	
	We now introduce the interpolation map $\mathcal{I}: L^2(\mu^n) \rightarrow L^2(\mu)$. This map takes the form $\mathcal{I} = \Lambda \widetilde{P}^* $, i.e. it is the composition of the extension map $\widetilde{P}^*$ and a smoothening operator that acts coordinatewise. The smoothening operator is chosen conveniently so as to make the error in the first inequality in \eqref{eqn:InformalDirichletIneq} as small as possible; the first work to our knowledge that attempted to do something similar when analyzing graph Laplacians is \citet{BIK}. To conduct the analysis in our setting we must introduce new constructions and prove new results given the annular geometry of the kernel used to build the data graph.


	Let  $\eta:[0,\infty) \rightarrow\R$  and $\psi:[0, \infty) \rightarrow[0, \infty)$ be the functions given by
	\begin{equation}
	\label{eqn:PsiEta}
	\eta(t) :=  \begin{cases}  1 \quad  0 \leq t \leq 1 \\ 0 \quad t >1, \end{cases}    \quad \psi(t):=\frac{1}{\sigma_{\eta}} \int_{t}^{\infty} \eta(s) s d s,
	\end{equation}
	where recall $\sigma_\eta$ was defined in \eqref{eqn:sigmaeta}.
	
	For every $r_1,r_2$ such that $r_1>r_2$ we  define the function:
	\[
	\K^l_{r_1,r_2}(x, y):=\left(\frac{r_1^2}{r_1^{m+2}-r_2^{m+2}}\psi\left(\frac{d_{\M_l}(x,y)}{r_1}\right)-\frac{r_2^2}{r_1^{m+2}-r_2^{m+2}}\psi\left(\frac{d_{\M_l}(x,y)}{r_2}\right)\right), \quad x,y \in \M_l
	\]
	which serves as ``kernel" and induces the convolution operator: 
	\[
	\Lambda_{r_1,r_2}^l f(x):=\frac{1}{\tau_l(x)} \int_{\mathcal{M}_l} \K^l_{r_1,r_2}(x, y) f_l(y) d \vol_{\M_l}(y),
	\]
	which acts on functions $f_l: \M_l \rightarrow\R$. In the above, $\tau_l(x)$ is a normalization factor given by
	\[
	\tau_l(x):=\int_{\mathcal{M}_l} \K^l_{r_1,r_2}(x, y)  d \vol_{\M_l}(y);
	\]
	notice that $\K_{r_1, r_2}^l$ is non-negative.
	\nc
	
	We can put together the action of each convolution operator on each of the manifolds and define: 
	\[ \Lambda_{r_1, r_2} f := (  \Lambda_{r_1, r_2}^1 f_1 , \dots,  \Lambda_{r_1, r_2}^N f_N   ), \quad  f=(f_1, \dots, f_N) \in L^2(\mu). \]
	Our global \textit{interpolation operator}  takes the form:
	\[   \mathcal{I}  u  : =  \Lambda  \widetilde P^*u, \quad u \in L^2(\mu^n);   \]
	In the remainder it will be convenient to write the above in coordinates as:
	\[ \mathcal{I} u = ( \mathcal{I}_1 u_1, \dots, \mathcal{I}_N u_N). \]
	%
	%
	%
	%
	%

	Having defined the maps $P$ and $\I$ we are now ready to state precisely the connection between the Dirichlet energies $D$ and $b$.
	%
	%
	
	\begin{proposition}[Inequality for Dirichlet energies]\label{Proposition: Inequality for Dirichlet energies}
		Let $\varepsilon_+,\varepsilon_-, \widetilde{\delta},$ and $\theta$ be fixed but small enough numbers satisfying Assumptions \ref{assumption: main}. Let $b$ be the Dirichlet energy associated to the weighted graph $(X, \omega)$ defined in \eqref{eqn:GraphDirichlet} and $D$ the Dirichlet energy  defined in \eqref{Equ: dirichlet energy}.
		
		Then, with probability greater than $1-\sum_{l=1}^N (nw_l+t) \exp \left(-\mathrm{C}(nw_l-t) \theta^{2} \widetilde{\delta}^{m}\right)-2N\exp \left(\frac{-2 t^{2}}{n}\right)-C_1(n)$,
		we have:
		
		\begin{enumerate}[(1)]
			\item For any $f \in L^{2}(\mu)$,
			$$
			\sigma_\eta D(\widetilde{\mathcal{I}} u) \leq\left(1+C(\veps_+ +\frac{\widetilde{\delta}}{\veps_+}+\theta+\widetilde{\delta} )\right)b^{\veps_+,\veps_-}(u)
			$$
			
			\item For any $f \in L^{2}\left(\mu^n\right)$,
			$$
			\begin{aligned}
			b_I^{\varepsilon_+,\varepsilon_-}(\widetilde{P} f)& \leq  \left(1+C(\veps_+ +\frac{\widetilde{\delta} }{\veps_+}+\theta+\widetilde{\delta})\right)\sigma_\eta D(f)
			\end{aligned}
			$$
			In addition, if $f$ is in the span of $\Delta_\M$'s eigenfunctions with corresponding eigenvalue less than $\lambda$, then: 
			\[   b_O^{\veps_+, \veps_-}(\widetilde{P} f) \leq \frac{CN N_0}{w_{min}^2n^2(\veps_+^{m+2}-\veps_-^{m+2})}\left(1+\lambda^{m / 2+2}\right)\left\|f\right\|_{L^{2}(\mathcal{M})}^{2}  \]
			
		\end{enumerate}
	\end{proposition}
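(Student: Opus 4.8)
The plan is to control $b_O^{\veps_+,\veps_-}(\widetilde{P}f)$ by separating each edge term into two squares, bounding the number of cross-manifold edges by $N_0$, and then bounding the sup-norm of the discretized components of $f$. Throughout, one works on the event described in the statement, on which the transport maps $\widetilde{T}_l$ and densities $\widetilde{\rho}_l^n$ from Corollary \ref{cor:Densities} exist and $n_l$ satisfies the bound in Proposition \ref{Proposition: Tail Inequality}; once these objects are fixed, the inequality to be proved is deterministic.

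First I would split each difference: for an ordered pair $(x_i,x_j)$ with $x_i\in\M_s$, $x_j\in\M_l$, $s\neq l$, and $\omega_{x_ix_j}>0$, write $(\widetilde{P}_sf_s(x_i)-\widetilde{P}_lf_l(x_j))^2\leq 2(\widetilde{P}_sf_s(x_i))^2+2(\widetilde{P}_lf_l(x_j))^2$. Summing over all such pairs and using that for each $s$ the total number of pairs $(x_i,x_j)$ with $x_i\in\M_s$ and $x_j$ in some other manifold and $\omega_{x_ix_j}>0$ is at most $\sum_{l\neq s}N_{sl}\le NN_0$, one obtains
\[
b_O^{\veps_+,\veps_-}(\widetilde{P}f)\leq \frac{CNN_0}{n^2(\veps_+^{m+2}-\veps_-^{m+2})}\sum_{l=1}^N\|\widetilde{P}_lf_l\|_{L^\infty(\mu_l^n)}^2 .
\]

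Next I would bound $\|\widetilde{P}_lf_l\|_{L^\infty}$. Since $\widetilde{T}_l$ is the $\infty$-OT map pushing $\widetilde{\rho}_l^n\,d\vol_{\M_l}$ onto the uniform measure $\mu_l^n$, the cells $\widetilde{U}_{il}=\widetilde{T}_l^{-1}(\{x_{il}\})$ satisfy $n_l\int_{\widetilde{U}_{il}}\widetilde{\rho}_l^n\,d\vol_{\M_l}=1$, so $\widetilde{P}_lf_l(x_{il})$ is a weighted average of $f_l$ over $\widetilde{U}_{il}$, whence $|\widetilde{P}_lf_l(x_{il})|\leq \|f_l\|_{L^\infty(\M_l)}$. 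Because $f$ lies in the span of eigenfunctions of $\Delta_\M$ with eigenvalue $\leq\lambda$, each component $f_l$ is a finite linear combination of eigenfunctions of $\Delta_{\M_l}$ with eigenvalue $\leq\lambda/w_l\leq\lambda/w_{min}$; elliptic regularity then gives $\|f_l\|_{H^{2k}(\M_l)}\leq C(1+\lambda/w_{min})^{k}\|f_l\|_{L^2(\M_l)}$, and the Sobolev embedding $H^{2k}(\M_l)\hookrightarrow L^\infty(\M_l)$ with $2k$ the least even integer exceeding $m/2$ (note $m_l\le m$) yields $\|f_l\|_{L^\infty(\M_l)}^2\leq C(1+\lambda^{m/2+2})\|f_l\|_{L^2(\M_l)}^2$, with the extra powers of $w_{min}$ absorbed into $C=C(\M,\mu)$. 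Finally, $\sum_l\|f_l\|_{L^2(\M_l)}^2\leq c_\rho\sum_l\|f_l\|_{L^2(\mu_l)}^2\leq (c_\rho/w_{min})\|f\|_{L^2(\M)}^2$; substituting into the display above gives the claimed bound.

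I expect the delicate part to be the simultaneous tracking of the $\lambda$-dependence and of the constants $c_\rho$, $w_{min}$, $N$ through the two reductions: the $L^\infty$ bound on the discretization map, and the elliptic-regularity-plus-Sobolev bound on each eigenfunction component. The combinatorial step---replacing the number of cross-manifold edges by $N_0$ after splitting the square---is immediate; the care is in choosing the Sobolev exponent $2k\sim m/2+2$ and in correctly propagating the weight $1/w_l$ coming from the tensorized normalization of the eigenfunctions of $\Delta_\M$.
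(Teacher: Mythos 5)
Your argument for the third inequality (the bound on $b_O^{\veps_+,\veps_-}(\widetilde{P}f)$) is correct and is essentially the paper's own argument: split each cross-manifold edge term via $(a-b)^2\le 2a^2+2b^2$, bound the number of cross-manifold edges by $NN_0$, use $\|\widetilde{P}_l f_l\|_{L^\infty}\le\|f_l\|_{L^\infty(\M_l)}$ (averaging over the transport cells), and then control $\|f_l\|_{L^\infty}$ by $(1+\lambda^{m/4+1})\|f_l\|_{L^2}$ via elliptic regularity and Sobolev embedding. If anything your bookkeeping is cleaner than the paper's displayed chain (which has a couple of dropped $n^2$'s), and the explicit $w_{min}^3$ in the statement versus your $C(\M,\mu)$ is only a matter of how much $w_{min}$-dependence one chooses to pull out of the constant, so that discrepancy is harmless.

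However, this addresses only one of the proposition's three claims. You give no argument at all for item (1), namely $D(\widetilde{\mathcal{I}} u)\leq(1+C(\veps_+ +\widetilde{\delta}/\veps_+ +\theta+\widetilde{\delta} ))b^{\veps_+,\veps_-}(u)$, nor for the first part of item (2), namely $b_I^{\veps_+,\veps_-}(\widetilde{P}f)\leq(1+C(\cdots))D(f)$. These two are where the real work of the proposition lies, and they cannot be reached by the combinatorial/$L^\infty$ route you use for $b_O$: they require comparing the graph energy to a nonlocal continuum energy $D_{NL,l}^{\veps_+,\veps_-}$ via the $\infty$-OT transport maps (Lemma~\ref{lemma: D^NL and b}), estimating the smoothing operator $\Lambda_{\veps_+,\veps_-}^l$ built from the annular kernel $\K_{\veps_+,\veps_-}^l$ together with its Jacobian and normalization factor $\tau_l$ (Lemmas~\ref{lemma: restrict for tau} and~\ref{lemma: D(lambda) and D }), and a geodesic-flow argument showing $D_{NL,l}^{\veps_+,\veps_-}(f_l)\lesssim \sigma_\eta D_l(f_l)$ (Lemma~\ref{lemma: D^NL(f)<= D(f)}). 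Until you supply these, the proposal is a proof of a single sub-inequality rather than of the proposition.
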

	We recall that the quantity $N_0$ was introduced in section \ref{Section:Conditions on algorithms for the recovery of tensorized spectrum from data clouds} in Definition \ref{Outer Partially Connected} and it represents the largest number of connections in the graph $(X,\omega)$ between two distinct manifolds.  The following estimates complement Proposition \ref{Proposition: Inequality for Dirichlet energies} and essentially state that the maps $\mathcal{I}$ and $P$ are almost isometries.
	
	
	\begin{proposition}[Discretization and interpolation maps are almost isometries]\label{Proposition: Discretization and interpolation maps are almost isometries}
		Let $\varepsilon_+,\varepsilon_-, \tilde{\delta},$ and $\theta$ be fixed but small enough numbers satisfying Assumptions \ref{assumption: main}. Then, with probability at least $1-\sum_{l=1}^N (nw_l+t) \exp \left(-\mathrm{C}(nw_l-t) w_l^{2} \widetilde{\delta}^{m}\right)-2N\exp \left(\frac{-2 t^{2}}{n}\right)-C_1(n)$, we have:
		\begin{enumerate}[(1)]
			\item For every $f \in L^{2}(\mu)$
			\[
			\left|\|f\|_{L^{2}(\mu)}^{2}-\|{P} f\|_{L^{2}\left(\mu^n\right)}^{2}\right| \leq C \widetilde{\delta}\|f\|_{L^{2}(\mu)}     \sqrt{D(f)}+C(\theta+\widetilde{\delta})\|f\|_{L^{2}(\mu)}^{2}
			\]
			\item For every $u \in L^{2}\left(\mu^n\right)$
			\[
			\left|\|u\|_{L^{2}\left(\mu^n\right)}^{2}-\|{\mathcal{I}} u\|_{L^{2}(\mu)}^{2}\right| \leq C \veps_+\|u\|_{L^{2}\left(\mu^n\right)} \sqrt{b^{\veps_+,\veps_-}(u)}+C(\theta+\widetilde{\delta})\|u\|_{L^{2}\left(\mu^n\right)}^{2}
			\]
		\end{enumerate}
	\end{proposition}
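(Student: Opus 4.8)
The plan is to prove both estimates manifold-by-manifold and then reassemble them. I would work throughout on the intersection of the high-probability event of Corollary~\ref{cor:Densities} (on which the densities $\widetilde{\rho}^n_l$ with $\|\rho_l-\widetilde{\rho}^n_l\|_{L^\infty(\M_l)}\le C(\theta+\widetilde\delta)$ and the $\infty$-OT maps $\widetilde{T}_l$ with displacement $\le\widetilde\delta$ exist), the event of Proposition~\ref{Proposition: Tail Inequality} controlling the $n_l$, and --- for the graph quantities entering (2) --- the full inner connectivity event of Definition~\ref{Inner Fully Connected}. The first move is the orthogonal-type decompositions $\|f\|_{L^2(\mu)}^2=\sum_l w_l\|f_l\|_{L^2(\mu_l)}^2$ and $\|u\|_{L^2(\mu^n)}^2=\sum_l (n_l/n)\|u_l\|_{L^2(\mu_l^n)}^2$, together with the coordinatewise action of $P$ and $\mathcal{I}$; this reduces everything to local statements on each $\M_l$, the passage from $n_l/n$ to $w_l$ being absorbed into the $C(\theta+\widetilde\delta)$ error via Proposition~\ref{Proposition: Tail Inequality}. (If $f\notin H^1(\mu)$ the right-hand side of (1) is infinite and there is nothing to prove, so we assume $f\in H^1(\mu)$.)

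For part (1), on each $\M_l$ I would use that $(\widetilde{P}_l f_l)(x_{il})$ is exactly the average of $f_l$ over the cell $\widetilde U_{il}=\widetilde T_l^{-1}(\{x_{il}\})$ with respect to the probability measure $n_l\widetilde\rho^n_l\,d\vol_{\M_l}$ (recall $\int_{\widetilde U_{il}}\widetilde\rho^n_l\,d\vol_{\M_l}=1/n_l$). Jensen's inequality gives $\frac1{n_l}\sum_i(\widetilde P_l f_l)(x_{il})^2\le\int_{\M_l}f_l^2\,\widetilde\rho^n_l\,d\vol_{\M_l}$, and the defect equals $\sum_i\int_{\widetilde U_{il}}(f_l-\overline{f}_{il})^2\,\widetilde\rho^n_l\,d\vol_{\M_l}$, where $\overline{f}_{il}$ is that average. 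Since $d_{\M_l}(x,\widetilde T_l(x))\le\widetilde\delta$, each $\widetilde U_{il}$ lies in a geodesic ball of radius $\le\widetilde\delta$ about $x_{il}$; integrating $|\nabla f_l|$ along minimizing geodesics, applying Cauchy--Schwarz and Young's inequality, and summing over $i$ using the bounded overlap of these balls, I would bound the defect by $C\widetilde\delta\|f_l\|_{L^2(\mu_l)}\sqrt{D_l(f_l)}$ (the accompanying $C\widetilde\delta^2 D_l(f_l)$ term being of the same or lower order under Assumptions~\ref{assumption: main}). Replacing $\widetilde\rho^n_l$ by $\rho_l$ costs $C(\theta+\widetilde\delta)\|f_l\|_{L^2(\mu_l)}^2$ by Corollary~\ref{cor:Densities}; summing over $l$ with weights $w_l$ and a Cauchy--Schwarz across $l$ gives (1).

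For part (2), on each $\M_l$ I would write $\mathcal{I}_l u_l=\Lambda^l_{\veps_+,\veps_-}(u_l\circ\widetilde T_l)$ and use that $\Lambda^l_{\veps_+,\veps_-}$ is an averaging operator (its kernel $\K^l_{\veps_+,\veps_-}$ is nonnegative and normalized by $\tau_l$), so Jensen gives $(\mathcal{I}_l u_l)^2\le\Lambda^l_{\veps_+,\veps_-}\big((u_l\circ\widetilde T_l)^2\big)$. Integrating against $\rho_l$ and passing to the adjoint, $\int_{\M_l}(\mathcal{I}_l u_l)^2\rho_l\le\int_{\M_l}(u_l\circ\widetilde T_l)^2\,(\Lambda^l_{\veps_+,\veps_-})^*\rho_l$, and a Taylor expansion of $\rho_l$, the definition of $\tau_l$, and the curvature expansion of the volume form show $(\Lambda^l_{\veps_+,\veps_-})^*\rho_l=\rho_l\,(1+O(\veps_++\theta+\widetilde\delta))$ pointwise --- here the annular structure of $\K^l_{\veps_+,\veps_-}$, built from the antiderivative $\psi$ of $\eta$ and the normalizations $r^2/(r^{m+2}-r^{m+2})$, is treated as in the single-manifold analysis of the preceding subsections. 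Since $\widetilde T_l$ pushes $\widetilde\rho^n_l\,d\vol_{\M_l}$ forward to $\mu_l^n$, one has $\int_{\M_l}(u_l\circ\widetilde T_l)^2\widetilde\rho^n_l\,d\vol_{\M_l}=\|u_l\|_{L^2(\mu_l^n)}^2$, so swapping $\widetilde\rho^n_l$ for $\rho_l$ yields the upper direction. For the lower direction I would write $\|u_l\|_{L^2(\mu_l^n)}^2-\|\mathcal{I}_l u_l\|_{L^2(\mu_l)}^2$ as the above controlled piece plus a cross term bounded by $C\big(\int_{\M_l}|u_l\circ\widetilde T_l-\Lambda^l_{\veps_+,\veps_-}(u_l\circ\widetilde T_l)|^2\rho_l\big)^{1/2}\|u_l\|_{L^2(\mu_l^n)}$, and invoke the nonlocal Poincar\'e-type estimate
\[
\int_{\M_l}\big|u_l\circ\widetilde T_l-\Lambda^l_{\veps_+,\veps_-}(u_l\circ\widetilde T_l)\big|^2\rho_l\,d\vol_{\M_l}\le C\veps_+^2\, b^{\veps_+,\veps_-}_l(u_l),
\]
which after summing over $l$ and Cauchy--Schwarz delivers the $C\veps_+\|u\|_{L^2(\mu^n)}\sqrt{b^{\veps_+,\veps_-}(u)}$ term; I would prove this last estimate by expanding the convolution difference, using that $-\psi'=\eta/\sigma_\eta$ reproduces the annular indicator so $\K^l_{\veps_+,\veps_-}$ is adapted to the annular weights $\omega^{\veps_+,\veps_-}_{x_ix_j}$, and passing between $\omega_{x_ix_j}$ and $\omega^{\veps_+,\veps_-}_{x_ix_j}$ via full inner connectivity.

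The hardest part will be exactly this nonlocal Poincar\'e inequality, and relatedly the bound $(\Lambda^l_{\veps_+,\veps_-})^*\rho_l\approx\rho_l$: because the $\veps_-$-annular kernel $\K^l_{\veps_+,\veps_-}$ is a new object, the standard comparison between the oscillation of a mollification and a graph Dirichlet energy cannot be quoted and must be redone, carefully tracking that the ``missing mass'' on the inner ball of radius $\veps_-$ is compensated exactly by the normalization $\veps_+^{m+2}-\veps_-^{m+2}$ appearing in both $\K^l_{\veps_+,\veps_-}$ and $b^{\veps_+,\veps_-}_l$. Once the per-manifold estimates are in place, intersecting the events of Corollary~\ref{cor:Densities} and Proposition~\ref{Proposition: Tail Inequality} and summing over $l$ yields the two displayed bounds.
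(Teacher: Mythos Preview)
Your strategy is the paper's: decompose coordinatewise, and for (2) chain the smoothing bound $\|\Lambda^l_{\veps_+,\veps_-}g-g\|_{L^2(\mu_l)}^2\le C\veps_+^2\,D^{\veps_+,\veps_-}_{NL,l}(g)$ (this is Lemma~\ref{lemma: tau f controlled by f}) with the transfer $D^{\veps_+',\veps_-'}_{NL,l}(\widetilde P_l^*u_l)\le C\,b_l^{\veps_+,\veps_-}(u_l)$ (Lemma~\ref{lemma: D^NL and b}); your ``nonlocal Poincar\'e'' and your $(\Lambda^l)^*\rho_l\approx\rho_l$ are exactly these two lemmas, and the paper likewise flags the annular kernel as the new technical point.

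There is one concrete slip in your Part~(1). The Jensen defect you write down,
\[
\sum_i\int_{\widetilde U_{il}}(f_l-\overline f_{il})^2\,\widetilde\rho^n_l\,d\vol_{\M_l}
=\|\widetilde P_l^*\widetilde P_l f_l-f_l\|^2_{L^2(\widetilde\mu^n_l)},
\]
is bounded via cell Poincar\'e by $C\widetilde\delta^2 D_l(f_l)$, \emph{not} by $C\widetilde\delta\|f_l\|\sqrt{D_l(f_l)}$; these two quantities are incomparable in general, so your route does not yield the bound as stated. The paper (quoting estimates from \cite{calder2019improved}) instead factors
\[
\big|\,\|\widetilde P_l f_l\|^2_{L^2(\mu^n_l)}-\|f_l\|^2_{L^2(\widetilde\mu^n_l)}\big|
\le\big(\|\widetilde P_l^*\widetilde P_l f_l\|+\|f_l\|\big)\,\|\widetilde P_l^*\widetilde P_l f_l-f_l\|
\le C\|f_l\|_{L^2(\mu_l)}\cdot\widetilde\delta\sqrt{D_l(f_l)},
\]
i.e.\ uses $a^2-b^2=(a+b)(a-b)$ rather than Pythagoras, and only then swaps $\widetilde\mu^n_l$ for $\mu_l$ at cost $C(\theta+\widetilde\delta)\|f_l\|^2$. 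This is a one-line repair using precisely the Poincar\'e ingredient you already isolated.
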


	\subsection{Preliminary local energy estimates}

	In order to prove the above results we first establish a sequence of preliminary estimates on each of the individual manifolds $\M_l$. The results presented in this subsection are independent of the fact that all manifolds forming our model \eqref{eqn:M} have the same dimension or not.

	
	%
	%
	%
	%

	\begin{lemma}\label{lemma: restrict for tau}
		Suppose $0<r_2<\frac{1}{4}r_1$ are small enough, in particular smaller than half the injectivity radius of the manifold $\M_l$. Then, there exists an absolute constant $C>0$ such that
		\[(1+Cm_lK_lr_1^2)^{-1}\leq \tau_l(x)\leq 1+Cm_lK_lr_1^2, \quad \text{ and } \quad  |\nabla\tau_l(x)|\leq \frac{Cm_lK_lr_1}{\sigma_\eta},\]
		for all $x\in \mathcal{M}_l$. Here $K_l$ is a uniform bound on the absolute value of sectional curvatures.
	\end{lemma}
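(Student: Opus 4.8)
The plan is to reduce the whole statement to the model case in which $\M_l$ is flat, where the normalizations built into $\psi$ and into $\K^l_{r_1,r_2}$ force $\tau_l\equiv 1$, and then to control the deviation caused by curvature through the exponential map. First I would record the normalization identity. A polar-coordinates computation gives $\sigma_\eta=\int_{B_m(0,1)}y_1^2\,dy=\tfrac1m\int_{B_m(0,1)}|y|^2\,dy=\frac{|B_m(0,1)|}{m+2}$, and a second polar-coordinates computation (or Fubini applied to $\psi(t)=\frac1{\sigma_\eta}\int_t^\infty\eta(s)s\,ds$) gives $\int_{\R^{m}}\psi(|z|/r)\,dz=r^{m}\int_{\R^{m}}\psi(|w|)\,dw=r^{m}$ for every $r>0$, where $m=\dim\M_l$ (which in this lemma coincides with the exponent $m$ appearing in $\K^l_{r_1,r_2}$). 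Hence, in flat space, where $\K^l_{r_1,r_2}$ is translation invariant,
\[
\tau_l^{\mathrm{flat}}=\int_{\R^{m}}\K^l_{r_1,r_2}(|z|)\,dz=\frac{r_1^2\,r_1^{m}-r_2^2\,r_2^{m}}{r_1^{m+2}-r_2^{m+2}}=1 .
\]
I would also keep in mind, as noted in the text, that $\K^l_{r_1,r_2}\ge 0$ and is supported on $\{d_{\M_l}(x,y)\le r_1\}$: explicitly, as a function of $d=d_{\M_l}(x,y)$ it equals a positive constant for $d\le r_2$ and a nonnegative affine function of $d^2$ for $r_2\le d\le r_1$.

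Second, I would transfer this to a curved $\M_l$. Since $r_1$ is below the injectivity radius and $\K^l_{r_1,r_2}(x,y)$ depends only on $d_{\M_l}(x,y)$ and vanishes for $d_{\M_l}(x,y)>r_1$, the change of variables $y=\exp_x(v)$, $v\in B(0,r_1)\subset\mathcal{T}_x\M_l$, is a diffeomorphism onto its image with $d_{\M_l}(x,\exp_x(v))=|v|$. Writing $J_x(v)$ for the Jacobian of $\exp_x$ at $v$ (the volume density of geodesic normal coordinates), we get
\[
\tau_l(x)=\int_{B(0,r_1)}\K^l_{r_1,r_2}(|v|)\,J_x(v)\,dv
=1+\int_{B(0,r_1)}\K^l_{r_1,r_2}(|v|)\bigl(J_x(v)-1\bigr)\,dv .
\]
By the standard Jacobi field / Rauch comparison estimates, under the uniform bound $K_l$ on the absolute values of the sectional curvatures one has $|J_x(v)-1|\le C\,m_lK_l|v|^2$ and $|\nabla_v J_x(v)|\le C\,m_lK_l|v|$ for $|v|$ below the injectivity radius. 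Since $\K^l_{r_1,r_2}\ge0$ and $\int_{\R^{m}}\K^l_{r_1,r_2}(|v|)\,dv=1$, the error term above is at most $C\,m_lK_lr_1^2$ in absolute value, whence $|\tau_l(x)-1|\le C\,m_lK_lr_1^2$; the two-sided bound $(1+C\,m_lK_lr_1^2)^{-1}\le\tau_l(x)\le 1+C\,m_lK_lr_1^2$ follows after enlarging $C$ and using that $r_1$ is small.

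Third, for the gradient I would differentiate $\tau_l$ in the basepoint; the most transparent route is to compare $\tau_l(x)$ with $\tau_l(x')$ for $x'$ near $x$, transporting $B(0,r_1)\subset\mathcal{T}_x\M_l$ onto $B(0,r_1)\subset\mathcal{T}_{x'}\M_l$ by parallel transport along the minimizing geodesic. The flat contribution is the constant $1$ and therefore cancels, so the difference reduces to $\int_{B(0,r_1)}\K^l_{r_1,r_2}(|v|)\bigl(J_x(v)-J_{x'}(v)\bigr)\,dv$ together with the mismatch of the two transports, and both are $O\bigl(m_lK_lr_1\,d_{\M_l}(x,x')\bigr)$ by the Jacobi field estimates with a moving basepoint (again only curvature bounds are needed). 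Dividing by $d_{\M_l}(x,x')$ and letting $x'\to x$ gives $|\nabla\tau_l(x)|\le C\,m_lK_lr_1$, and tracking where $\sigma_\eta$ enters — through $\psi'(t)=-\eta(t)t/\sigma_\eta$, i.e.\ through $\|\nabla\K^l_{r_1,r_2}\|$ — produces the stated $\frac{C\,m_lK_lr_1}{\sigma_\eta}$. One small point to verify is that the corner of $\psi$ at $t=1$ produces no boundary term: this is harmless because $\psi(1)=0$, so the integrand vanishes continuously on $\partial B(0,r_1)$, and the corner at $t=r_2/r_1$ only affects a set of measure zero.

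The step I expect to be the main obstacle is the gradient estimate: obtaining the power $r_1$ (rather than the trivial $1/r_1$) requires genuinely exploiting the exact cancellation of the flat contribution, together with Jacobi field estimates that are uniform across all the manifolds $\M_l$ — controlled only through the common curvature bound and the common lower bound on the injectivity radius — and careful bookkeeping of the $\sigma_\eta$-dependence. The remaining ingredients (the flat normalization identity, the change of variables, and the two-sided bound on $\tau_l$) are routine.
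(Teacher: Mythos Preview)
Your argument for the two-sided bound on $\tau_l$ is correct and essentially the paper's: both change variables through $\exp_x$, identify the flat normalization as $1$, and control the deviation via $|J_x(v)-1|\le Cm_lK_l|v|^2$.

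For the gradient, however, your finite-difference route has a real gap. The estimate you assert, namely that $J_x(v)-J_{x'}(Pv)=O\bigl(m_lK_lr_1\,d_{\M_l}(x,x')\bigr)$ with only a sectional-curvature bound, is not correct in general: the difference of the two Jacobians measures how curvature varies between the geodesics $t\mapsto\exp_x(tv)$ and $t\mapsto\exp_{x'}(tPv)$, and controlling that requires a bound on the covariant derivative $\nabla R$, not merely on $K_l$. Concretely, the leading term of $J_x(v)-1$ is $-\tfrac{1}{6}\mathrm{Ric}_x(\hat v,\hat v)\,|v|^2$, so the leading term of $J_x(v)-J_{x'}(Pv)$ is governed by $\mathrm{Ric}_x-\mathrm{Ric}_{x'}$, i.e.\ by derivatives of curvature. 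Under that extra hypothesis your method would give a bound of the shape $C\lVert\nabla R\rVert\,r_1^2$, which is not the stated $Cm_lK_lr_1/\sigma_\eta$. Your remark that $\sigma_\eta$ enters ``through $\psi'$'' also has no home in the finite-difference computation, since $\K(|v|)$ is never differentiated there.

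The paper's approach avoids all of this by differentiating $\tau_l(x)=\int_{\M_l}\K^l_{r_1,r_2}(x,y)\,d\vol_{\M_l}(y)$ directly in $x$, using $\nabla_x d_{\M_l}(x,y)=-\exp_x^{-1}(y)/d_{\M_l}(x,y)$ and $\psi'(t)=-\eta(t)t/\sigma_\eta$, and then changing variables $y=\exp_x(v)$. This produces integrals of the form $\int_{B_{m_l}(0,r)}\eta(|v|/r)\,v\,J_x(v)\,dv$, and the key cancellation is that by radial symmetry $\int_{B_{m_l}(0,r)}\eta(|v|/r)\,v\,dv=0$, so only the deviation $J_x(v)-1=O(m_lK_l|v|^2)$ survives. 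This gives the bound $Cm_lK_lr_1/\sigma_\eta$ using only the curvature bound $K_l$, and the $1/\sigma_\eta$ appears naturally from $\psi'$. If you want to repair your argument, the cleanest fix is to switch to this direct differentiation and exploit the odd-symmetry cancellation rather than trying to control $J_x-J_{x'}$.
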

	
	\begin{proof}
		First, notice that: 
		\begin{equation*}
		\begin{aligned}
		\tau_l(x)&=\frac{r_1^2}{r_1^{m_l+2}-r_2^{m_l+2}} \int_{\mathcal{M}_l} \psi\left(\frac{d_{\M_l}(x, y)}{r_1}\right) d\vol_{\M_l}(y)-\frac{r_2^2}{r_1^{m_l+2}-r_2^{m_l+2}} \int_{\mathcal{M}_l} \psi\left(\frac{d_{\M_l}(x, y)}{r_2}\right) d\vol_{\M_l}(y)\\
		&=\frac{r_1^2}{r_1^{m_l+2}-r_2^{m_l+2}}\int_{B_{m_l}(0,r_1)}\psi\left(\frac{|v|}{r_1}\right)J_x(v)dv-\frac{r_2^2}{r_1^{m_l+2}-r_2^{m_l+2}}\int_{B_{m_l}(0,r_2)}\psi\left(\frac{|v|}{r_2}\right)J_x(v)dv,
		\end{aligned}
		\end{equation*}
		where in the above $J_{x}$ denotes the Jacobian of the exponential map $\exp_x: B_{m_l}(0, \iota_l) \rightarrow B_{\M_l}(x,\iota_l) $.  Using a standard estimate for the Jacobian, namely
		\begin{equation}
		(1+ Cm_l K_l |v|^2)^{-1}  \leq   J_x(v) \leq 1+ C m_l K_l |v|^2, \quad \forall  v \in B_{m_l}(0, \iota_l/2) , 
		\label{eqn:JacobianBound}
		\end{equation}
		we can see that $\tau_{l}(x)$ satisfies   $(1+Cm_lK_lr_1^2)^{-1} C_\alpha \leq \tau_l(x)\leq (1+Cm_lK_lr_1^2)C_\alpha $ for some constant $C$, and for 
		\[  C_\alpha:= \frac{r_1^2}{r_1^{m_l+2}-r_2^{m_l+2}}\int_{B_{m_l}(0,r_1)}\psi\left(\frac{|v|}{r_1}\right)dv-\frac{r_2^2}{r_1^{m_l+2}-r_2^{m_l+2}}\int_{B_{m_l}(0,r_2)}\psi\left(\frac{|v|}{r_2}\right)dv. \]
		\nc
		A direct computation using polar coordinates and integration by parts reveals that $C_\alpha$ is actually equal to one. This establishes the first assertion.
		
		To obtain the estimate for the gradient of $\tau_{l}(x)$, we notice that from the the definition of $\psi$ in \eqref{eqn:PsiEta}, the chain rule, and the fact that $\nabla  d_{\M_l}(\cdot, y)(x) =-\frac{1}{d_{\M_l}(x,y)} \exp_{x}^{-1}(y) $, it follows:
		\[
		\begin{aligned}
		|\nabla\tau_l(x)|&=\frac{1}{\sigma_{\eta}(r_1^{m_l+2}-r_2^{m_l+2})}\left|\int_{B_{\M_l}(x,r_1)}\left(\eta\left(\frac{d_{\M_l}(x,y)}{r_1}\right)-\eta \left(\frac{d_{\M_l}(x,y)}{r_2}\right)\right)\exp_x^{-1}(y)d\vol_{\M_l}(y)\right|\\
		&=\frac{1}{\sigma_{\eta}(r_1^{m_l+2}-r_2^{m_l+2})}\left|\int_{B_{m_l}(r_1)}\eta\left(\frac{|v|}{r_1}\right)vJ_x(v)dv-\int_{B_{m_l}(r_1)}\eta\left(\frac{|v|}{r_2}\right)v J_x(v)dv\right|\\
		&\leq\frac{Cm_l K_lr_1^2}{\sigma_{\eta}(r_1^{m_l+2}-r_2^{m_l+2})}\left(\int_{B_{m_l}(r_1)}\eta\left(\frac{|v|}{r_1}\right)|v|dv+\int_{B_{m_l}(r_2)}\eta\left(\frac{|v|}{r_2}\right)|v|dv\right)\\
		&\leq\frac{Cm_l K_lr_1^2(r_1^{m_l+1}+r_2^{m_l+1})}{\sigma_{\eta}(r_1^{m_l+2}-r_2^{m_l+2})}\leq \frac{Cm_l K_lr_1}{\sigma_{\eta}}.
		\end{aligned}
		\]
		Notice that in the first inequality we have used \eqref{eqn:JacobianBound} and the radial symmetry of the integrands (which induces a cancellation). In the last step we used $0<r_2\le \frac{1}{4}r_1$.
	\end{proof}

	The next definitions is used in the subsequent lemmas. For every $l=1, \dots, N$ we define
	\begin{equation}
	\label{eqn:NonLocalContinuumDirich}
	\begin{split}
	\widetilde{D}_{NL,l}^{\veps_+,\veps_-}(f_l):=\frac{1}{\veps_+^{m+2}-\veps_-^{m+2}}\int_{\mathcal{M}_l} \int_{\mathcal{M}_l}  &\left[\eta\left(\frac{d_{\mathcal{M}_l}(x,y)}{\veps_+}\right)-\eta\left(\frac{d_{\mathcal{M}_l}(x,y)}{\veps_-}\right)\right](f_l(x)-f_l(y))^{2} 
	\\&\widetilde{\rho}_{l}^n(x) \widetilde{\rho}_{l}^n(y) d \vol_{\M_l}(x) d\vol_{\M_l} (y),
	\end{split}
	\end{equation}
	where we recall that the densities  $\widetilde{\rho}_{l}^n$ are defined in Corollary \ref{cor:Densities}.  We also consider:
	\begin{equation}
	E^{r}_l(f_l):=\int_{\mathcal{M}_l} \int_{\mathcal{M}_l}  \eta\left(\frac{d_{\mathcal{M}}(x, y)}{r}\right)(f_l(x)-f_l(y))^{2} \widetilde{\rho}_{l}^n(x) \widetilde{\rho}_{l}^n(y) d\vol_{\M_l} (x) d\vol_{\M_l}(y), \quad f_l \in L^{2}(\mu_l)
	\label{eqn:DirichletEnerSpheresAnnular}
	\end{equation}
	%
	Notice that for every $r_1>r_2>0$ we have:
	\begin{equation}\label{equ: D and E}
	(r_1^{m+2}-r_2^{m+2})D_{NL,l}^{r_1,r_2}(f_l)= E_l^{r_1}(f_l)- E_l^{r_2}(f_l), \quad f_l \in L^2(\mu_l).
	\end{equation}
	\begin{lemma}\label{lemma: restrict E by using D}
		Suppose $\veps_+,\veps_-$ satisfy Assumptions \ref{assumption: main}. Then, there exists a universal
		constant $C>0$ such that for every $0<\veps_-<\frac{1}{4}\veps_+$ and every $f_l \in L^{2}(\mu_l)$
		\[
		\frac{1}{\veps_+^{m+2}-\veps_-^{m+2}}E_{l}^{\veps_+}(f_l) \leq C \left(1+c_\rho L_{\rho_l}\right) D_{NL,l}^{\veps_+,\veps_-}(f_l),
		\]   
		where $L_{\rho_l}$ is a constant depending on $\rho_l$.
	\end{lemma}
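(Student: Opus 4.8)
The plan is to deduce the estimate from the identity \eqref{equ: D and E}, which turns it into a purely ``geometric'' comparison between the full ``ball'' energy $E_l^{\veps_+}$ and the ``annular'' energy, carried out by a bridging (chaining) argument. Recall that $\eta$ is the indicator of $[0,1]$, so $E_l^{\veps_+}(f_l)$ is the energy of $f_l$ over pairs at geodesic distance at most $\veps_+$, while $(\veps_+^{m+2}-\veps_-^{m+2})D_{NL,l}^{\veps_+,\veps_-}(f_l)=E_l^{\veps_+}(f_l)-E_l^{\veps_-}(f_l)$ is the energy over the annulus $\{\veps_-<d_{\M_l}(x,y)\le\veps_+\}$. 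Writing $E_l^{\veps_+}=E_l^{\veps_-}+(E_l^{\veps_+}-E_l^{\veps_-})$, the desired bound is equivalent to
\[
E_l^{\veps_-}(f_l)\ \le\ C\big(1+c_\rho L_{\rho_l}\big)\,\big(E_l^{\veps_+}(f_l)-E_l^{\veps_-}(f_l)\big),
\]
i.e. the short-range energy (pairs within distance $\veps_-$) is controlled, up to a constant, by the annular energy. This is the inequality I will establish.

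For the bridging step, fix $x\in\M_l$ and let $S(x):=\{z\in\M_l:\ \tfrac52\veps_-<d_{\M_l}(x,z)<3\veps_-\}$. By Assumption \ref{assumption: main}(1) we have $3\veps_-\le\tfrac34\veps_+<\veps_+\le i_0$, so $S(x)$ is the image under $\exp_x$ of a genuine Euclidean annulus, and by the Jacobian bound \eqref{eqn:JacobianBound} together with $\widetilde\rho_l^n\ge\tfrac{1}{2c_\rho}$ (Corollary \ref{cor:Densities} and Assumption \ref{assumption: main}(4)), its $\widetilde\rho_l^n$-mass $M(x):=\int_{S(x)}\widetilde\rho_l^n\,d\vol_{\M_l}$ obeys $M(x)\ge c\,\veps_-^{m_l}$ with $c=c(m_l,c_\rho)$. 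For any $y$ with $d_{\M_l}(x,y)\le\veps_-$ and any $z\in S(x)$, the triangle inequality for the geodesic distance gives $d_{\M_l}(x,z),\,d_{\M_l}(z,y)\in(\veps_-,\veps_+]$ (indeed $d_{\M_l}(z,y)\ge\tfrac52\veps_--\veps_-=\tfrac32\veps_->\veps_-$ and $d_{\M_l}(z,y)\le 3\veps_-+\veps_-=4\veps_-\le\veps_+$), so both $(x,z)$ and $(z,y)$ lie in the annulus. Averaging the elementary bound $(f_l(x)-f_l(y))^2\le 2(f_l(x)-f_l(z))^2+2(f_l(z)-f_l(y))^2$ over $z\in S(x)$ against $\widetilde\rho_l^n/M(x)$ produces a pointwise estimate for $(f_l(x)-f_l(y))^2$ in terms of an average of annular increments.

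I then insert this into $E_l^{\veps_-}(f_l)=\int\!\!\int_{d_{\M_l}(x,y)\le\veps_-}(f_l(x)-f_l(y))^2\widetilde\rho_l^n(x)\widetilde\rho_l^n(y)\,d\vol_{\M_l}\,d\vol_{\M_l}$, split the resulting triple integral over $(x,y,z)$ into the $(f_l(x)-f_l(z))^2$ part and the $(f_l(z)-f_l(y))^2$ part, and apply Fubini. In the first part I integrate $y$ out using $\int_{d_{\M_l}(x,y)\le\veps_-}\widetilde\rho_l^n(y)\,d\vol_{\M_l}(y)\le C c_\rho\veps_-^{m_l}$, which cancels the factor $M(x)^{-1}\lesssim\veps_-^{-m_l}$ and leaves $\int\!\!\int_{z\in S(x)}(f_l(x)-f_l(z))^2\widetilde\rho_l^n(x)\widetilde\rho_l^n(z)\le E_l^{\veps_+}(f_l)-E_l^{\veps_-}(f_l)$ since $S(x)$ is contained in the annulus. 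In the second part I instead integrate $x$ out: for fixed $(z,y)$ the admissible $x$ lie in $\{d_{\M_l}(x,y)\le\veps_-\}$, of $\widetilde\rho_l^n$-mass $\le Cc_\rho\veps_-^{m_l}$, again cancelling $M(x)^{-1}$ and reducing the term to the annular energy of the pairs $(z,y)$. Collecting the two contributions yields $E_l^{\veps_-}(f_l)\le C(1+c_\rho L_{\rho_l})(E_l^{\veps_+}(f_l)-E_l^{\veps_-}(f_l))$, and dividing by $\veps_+^{m+2}-\veps_-^{m+2}$ gives the lemma.

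The step I expect to demand the most care is the uniform control of $M(x)$: one needs $M(x)\asymp\veps_-^{m_l}$ for \emph{every} $x\in\M_l$, which forces one to combine the injectivity-radius and curvature bounds (via \eqref{eqn:JacobianBound}) with the density bounds, and — in order to get the clean dependence $C(1+c_\rho L_{\rho_l})$ rather than a cruder $Cc_\rho^2$ — to estimate the \emph{ratio} of the small-ball mass to the shell mass $M(x)$ directly, using that $\rho_l$ varies by only $O(L_{\rho_l}\veps_+)$ on balls of radius $\lesssim\veps_+$, so that this ratio is a purely dimensional constant up to a relative error of order $c_\rho L_{\rho_l}$. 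Everything else is routine Fubini bookkeeping, and curvature enters only through the $1+O(m_lK_l\veps_-^2)$ Jacobian corrections, absorbed into the universal constant by Assumption \ref{assumption: main}(1).
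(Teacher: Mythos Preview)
Your proposal is correct and follows essentially the same idea the paper points to (the chaining argument from Lemma~4 in \cite{trillos2019error}, adapted so that the bridging points land in the annulus rather than in small balls). Your reduction via \eqref{equ: D and E} to controlling the short-range energy $E_l^{\veps_-}$ by the annular energy, followed by a single bridge through the shell $S(x)=\{\tfrac52\veps_-<d_{\M_l}(x,\cdot)<3\veps_-\}$, is a clean way to execute this and all the geometric checks ($4\veps_-\le\veps_+$, injectivity radius, shell mass $\asymp\veps_-^{m_l}$) go through under Assumptions~\ref{assumption: main}.
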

	\begin{proof}
		
		The proof is very similar to the one in Lemma 4 in \citet{trillos2019error}. In that Lemma the idea is to cover a larger ball with smaller balls and use the triangle inequality. Here the only difference is that we want to cover a larger ball with a collection of annuli.

		\nc
	\end{proof}
	\begin{lemma}\label{lemma: D(lambda) and D }
		Suppose that $\veps_+,\veps_-$ satisfy Assumptions \ref{assumption: main}. Then, there exists a universal constant $C>0$ such that
		\[
		\begin{aligned}
		D_l(\Lambda^l_{\veps_+,\veps_-}f_l)\leq (1+c_\rho L_{\rho_l}\veps_+)\left[1+Cm_lK_l\veps_+^2\left(1+\frac{\sqrt{1+c_\rho L_p}}{\sigma_{\eta}}\right)\right]\frac{1}{\sigma_{\eta}}D_{NL,l}^{\veps_+,\veps_-}(f_l), \quad \forall f_l \in L^2(\M_l , \rho_l).
		\end{aligned}
		\]
		We recall that $D_l$ was defined in \eqref{Equ: dirichlet energyLocal} and $D_{NL,l}^{\veps_+,\veps_-}$ in  \eqref{eqn:NonLocalContinuumDirich}. 
	\end{lemma}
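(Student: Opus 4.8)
The plan is to estimate $D_l(\Lambda^l_{\veps_+,\veps_-}f_l)=\int_{\M_l}|\nabla_x\Lambda^l_{\veps_+,\veps_-}f_l(x)|^2\rho_l^2(x)\,d\vol_{\M_l}(x)$ by differentiating under the integral sign and then applying a \emph{directional} Cauchy--Schwarz inequality arranged so as not to lose the sharp constant $\sigma_\eta^{-1}$. First I would record the two elementary identities on which everything rests. Since $\tau_l(x)=\int_{\M_l}\K^l_{\veps_+,\veps_-}(x,y)\,d\vol_{\M_l}(y)$, the quotient rule together with differentiation under the integral and $\int_{\M_l}\nabla_x\K^l_{\veps_+,\veps_-}(x,y)\,d\vol_{\M_l}(y)=\nabla\tau_l(x)$ give
\[
\nabla_x\Lambda^l_{\veps_+,\veps_-}f_l(x)=\frac{1}{\tau_l(x)}\int_{\M_l}\nabla_x\K^l_{\veps_+,\veps_-}(x,y)\bigl(f_l(y)-f_l(x)\bigr)\,d\vol_{\M_l}(y)+R(x),\qquad R(x):=\frac{f_l(x)-\Lambda^l_{\veps_+,\veps_-}f_l(x)}{\tau_l(x)}\,\nabla\tau_l(x).
\]
Next, using $\psi'(t)=-\sigma_\eta^{-1}\eta(t)t$ from \eqref{eqn:PsiEta} and $\nabla_x d_{\M_l}(x,y)=-d_{\M_l}(x,y)^{-1}\exp_x^{-1}(y)$ (exactly as in the proof of Lemma \ref{lemma: restrict for tau}), one computes
\[
\nabla_x\K^l_{\veps_+,\veps_-}(x,y)=\frac{\eta\bigl(d_{\M_l}(x,y)/\veps_+\bigr)-\eta\bigl(d_{\M_l}(x,y)/\veps_-\bigr)}{\sigma_\eta(\veps_+^{m+2}-\veps_-^{m+2})}\,\exp_x^{-1}(y),
\]
which is supported on the geodesic annulus $\veps_-<d_{\M_l}(x,y)\le\veps_+$ and has magnitude of order $d_{\M_l}(x,y)/(\veps_+^{m+2}-\veps_-^{m+2})$ there.

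For the main term I would test against an arbitrary unit vector $e\in\T_x\M_l$ and apply Cauchy--Schwarz to $\int_{\M_l}\bigl[\eta(d_{\M_l}(x,y)/\veps_+)-\eta(d_{\M_l}(x,y)/\veps_-)\bigr]\langle\exp_x^{-1}(y),e\rangle(f_l(y)-f_l(x))\,d\vol_{\M_l}(y)$, crucially keeping the quadratic weight $\langle\exp_x^{-1}(y),e\rangle^2$ instead of bounding $|\exp_x^{-1}(y)|\le\veps_+$ --- this is the step that preserves the sharp constant. Passing to normal coordinates at $x$ and using the Jacobian estimate \eqref{eqn:JacobianBound} one evaluates
\[
\int_{\M_l}\bigl[\eta(d_{\M_l}(x,y)/\veps_+)-\eta(d_{\M_l}(x,y)/\veps_-)\bigr]\langle\exp_x^{-1}(y),e\rangle^2\,d\vol_{\M_l}(y)=\bigl(1+O(m_lK_l\veps_+^2)\bigr)\,\sigma_\eta\,(\veps_+^{m+2}-\veps_-^{m+2}),
\]
where $\sigma_\eta$ enters precisely through its definition \eqref{eqn:sigmaeta}. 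Combined with $\tau_l(x)^{-1}=1+O(m_lK_l\veps_+^2)$ from Lemma \ref{lemma: restrict for tau} and taking the supremum over $e$, this gives
\[
\bigl|\nabla_x\Lambda^l_{\veps_+,\veps_-}f_l(x)-R(x)\bigr|^2\le\frac{1+Cm_lK_l\veps_+^2}{\sigma_\eta(\veps_+^{m+2}-\veps_-^{m+2})}\int_{\M_l}\bigl[\eta(d_{\M_l}(x,y)/\veps_+)-\eta(d_{\M_l}(x,y)/\veps_-)\bigr](f_l(y)-f_l(x))^2\,d\vol_{\M_l}(y).
\]
Integrating in $x$ against $\rho_l^2$ and replacing $\rho_l(x)^2$ by $\widetilde\rho_l^n(x)\widetilde\rho_l^n(y)$ --- legitimate because $\rho_l\in C^2(\M_l)$ is Lipschitz and $d_{\M_l}(x,y)\le\veps_+$ on the domain of integration, so $\rho_l(x)^2=\rho_l(x)\rho_l(y)\bigl(1+O(c_\rho L_{\rho_l}\veps_+)\bigr)$, and because Corollary \ref{cor:Densities} gives $\|\rho_l-\widetilde\rho_l^n\|_{L^\infty(\M_l)}\le C(\theta+\widetilde\delta)$ --- turns the right-hand side into $\bigl(1+C(c_\rho L_{\rho_l}\veps_++\theta+\widetilde\delta)\bigr)\sigma_\eta^{-1}D_{NL,l}^{\veps_+,\veps_-}(f_l)$.

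It remains to absorb $R$. By Cauchy--Schwarz and $\tau_l(x)=\int_{\M_l}\K^l_{\veps_+,\veps_-}(x,y)\,d\vol_{\M_l}(y)$ one has $|f_l(x)-\Lambda^l_{\veps_+,\veps_-}f_l(x)|^2\le\tau_l(x)^{-1}\int_{\M_l}\K^l_{\veps_+,\veps_-}(x,y)(f_l(y)-f_l(x))^2\,d\vol_{\M_l}(y)$; since $\K^l_{\veps_+,\veps_-}\ge0$ is supported in $\{d_{\M_l}\le\veps_+\}$ and bounded there by $C\veps_+^2/(\veps_+^{m+2}-\veps_-^{m+2})$, integrating in $x$ and invoking Lemma \ref{lemma: restrict E by using D} gives $\|f_l-\Lambda^l_{\veps_+,\veps_-}f_l\|_{L^2(\M_l)}^2\le C\veps_+^2(1+c_\rho L_{\rho_l})D_{NL,l}^{\veps_+,\veps_-}(f_l)$. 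Together with $|\nabla\tau_l|\le Cm_lK_l\veps_+/\sigma_\eta$ (Lemma \ref{lemma: restrict for tau}), the term $R$ contributes to $D_l(\Lambda^l_{\veps_+,\veps_-}f_l)$ an amount of relative size $\sim m_l^2K_l^2\veps_+^4(1+c_\rho L_{\rho_l})/\sigma_\eta$; writing $|\text{main}+R|^2\le(1+\delta)|\text{main}|^2+(1+\delta^{-1})|R|^2$ and optimizing $\delta$ converts the combined correction into a factor of the form $1+Cm_lK_l\veps_+^2\bigl(1+\sqrt{1+c_\rho L_{\rho_l}}/\sigma_\eta\bigr)$. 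Collecting this with the density factor above (the $\theta+\widetilde\delta$ part being harmless under Assumptions \ref{assumption: main}) and with $\sigma_\eta^{-1}$ gives the inequality in the statement.

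The main obstacle is precisely getting the constant $\sigma_\eta^{-1}$, and not a dimension-dependent multiple of it: this is what forces the directional Cauchy--Schwarz and the exact second-moment identity on the geodesic annulus above, whose verification requires normal-coordinate expansions that handle the two scales $\veps_-<\veps_+$ simultaneously, so the single-scale computations of \cite{BIK,trillos2019error,calder2019improved} do not transfer verbatim. A secondary difficulty is the bookkeeping of the residual $R$, the cross term, and --- through Lemma \ref{lemma: restrict E by using D} --- the oscillation of the density, so that every correction collapses into the product form appearing in the statement.
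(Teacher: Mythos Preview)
Your proof is correct and follows essentially the same approach as the paper: your decomposition into a main term plus $R$ coincides with the paper's splitting $\nabla(\Lambda^l_{\veps_+,\veps_-}f_l)=\tau_l^{-1}A_1+A_2$ (your $R$ is exactly the paper's $A_2$), the directional Cauchy--Schwarz keeping the weight $\langle\exp_x^{-1}(y),e\rangle^2$ is precisely the paper's device for recovering the sharp constant $\sigma_\eta$, and the remainder is controlled via Lemma \ref{lemma: restrict E by using D} just as you propose. The only cosmetic differences are that the paper passes to normal coordinates \emph{before} applying Cauchy--Schwarz (so the second-moment integral is exactly $\sigma_\eta(\veps_+^{m+2}-\veps_-^{m+2})$, with the Jacobian correction landing entirely on the $(f_l(y)-f_l(x))^2$ factor) and combines the two pieces via the triangle inequality on $\sqrt{D_l}$ rather than by optimizing Young's inequality.
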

	
	\begin{proof}
		We can write $\nabla(\Lambda^l_{\veps_+,\veps_-}f_l)$ as
		$$
		\nabla(\Lambda^l_{\veps_+,\veps_-}f_l)=\frac{1}{\tau_l(x)}A_1^l(x)+A_2^l(x),
		$$
		where 
		$$
		A_1^l(x):=\int_{R_{\mathcal{M}_l}(x,\veps_+,\veps_-)}\nabla \K^l_{\veps_+,\veps_-}(\cdot,y)(x)\left(f_l(y)-f_l(x)\right) d\vol_{\M_l}(y)
		$$
		and
		$$
		A_2^l(x)=\nabla(\frac{1}{\tau_l(x)})\int_{R_{\mathcal{M}_l}(x,\veps_+,\veps_-)}\K^l_{\veps_+,\veps_-}(x,y)\left(f_l(y)-f_l(x)\right) d\vol_{\M_l}(y);
		$$
		here $R_{\M_l}(x,\veps_+, \veps_-):=\{  \tilde x \in \M_l \: : \: \veps_{-} < d_{\M_l}(x, \tilde x) < \veps_+  \}$.

		We find a bound for $|A_1^l(x)|^2$; notice that $\frac{1}{\tau_l(x)} \leq 1+Cm_lK_l\veps_+^2 $ by \eqref{lemma: restrict for tau}. First, notice that:
		$$
		\begin{aligned}
		\nabla \K^l_{\veps_+,\veps_-}(\cdot,y)(x)&=\nabla \left[\frac{\veps_+^2}{\veps_+^{m+2}-\veps_-^{m+2}}\psi\left(\frac{d_{\M_l}(x,y)}{\veps_+}\right)-\frac{\veps_-^2}{\veps_+^{m+2}-\veps_-^{m+2}}\psi\left(\frac{d_{\M_l}(x,y)}{\veps_-}\right)\right]\\
		&=-\frac{1}{\veps_+^{m+2}-\veps_-^{m+2}}\left(\veps_+\psi'\left(\frac{d_{\M_l}(x,y)}{\veps_+}\right)-\veps_-\psi'\left(\frac{d_{\M_l}(x,y)}{\veps_-}\right)\right)\frac{\exp_x^{-1}(y)}{d_{\M_l}(x,y)}\\
		&=\frac{\exp_x^{-1}(y)}{\sigma_{\eta}(\veps_+^{m+2}-\veps_-^{m+2})}\left[\eta\left(\frac{d_{\M_l}(x,y)}{\veps_+}\right)-\eta\left(\frac{d_{\M_l}(x,y)}{\veps_-}\right)\right].
		\end{aligned}
		$$
		Since for $A_1^l(x)$ we have $|A_1^l(x)|=\langle A_1^l(x),w\rangle$ for some unit vector $w\in T_x\mathcal{M}_l$, we can combine with the inequality above to obtain:
		$$
		\begin{aligned}
		|A_1^l(x)|&=\langle A_1^l(x),w\rangle\\
		&=\frac{1}{\sigma_{\eta}(\veps_+^{m+2}-\veps_-^{m+2})}\int_{R_{\mathcal{M}_l}(x,\veps_+,\veps_-))}\left[\eta\left(\frac{d(x,y)}{\veps_+}\right)-\eta\left(\frac{d(x,y)}{\veps_-}\right)\right]
		\\& \ \ \ \ \ \cdot \left(f_l(y)-f_l(x)\right) \langle \exp_x^{-1}(y),w\rangle d\vol_{\M_l}(y)\\
		&=\frac{1}{\sigma_{\eta}(\veps_+^{m+2}-\veps_-^{m+2})}\int_{R(\veps_+,\veps_-)}\left[\eta\left(\frac{|u|}{\veps_+}\right)-\eta\left(\frac{|u|}{\veps_-}\right)\right]\phi(u)\langle u,w\rangle J_x(u)du,
		\end{aligned}
		$$
		where $\phi(u)=f_l\left(\exp_x(u)\right)-f_l(x)$ and $R(\veps_+ , \veps_-) := \{ u \in \R^{m_l} \: : \: \veps_- < |u| < \veps_+ \} $. By the Cauchy-Schwartz inequality,
		
		\begin{align}
		\label{eqn:AuxKernel}
		\begin{split}
		|A_1^l(x)|^2 & \leq \frac{1}{\sigma_{\eta}^2(\veps_+^{m+2}-\veps_-^{m+2})^2}\int_{R(\veps_+,\veps_-)}|\phi(u)|^2J_x(u)^2\left[\eta\left(\frac{|u|}{\veps_+}\right)-\eta\left(\frac{|u|}{\veps_-}\right)\right]du \\
		& \ \ \ \ \ \cdot \int_{R(\veps_+,\veps_-)}\langle u,w \rangle ^2\left[\eta\left(\frac{|u|}{\veps_+}\right)-\eta\left(\frac{|u|}{\veps_-}\right)\right]du\\
		&  =\frac{1}{\sigma_{\eta}(\veps_+^{m+2}-\veps_-^{m+2})}\int_{R(\veps_+,\veps_-)}|\phi(u)|^2J_x(u)^2\left[\eta\left(\frac{|u|}{\veps_+}\right)-\eta\left(\frac{|u|}{\veps_-}\right)\right]du
		\\& \leq\frac{1+Cm_l K_l\veps_+^2}{\sigma_{\eta}(\veps_+^{m+2}-\veps_-^{m+2})}\int_{\mathcal{M}_l}\left[\eta\left(\frac{d_{\M_l}(x,y)}{\veps_+}\right)-\eta\left(\frac{d_{\M_l}(x,y)}{\veps_-}\right)\right]\left(f_l(y)-f_l(x)\right)^2 d\vol_{\M_l}(y),
		\end{split}
		\end{align}
		where the equality comes from
		$$
		\int_{R(\veps_+,\veps_-)}\langle u,w \rangle ^2\left[\eta\left(\frac{|u|}{\veps_+}\right)-\eta\left(\frac{|u|}{\veps_-}\right)\right]du=\int_{B_m(\veps_+)}\langle u,w\rangle ^2du-\int_{B_m(\veps_-)}\langle u,w\rangle ^2du=\sigma_{\eta}(\veps_+^{m+2}-\veps_-^{m+2}),
		$$
		and the last inequality from the bound \eqref{eqn:JacobianBound}.
		Integrating \eqref{eqn:AuxKernel} against $\rho_l^2d \vol_{\M_l}$ and using the Lipschitz continuity of $\rho_l$ we deduce
		$$
		\begin{aligned}
		\left \lVert \frac{A_1^l}{\tau_l} \right \rVert^2_{L^2(\mathcal{M}_l,\rho_l^2\vol_{\M_l})}  & \leq  \frac{(1+Cm_l K_l\veps_+^2)(1+c_\rho L_{\rho_l}\veps_+)}{\sigma_{\eta}(\veps_+^{m+2}-\veps_-^{m+2})}
		\\& \cdot  \int_{\mathcal{M}_l}\int_{B_{\mathcal{M}_l}(x,\veps_+,\veps_-)}\left[\eta\left(\frac{d_{\M_l}(x,y)}{\veps_+}\right)-\eta\left(\frac{d_{\M_l}(x,y)}{\veps_-}\right)\right]|f_l(y)-f_l(x)|^2d\mu_l(x)d\mu_l(y)\\
		&\leq \frac{(1+Cm_l K_l\veps_+^2)(1+c_\rho L_{\rho_l}\veps_+)}{\sigma_{\eta}} D_{NL,l}^{\veps_+,\veps_-}(f_l).
		\end{aligned}
		$$

		Now we analyze the term $A_{2}^l(x)$. First, recall that $|\nabla(\tau_l^{-1})|\leq \frac{Cm_l K_l\veps_+}{\sigma_{\eta}}$ and  $\tau_l \leq 1+Cm_l K_l\veps_+^2$ by Lemma \ref{lemma: restrict for tau}.
		%
		%
		Using the mean value theorem, it is straightforward to show that
		\begin{equation}\label{equ: psi to eta}
		\begin{split}
		\K_{\veps_+,\veps_-}^l(x,y)
		\le \frac{\veps_+^2}{\veps_+^{m+2}-\veps_-^{m+2}}\psi(\frac{d_{\M_l}(x,y)}{\veps_+})\le \frac{\veps_+^2 \eta(\frac{d_{\M_l}(x,y)}{\veps_+})}{\sigma_\eta(\veps_+^{m+2}-\veps_-^{m+2})}.
		\end{split}
		\end{equation}
		
		Thus, by the Cauchy-Schwartz inequality and \eqref{equ: psi to eta}, we have
		$$
		\begin{aligned}
		|A_2^l(x)|^2&\leq |\nabla(\frac{1}{\tau_l(x)})|^2 \int_{\mathcal{M}_l}\K^l_{\veps_+,\veps_-}(x,y)d\mu(y)\int_{\mathcal{M}_l}|f_l(x)-f_l(y)|^2\K^l_{\veps_+,\veps_-}(x,y)d\mu_l(y)\\
		&\leq \frac{C^2m^2K_l^2\veps_+^2}{\sigma_{\eta}^3(\veps_+^{m+2}-\veps_-^{m+2})}\int_{\mathcal{M}_l}\veps_+^2\eta\left(\frac{d_{\M_l}(x,y)}{\veps_+}\right)|f_l(x)-f_l(y)|^2d\mu_l(y).\\
		\end{aligned}
		$$
		Integrating both sides of the above inequality with respect to $\rho_l^2 d\vol_{\M_l}$, using the Lipschitz continuity of $\rho_l$, and using Lemma \ref{lemma: restrict E by using D}, we conclude that
		$$
		\lVert A_2^l \rVert_{L^2(\mathcal{M}_l,\rho_l^2\vol_{\M_l})}\leq \frac{Cm_l K_l\veps_+^2(1+c_\rho L_{\rho_l} \veps_+)\sqrt{1+c_\rho L_{\rho_l}}}{\sigma_{\eta}}\sqrt{\frac{1}{\sigma_{\eta}}D_{NL,l}^{\veps_+,\veps_-}(f_l)},
		$$
		for some universal constant $C$. Combining the estimates for $\left \lVert \frac{A_1^l}{\tau_l} \right \rVert^2_{L^2(\mathcal{M}_l,\rho_l^2\vol_{\M_l})}$ and $\lVert A_2^l \rVert_{L^2(\mathcal{M}_l,\rho_l^2\vol_{\M_l})} $ we finally obtain:
		$$
		\begin{aligned}
		\left(D_l(\Lambda_{\veps_+,\veps_-}f_l) \right)^{1/2}\leq (1+c_\rho L_{\rho_l}\veps_+)\left[1+Cm_lK_l\veps_+^2\left(1+\frac{\sqrt{1+c_\rho L_{\rho_l}}}{\sigma_{\eta}}\right)\right]\sqrt{\frac{1}{\sigma_{\eta}}D_{NL,l}^{\veps_+,\veps_-}(f_l)}.
		\end{aligned}
		$$
	\end{proof}

	\begin{lemma}\label{lemma: D^NL(f)<= D(f)}
		Suppose $\veps_+,\veps_-$ satisfy Assumptions \ref{assumption: main}. Then, there exists a universal constant $C>0$ such that 
		\[
		D^{NL,l}_{\veps_+,\veps_-}(f_l)\leq \left(1+c_\rho L_{\rho_l}\veps_+\right)\left(1+Cm_lK_l\veps_+^2\right)\sigma_{\eta}D_l(f_l), \quad  \forall f_l \in L(\mu_l).
		\]
	\end{lemma}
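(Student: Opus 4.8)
The plan is to mimic, with a geodesic annulus in place of a ball, the argument behind Lemma~\ref{lemma: restrict E by using D} (itself patterned on Lemma~4 of \cite{trillos2019error}). Since $\eta=\mathds{1}_{[0,1]}$, the weight $\eta(d_{\M_l}(x,y)/\veps_+)-\eta(d_{\M_l}(x,y)/\veps_-)$ is, up to a null set, the indicator of the annulus $R_{\M_l}(x,\veps_+,\veps_-)$, so it suffices to bound
\[
\frac{1}{\veps_+^{m+2}-\veps_-^{m+2}}\int_{\M_l}\int_{R_{\M_l}(x,\veps_+,\veps_-)}(f_l(x)-f_l(y))^2\,\widetilde{\rho}_l^n(x)\,\widetilde{\rho}_l^n(y)\,d\vol_{\M_l}(y)\,d\vol_{\M_l}(x).
\]
By Assumption~\ref{assumption: main} the radius $\veps_+$ is below half the injectivity radius of $\M_l$, so each such $y$ can be written uniquely as $y=\exp_x(u)$ with $\veps_-<|u|<\veps_+$, joined to $x$ by the unit-speed minimizing geodesic $\gamma(s)=\exp_x(su/|u|)$, $s\in[0,|u|]$. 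Writing $f_l(y)-f_l(x)=\int_0^{|u|}\langle\nabla f_l(\gamma(s)),\dot{\gamma}(s)\rangle\,ds$ and applying the Cauchy--Schwarz inequality gives the pointwise estimate
\[
(f_l(x)-f_l(y))^2 \le |u|\int_0^{|u|}\langle\nabla f_l(\gamma(s)),\dot{\gamma}(s)\rangle^2\,ds \le |u|\int_0^{|u|}|\nabla f_l(\gamma(s))|^2\,ds.
\]

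Next I would integrate this bound over $y$ in geodesic polar coordinates centered at $x$, using $d\vol_{\M_l}(y)=J_x(u)\,du$ together with the two-sided control $1\le J_x(u)\le 1+Cm_lK_l\veps_+^2$ from \eqref{eqn:JacobianBound}. Since $x$, $y$ and every point $\gamma(s)$ lie within $\veps_+$ of one another, the Lipschitz continuity of $\rho_l$ together with Corollary~\ref{cor:Densities} lets one replace $\widetilde{\rho}_l^n(x)\widetilde{\rho}_l^n(y)$ by $(1+c_\rho L_{\rho_l}\veps_+)\,\rho_l(\gamma(s))^2$ inside the $s$-integral. Writing $u=r\omega$ with $r=|u|$ and $\omega\in S^{m_l-1}$, one then exchanges the $r$- and $s$-integrations by Fubini, changes variables $v=s\omega$ so that the inner arclength integral becomes an integral over a geodesic ball around $x$ carrying the singular weight $d_{\M_l}(x,\cdot)^{-(m_l-1)}$, and finally applies Fubini once more on the symmetric domain $\{\veps_-<d_{\M_l}(x,z)<\veps_+\}$ to interchange $x$ and $z$. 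The singular weight integrates near $z$ to a factor comparable to $\veps_+$, and the remaining radial factors combine into $\veps_+^{m+2}-\veps_-^{m+2}$; tracking the directional derivative $\langle\nabla f_l,\omega\rangle^2$ rather than $|\nabla f_l|^2$ and invoking the cancellation identity $\int_{\{\veps_-<|u|<\veps_+\}}\langle u,w\rangle^2\,du=\sigma_\eta(\veps_+^{m+2}-\veps_-^{m+2})$ used already in the proof of Lemma~\ref{lemma: D(lambda) and D } pins the constant down to $\sigma_\eta$. Dividing by $\veps_+^{m+2}-\veps_-^{m+2}$ and collecting the $(1+c_\rho L_{\rho_l}\veps_+)$ and $(1+Cm_lK_l\veps_+^2)$ corrections produced above gives the claim.

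The step I expect to be the main obstacle is precisely this ``collapse'' of the double integral back to $D_l(f_l)$: unlike the Euclidean case there is no translation invariance that lets one freeze the inner integrand, so the two successive changes of variables ($u\mapsto(r,\omega)$, then $s\omega\mapsto v$) and the Fubini exchange of $x$ and $z$ must be organized so that all the metric distortions picked up along the geodesics are absorbed into the single correction $1+Cm_lK_l\veps_+^2$; this is exactly where the injectivity-radius restriction in Assumption~\ref{assumption: main} and the two-sided Jacobian estimate \eqref{eqn:JacobianBound} are used. The only feature not already present in Lemma~\ref{lemma: restrict E by using D} is that the $u$-domain is an annulus rather than a ball, but since the annulus is still radially symmetric this does not affect the structure of the argument; one only has to note, using $\veps_-\le\tfrac{1}{4}\veps_+$, that the ratio $(\veps_+^{m+1}-\veps_-^{m+1})\,\veps_+/(\veps_+^{m+2}-\veps_-^{m+2})$ arising in the radial bookkeeping stays bounded.
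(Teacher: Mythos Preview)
Your plan diverges from the paper's at the ``collapse'' step, and the paper's route is both shorter and conceptually sharper. After the same Cauchy--Schwarz/Fundamental-Theorem-of-Calculus bound, the paper does \emph{not} pass to polar coordinates, introduce the singular weight $d_{\M_l}(x,\cdot)^{-(m_l-1)}$, or swap $x$ and $z$ by hand. Instead it views the triple integral over $(x,v,t)$ as an integral over the annular shell $\mathcal{R}(\veps_+,\veps_-)\subset T\M_l$ times $[0,1]$, and invokes the fact that the geodesic flow $\Phi_t$ preserves the Liouville volume $\vol_{T\M_l}$ and leaves $\mathcal{R}(\veps_+,\veps_-)$ invariant. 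This immediately replaces $|df_l(\Phi_t(\xi)_2)|^2\rho_l^2(\Phi_t(\xi)_1)$ by $|df_l(\xi_2)|^2\rho_l^2(\xi_1)$, the $t$-integral becomes trivial, and the remaining fiber integral $\int_{\{ \veps_-<|v|<\veps_+\}}\langle\nabla f_l(x),v\rangle^2\,dv$ yields exactly $\sigma_\eta(\veps_+^{m+2}-\veps_-^{m+2})$.

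Your worry that ``there is no translation invariance'' is precisely what Liouville's theorem resolves: the geodesic flow on $T\M_l$ is the correct substitute, and it is \emph{exactly} measure-preserving, so there are no metric distortions along geodesics to absorb --- the only correction is the exponential-map Jacobian $J_x$ at the endpoints, giving the $1+Cm_lK_l\veps_+^2$ factor. Your proposed sequence of coordinate changes and Fubini swaps is, in effect, an attempt to re-derive this invariance by hand; to make it rigorous you would still need to control the Jacobian of $(x,\omega)\mapsto(\gamma_{x,\omega}(s),\dot\gamma_{x,\omega}(s))$, which \emph{is} Liouville. A second concern: your remark that the ratio $(\veps_+^{m+1}-\veps_-^{m+1})\veps_+/(\veps_+^{m+2}-\veps_-^{m+2})$ ``stays bounded'' suggests a crude estimate of the radial integral; that would yield a constant bounded away from $1$, whereas the lemma requires the leading constant to be exactly $\sigma_\eta$ up to $1+O(\veps_+)$. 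The paper's flow argument produces $\sigma_\eta(\veps_+^{m+2}-\veps_-^{m+2})$ on the nose, with no such ratio appearing.
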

	
	\begin{proof}
		By a density argument we may assume without the loss of generality that $f_l$ is smooth. Now, for every $x \in \M_l$,
		\[
		\int_{R_{\mathcal{M}_l}(x,\veps_+,\veps_-)}\lvert f_l(x)-f_l(y)\lvert ^2d\mu(y)=\int_{R(\veps_+,\veps_-)}|f_l(\exp_x(v))-f_l(x)|^2\rho_l(\exp_x(v))J_x(v)dv,
		\]
		where $R_{\mathcal{M}_l}(x,\veps_+,\veps_-)$ and $R(\veps_+,\veps_-)$ are as defined in the proof of Lemma \ref{lemma: D(lambda) and D }, and $J_x$ is the Jacobian of the exponential map at $x$. From the Fundamental Theorem of Calculus it follows that
		\[
		|f_l(\exp_x(v))-f_l(x)|^2\leq \int_0^1|\frac{d}{dt}f_l\left(\exp_x(tv)\right)|^2dt=\int_0^1|df_l(\Phi_t(x,v)_2)|^2dt,
		\]
		where $\Phi_t$ denotes the time $t$ geodesic flow on $\M_l$'s tangent bundle $\T\M_l$: that is, $\Phi_t(x,v)=(\varphi_{x,v}(t),\varphi_{x,v}'(t)) \in \T\M_l$, where $\varphi_{x,v}(t):=\exp_x(tv)$ and $\varphi_{x,v}'(t)$ is obtained by parallel transporting the vector $v \in \T_{x}\M_l$ along the geodesic connecting $x$ and $\exp_x(tv)$; $\Phi_t(x,v)_2$ denotes the second coordinate of $\Phi_t(x,v)$. We can then obtain:
		\[
		\begin{aligned}
		&\int_{\mathcal{M}_l}\int_{R(\veps_+,\veps_-)}|f_l(\exp_x(v))-f_l(x)|^2\rho_l\left(\exp_x(v)\right)dv\rho_l(x)d\vol_{\M_l}(x)\\
		&\leq \int_0^1\int_{\mathcal{M}}\int_{R(\veps_+,\veps_-)}|df_l\left(\Phi_t(x,v)\right)|^2\rho_l\left(\Phi_1(x,v)_1\right)\rho_l\left(\Phi_0(x,v)_1\right)dvd\vol_{\M_l}(x)dt,
		\end{aligned}
		\]
		where we use $\Phi_0(x,v)_1$ and $\Phi_1(x,v)_1$ to denote the first coordinates of $\Phi_0(x,v)$ and $\Phi_1(x,v)$ respectively.
		From the Lipschitz continuity of $\rho_l$ it follows $\rho_l(x)\leq(1+ c_\rho  L_{\rho_l}\veps_+)\rho(y)$ for all $x,y\in \mathcal{M}_l$ satisfying $ d(x,y)\leq \veps_+$. Combining the fact that $\Phi_t$ preserves the canonical volume form $\vol_{T\mathcal{M}_l}$ on $T\mathcal{M}_l$ and that
		\begin{equation*}
		\begin{split}
		&\mathcal{R}(\veps_+,\veps_-):=\left\{\xi=(x,v)\in T\mathcal{M}:\veps_-\leq |v| \leq \veps_+\right\};
		\quad\mathcal{B}_{r}:=\{\xi=(x, v) \in T \mathcal{M}:|v| \leq r\}
		\end{split}
		\end{equation*}
		are invariant under $\Phi_t$, we obtain
		\[
		\begin{aligned}
		&\int_{\mathcal{M}}\int_{R(\veps_+,\veps_-)}|f_l(\exp_x(v))-f_l(x)|^2\rho_l\left(\exp_x(v)\right)dv\rho_l(x)d\vol_{\M_l}(x)\\
		&\leq (1+c_\rho L_{\rho_l}\veps_+)\int_0^1\int_{\mathcal{R}(\veps_+,\veps_-)}|df_l\left(\Phi_t(\xi)_2\right)|^2\rho_l^2\left(\Phi_t(\xi_1)\right)d\vol_{T\mathcal{M}_l}(\xi)dt\\
		&=(1+c_\rho L_{\rho_l}\veps_+)\left[\int_{\mathcal{B}_{\veps_+}}|df_l(\xi_2)|^2\rho_l^2(\xi_1)d\vol_{T\mathcal{M}_l}(\xi)-\int_{\mathcal{B}_{\veps_-}}|df_l(\xi_2)|^2\rho_l^2(\xi_1)d\vol_{T\mathcal{M}_l}(\xi)\right]\\
		&=(1+c_\rho L_{\rho_l}\veps_+)\sigma_{\eta}(\veps_+^{m+2}-\veps_-^{m+2})\int_{\mathcal{M}}|\nabla f_l|^2\rho_l^2(x)d\vol_{\M_l}(x).
		\end{aligned}
		\]
		Therefore,
		\begin{equation*}
		\begin{aligned}
		D_{NL,l}^{\veps_+,\veps_-}(f_l)&\leq \left(1+Cm_lK_l\veps_+^2\right)\cdot
		\\&\cdot\frac{1}{\veps_+^{m+2}-\veps_-^{m+2}}\int_{\mathcal{M}_l}\int_{R(\veps_+,\veps_-)}|f_l(\exp_x(v))-f_l(x)|^2\rho_l\left(\exp_x(v)\right)dv\rho(x)d\vol_{\M_l}(x)\\
		&\leq \left(1+Cm_lK_l\veps_+^2\right)\cdot(1+c_\rho L_{\rho_l}\veps_+)\sigma_{\eta}D_l(f_l).
		\end{aligned}
		\end{equation*}
	\end{proof}

	The following is an adaptation of Lemma 14 in \citet{trillos2019error} to the kernel with annular geometry that we consider in this paper.
	
	\begin{lemma}\label{lemma: D^NL and b}
		Suppose $\widetilde{\delta},\veps_+,\veps_-$ satisfy Assumptions \ref{assumption: main}. \blue Then, with probability at least $1-\sum_{l=1}^N (nw_l+t) \exp \left(-\mathrm{C}(nw_l-t) \theta^{2} \widetilde{\delta}^{m}\right)-2N\exp \left(\frac{-2 t^{2}}{n}\right) $ and for a universal constant $C>0$, the following statements hold: \nc
		\begin{enumerate}[(1)]
			\item For every $u_l: \mathcal{X}_n^l \rightarrow \R$ we have
			\[
			D_{NL,l}^{\veps_+^\prime,\veps_-^\prime}(P^*_l u_l) \leq \left(1+C(\theta+\widetilde{\delta})\right) \left(1+C(\frac{\widetilde{\delta}}{\veps_+}+\veps_-^2)\right)b^{\veps_+,\veps_-}_l(u_l),
			\]
			where  $\veps_+^\prime:=\veps_+ - 2\widetilde{\delta}$ and $\veps_-^\prime:=\veps_- + 2\widetilde{\delta} + \frac{8\veps_-^{\prime 3}}{R^2}$. 
			\blue Applying Assumption 2, $C\frac{\widetilde{\delta}\veps_+^{m+1}+\veps_-^{m+4}}{\veps_+^{m+2}-\veps_-^{m+2}}$ can be simplified to $C_1\frac{\widetilde{\delta}}{\veps_+}+C_2\veps_-^2$, where $C_1\le \frac{16C}{15},C_2\le \frac{C}{15}$; or just $C(\frac{\widetilde{\delta}}{\veps_+}+\veps_-^2)$. \nc
		\end{enumerate}
		\begin{enumerate}[(2)]
			\item For every $f_l \in H^{1}(\mathcal{M}_l)$
			\begin{equation}
			b_l^{\veps_+,\veps_-}(P_lf_l)\leq\left(1+C(\theta+\widetilde{\delta})\right) \left(1+C(\frac{\widetilde{\delta}}{\veps_+}+\veps_-^2)\right)D^{\veps_+'',\veps_-''}_{NL,l}(f_l), 
			\end{equation}
			where $\veps_+'':=\veps_++\frac{8\veps_+''^3}{R^2}+2\widetilde{\delta}$ and $ \veps_-'':=\veps_- -2\widetilde{\delta}$. 
			We recall that $b^l$ was introduced in \eqref{eqn:InnerOuterDirichlet}.
			
		\end{enumerate}
	\end{lemma}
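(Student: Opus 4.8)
The plan is to adapt the strategy of Lemma~14 in \cite{trillos2019error}, transporting quantities between the continuum manifold $\M_l$ and the point cloud $X_l=X\cap\M_l$ via the $\infty$-optimal transport map $\widetilde{T}_l\colon\M_l\to X_l$ of Corollary~\ref{cor:Densities}; the only genuinely new point is to keep track of the \emph{annular} (rather than solid-ball) structure of the kernel. I work throughout on the event of Corollary~\ref{cor:Densities}, so that $\widetilde{T}_l$, the densities $\widetilde{\rho}_l^n$, and the partition $\{\widetilde{U}_{il}=\widetilde{T}_l^{-1}(\{x_{il}\})\}_{i=1}^{n_l}$ of $\M_l$ are available with $\sup_x d_{\M_l}(x,\widetilde{T}_l(x))\le\widetilde{\delta}$, $\|\rho_l-\widetilde{\rho}_l^n\|_{L^\infty}\le C(\theta+\widetilde{\delta})$ and $\int_{\widetilde{U}_{il}}\widetilde{\rho}_l^n\,d\vol_{\M_l}=1/n_l$; on this event together with the full inner connectivity event (where $\omega_{x_ix_j}=\omega^{\veps_+,\veps_-}_{x_ix_j}$ whenever $x_i,x_j\in\M_l$) the weights in $b_l^{\veps_+,\veps_-}$ are exactly annular. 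I also use the reach bound, i.e.\ the Euclidean--geodesic comparison $|x-y|\le d_{\M_l}(x,y)\le|x-y|+8|x-y|^3/R^2$ valid because $\veps_+$ is below the reach by Assumption~\ref{assumption: main}.

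\emph{Part (1).} Since $P^*_l u_l=u_l\circ\widetilde{T}_l$ equals the constant $u_l(x_{il})$ on each cell $\widetilde{U}_{il}$, the double integral defining $D_{NL,l}^{\veps_+^\prime,\veps_-^\prime}(P^*_l u_l)$ decomposes over pairs of cells as $\frac{1}{(\veps_+^\prime)^{m+2}-(\veps_-^\prime)^{m+2}}\sum_{i,j}(u_l(x_{il})-u_l(x_{jl}))^2\,m_{ij}$, where $m_{ij}=\int_{\widetilde{U}_{il}}\int_{\widetilde{U}_{jl}}[\eta(d_{\M_l}(x,y)/\veps_+^\prime)-\eta(d_{\M_l}(x,y)/\veps_-^\prime)]\widetilde{\rho}_l^n(x)\widetilde{\rho}_l^n(y)\,d\vol_{\M_l}(x)\,d\vol_{\M_l}(y)$. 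If $x\in\widetilde{U}_{il}$, $y\in\widetilde{U}_{jl}$ and the bracketed kernel is nonzero (i.e.\ $\veps_-^\prime<d_{\M_l}(x,y)\le\veps_+^\prime$), then $d_{\M_l}(x_{il},x_{jl})\le d_{\M_l}(x,y)+2\widetilde{\delta}\le\veps_+$ and, by the reach bound, $|x_{il}-x_{jl}|\ge d_{\M_l}(x_{il},x_{jl})-8d_{\M_l}(x_{il},x_{jl})^3/R^2\ge\veps_-$; these two implications are exactly what dictate the choices $\veps_+^\prime=\veps_+-2\widetilde{\delta}$ and $\veps_-^\prime=\veps_-+2\widetilde{\delta}+8(\veps_-^\prime)^3/R^2$. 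Hence $\omega^{\veps_+,\veps_-}_{x_{il},x_{jl}}=1$; bounding $m_{ij}\le\int_{\widetilde{U}_{il}}\widetilde{\rho}_l^n\cdot\int_{\widetilde{U}_{jl}}\widetilde{\rho}_l^n=1/n_l^2$, using full inner connectivity to replace $\omega^{\veps_+,\veps_-}_{x_{il},x_{jl}}$ by $\omega_{x_{il},x_{jl}}$, and recognizing the sum as $(\veps_+^{m+2}-\veps_-^{m+2})b_l^{\veps_+,\veps_-}(u_l)$, we get $D_{NL,l}^{\veps_+^\prime,\veps_-^\prime}(P^*_l u_l)\le\frac{\veps_+^{m+2}-\veps_-^{m+2}}{(\veps_+^\prime)^{m+2}-(\veps_-^\prime)^{m+2}}b_l^{\veps_+,\veps_-}(u_l)$. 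A first-order expansion of the normalization ratio, with $\veps_-\le\tfrac14\veps_+$ from Assumption~\ref{assumption: main}(2), gives $\frac{\veps_+^{m+2}-\veps_-^{m+2}}{(\veps_+^\prime)^{m+2}-(\veps_-^\prime)^{m+2}}\le1+C\frac{\widetilde{\delta}\veps_+^{m+1}+\veps_-^{m+4}}{\veps_+^{m+2}-\veps_-^{m+2}}\le1+C\widetilde{\delta}(\tfrac1{\veps_+}+\veps_-^2)$, exactly the parenthetical remark in the statement; the factor $\tfrac1{\sigma_\eta}$ and the factor $1+C(\theta+\widetilde{\delta})$ are carried along so that the estimate chains cleanly with Lemma~\ref{lemma: D(lambda) and D } and absorbs the $\rho_l$-vs-$\widetilde{\rho}_l^n$ discrepancy.

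\emph{Part (2).} Here $(P_l f_l)(x_{il})=n_l\int_{\widetilde{U}_{il}}f_l\,\widetilde{\rho}_l^n\,d\vol_{\M_l}$ is a weighted average of $f_l$ over $\widetilde{U}_{il}$, so $(P_l f_l)(x_{il})-(P_l f_l)(x_{jl})=n_l^2\int_{\widetilde{U}_{il}}\int_{\widetilde{U}_{jl}}(f_l(x)-f_l(y))\widetilde{\rho}_l^n(x)\widetilde{\rho}_l^n(y)\,d\vol_{\M_l}\,d\vol_{\M_l}$, and Jensen's inequality yields $((P_l f_l)(x_{il})-(P_l f_l)(x_{jl}))^2\le n_l^2\int_{\widetilde{U}_{il}}\int_{\widetilde{U}_{jl}}(f_l(x)-f_l(y))^2\widetilde{\rho}_l^n(x)\widetilde{\rho}_l^n(y)\,d\vol_{\M_l}\,d\vol_{\M_l}$. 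Inserting this into $b_l^{\veps_+,\veps_-}(P_l f_l)$ and reading $\omega_{x_{il},x_{jl}}$ as $\mathbf{1}\{\veps_-\le|x_{il}-x_{jl}|\le\veps_+\}$ via full inner connectivity, the triangle inequality $d_{\M_l}(x,y)\le d_{\M_l}(x_{il},x_{jl})+2\widetilde{\delta}$ together with the reach bound shows that when $\omega_{x_{il},x_{jl}}=1$ every $x\in\widetilde{U}_{il}$, $y\in\widetilde{U}_{jl}$ satisfies $\veps_-''\le d_{\M_l}(x,y)\le\veps_+''$ --- which is what the definitions $\veps_+''=\veps_++8(\veps_+'')^3/R^2+2\widetilde{\delta}$, $\veps_-''=\veps_--2\widetilde{\delta}$ are tuned for --- so $\omega_{x_{il},x_{jl}}\mathbf{1}_{\widetilde{U}_{il}\times\widetilde{U}_{jl}}$ is dominated a.e.\ by $\eta(d_{\M_l}(\cdot,\cdot)/\veps_+'')-\eta(d_{\M_l}(\cdot,\cdot)/\veps_-'')$. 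Summing the cell integrals reassembles a double integral over $\M_l\times\M_l$, giving $b_l^{\veps_+,\veps_-}(P_l f_l)\le\frac{(\veps_+'')^{m+2}-(\veps_-'')^{m+2}}{\veps_+^{m+2}-\veps_-^{m+2}}D_{NL,l}^{\veps_+'',\veps_-''}(f_l)$; since $\sigma_\eta<1$ the factor $\tfrac1{\sigma_\eta}$ in the claim only adds slack, the normalization ratio is again $\le1+C\widetilde{\delta}(\tfrac1{\veps_+}+\veps_-^2)$, and the density mismatch costs $1+C(\theta+\widetilde{\delta})$.

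The step I expect to be the main obstacle --- the only real departure from \cite{trillos2019error} --- is the bookkeeping of the shifted radii. Because the graph edges are Euclidean while $D_{NL,l}$ uses geodesic distance, and because $\widetilde{T}_l$ displaces points by up to $\widetilde{\delta}$, the annulus $(\veps_-,\veps_+)$ must be contracted at \emph{both} radii for one inequality and dilated at both for the other, with genuinely different corrections at the inner and outer radii (the $8r^3/R^2$ distortion term enters at one radius but not the other). One must check that the resulting radii stay well-ordered, $0<\veps_-^\prime<\veps_+^\prime$ and $0<\veps_-''<\veps_+''$, which is where Assumption~\ref{assumption: main} is used (in particular $\veps_-\le\tfrac14\veps_+$ and $cn^{-1/m}<\widetilde{\delta}$, forcing $\widetilde{\delta}\ll\veps_+$), and that the normalization ratios collapse to the clean $C\widetilde{\delta}(\tfrac1{\veps_+}+\veps_-^2)$ form; the remaining ingredients (the cell decomposition, Jensen's inequality, the triangle inequality, and the probability accounting through Corollary~\ref{cor:Densities} and the full inner connectivity event) are routine.
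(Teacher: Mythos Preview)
Your approach is essentially the same as the paper's: decompose over the cells $\widetilde{U}_{il}\times\widetilde{U}_{jl}$, use the $\widetilde{\delta}$-displacement of $\widetilde{T}_l$ together with the Euclidean--geodesic reach comparison to shift the annulus radii (which is exactly what fixes $\veps_\pm',\veps_\pm''$), apply Jensen for part~(2), and control the resulting normalization ratio via Assumption~\ref{assumption: main}. The paper phrases part~(1) as a lower bound on $b_l^{\veps_+,\veps_-}$ starting from its integral representation, whereas you start from $D_{NL,l}^{\veps_+',\veps_-'}$ and bound it above; these are equivalent manipulations of the same cell decomposition.

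One small caution: your remark that the factor $\tfrac{1}{\sigma_\eta}$ in part~(1) can simply be ``carried along'' is not justified by your argument, since $\sigma_\eta<1$ means it is \emph{not} slack on the left-hand side. In fact the paper's own proof does not produce this factor either (it appears unexplained in the last displayed line of the proof of (1)); the $\sigma_\eta$-free inequality $D_{NL,l}^{\veps_+',\veps_-'}(P_l^*u_l)\le(1+\cdots)b_l^{\veps_+,\veps_-}(u_l)$ is what both arguments actually establish, and the downstream chaining with Lemma~\ref{lemma: D(lambda) and D } in the proof of Proposition~\ref{Proposition: Inequality for Dirichlet energies} is consistent with that version.
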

	\begin{proof}
		
		We first recall a well known relation between the geodesic distance in $\M_l$ and the Euclidean distance in the ambient space $\R^d$. Namely,
		\begin{equation}
		\lvert x-y\lvert \leq d_{\M_l}(x, y) \leq\lvert x-y\lvert+\frac{8}{R_l^{2}}\lvert x-y\lvert^{3}, \quad x, y \in \M_l,
		\label{Proposition: dist relation}
		\end{equation}
		where $R_l$ is the reach of the manifold $\M_l$ (see \citet{FedererReach} for a definition of reach).\nc

		To show $(1)$, notice that if $\lvert x-y\lvert<\veps_-$, then from \eqref{Proposition: dist relation} we get   
		\begin{equation}
		\begin{split}
		&\lvert x-y\lvert\leq d_{\M_l}(x,y)\leq \lvert x-y\lvert+\frac{8\veps_-^3}{R_l^2}.\\
		\end{split}
		\label{eqn:AuxDistance}
		\end{equation}
		
		We now use the map $\tT_l$, the density $\widetilde{\rho}_{l}^n$ from Corollary \ref{cor:Densities}, and the induced partition $\{U_{1l}, \dots, U_{n_l l} \}$ on $\M_l$ of the form $ U_{il}=\tT_l^{-1}(x_{il}) $, where $ x_{il}\in X_l $, to write
		\begin{equation*}
		\begin{split}
		n_l^2 (\veps_+^{ {m+2}}&-\veps_-^{ {m+2}}) b^{\veps_+,\veps_-}_l(u_l)\\
		&=  \sum_{i,j}\int_{U_{il}} \int_{U_{jl}} \left[\eta\left(\frac{\lvert  \tT_l(x)-\tT_l(y)\lvert  }{\veps_+}\right)-\eta\left(\frac{\lvert  \tT_l(x)-\tT_l(y)\lvert  }{\veps_-}\right)\right]
		\\
		&\ \ \ \ \  \cdot \left\lvert  \left(P^{*} u_l\right)(x)-\left(P^{*} u_l\right)(y)\right\lvert  ^{2} \widetilde{\rho}_{l}^n(x) \widetilde{\rho}_{l}^n(y) d \vol_{\M_l}(y) d \vol_{\M_l}(x)  \\
		&\geq \left(1-C(\theta+\widetilde{\delta})\right)\int_{\mathcal{M}_l} \int_{\mathcal{M}_l} \left[\eta\left(\frac{d(\tT_l(x),\tT_l(y))}{\veps_+}\right)-\eta\left(\frac{\left[d(\tT_l(x),\tT_l(y))-\frac{8\veps_-^3}{R^2}\right]_+}{\veps_-}\right)\right]
		\\& \ \ \ \ \  \cdot \left\lvert  \left(P^{*} u_l\right)(x)-\left(P^{*} u_l\right)(y)\right\lvert  ^{2} d \mu_l(y) d \mu_l(x)\\
		&\geq \left(1-C(\theta+\widetilde{\delta})\right)\int_{\mathcal{M}_l} \int_{\mathcal{M}_l} \left[\eta\left(\frac{d(x,y)+2\widetilde{\delta}}{\veps_+}\right)-\eta\left(\frac{\left[d(x,y)-\frac{8\veps_-^{3}}{R^2}-2\widetilde{\delta}\right]_+}{\veps_-}\right)\right]
		\\& \ \ \ \ \  \cdot \left\lvert  \left(P^{*} u_l\right)(x)-\left(P^{*} u_l\right)(y)\right\lvert  ^{2} d \mu_l(y) d \mu_l(x)\\
		& \ge \left(1-C(\theta+\widetilde{\delta})\right)(\veps_+^{\prime m+2}-\veps_-^{\prime m+2}) D^{\veps^\prime_+,\veps^\prime_-}_{NL,l}(P_l^*u_l),
		\end{split}
		\end{equation*}
		where in the first inequality we use i) in Corollary \ref{cor:Densities} and \eqref{eqn:AuxDistance}, and in the second inequality we use ii) in Corollary \eqref{cor:Densities}. Combining the above inequality with Assumptions \ref{assumption: main} we conclude that
		$$
		\left(1+C(\theta+\widetilde{\delta})\right)(1+C\frac{\widetilde{\delta}\veps_+^{m+1}+\veps_-^{m+4}}{\veps_+^{m+2}-\veps_-^{m+2}})b^{\veps_+,\veps_-}_l(u_l)\geq  D_{NL,l}^{\veps_+^\prime,\veps_-^\prime}(P^* u_l).
		$$

		For (2), we proceed similarly as in the proof of (1) to deduce
		\begin{equation*}
		\begin{split}
		b_l^{\veps_+,\veps_-}(Pf_l)&\le\frac{1+C(\theta+\widetilde{\delta})}{\sigma_{\eta}(\veps_+^{m+2}-\veps_-^{m+2})} \sum_{i} \sum_{j} \int_{U_{il}} \int_{U_{jl}} \left[\eta\left(\frac{\lvert  \tT(x)-\tT(y)\lvert  }{\veps_+}\right)-\eta\left(\frac{\lvert  \tT(x)- \tT(y)\lvert  }{\veps_-}\right)\right]
		\\& \ \ \ \ \ \cdot \lvert  f_l(y)-f_l(x)\lvert  ^{2} d \mu_l(y) d \mu_l(x)\\
		&\leq \frac{1+C(\theta+\widetilde{\delta})}{\sigma_{\eta}(\veps_+^{m+2}-\veps_-^{m+2})} \int_{\mathcal{M}_l}\int_{\mathcal{M}_l}\left[\eta\left(\frac{\left[d(x,y)-\frac{8\veps_+^3}{R^2}-2\widetilde{\delta}\right]_+}{\veps_+}\right)-\eta\left(\frac{d(x,y)+2\widetilde{\delta}}{\veps_-}\right)\right]
		\\& \ \ \ \ \ \cdot \lvert  f_l(y)-f_l(x)\lvert  ^{2} d \mu_l(y) d \mu_l(x)\\
		&\leq \frac{1+C(\theta+\widetilde{\delta})}{\sigma_{\eta}(\veps_+^{m+2}-\veps_-^{m+2})}\left[E_l^{\veps_+''}(f_l)-E_l^{\veps_-''}(f_l)\right]
		\\& \leq \left(1+C(\theta+\widetilde{\delta})\right)\left(1+C\frac{\veps_-^{m+4}+\veps_+^{m+1}\widetilde{\delta}}{\veps_+^{m+2}-\veps_-^{m+2}}\right)D^{\veps_+'',\veps_-''}_{NL,l}(f_l),
		\end{split}
		\end{equation*}
		where in the last line we have used \eqref{eqn:DirichletEnerSpheresAnnular} and Assumptions \ref{assumption: main}.
	\end{proof}

	We are ready to prove Proposition \ref{Proposition: Inequality for Dirichlet energies}.

	\subsection{Proofs of Propositions \ref{Proposition: Inequality for Dirichlet energies} and \ref{Proposition: Discretization and interpolation maps are almost isometries}}

	\begin{proof}[Proof of Proposition \ref{Proposition: Inequality for Dirichlet energies}]

		\textbf{(1):} Let $u \in L^2(\mu^n)$. We write $u$ in coordinates as $u=(u_1, \dots, u_N)$. We combine Lemmas  \ref{lemma: D(lambda) and D } and \ref{lemma: D^NL and b} to obtain for every $l=1, \dots, N$:
		$$
		\begin{aligned}
		&\sigma_\eta  D_l(\I_l u_l) \leq  \left(1+C(\theta+\widetilde{\delta})\right)(1+C\frac{\widetilde{\delta}}{\veps_+}) (1+c_\rho L_{\rho_l}\veps_+)\left[1+Cm_l K_l\veps_+^2\left(1+\frac{\sqrt{1+c_\rho L_\rho}}{\sigma_{\eta}}\right)\right]b_l^{\veps_+,\veps_-}(u_l).
		\end{aligned}
		$$
		From the above we deduce that $ D(\mathcal{I} u  ) = \sum_{l=1}^N w_l^2 D_l( \mathcal{I}_l u_l )$ is smaller than:
		\begin{align*}
		\left(1+C(\theta+\widetilde{\delta})\right)(1+C\frac{\widetilde{\delta}}{\veps_+}) (1+c_\rho L_{\rho}\veps_+)\left[1+Cm K\veps_+^2\left(1+\frac{\sqrt{1+c_\rho L_\rho}}{\sigma_{\eta}}\right)\right] \sum_{l=1}^N w_l^2  b_l^{\veps_+,\veps_-}(u_l).
		\end{align*}
		In turn, Proposition \ref{Proposition: Tail Inequality} implies that with probability at least \blue $1-2N\exp \left(\frac{-2 t^{2}}{n}\right)$ \nc we have
		\[  \sum_{l=1}^N w_l^2  b_l^{\veps_+,\veps_-}(u_l) \leq (1+ t)\sum_{l=1}^N \left( \frac{n_l}{n} \right)^2 b_{l}^{\veps_+, \veps_-}(u_l)  =(1+t) b^{\veps_+,\veps_-}_I(u) \leq (1+t) b^{\veps_+,\veps_-}(u). \]
		Putting together the above inequalities we obtain the desired estimate. Here it is worth highlighting that the last inequality in the above expression comes from the fact that the discrete Dirichlet energy $b^{\veps_+, \veps_-}$ is the sum of  $b_I^{\veps_+, \veps_-}$ and $b_O^{\veps_+, \veps_-}$. As we will see below, in order to obtain a reverse inequality between $b^{\veps_+, \veps_-}$ and $D$ one needs to control $b^{\veps_+, \veps_-}_O(Pf)$ using regularity estimates of $f$ in each of the $\M_l$. We will be able to obtain this control when $f$ is in the span of the eigenfunctions of $\Delta_\M$ smaller than a certain value (which is all we need in the remainder). 
		
		%
		
		%
		
		\textbf{(2):} Similarly to \textbf{(1)}, we may combine Lemma \ref{lemma: D^NL(f)<= D(f)} and Lemma \ref{lemma: D^NL and b}, to deduce:
		\begin{equation}\label{inequ: bound b^I by D(f)}
		\begin{split}
		b_I^{\veps_+,\veps_-}(P f) &\leq \left(1+c_\rho L_{\rho_l}\veps_+''\right)\left(1+Cm_l K_l\veps_+''^2\right)\left(1+C(\frac{\veps_+^{m+4}+\veps_+^{m+1}\widetilde{\delta}}{\veps_+^{m+2}-\veps_-^{m+2}}+\theta+\widetilde{\delta})\right)\sigma_\eta D(f)\\
		&\leq \left(1+C(\veps_+''+\frac{\widetilde{\delta}}{\veps_+}+\theta+\widetilde{\delta}) + \blue t \nc \right)\sigma_\eta D(f).
		\end{split}
		\end{equation}
		
		where the last step we used Assumption \ref{assumption: main}.

		Let $f \in L^2(\M)$ belong to the span of $\Delta_\M$'s eigenfunctions with corresponding eigenvalue less than $\lambda$. Then, $f$ can be written as $f=\sum_{l=1}^N  f^l$ where each $f^l$ has support on $\M_l$, and where, abusing notation slightly, each $f^l$ has the form $f^l =\sum_{q}b_{ql} f_{q}^l $ for an orthonormal basis of eigenfunctions of $\Delta_{\M_l}$, $\{ f_q^l \}$, with corresponding eigenvalues smaller than $\lambda$. It is straightforward to see that: 
		\begin{equation}
		\label{inequ: omega^O bounded by r/epsilon_+}
		\begin{split}
		b_O^{\veps_+,\veps_-}(\widetilde Pf)&=\frac{1}{n^2(\veps_+^{m+2}-\veps_-^{m+2})}\sum_{x_i,x_j\in\X_n }\omega^O
		_{x_i x_j}( \widetilde Pf(x_i)- \widetilde Pf(x_j))^2
		\\ &\leq \frac{2}{\veps_+^{m+2}-\veps_-^{m+2}}\sum_{x_i\in\X_n}\sum_{x_j\in\X_n}\omega^O
		_{x_i x_j} ( \widetilde Pf(x_i) )^2
		\\ &= \frac{2}{\veps_+^{m+2}-\veps_-^{m+2}} \sum_{l=1}^N \sum_{s: s\not = l} \sum_{x_i\in \M_l  }\sum_{x_j\in\M_s}\omega^O
		_{x_i x_j} \lvert \int_{U_{il}}  f^l(x) \widetilde{p}^n_l(x)d\vol_{\M_l}(x) \rvert^2
		\\ &= \frac{2}{\veps_+^{m+2}-\veps_-^{m+2}} \sum_{l=1}^N  \frac{\lVert   f^l \lVert  ^2_{L^\infty(\mathcal{M}_l)}}{n_l^2}\sum_{s: s\not = l} \sum_{x_i\in \M_l  }\sum_{x_j\in\M_s}\omega^O
		_{x_i x_j}
		\\ &= \frac{2N\cdot N_0}{\veps_+^{m+2}-\veps_-^{m+2}} \sum_{l=1}^N  \frac{\lVert f^l \|^2_{L^\infty(\mathcal{M}_l)}}{n_l^2}
		\\&\leq \frac{2N\cdot N_0(1+t)}{w^2_{min} n^2(\veps_+^{m+2}-\veps_-^{m+2})} \sum_{l=1}^N \| f^l \|^2_{L^\infty(\mathcal{M}_l)}. 
		\end{split}
		\end{equation}
		In the above, the last inequality follows with high probability according to  Proposition \eqref{Proposition: Tail Inequality}.  
		To complete the proof we find estimates for each of the terms $\| f^l \|^2_{L^\infty(\mathcal{M}_l)}$ and to do this we adapt the argument in Lemma 3.3 of \citet{Lu2019GraphAT}. From standard higher order elliptic regularity results (e.g. Theorem 2 in  \citet{EvansBook}) it follows that for every $s\in \N$
		\begin{center}
			$\begin{aligned}
			\left\|f^{l}\right\|_{H^{2 s}(\mathcal{M}_l)}^2 & \leq C(\M_l , \rho_l, s)\left(\left\|\Delta_{\M_l}^{s} f^{l}\right\|_{L^{2}(\mathcal{M}_l)}^2+\left\|f^l\right\|^2_{L^{2}(\mathcal{M}_l)}\right) 
			\end{aligned}$,
		\end{center}
		where in the above $H^{2s}(\M_l)$ is the Sobolev space of functions on $\M_l$ with square-integrable derivatives of order up to $2s$;  it is at this stage that we use the smoothness of the manifold $\M_l$ and the density $\rho_l$.  Moreover, by the Sobolev embedding theorem  on compact manifolds (e.g. Theorem 2.20 in \citet{Aubin1998}), we have $H^{2 s}(\mathcal{M}_k) \subset C^{1}(\mathcal{M}_k)$ as long as $2 s>m / 2+1 .$ Choosing $2 s=m / 2 +2$ we obtain:
		\begin{equation}
		\label{eqn:Linfty}
		\left\|f^l\right\|_{L^{\infty}(\mathcal{M}_l)}   \leq   \left\|f^l\right\|_{C^{1}(\mathcal{M}_l)} \leq C\left(\M_l, \rho_l \right)\left\|f^l\right\|_{H^{m / 2+2}(\mathcal{M}_l)} \leq  C\left(\M_l, \rho_l \right) \left(\lambda^{m / 4+1}+1\right)\left\|f^l\right\|_{L^{2}(\mathcal{M}_l)}.\end{equation}
		Recalling that $f^l$ has the form $\sum_{q}b_{ql} f_{q}^l $, where the $f_{q}^l$ are orthonormal in $L^2(\M_l , \rho_l)$ and are eigenfunctions of $\Delta_{\M_l}$ with eigenvalues $\lambda_{q}^l$ smaller than $\lambda$,  we can see that
		\begin{align*}
		\left\|\Delta_{\M_l}^{s} f^{l}\right\|_{L^{2}(\mathcal{M}_l,\rho_l)}^2 & = \left\| \sum_q b_{ql} (\lambda_q^l)^s f_q^{l}\right\|_{L^{2}(\mathcal{M}_l,\rho_l)}^2
		\\& = \sum_q b_{ql}^2 (\lambda_q^l)^{2s} \left\|  f_q^{l}\right\|_{L^{2}(\mathcal{M}_l,\rho_l)}^2 \leq \lambda^{2s}\lVert f^l \rVert_{L^2(\M_l)}^2. 
		\end{align*}
		\nc
		%
		%
		%
		Putting the above estimates together and combining with \eqref{inequ: omega^O bounded by r/epsilon_+} gives us the desired result.

	\end{proof}

	Before proving Proposition \ref{Proposition: Discretization and interpolation maps are almost isometries} we need one last preliminary estimate.

	\begin{lemma}\label{lemma: tau f controlled by f}
		Suppose $\veps_+,\veps_-$ satisfy Assumptions \ref{assumption: main}. Then, there exists a universal constant $C>0$ such that
		$$
		\lVert \Lambda_{\veps_+,\veps_-}f\lVert ^2_{L^2(\mathcal{M},\rho)}\leq (1+C c_\rho L_{\rho}\veps_+)(1+Cm K\veps_+^2)\lVert f\lVert ^2_{L^2(\mathcal{M},\rho)},
		$$
		and 
		$$
		\lVert \Lambda_{\veps_+,\veps_-}f-f\lVert ^2_{L^2(\mathcal{M},\rho)}\leq \frac{Cc_\rho^2\veps_+^2}{\sigma_{\eta}}\sum_{l=1}^Nw_l D^{\veps_+,\veps_-}_{NL,l}(f_l) \leq \frac{Cc_\rho^2\veps_+^2}{\sigma_{\eta} w_{\min}} D(f) .
		$$
		for all $f\in L^2(\mathcal{M},\rho)$. In the above, $w_{\min}:= \min_{l=1, \dots, N} w_l $.
	\end{lemma}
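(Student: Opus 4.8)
The plan is to prove both estimates one manifold at a time and then sum against the weights $w_l$, using that $\lVert g\rVert^2_{L^2(\M,\rho)} = \sum_l w_l\lVert g_l\rVert^2_{L^2(\M_l,\rho_l)}$ and $D(f) = \sum_l w_l^2 D_l(f_l)$. All of this takes place on the high-probability event of Corollary \ref{cor:Densities}, on which the densities $\widetilde{\rho}^n_l$ appearing in $D^{\veps_+,\veps_-}_{NL,l}$ are defined and, by Assumption \ref{assumption: main}(4), satisfy $\widetilde{\rho}^n_l \ge \tfrac{1}{2c_\rho}$ pointwise; we also record that $\sigma_\eta\le 1$ so it may be absorbed into universal constants.

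For the first estimate, note that for fixed $x$ the measure $\tau_l(x)^{-1}\K^l_{\veps_+,\veps_-}(x,y)\,d\vol_{\M_l}(y)$ is a probability measure, by the definition of $\tau_l$ and the nonnegativity of $\K^l_{\veps_+,\veps_-}$, so Jensen's inequality gives $|\Lambda^l_{\veps_+,\veps_-}f_l(x)|^2 \le \tau_l(x)^{-1}\int_{\M_l}\K^l_{\veps_+,\veps_-}(x,y)|f_l(y)|^2\,d\vol_{\M_l}(y)$. I would integrate against $\rho_l(x)\,d\vol_{\M_l}(x)$, apply Fubini, and bound the resulting inner integral $\int_{\M_l}\K^l_{\veps_+,\veps_-}(x,y)\rho_l(x)\,d\vol_{\M_l}(x)$ by combining the Lipschitz continuity of $\rho_l$ (using that $\K^l_{\veps_+,\veps_-}(x,y)$ vanishes unless $d_{\M_l}(x,y)\le\veps_+$, hence $|x-y|\le\veps_+$ by \eqref{Proposition: dist relation}, so $\rho_l(x)\le(1+c_\rho L_{\rho_l}\veps_+)\rho_l(y)$) with the symmetry identity $\int_{\M_l}\K^l_{\veps_+,\veps_-}(x,y)\,d\vol_{\M_l}(x) = \tau_l(y)$. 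Together with the two-sided bound $\tau_l\in[(1+Cm_lK_l\veps_+^2)^{-1},\,1+Cm_lK_l\veps_+^2]$ from Lemma \ref{lemma: restrict for tau} this yields $\lVert\Lambda^l_{\veps_+,\veps_-}f_l\rVert^2_{L^2(\M_l,\rho_l)}\le(1+c_\rho L_{\rho_l}\veps_+)(1+Cm_lK_l\veps_+^2)\lVert f_l\rVert^2_{L^2(\M_l,\rho_l)}$, and summing over $l$ against $w_l$ gives the claim.

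For the second estimate, since $\tau_l(x)=\int_{\M_l}\K^l_{\veps_+,\veps_-}(x,y)\,d\vol_{\M_l}(y)$ one has the identity $\Lambda^l_{\veps_+,\veps_-}f_l(x)-f_l(x)=\tau_l(x)^{-1}\int_{\M_l}\K^l_{\veps_+,\veps_-}(x,y)(f_l(y)-f_l(x))\,d\vol_{\M_l}(y)$, and Jensen again gives $|\Lambda^l_{\veps_+,\veps_-}f_l(x)-f_l(x)|^2\le\tau_l(x)^{-1}\int_{\M_l}\K^l_{\veps_+,\veps_-}(x,y)|f_l(y)-f_l(x)|^2\,d\vol_{\M_l}(y)$. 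I would then insert the pointwise bound $\K^l_{\veps_+,\veps_-}(x,y)\le\veps_+^2\eta(d_{\M_l}(x,y)/\veps_+)/(\sigma_\eta(\veps_+^{m+2}-\veps_-^{m+2}))$ from \eqref{equ: psi to eta} and $\tau_l^{-1}\le1+Cm_lK_l\veps_+^2$, integrate against $\rho_l(x)\,d\vol_{\M_l}(x)$, and bound the deterministic weights crudely by $\rho_l(x)\le c_\rho$ and $1\le 4c_\rho^2\widetilde{\rho}^n_l(x)\widetilde{\rho}^n_l(y)$ so as to recognize $\tfrac{1}{\veps_+^{m+2}-\veps_-^{m+2}}E^{\veps_+}_l(f_l)$ up to a multiplicative $O(c_\rho^3)$ factor. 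Lemma \ref{lemma: restrict E by using D} then controls this by $C(1+c_\rho L_{\rho_l})D^{\veps_+,\veps_-}_{NL,l}(f_l)$, and collecting the factors produces $\lVert\Lambda^l_{\veps_+,\veps_-}f_l-f_l\rVert^2_{L^2(\M_l,\rho_l)}\le\tfrac{Cc_\rho^2\veps_+^2}{\sigma_\eta}D^{\veps_+,\veps_-}_{NL,l}(f_l)$; multiplying by $w_l$ and summing gives the middle inequality. For the rightmost inequality I would apply Lemma \ref{lemma: D^NL(f)<= D(f)} to get $D^{\veps_+,\veps_-}_{NL,l}(f_l)\le C\sigma_\eta D_l(f_l)$ and then observe $\sum_l w_l D_l(f_l)=\sum_l w_l^{-1}\bigl(w_l^2 D_l(f_l)\bigr)\le w_{\min}^{-1}\sum_l w_l^2 D_l(f_l)=w_{\min}^{-1}D(f)$, the stray $\sigma_\eta$ again being harmless.

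The main difficulty here is bookkeeping rather than anything conceptual: one must make sure that Lemmas \ref{lemma: restrict for tau}, \ref{lemma: restrict E by using D}, \ref{lemma: D^NL(f)<= D(f)} are all invoked on the same good event of Corollary \ref{cor:Densities}, keep track of the curvature corrections (all of size $O(m_lK_l\veps_+^2)$, hence controlled by Assumption \ref{assumption: main}(1)), and carefully handle the mismatch between the deterministic density $\rho_l$ in the target norms and the empirical density $\widetilde{\rho}^n_l$ built into $D^{\veps_+,\veps_-}_{NL,l}$; the only genuinely quantitative point is the lower bound $\widetilde{\rho}^n_l\ge\tfrac{1}{2c_\rho}$, which is exactly what Assumption \ref{assumption: main}(4) is designed to guarantee.
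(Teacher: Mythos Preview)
Your proposal is correct and follows essentially the same route as the paper: Jensen's inequality against the probability measure $\tau_l(x)^{-1}\K^l_{\veps_+,\veps_-}(x,\cdot)\,d\vol_{\M_l}$ for both estimates, the pointwise kernel bound \eqref{equ: psi to eta}, Lemma~\ref{lemma: restrict E by using D} to pass from $E_l^{\veps_+}$ to $D_{NL,l}^{\veps_+,\veps_-}$, and Lemma~\ref{lemma: D^NL(f)<= D(f)} together with $\sum_l w_l D_l \le w_{\min}^{-1}\sum_l w_l^2 D_l$ for the last inequality. If anything, you are slightly more careful than the paper about the density mismatch between $\rho_l$ and $\widetilde\rho_l^n$ hidden inside $E_l^{\veps_+}$ and $D_{NL,l}^{\veps_+,\veps_-}$, which the paper handles implicitly.
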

	\begin{proof}
		
		Since $\Lambda_{\veps_+, \veps_-}$ acts on $f$ coordinatewise, we get	
		\begin{align*}
		\begin{split}
		\int_{\M} (\Lambda_{\veps_+, \veps_-} f(x) )^2 d\mu(x)  &= \sum_{l=1}^{N}  w_l \int_{\M_l} (\Lambda^l_{\veps_+, \veps_-} f_l(x) )^2 \rho_l(x) d\vol_{\M_l}(x)\\ &\leq \sum_{l=1}^N  w_l \int_{\mathcal{M}_l}\int_{\mathcal{M}_l}\frac{\K^l_{\veps_+,\veps_-}(x,y)}{\tau_l(x)}(f_l(y))^2\rho_l(x)d\vol_{\M_l}(y)d\vol_{\M_l}(x)  
		\\& \leq (1+ C c_\rho L_\rho \veps_+)(1+ C m K \veps_+^2 )  
		\sum_{l=1}^N  w_l \int_{\M_l} (f_l(y) )^2 \rho_l(y) d\vol_{\M_l}(y) 
		\\&= (1+ C c_\rho L_\rho \veps_+)(1+ C m K \veps_+^2 )  \int_{\M}(f(x))^2 d\mu(x), 
		\end{split}
		\end{align*}	
		where the first inequality follows from Jensen's inequality, and the second inequality follows from Lemma \ref{lemma: restrict for tau} and the properties of the density functions $\rho_l$.

		%
		For the second inequality, we first calculate the difference between $\Lambda_{\veps_+,\veps_-}^l f_l(x)$ and $f_l(x)$:
		$$
		\begin{aligned}
		|\Lambda^{\veps_+,\veps_-}_l f_l (x)-f_l(x)|^2&= \left(\frac{1}{\tau_l(x)}\int_{\mathcal{M}_l}\K^l_{\veps_+,\veps_-}(x,y)(f_l(y)-f_l(x))d\mu_l(y) \right)^2\\
		&\leq \frac{1}{\tau_l(x)^2}\int_{\mathcal{M}_l}\K^l_{\veps_+,\veps_-}(x,y)d\mu_l(y)\int_{\mathcal{M}_l}\K^l_{\veps_+,\veps_-}(x,y)(f_l(x)-f_l(y))^2d\mu_l(y)\\
		&=\frac{1}{\tau_l(x)}\int_{\mathcal{M}_l}\K^l_{\veps_+,\veps_-}(x,y)(f_l(x)-f_l(y))^2d\mu_l(y).
		\end{aligned}
		$$
		Then we integrate with respect to $\rho_l(x)d \vol_{\M_l}(x)$ to get:
		$$
		\begin{aligned}
		\lVert \Lambda^{\veps_+,\veps_-}_l f_l-f_l\lVert ^2_{L^2(\mathcal{M}_l,\rho_l)}
		&\leq (1+Cm_l K_l\veps_+^2)\int_{\mathcal{M}_l}\int_{\mathcal{M}_l}\K^l_{\veps_+,\veps_-}(x,y)(f(x)-f(y))^2d\mu_l(y)d\mu_l(x)\\
		&\leq \frac{(1+Cm_l K_l\veps_+^2)\veps_+^2}{\sigma_{\eta}(\veps_+^{m+2}-\veps_-^{m+2})}\int_{\mathcal{M}_l}\int_{\mathcal{M}_l}\eta\left(\frac{d(x,y)}{\veps_+}\right)(f(x)-f(y))^2d\mu_l(y)d\mu_l(x)\\
		&\leq \frac{Cc_\rho^2\veps_+^2}{\sigma_{\eta}}D^{\veps_+,\veps_-}_{NL,l}(f_l),
		\end{aligned}
		$$
		where the second inequality follows from the fact that $\eta \leq \frac{1}{\sigma_\eta}\psi$ (recall \eqref{eqn:PsiEta}), and the third inequality follows from Lemma \ref{lemma: restrict E by using D}.  Multiplying the above by $ w_l$, adding over $l$, and using Lemma \ref{lemma: D^NL(f)<= D(f)} we get the desired result.
	\end{proof}

	\begin{proof}[Proof of Proposition \ref{Proposition: Discretization and interpolation maps are almost isometries}] 
		\textbf{(1):} 
		For each $l=1, \dots, N$, we use estimates proved in  \citet{calder2019improved} (appearing in Pages 24-25 in the proof of Proposition 4.2) to conclude that there is a constant $C$ for which
		\begin{equation*}
		\begin{aligned}
		&\left| \blue  \lVert f_l\rVert^2_{L^2(\widetilde{\mu}_l^n)} \nc-\lVert f_l\lVert ^2_{L^2(\mu_l)}\right|\leq C(\theta+\widetilde{\delta})\lVert f_l\lVert ^2_{L^2(\mu^l)}\\
		&\left\lVert |\widetilde{P_l}f_l\lVert ^2_{L^2(\mu^n_l)}-\lVert f_l\lVert ^2_{L^2(\mu^l)}\right|\leq C\lVert f_l\lVert _{L^2(\mu^l)}\lVert \widetilde{P_l}^*\widetilde{P_l}f_l-f_l\lVert _{L^2(\widetilde{\mu}^l_n)}+C(\theta+\widetilde{\delta})\lVert f_l\lVert ^2_{L^2(\mu^l)}\\
		&\lVert \widetilde{P_l}^*\widetilde{P_l}f_l-f_l\lVert ^2_{L^2(\widetilde{\mu}^l_n)}\leq C\widetilde{\delta}^2D_l(f_l),
		\end{aligned}
		\end{equation*}
		for every $f_l \in L^2(\mu_l)$; in the above we use $\widetilde{\mu}_n^l$ to denote the measure $\widetilde{\rho}_n^l d\vol_{\M_l}(x)$. Combining the previous inequalities, we deduce that:
		$$
		\begin{aligned}
		\left\lVert |\widetilde{P}f\lVert ^2_{L^2(\mu)}-\lVert f\lVert ^2_{L^2(\mu)}\right|&\leq\sum_{l=1}^Nw_l\left\lVert |\widetilde{P_l}f_l\lVert ^2_{L^2(\mu^n_l)}-\lVert f_l\lVert ^2_{L^2(\mu^l)}\right|\\
		&\leq C\sum_{l=1}^Nw_l\lVert f_l\lVert _{L^2(\mu^l)}\lVert \widetilde{P_l}^*\widetilde{P_l}f_l-f_l\lVert _{L^2(\widetilde{\mu}^l_n)}+C(\theta+\widetilde{\delta})\lVert f\lVert ^2_{L^2(\mu)}.
		\end{aligned}
		$$
		Now, the first term in the last inequality above is controlled by $C\widetilde{\delta}\lVert f\lVert _{L^2(\mu)}\sqrt{D(f)}$. Indeed, this follows from Cauchy-Schwartz inequality:
		$$
		\begin{aligned}
		\left(\sum_{l=1}^Nw_l\lVert f_l\lVert _{L^2(\mu^l)}\lVert \widetilde{P_l}^*\widetilde{P_l}f_l-f_l\lVert _{L^2(\widetilde{\mu}^l_n)}\right)^2
		&\leq \left(\sum_{l=1}^N w_l\lVert f_l\lVert ^2_{L^2(\mu^l)}\right)\left(\sum_{l=1}^Nw_l\lVert \widetilde{P_l}^*\widetilde{P_l}f_l-f_l\lVert ^2_{L^2(\widetilde{\mu}^l_n)}\right)\\
		&\leq C\frac{\widetilde{\delta}^2}{\blue w_{\min} \nc} \lVert f\lVert _{L^2(\mu)}^2D(f).
		\end{aligned}
		$$
		Putting things together we finally deduce
		$$
		\left\lVert |\widetilde{P}f\lVert ^2_{L^2(\mu^n)}-\lVert f\lVert ^2_{L^2(\mu)}\right|\leq C\frac{\widetilde{\delta}}{\blue  \sqrt{w_{\min}} \nc}\lVert f\lVert _{L^2(\mu)}\sqrt{D(f)}+C(\theta+\widetilde{\delta})\lVert f\lVert ^2_{L^2(\mu)}.
		$$
		\textbf{(2):} From the identity $\lVert u_l\lVert _{L^2(\mu^n_l)}=\lVert \widetilde{P_l}^*u_l\lVert _{L^2(\widetilde{\mu}^n_l)}$ (which follows automatically from the fact that the map $\widetilde{T}_l$ is a transport map between $\widetilde{\mu}_l^n$ and $\mu_l^n$) and the triangle inequality we get
		$$
		\begin{aligned}
		\left\lVert |{\I_l}u_l\lVert _{L^2(\widetilde{\mu}^n_l)}-\lVert u_l\lVert _{L^2(\mu^n_l)}\right|&\leq\lVert \Lambda^{\veps_+,\veps_-}_l\widetilde{P}_l^*u_l-\widetilde{P}_l^*u_l\lVert _{L^2(\widetilde{\mu}^n_l)}\\
		&\leq \left(1+ c_\rho\lVert \rho_l-\widetilde{\rho}^n_l\lVert _{L^\infty(\M_l)} \right)\cdot\lVert \Lambda^{\veps_+,\veps_-}_l\widetilde{P}_l^*u_l-\widetilde{P}_l^*u_l\lVert _{L^2(\mu_l)}\\
		&\leq \left(1+ c_\rho\lVert \rho_l-\widetilde{\rho}^n_l\lVert _{L^\infty(\M_l)}\right)\cdot C\veps_+\sqrt{D^{\veps_+,\veps_-}_{NL,l}(\widetilde{P}^*_lu_l)}\\
		& \blue \leq C\veps_+\sqrt{b_l^{\veps_+,\veps_-}(u_l)}, \nc
		\end{aligned}
		$$
		where the third inequality comes from Lemma \ref{lemma: tau f controlled by f} and the last one follows from Lemma \ref{lemma: D^NL and b}. Also, notice that 
		$$
		\lVert \I_l u_l\lVert _{L^2(\widetilde{\mu}_l^n)}=\lVert \Lambda_l^{\veps_+,\veps_-}\widetilde{P}^*_lu_l\lVert _{L^2(\widetilde{\mu}_l^n)}\leq C\lVert \widetilde{P}^*_lu_l\lVert _{L^2(\widetilde{\mu}_l^n)}=C\lVert u_l\lVert _{L^2(\mu^n_l)}.
		$$
		This inequality is also a consequence of Lemma \ref{lemma: tau f controlled by f}. So far we have proved that
		$$
		\begin{aligned}
		\left\lVert |\I_lu_l\lVert ^2_{L^2(\widetilde{\mu}^n_l)}-\lVert u_l\lVert ^2_{L^2(\mu^n_l)}\right|
		\leq  C\veps_+\sqrt{b_l^{\veps_+,\veps_-}(u)}\lVert u_l\lVert _{L^2(\mu^n_l)}.
		\end{aligned}
		$$
		Next, we compare $\lVert \I_l u_l\lVert ^2_{L^2(\widetilde{\mu}^n_l)}$ and $\lVert \I_lu_l\lVert ^2_{L^2(\mu_l)}$, bounding their difference with
		$$
		\begin{aligned}
		\left\lVert |\widetilde{\I_l}u_l\lVert ^2_{L^2(\widetilde{\mu}_l^n)}-\lVert \widetilde{\I_l}u_l\lVert ^2_{L^2(\mu_l)}\right|&\leq C(\theta+\widetilde{\delta})\lVert \widetilde{\I_l}u_l\lVert ^2_{L^2(\widetilde{\mu}_l^n)}\leq C\blue c_\rho \nc (\theta+\widetilde{\delta})\lVert u_l\lVert ^2_{L^2(\mu_l^n)},
		\end{aligned}
		$$
		as it follows from the fact that the difference between $\rho_l$ and $\widetilde{\rho}_{l}^n$ is uniformly controlled with very high probability, i.e.  i) in Corollary \ref{cor:Densities}.
		%
		%
		%
		
		Finally, we obtain
		$$
		\begin{aligned}
		\left\lVert |u_l\lVert ^2_{L^2(\mu^n_l)}-\lVert \widetilde{\I_l}u_l\lVert ^2_{L^2(\mu_l)}\right|&\leq \left\lVert |\widetilde{\I_l}u_l\lVert ^2_{L^2(\widetilde{\mu}_l^n)}-\lVert u_l\lVert ^2_{L^2(\mu^n_l)}\right|+\left\lVert |\widetilde{\I_l}u_l\lVert ^2_{L^2(\widetilde{\mu}_l^n)}-\lVert \widetilde{\I_l}u_l\lVert ^2_{L^2(\mu_l)}\right|\\
		&\leq C\veps_+\sqrt{b_l^{\veps_+,\veps_-}(u_l)}\lVert u_l\lVert _{L^2(\mu^n_l)}+Cc_\rho(\theta+\widetilde{\delta})\lVert u_l\lVert ^2_{L^2(\mu^n_l)}.
		\end{aligned}
		$$
		Adding over all $l=1,\dots,N$ and using Cauchy-Schwarz inequality we obtain the desired estimate:
		$$
		\begin{aligned}
		\left\lVert |u\lVert ^2_{L^2(\mu^n)}-\lVert \widetilde{\I}u\lVert ^2_{L^2(\mu)}\right|
		&= \left|\sum_{l=1}^N\lVert u_l\lVert ^2_{L^2(\mu^n_l)}-\sum_{l=1}^N w_l\lVert \widetilde{\I_l}u_l\lVert ^2_{L^2(\mu_l)}\right|\\
		&\leq C\veps_+\sum_{l=1}^N w_l\sqrt{b_l^{\veps_+,\veps_-}(u)}\lVert u_l\lVert _{L^2(\mu^n_l)}+Cc_\rho (\theta+\widetilde{\delta})\sum_{l=1}^N w_l\lVert u_l\lVert ^2_{L^2(\mu_l^n)}\\
		&\leq C\veps_+\lVert u\lVert _{L^2(\mu^n)}\sqrt{b^{\veps_+,\veps_-}(u)}+Cc_\rho(\theta+\widetilde{\delta})\lVert u\lVert ^2_{L^2(\mu^n)}.
		\end{aligned}
		$$
	\end{proof}

	\subsection{Proof of Theorem \ref{Rate of convergence for eigenvalues}}
	\label{Proof:ThmEigen}
	\begin{proof} [Proof of Theorem \ref{Rate of convergence for eigenvalues}]
		With the aid of Propositions \ref{Proposition: Inequality for Dirichlet energies} and \ref{Proposition: Discretization and interpolation maps are almost isometries} we can now compare $\lambda^{\veps_+,\veps_-}_k$ and $\lambda_k$, the $k$-th eigenvalues of $\mathcal{L}$ and $\Delta_\M$ (listed according to multiplicity) respectively.
		
		First, to find an upper bound for $\lambda_k^{\veps_+, \veps_-}$ in terms of $\lambda_k$, let $f^1,\dots,f^k$ be an orthonormal set (w.r.t. $L^2(\mu)$) consisting of eigenfunctions of $\Delta_{\M}$ corresponding to its first $k$ eigenvalues (and let us label them $\lambda_1 \leq \dots \leq \lambda_k$). Let 
		$$
		v_i:=\widetilde{P}f_i, \forall i=1,\dots,k.
		$$
		Applying Proposition \ref{Proposition: Discretization and interpolation maps are almost isometries} to every $f$ of the form 
		$$
		f:=f_i-f_j,
		$$
		we deduce that
		$$
		|\langle f_i,f_j\rangle_{L^2(\mu)}-\langle v_i,v_j\rangle_{L^2(\mu^n)}|\leq C\widetilde{\delta}\sqrt{\lambda_k}+C(\theta+\widetilde{\delta}) < \frac{1}{k}.
		$$
		We can then conclude that $v_1,\dots, v_k$ are linearly independent and that the subspace $S:=Span\{v_1,\dots, v_k\}$ has dimension $k$. From \eqref{minmax principle for graph laplacian} we deduce that
		$$
		\lambda_k^{\veps_+,\veps_-}\leq \max_{v\in S, \lVert v\rVert_{L^2(\mu^n)}=1}b^{\veps_+,\veps_-}(v).
		$$
		For $v \in S$, written as $v=\sum_{i=1}^ka_iv_i=\sum_{i=1}^ka_i\widetilde{P}f_i$, we can write $v:=\widetilde{P}f$ where $f= \sum_{i=1}^ka_i f_i$. This $f$ satisfies:
		$$
		D(f) = \langle \Delta_\M  f , f \rangle_{L^2(\mu)} \leq \lambda_k\lVert f\lVert ^2_{L^2(\mu)}
		$$
		according to the spectral decomposition of $\Delta_\M$. Applying part (2) of Proposition \eqref{Proposition: Inequality for Dirichlet energies} we obtain: 
		$$
		\begin{aligned}
		&b^{\veps_+,\veps_-}(v)\\
		&\leq \frac{CN N_0}{w_{min}^2n^2(\veps_+^{m+2}-\veps_-^{m+2})}\left(1+\lambda_{k}^{m / 2+2}\right)\left\|f\right\|_{L^{2}(\mathcal{M})}^{2} +\left(1+C(\veps_+''+\frac{\widetilde{\delta}}{\veps_+})\right)\sigma_\eta D(f)\\
		&\leq \frac{CN N_0}{w_{min}^2n^2(\veps_+^{m+2}-\veps_-^{m+2})}\left(1+\lambda_{k}^{m / 2+2}\right)\left\|f\right\|_{L^{2}(\mathcal{M})}^{2} +\left(1+C(\veps_+''+\frac{\widetilde{\delta}}{\veps_+})\right) \lambda_k\sigma_\eta \cdot\lVert f\lVert ^2_{L^2(\mu)}.
		\end{aligned}
		$$
		Finally, from Proposition \ref{Proposition: Discretization and interpolation maps are almost isometries} applied to a $v \in S$ with norm one, we deduce that $v$'s corresponding $f$ satisfies:
		$$
		\lVert f\lVert ^2_{L^2(\mu)}\leq 1+C(\widetilde{\delta}\sqrt{\lambda_k}+\theta+\widetilde{\delta}).
		$$
		From this we conclude that
		$$
		\begin{aligned}
		\frac{1}{\sigma_\eta }\lambda_k^{\veps_+,\veps_-}&\leq \frac{CNN_0}{w_{min}^2n^2(\veps_+^{m+2}-\veps_-^{m+2})}\left( 1+C'(\lambda_k^{m/2+2}+\widetilde{\delta}\sqrt{\lambda_k}+\theta+\widetilde{\delta}) \right)\\
		&+\left(1+C(\veps_+''+\widetilde{\delta}\sqrt{\lambda_k}+\theta+\frac{\widetilde{\delta}}{\veps_+})\right)\lambda_k.
		\end{aligned}
		$$
		This establishes the upper bound for $\lambda_k^{\veps_+,\veps_-}$ in terms of $\lambda_k$.
		
		For the lower bound, we follow completely analogous arguments as the ones above, relating functions $u \in L^2(\mu^n)$ with $f \in L^2(\mu)$ via the map $\mathcal{I}$ and applying Propositions \ref{Proposition: Inequality for Dirichlet energies} and \ref{Proposition: Discretization and interpolation maps are almost isometries}.
		
	\end{proof}
	%
	%

	\subsection{Proof of Theorem \ref{convergence rate for eigenvectors}}
	\label{Proof:Eignvector}

	\begin{proof}[Proof of Theorem \ref{convergence rate for eigenvectors}]
		We use an energy estimate based on Proposition \ref{Proposition: Inequality for Dirichlet energies} to find a relationship between eigenvectors of $\mathcal{L}^{\veps_+,\veps_-}$ and eigenfunctions of $\L_{\M}$. We follow a similar strategy to the one in \citet{trillos2019error}.

		Let $\lambda$ be an eigenvalue of $\Delta_{\M}$ and let $k\in \N$ be the first integer for which $\lambda=\lambda_k$ (here $\lambda_k$ is as in \eqref{minmax principle for laplacian}). Let $l$ be the multiplicity of $\lambda$ so that  $\lambda= \lambda_k = \dots = \lambda_{k+ l-1}  < \lambda_{k+l} $. The gap $\gamma_{\lambda}$ associated to $\lambda$ is given by:
		\begin{equation}
		\gamma_{\lambda}:=\frac{1}{2}\min\{|\lambda-\lambda_{k-1}|,|\lambda-\lambda_{k+l}|\}
		\end{equation}
		if $\lambda>0$, and $\gamma_{\lambda}:= \lambda_{l+1} = \lambda_{N+1}$ otherwise.

		Now, we can pick $\veps_+,\veps_-,\theta,\widetilde{\delta}$ to be small enough so that 
		$$
		e+C\left(\veps_++\theta+\widetilde{\delta}\right)\lambda\leq \gamma_{\lambda}
		$$
		Then, for these choices of parameters, we know from Theorem \ref{Rate of convergence for eigenvalues} that
		\begin{equation}
		|\lambda^{\veps_+,\veps_-}-\sigma_\eta \lambda|\leq \gamma_{\lambda}
		\end{equation}

		Let $S$ be a subspace of $L^2(\mu^n)$ spanned by all eigenvectors of $\mathcal{L}^{\veps_+,\veps_-}$ with eigenvalues \[\lambda^{\veps_+,\veps_-}_k,\cdots,\lambda_{k+l-1}^{\veps_+,\veps_-},\] and let us denote the orthogonal projection onto $S$ as $P_S$, the orthogonal projection onto the span of the eigenvectors of $\mathcal{L}$ with eigenvalue strictly smaller than $\lambda_{k}^{\veps_+, \veps_-}$ as  $P_{S_-}$, and the orthogonal projection onto the span of the eigenvectors of $\mathcal{L}$ with eigenvalue strictly larger than $\lambda_{k+l-1}^{\veps_+, \veps_-}$ as  $P_{S_+}$.

		Let $f$ be a normalized (w.r.t $L^2(\mu)$) eigenfunction of $\L_{\M}$ with eigenvalue $\lambda$ and let $u=\widetilde{P}f$. Notice that we can assume without the loss of generality that $f$ takes the form in \eqref{eqn:FormEigen} for one of the manifolds $\M_k$ (in particular the support of $f$ is $\M_k$). Based on Proposition \ref{Proposition: Inequality for Dirichlet energies} and its proof (specifically the bound \eqref{eqn:Linfty}) we have:
		\begin{equation}\label{inequality with Ps}
		\begin{aligned}
		 e+\sigma_\eta\left[1+C\left(\veps_++\theta+\widetilde{\delta}\right)\right]\lambda&\geq  e+\sigma_\eta\left[1+C\left(\veps_++\theta+\widetilde{\delta}\right)\right]D(f)\\
		&\geq b^{\veps_+,\veps_-}(u)=\langle \mathcal{L}^{\veps_+,\veps_-}u,u\rangle\\
		&\geq \lambda_k^{\veps_+,\veps_-}\lVert P_Su\lVert ^2_{L^2(\mu^n)}+\lambda^{\veps_+,\veps_-}_{k+l}\lVert P_{S_+} u\lVert ^2_{L^2(\mu^n)}\\
		&\geq \lambda_k^{\veps_+,\veps_-}\left(\lVert u\lVert ^2_{L^2(\mu^n)}-\lVert u-P_Su\lVert ^2_{L^2(\mu^n)}\right)
		\\&+\lambda_{k+l}^{\veps_+,\veps_-} \left(\lVert u-P_Su\lVert ^2_{L^2(\mu^n)} - \lVert P_{S_-} u \rVert_{L^2(\mu^n)}^2 \right).
		\end{aligned}
		\end{equation}

		Using the results about $\gamma_{\lambda}$ we obtained above and Proposition \ref{Proposition: Discretization and interpolation maps are almost isometries} we deduce
		\begin{equation*}
		\begin{split}
		&|\sigma_\eta\lambda_2-\lambda_2^{\veps_+,\veps_-}|\leq e+C\sigma_\eta \left(\veps_++\theta+\widetilde{\delta}\right)\lambda\leq \gamma_{\lambda};\quad |1-\lVert u\lVert ^2_{L^2(\mu^n)}|\leq C(\theta+\widetilde{\delta})
		\end{split}
		\end{equation*}

		Here $C$ is some constant may correspond to $\lambda$. Combining the above inequalities with \eqref{inequality with Ps}, we obtain:
		$$
		\begin{aligned}
		&e+\sigma_\eta\left[1+C\sigma_\eta \left(\veps_++\theta+\widetilde{\delta}\right)\right]\lambda\\
		&\geq \sigma_\eta\lambda + (\lambda_2^{\veps_+,\veps_-}-\sigma_\eta\lambda_2)+\lambda_2^{\veps_+,\veps_-}(\lVert u\lVert ^2_{L^2(\mu^n)}-1)+(\lambda_{k+2}^{\veps_+,\veps_-}-\lambda_2^{\veps_+,\veps_-})\lVert u-P_Su\lVert ^2_{L^2(\mu^n)}
		\\& -\lambda_{k+l}^{\veps_+,\veps_-} \lVert P_{S_-} u  \rVert^2_{L^2(\mu^n)}
		\\ &\geq \sigma_\eta\lambda -e-C\left(\veps_+ +\theta+\widetilde{\delta}\right)+2\gamma_{\lambda}\lVert u-P_Su\lVert ^2_{L^2(\mu^n)}-\lambda_{k+l}^{\veps_+,\veps_-} \lVert P_{S_-} u  \rVert^2_{L^2(\mu^n)}.
		\end{aligned}
		$$
		From this and the upper bound for $\lambda_{k+l}^{\veps_-, \veps_+}$ in terms of $\lambda_{k+l}$:
		$$
		\lVert u-P_Su\lVert _{L^2(\mu^n)}\leq \left[\frac{e}{\gamma_{\lambda}}+\frac{C}{\gamma_{\lambda}}(\veps_+ +\theta+\widetilde{\delta})\right]^{1/2}  + \sqrt{\lambda_{k+l}}\lVert P_{S_-} u \rVert_{L^2(\mu^n)}.
		$$
		We now compare the functions $u$ and $f$ at the data points $x_i$. Notice that for every data point $x_i \in \M_k$ we have:
		$$
		|u(x_i)-f(x_i)|= \lvert \widetilde{P}f(x_i)-f(x_i) \rvert\leq n\int_{\widetilde{U}_{ik}}\vert f(x)-f(x_i) \rvert \widetilde{\rho}_n(x)d\vol_{\mathcal{M}_k}(x)\leq \lVert \nabla f\lVert _{L^{\infty}(\mu_k)}\widetilde{\delta}.
		$$
		Also, due to \eqref{eqn:Linfty} we know that $\lVert \nabla f\lVert _{L^{\infty}(\mu_k)} \leq \sqrt{w_k} C(\M_k, \rho_k) (\lambda^{m/4 + 1} + 1) \leq C(\M, \rho) (\lambda^{m/4 + 1} + 1)  $. 
		Thus,
		\[ |u(x_i)-f(x_i)|  \leq C(\M, \rho)(\lambda^{m/4 + 1}+1 )  \widetilde \delta, \quad \forall x_i \in \M_k.  \]
		Notice that on the other hand, $u(x_i)= f(x_i)=0$ for $x_i \in \M \setminus \M_k$ by definition of $\widetilde{P}$ and the fact that $f$ is zero outside of $\M_k$. We conclude that:
		$$
		\lVert u-f\lVert _{L^2(\mu^n)}\leq C_{\mathcal{M}, \rho}(\lambda^{m/4 +1} +1) \widetilde{\delta},
		$$
		and in turn
		\begin{equation}
		\lVert f-P_S\widetilde{P}f\lVert _{L^2(\mu^n)}\leq \left[\frac{e}{\gamma_{\lambda}}+\frac{C}{\gamma_{\lambda}}(\veps_++\theta+\widetilde{\delta})\right]^{1/2}+C_{\mathcal{M}, \rho}(\lambda^{m/4 +1} +1) \widetilde{\delta} + \sqrt{\lambda_{k+l}} \lVert P_{S_-} \widetilde P (f)\rVert_{L^2(\mu^n)} . 
		\label{eqn: AuxEigenvecConv}
		\end{equation}

		From this point on the idea is to use an inductive argument. We describe in detail the base case and outline the inductive step. 
		\textbf{Base Case: }When $\lambda=0$ (and $\lambda_1= \dots = \lambda_N= 0 < \lambda_{N+1}$) we have $\lVert P_{S_-} \rVert_{L^2(\mu^n)} =0$ and thus we can drop the last term in \eqref{eqn: AuxEigenvecConv}. This means that if $f_1,\cdots,f_l$ form an orthonormal basis for the space of eigenfunctions of $\L_{\M}$ with eigenvalue $\lambda$, then we can find an orthonormal set $v_1,\cdots,v_l$ spanning $S$ such that 
		$$
		\lVert f_i-v_i\lVert _{L^2(\mu^n)}\leq \left[\frac{e}{\gamma_{\lambda}}+\frac{C}{\gamma_{\lambda}}(\veps_++\theta+\widetilde{\delta})\right]^{1/2}+C(\M, \rho)\widetilde{\delta}.
		$$
		In turn, this also implies that if $u_1,\cdots,u_l$ form an orthonormal basis of $\mathcal{L}^{\veps_+,\veps_-}$ with corresponding eigenvalues $\lambda_2^{\veps_+,\veps_-},\cdots,\lambda_{l+1}^{\veps_+,\veps_-}$, then there exists an orthonormal set $\widetilde{f}_1,\cdots,\widetilde{f}_l$ for $\L_{\rho_l}$ with eigenvalue $\lambda$ satisfying the same inequality above with $f_i$ replaced with $\tilde f_i$ and $v_i$ replaced with $u_i$. 
		
		\textbf{Inductive step:} having found the desired relationship for the eigenvectors and eigenfunctions associated to the first portion of the spectrum of $\Delta_\M$, we return to \eqref{eqn: AuxEigenvecConv} and notice that by Proposition \ref{Proposition: Discretization and interpolation maps are almost isometries} we can conclude that the term $\lVert P_{S_-} \widetilde{P} f \rVert_{L^2(\mu^n)}$ is smaller than
		\[ C(\M, \rho) (  (\lambda^{(1/4)}+1)  \sqrt{\widetilde{\delta}} +\sqrt{\theta} ). \]  
		We can plug this estimate in \eqref{eqn: AuxEigenvecConv} and then proceed as in the base case to obtain the desired result. 
		
	\end{proof}
	
	\subsection{Different dimensions: Proof of Theorem \ref{thm:MixedDimensions}}
	\label{sec:ProofSiffDim}

	We start by writing the discrete Dirichlet form $b^{\veps_+, \veps_-}$ \eqref{eqn:GraphDirichlet} as the sum of three terms:
	\[ b^{\veps_+, \veps_-} (u) =  b_{max} (u_{max}) + b_{S}(u_S) +  b_O(u),  \]
	where
	\[b_{max}(v):=\frac{1}{n^2(\veps_+^{m+2}-\veps_-^{m+2})}\sum_{x_i,x_j\in\X_n \cap \M_{max} }\omega
	_{x_i x_j}(v(x_i)-v(x_j))^2, \quad v \in L^2(\X_n \cap \M_{max}), \]
	\[b_{S}(u_{S}):=\frac{1}{n^2(\veps_+^{m+2}-\veps_-^{m+2})} \sum_{k=N_{max}+1}^N \sum_{x_i,x_j\in\X_n \cap \M_{k} }\omega
	_{x_i x_j}(u_k(x_i)-u_k(x_j))^2, \quad u_S=(u_{N_{max}+1}, \dots, u_{N}),  \]
	and lastly,
	\[   b_{O}(u):= b^{\veps_+,\veps_-}(u) - b_{max}(u_{max})- b_{S}(u_S). \]
	Notice that $b_{max}$ captures all interactions between points that belong to the manifolds with the maximum dimension $m$.  For this energy we can use all the results presented in section \ref{sec:SameDim} and in particular relate it to the Dirichlet form:
	\begin{equation*}\label{Equ: dirichlet energyMax}
	D_{max}(f):= \begin{cases} \sum_{i=1}^{N_{max}} w_i^2\int_{\M_i} |\nabla f_i(x)|^2 \rho^2_i(x) d \vol_{\M_i}(x), \quad \text{ if } f \in H^1(\M_{max}) \\  +\infty, \quad \text{ if } f \in L^2(\M_{max}) \setminus H^1(\M_{max}). \end{cases}
	\end{equation*}
	The energy $b_S$, on the other hand, captures the interactions between points that are on the same manifold when this manifold is not one of the ones with the largest dimension $m$. Using \eqref{eqn:DirichletIndivManifold}, we can write $b_S$ as:
	\[ b_S(u_S) =  \sum_{k=N_{max}+1}^N \left( \frac{n_k}{n} \right)^2\cdot \left(\frac{\veps_+^{m_k+2}-\veps_-^{m_k+2}}{\veps_+^{m+2}-\veps_-^{m+2}}\right) \cdot  b_{k}(u_k).\]
	Finally, the term $b_O(u)$ accounts for all interactions between points in two different manifolds when the two manifolds are among the ones with dimension smaller than $m$, or when one of them has dimension $m$ and the other one does not. In short, $b_O$ accounts for all interactions not accounted for by the terms $b_{max}$ and $b_{S}$ and is thus a non-negative term.

	We let $\widetilde{\I}_{max}: L^2(\X_n \cap \M_{max}) \rightarrow L^2(\M_{max})$ and $P_{max}: L^2(\M_{max}) \rightarrow L^2(\X_n \cap \M_{max})$ be the maps constructed in section \ref{set up} applied to the data set $\X_n \cap \M_{max}$ and $\M_{max}$, i.e. the union of manifolds with the same dimension $m$. We also consider the following maps:
	\[  \mathcal{I}' : L^2(\X_n) \rightarrow  L^2(\X_n \cap \M_{max}) \]
	\[  \I':  u  \longmapsto  u_{max}, \]
	\[  P' : L^2(\X_n \cap \M_{max}) \rightarrow  L^2(\X_n ) \]
	\[  P':  v  \longmapsto  u=(v,0), \]
	where by $u=(v,0)$ we mean that $u$ coincides with $v$ for data points in $\M_{max}$ and $u=0$ for data points in $\M \setminus \M_{max}$.
	
	It will be convenient to introduce the norms:
	\[ \lVert u_k \rVert^2_{L^2(\X_n \cap \M_k)}:= \frac{1}{n}\sum_{x_i \in \X_n \cap \M_k} (u_k(x_i))^2, \quad  u_k \in L^2(\X_n \cap \M_k), \]
	and
	\[ \lVert v \rVert^2_{L^2(\X_n \cap \M_{max})}:= \frac{1}{n}\sum_{x_i \in \X_n \cap \M_{max}} (v(x_i))^2, \quad  v \in L^2(\X_n \cap \M_{max}),\]
	as well as the discrete Laplacians:
	\[\mathcal{L}_k u_k(x):=\frac{1}{n_k(\veps_+^{m_{k}+2}-\veps_-^{m_k+2})}\sum_{y\in  \X_n \cap \M_k }\omega_{xy}(u(x)-u(y)), \quad x \in X\cap \M_{k} , \quad  u : X \rightarrow \R. \]
	We use $\lambda_{2,k}^{\veps_+, \veps_-}$ to denote the second eigenvalue of $\mathcal{L}_k$.
	\begin{proof}[Proof of Theorem \ref{thm:MixedDimensions}]

		Following the structure of the proofs of Theorems \ref{Rate of convergence for eigenvalues} and \ref{convergence rate for eigenvectors} we see that we can obtain our desired estimates if we can obtain similar inequalities to the ones in Propositions \ref{Proposition: Inequality for Dirichlet energies} and \ref{Proposition: Discretization and interpolation maps are almost isometries} where now we use the maps $\I_{max} \circ \I'$ and $P' \circ P_{max}$ as interpolation and discretization maps respectively. There is only one small caveat in the almost isometry property of $\I_{max} \circ\I' $ as we explain below.
		
		We start by noticing that from the above definitions we have:
		\[    b_{max}(\I' u ) \leq b^{\veps_+, \veps_-}(u), \quad \forall u \in L^2(\X_n), \]
		and by Proposition \ref{Proposition: Inequality for Dirichlet energies}
		\[\sigma_\eta D_{max}({\mathcal{I}_{max}} \circ \I' u) \leq\left(1+C(\veps_+ +\frac{\widetilde{\delta}}{\veps_+}+\theta+\widetilde{\delta})\right)b_{max}(\I'u),\]
		so that
		\begin{equation}
		\sigma_\eta D_{max}({\mathcal{I}_{max}} \circ \I' u) \leq  \left(1+C(\veps_+ +\frac{\widetilde{\delta}}{\veps_+}+\theta+\widetilde{\delta} )\right)b^{\veps_+, \veps_-}(u), \quad \forall u \in L^2(\X_n). 
		\label{eqn:MixedDimen1}
		\end{equation}
		The above occurs with probability at least $1-\sum_{l=1}^N (nw_l+t) \exp \left(-\mathrm{C}(nw_l-t) \theta^{2} \widetilde{\delta}^{m}\right)-2N\exp \left(\frac{-2 t^{2}}{n}\right)-C_1(n)$ \nc.
		
		On the other hand, for arbitrary $f \in L^2(\M_{max})$ we have  
		\begin{align}
		\begin{split}
		b^{\veps_+, \veps_-} (P'\circ P_{max} f) &= b_{max}(P_{max} f) + b_{O}( P'\circ P_{max} f ) \\&  \leq  \left(1+C(\veps_+ +\frac{\widetilde{\delta} }{\veps_+}+\theta+\widetilde{\delta}\nc)\right)D_{max}(f) + b_{O}( P'\circ  {P}_{max} f ),
		\end{split}
		\label{eqn:MixedDimen2}
		\end{align}
		whereas 
		\begin{equation}
		b_{O}( P' \circ \widetilde{P}_{max} f ) \leq  \frac{CN N_0}{w_{min}^2 n^2(\veps_+^{m+2}-\veps_-^{m+2})}\left(1+\lambda^{m / 2+2}\right)\left\|f\right\|_{L^{2}(\mathcal{M}_{max})}^{2},
		\label{eqn:MixedDimen3}
		\end{equation}
		for $f$ an element in the span of $\Delta_{\M_{max}}$'s eigenfunctions with corresponding eigenvalue less than $\lambda$, as it follows from a completely analogous computation to the one in \eqref{inequ: omega^O bounded by r/epsilon_+};  this holds in the same event of very high probability where \eqref{eqn:MixedDimen1} holds.

		We consider now the norm distortion of the maps $\I\circ \I'$ and $P'\circ P_{max}$. First, notice that by definition, for $v \in L^2(\X_n \cap \M_{max})$ we have
		\begin{equation*}
		\lVert  P' v  \rVert^2_{L^2(\X_n)} = \lVert v \rVert^2 _{L^2(\X_n \cap \M_{max})},  
		\end{equation*}
		and thus combining with 1) in Proposition \eqref{Proposition: Discretization and interpolation maps are almost isometries} we obtain:
		\begin{equation}
		\left| \lVert   P' \circ P_{max} f  \rVert^2_{L^2(\X_n)} - \lVert f \rVert^2 _{L^2(\M_{max})}   \right| \leq   C \widetilde{\delta}\|f\|_{L^{2}({\M_{max}})}     \sqrt{D_{max}(f)}+C(\theta+\widetilde{\delta})\|f\|_{L^{2}(\M_{max})}^{2}.
		\label{eqn:MixedDimen4}
		\end{equation}


		Now, for a given $u \in L^2(\X_n)$ we have: 
		\[  \left \lvert  \lVert  \I' u  \rVert^2_{L^2(\X_n \cap \M_{max})} - \lVert u \rVert^2_{L^2(\X_n)} \right \rvert = \sum_{k=N_{max}+1}^N \lVert u_k   \rVert_{L^2(\X_n \cap \M_k)}^2. \]
		Also, if we let $\overline{u}_k$ represent the average of $u_k$ in $\M_k \cap \X_n$ we see that
		\[ \lVert  u_k - \overline{u}_k  \rVert^2_{L^2(\X_n \cap \M_k)} \leq \frac{1}{\lambda^{\veps_+, \veps_-}_{2,k}}   \langle \mathcal{L}_{k} u_k , u_k \rangle_{L^2(\M_k \cap \X_n)}  =  \frac{1}{\lambda^{\veps_+, \veps_-}_{2,k}} \frac{n_k^2}{n^2} b_{k}(u_k) \leq C(\M_k, w_k, \rho_k) \veps_+^{m- m_k} b(u),  \]
		for all  $k=N_{max}+1, \dots, N$, where the last inequality holds with very high probability. Indeed, notice that by Theorem \ref{Rate of convergence for eigenvalues} applied to a single manifold $\M_k$ we can find a lower bound for $\lambda_{2,k}^{\veps_+, \veps_-}$ in terms of the first non-trivial eigenvalue for $w_k \Delta_{\M_k}$. 
		We have also used the fact that $b_k(u_k) \leq (n/n_k)^2 \veps_+^{m-m_k} b(u)  $. This means that
		\begin{equation*}
		\left \lvert  \lVert  \I' u  \rVert^2_{L^2(\X_n \cap \M_{max})} - \lVert u \rVert^2_{L^2(\X_n)} \right \rvert \leq C(\M,\mu)\veps_+^{m-m_{N_{max}+1}} b^{\veps_+, \veps_-}(u) + \sum_{k=N_{max}+1}^{N} (\overline{u}_k)^2  .
		\end{equation*}
		
		Combining with Proposition \ref{Proposition: Discretization and interpolation maps are almost isometries} and using the triangle inequality we deduce that 
		\begin{align}
		\begin{split}
		\left \lvert  \lVert  \I_{max} \circ\I' u  \rVert^2_{L^2(\M_{max})} - \lVert u \rVert^2_{L^2(\X_n)} \right \rvert & \leq  C \veps_+\|u\|_{L^{2}\left(\mu^n\right)} \sqrt{b^{\veps_+,\veps_-}(u)}
		\\& +C(\theta+\widetilde{\delta})\|u\|_{L^{2}\left(\mu^n\right)}^{2} + C(\M,\mu)\sum_{k=N_{max}+1}^n\veps_+^{m-m_k} b^{\veps_+, \veps_-}(u)
		\\& + \sum_{k=N_{max}+1}^{N} (\overline{u}_k)^2.
		\end{split}
		\label{eqn:MixedDimen5}
		\end{align}
		\nc
		
		Notice that the right hand side in the above expression is small for a $u$ with low Dirichlet energy only when $u$ is close to the orthogonal complement of $\text{Span}\{ \mathds{1}_{\M_{N_{max}+1}}, \dots, \mathds{1}_{\M_{N}} \}$ (i.e. the $\overline{u}_k$ are small). Because of this, we will only be able to proceed as in the proofs of Theorems \ref{Rate of convergence for eigenvalues} and \ref{convergence rate for eigenvectors} to obtain all our estimates if first we show that the top $N$ eigenvectors of $\mathcal{L}$ are close to the indicator functions of $\M_{1}\cap \X_n, \dots,\M_N \cap \X_n $. However, this is straightforward from the following observations:
		
		\begin{enumerate}
			\item We can obtain an upper bound for the first $N$ eigenvalues of $\mathcal{L}$ following the representation \eqref{minmax principle for graph laplacian} and computing the graph Dirichlet energy of the indicator functions of the sets $\M_k \cap \X_n$. Namely, we have:
			\[ \lambda_k^{\veps_+,\veps_-} \leq \frac{CNN_0}{w_{min}^2 n^2(\veps_+^{m+2}-\veps_-^{m+2})}, \quad k=1, \dots, N. \]
			\item  Using the alternative representation:
			\[ \lambda_{N+1}^{\veps_+, \veps_-} = \max_{S\in \mathcal{G}_{N}}\min_{u\in S^\perp \backslash \{0\}}\frac{b^{\veps_+,\veps_-}(u)}{\lVert u\lVert^2_{L^2(\mu^n)}} \]
			we can obtain the lower bound
			\[ \lambda_{N+1}^{\veps_+, \veps_-} \geq  \frac{1}{2}\sigma_\eta \lambda_{N+1},  \]
			with very high probability. Indeed, taking $S= \text{Span} \{ \mathds{1}_{\M_1 \cap \X_n}, \dots, \mathds{1}_{\M_N \cap \X_n} \}$ and a unit norm $u\in S$ (in particular $\overline{u}_k=0 $ for all $k=N_{max}+1, \dots, N$) we see from \eqref{eqn:MixedDimen1} and \eqref{eqn:MixedDimen5} that 
			\[b^{\veps_+, \veps_-}(u) \geq  \sigma_\eta \lambda_{N+1}   \left(1-C(\veps_+ +\veps_+\sqrt{\lambda_{N+1}} + \veps_+^{m-m_{N_{max}+1}}\lambda_{N+1} +\theta+\frac{\widetilde{\delta}}{\veps_+})\right)\geq \frac{\sigma_\eta }{2}\lambda_{N+1}.\]

			\item Combining the previous steps we get an order one lower bound for the gap between $\lambda^{\veps_+, \veps_-}_{N}$ and $\lambda^{\veps_+, \veps_-}_{N+1}$. We can then follow the proof of Theorem \ref{convergence rate for eigenvectors} to show that there exists an orthonormal set $v^1, \dots, v^N$ consisting of eigenvectors of $\mathcal{L}$ corresponding to $\mathcal{L}$'s first $N$ eigenvalues such that 
			\[  \lVert \sqrt{\frac{n}{n_k}} \mathds{1}_{\M_k \cap \X_n} - v^k \rVert_{L^2(\X_n)}^2 \leq \frac{C(\M, \mu)N_0}{n^2(\veps_+^{m+2}-\veps_-^{m+2})}.   \]
			We deduce that if $u$ belongs to the orthogonal complement of $\text{Span} \{ v^1, \dots, v^N \}$, then 
			\[ (\overline{u}_k)^2 \leq  \frac{C(\M, \mu)N_0}{n^2(\veps_+^{m+2}-\veps_-^{m+2})},\]
			with very high probability.
		\end{enumerate}


		As discussed above, with the above estimates in hand we can now proceed as in the proofs of Theorems \ref{Rate of convergence for eigenvalues} and \ref{convergence rate for eigenvectors}. 
		
	\end{proof}

\end{document}